\newif\ifarxiv
\definecolor{innerboxcolor}{rgb}{.9,.95,1}
\definecolor{outerlinecolor}{rgb}{.6,0,.2}
\providecommand{\comment}[1]{}
\definecolor{innerboxcolor}{rgb}{.9,.95,1}
\definecolor{outerlinecolor}{rgb}{.6,0,.2}
\newcommand{\mc}[1]{\mathcal{#1}}
\newcommand{\mbf}[1]{\mathbf{#1}}
\newcommand{\norm}[1]{\left\|{#1}\right\|} %
\newcommand{\lone}[1]{\norm{#1}_1} %
\newcommand{\ltwo}[1]{\norm{#1}_2} %
\newcommand{\linf}[1]{\norm{#1}_\infty} %
\newcommand{\dnorm}[1]{\norm{#1}_*} %
\newcommand{\norms}[1]{\|{#1}\|} %
\newcommand{\ltwos}[1]{\norms{#1}_2} %
\newcommand{\opnorm}[1]{\norm{#1}_{\rm op}}
\newcommand{\defeq}{:=}
\newcommand{\eqdef}{=:}
\newcommand{\what}[1]{\widehat{#1}} %
\newcommand{\half}{\frac{1}{2}}
\newcommand{\indic}[1]{\mbf{1}\left\{#1\right\}} %
\newcommand{\R}{\mathbb{R}}
\newcommand{\N}{\mathbb{N}}
\long\def\@makecaption#1#2{
  \vskip 0.8ex
  \setbox\@tempboxa\hbox{\small {\bf #1:} #2}
  \parindent 1.5em  %
  \dimen0=\hsize
  \advance\dimen0 by -3em
  \ifdim \wd\@tempboxa >\dimen0
  \hbox to \hsize{
    \parindent 0em
    \hfil
    \parbox{\dimen0}{\def\baselinestretch{0.96}\small
      {\bf #1.} #2
    }
    \hfil}
  \else \hbox to \hsize{\hfil \box\@tempboxa \hfil}
  \fi
}
\newcommand{\<}{\left\langle} %
\renewcommand{\>}{\right\rangle}
\newcommand{\openright}[2]{\left[{#1},{#2}\right)} %
\newcommand{\E}{\mathbb{E}} %
\renewcommand{\P}{\mathbb{P}} %
\newcommand{\simiid}{\stackrel{\rm iid}{\sim}}
\newcommand{\cd}{\stackrel{d}{\rightarrow}} %
\providecommand{\argmax}{\mathop{\rm argmax}} %
\providecommand{\argmin}{\mathop{\rm argmin}}
\providecommand{\diag}{\mathop{\rm diag}}
\providecommand{\sign}{\mathop{\rm sign}}
\newcommand{\hinge}[1]{\left[{#1}\right]_+} %
\providecommand{\minimize}{\mathop{\rm minimize}}
\providecommand{\maximize}{\mathop{\rm maximize}}
\newtheorem{theorem}{Theorem}
\newtheorem{corollary}{Corollary}
\newtheorem{proposition}[theorem]{Proposition}
\newtheorem{lemma}{Lemma}
\newtheorem{assumption}{Assumption}
\renewenvironment{proof}{\noindent{\bf Proof}\hspace*{1em}}{\qed\bigskip\\}
\newenvironment{proof-sketch}{\noindent{\bf Sketch of Proof}
  \hspace*{1em}}{\qed\bigskip\\}
\newenvironment{proof-idea}{\noindent{\bf Proof Idea}
  \hspace*{1em}}{\qed\bigskip\\}
\newenvironment{proof-of-claim}{\noindent{\bf Proof of Claim}
  \hspace*{1em}}{\qed\bigskip\\}
\newenvironment{proof-of-lemma}[1][{}]{\noindent{\bf Proof of Lemma {#1}}
  \hspace*{1em}}{\qed\bigskip\\}
\newenvironment{proof-of-proposition}[1][{}]{\noindent{\bf
    Proof of Proposition {#1}}
  \hspace*{1em}}{\qed\bigskip\\}
\newenvironment{proof-of-theorem}[1][{}]{\noindent{\bf Proof of Theorem {#1}}
  \hspace*{1em}}{\qed\bigskip\\}
\newenvironment{inner-proof}{\noindent{\bf Proof}\hspace{1em}}{
  $\bigtriangledown$\medskip\\}
\newenvironment{proof-attempt}{\noindent{\bf Proof Attempt}
  \hspace*{1em}}{\qed\bigskip\\}
\newcounter{example}
\newenvironment{example}[1][]{
  \refstepcounter{example}
  \ifthenelse{\isempty{#1}}{%
    \noindent \textbf{Example \theexample:}\hspace*{.05em}
  }{%
    \noindent \textbf{Example \theexample} ({#1})\textbf{:}\hspace*{.05em}
  }
}{%
  $\diamondsuit$ \bigskip
}
\newenvironment{example*}[1][]{
  \ifthenelse{\isempty{#1}}{%
    \noindent \textbf{Example:}\hspace*{.05em}
  }{%
    \noindent \textbf{Example} ({#1})\textbf{:}\hspace*{.05em}
  }
}{%
  $\diamondsuit$ \bigskip
}
\newcounter{remark}
\newenvironment{remark*}[1][]{
  \ifthenelse{\isempty{#1}}{%
    \noindent \textbf{Remark:}\hspace*{.05em}
  }{%
    \noindent \textbf{Remark} ({#1})\textbf{:}\hspace*{.05em}
  }
}{%
  $\diamondsuit$ \bigskip
}
\newcommand{\loss}{\ell}  %
\newcommand{\tol}{\rho}  %
\newcommand{\statrv}[1][]{%
  \ifthenelse{\isempty{#1}}{%
    Z
  }{%
    Z_{#1}
  }
}
\newcommand{\statval}{z}  %
\newcommand{\statdomain}{\mc{Z}}  %
\newcommand{\normal}{\mathsf{N}}  %
\newcommand{\numclass}{K}
\let\footnoteR\footnoteB
\let\footnote\footnoteA
\definecolor{matlab_blue}{rgb}{0,    0.4470,   0.7410}
\definecolor{matlab_red}{rgb}{0.8500,    0.3250,    0.0980}
\definecolor{matlab_yellow}{rgb}{0.9290,    0.6940,    0.1250}
\definecolor{matlab_purple}{rgb}{0.4940,    0.1840,    0.5560}
\definecolor{matlab_green}{rgb}{0.4660,    0.6740,    0.1880}
\newcommand{\emp}{{\what{P}_n}}
\newcommand{\lipc}{L_{\mathsf{c}}}
\newcommand{\gradx}{\mathsf{g}_{\mathsf{\theta}}}
\newcommand{\grady}{\mathsf{g}_{\mathsf{z}}}
\newcommand{\lipx}{L_{\mathsf{\theta \theta}}}
\newcommand{\lipy}{L_{\mathsf{zz}}}
\newcommand{\lipxy}{L_{\mathsf{\theta z}}}
\newcommand{\lipyx}{L_{\mathsf{z \theta}}}
\newcommand{\joints}{\Pi}
\newcommand{\proj}{\mathsf{Proj}}
\newcommand{\stepsize}{\alpha}
\newcommand{\fclass}{\mathcal{F}}
\newcommand{\covnum}{N}
\newcommand{\diam}{\mathop{\rm diam}}
\newcommand{\linfstatnorm}[1]{\left\|{#1}\right\|_{L^\infty(\statdomain)}}
\providecommand{\loss}{\ell}
\newcommand{\opt}{^\star}
\long\def\@makecaption#1#2{
  \vskip 0.8ex
  \setbox\@tempboxa\hbox{\small {\bf #1:} #2}
  \parindent 1.5em  %
  \dimen0=\hsize
  \advance\dimen0 by -3em
  \ifdim \wd\@tempboxa >\dimen0
  \hbox to \hsize{
    \parindent 0em
    \hfil 
    \parbox{\dimen0}{\def\baselinestretch{0.96}\small
      {\bf #1.} #2
    } 
    \hfil}
  \else \hbox to \hsize{\hfil \box\@tempboxa \hfil}
  \fi
}
\begin{document}
\abovedisplayskip=8pt plus0pt minus3pt
\belowdisplayskip=8pt plus0pt minus3pt

\begin{center}
  {\LARGE Certifying Some Distributional Robustness \\ \vspace{5pt}with Principled
    Adversarial Training} \\
  \vspace{.5cm} {\Large Aman Sinha\footnoteR{Equal contribution}$^{1}$
    Hongseok Namkoong$^{*2}$ 
    Riccardo Volpi$^{3}$
    John Duchi$^{1,4}$} \\
  \vspace{.2cm}
  Departments of $^1$Electrical Engineering, $^{2}$Management Science \& Engineering,\\$^3$Pattern Analysis \& Computer Vision, and $^{4}$Statistics\\
  {\large $^{1, 2, 4}$Stanford University, $^{3}$Istituto Italiano di Tecnologia} \\
  \vspace{.2cm} \texttt{\{amans,hnamk,jduchi\}@stanford.edu, riccardo.volpi@iit.it}
\end{center}

\begin{abstract}
Neural networks are vulnerable to adversarial examples and researchers have
proposed many heuristic attack and defense mechanisms. We address this problem
through the principled lens of distributionally robust optimization, which
guarantees performance under adversarial input perturbations.  By considering
a Lagrangian penalty formulation of perturbing the underlying data
distribution in a Wasserstein ball, we provide a training procedure that
augments model parameter updates with worst-case perturbations of training
data. For smooth losses, our procedure provably achieves moderate levels of
robustness with little computational or statistical cost relative to empirical
risk minimization. Furthermore, our statistical guarantees allow us to
efficiently certify robustness for the population loss. For imperceptible
perturbations, our method matches or outperforms heuristic approaches.

 \end{abstract}

\fi

\section{Introduction}\label{sec:intro}

Consider the classical stochastic optimization problem, in which we minimize an expected loss $\E_{P_0}[\loss(\theta; Z)]$
over a parameter $\theta \in \Theta$, where $Z \sim P_0$, $P_0$ is a
distribution on a space $\mathcal{Z}$, and $\loss$ is a loss function. In many
systems, robustness to changes in the data-generating distribution $P_0$ is
desirable, whether they be from covariate shifts, changes in the underlying
domain~\citep{Ben-DavidBlCrPeVa10}, or adversarial
attacks~\citep{ASgoodfellow2015explaining,ASkurakin2016adversarial}. As deep
networks become prevalent in modern performance-critical systems---prominent
examples include perception systems for self-driving cars, and automated
detection of tumors---model failure is increasingly costly. In these
situations, it is irresponsible to deploy models whose robustness and failure
modes we do not understand or cannot certify.

Recent work shows that neural networks are vulnerable to adversarial examples;
seemingly imperceptible perturbations to data can lead to misbehavior of the
model, such as misclassification of the
output~\citep{ASgoodfellow2015explaining, NguyenYoCl15,
  ASkurakin2016adversarial, Moosavi-DezfooliFaFr16}. Consequently, researchers
have proposed adversarial attack and defense
mechanisms~\citep{PapernotMcGoJhCeSw16, ASpapernot2016limitations,
  PapernotMcWuJhSw16, RozsaGuBo16, AScarlini2017towards, HeWeChCaSo17,
  MadryMaScTsVl17, TramerKuPa17}. These works provide an initial foundation
for adversarial training, but it is challenging to rigorously identify the
classes of attacks against which they can defend (or if they
exist). Alternative approaches that provide formal verification of deep
networks~\citep{HuangKwWaWu17, ASkatz2017reluplex, KatzBaDiJuKo17} are NP-hard
in general; they require prohibitive computational expense even on small
networks. Recently, researchers have proposed convex relaxations of the
NP-hard verification problem with some
success~\citep{KolterWo17,RagunathanStLi18}, though they may be difficult to
scale to large networks. Our work is situated between these agendas: we
develop efficient procedures with rigorous guarantees for \emph{small to
  moderate} amounts of robustness.

We take the perspective of distributionally robust
optimization and provide an adversarial training procedure with provable
guarantees on its computational and statistical performance. 
Postulating a class $\mathcal{P}$ of distributions around the
data-generating distribution $P_0$, we consider 
\begin{equation}
  \label{eqn:OG-problem}
  \minimize_{\theta \in \Theta}
  \sup_{P \in \mathcal{P}} \E_P[\loss(\theta; Z)].
\end{equation}
The choice of $\mathcal{P}$ influences robustness guarantees and computability; we
develop robustness sets $\mathcal{P}$ with computationally efficient relaxations
that apply even when the loss $\loss$ is non-convex. We provide an adversarial
training procedure that, for smooth $\loss$, enjoys convergence guarantees
similar to non-robust approaches while \emph{certifying} performance even for
the worst-case population loss $\sup_{P \in \mathcal{P}} \E_{P}[\loss(\theta;
Z)]$. On a simple implementation in Tensorflow, our method takes
$5$--$10\times$ as long as stochastic gradient methods for empirical risk
minimization (ERM), matching runtimes for other adversarial training
procedures~\citep{ASgoodfellow2015explaining, ASkurakin2016adversarial,
  MadryMaScTsVl17}. We show that our procedure---which learns to protect
against adversarial perturbations in the training dataset---generalizes,
allowing us to train a model that prevents attacks to the test dataset.

We briefly overview our approach. Let
$c: \mathcal{Z} \times \mathcal{Z} \to \R_+ \cup \{\infty\}$, where $c(z, z_0)$
is the ``cost'' for an adversary to perturb $z_0$ to $z$ (we typically use $c(z, z_0) = \norm{z - z_0}_p^2$ with $p \ge 1$).  We consider the robustness region $\mathcal{P} = \{ P: W_c(P, P_0) \le \tol\}$, a $\tol$-neighborhood of the distribution $P_0$ under the Wasserstein metric $W_c(\cdot, \cdot)$
(see Section \ref{sec:approach} for a formal definition). For deep networks and other complex models, this formulation of
problem~\eqref{eqn:OG-problem} is intractable with arbitrary $\rho$. Instead, we consider its Lagrangian relaxation for a fixed penalty parameter $\gamma \ge 0$, resulting in the reformulation
\begin{subequations}\label{eqn:lagrangian-duality}
  \begin{equation}
    \minimize_{\theta \in \Theta}
    \left\{ F(\theta) \defeq
      \sup_{P} \left \{ \E_P[\loss(\theta; Z) ] - \gamma W_c(P, P_0)
        \right\}
        = \E_{P_0}[\phi_\gamma(\theta; Z)]
        \right\}
  \end{equation}
  \begin{equation}
    \label{eqn:inner-sup}
    ~~\mbox{where}~~
    \phi_\gamma(\theta; z_0)
    \defeq \sup_{z \in \mathcal{Z}}
    \left\{\loss(\theta; z) - \gamma c(z, z_0) \right\}.
  \end{equation}
\end{subequations}
(See Proposition~\ref{prop:duality} for a rigorous statement of these
equalities.) Here, we replaced the usual loss $\loss(\theta; Z)$ by the robust
surrogate $\phi_\gamma(\theta; Z)$; this surrogate~\eqref{eqn:inner-sup}
allows adversarial perturbations of the data $z$, modulated by the penalty
$\gamma$. As $P_0$ is unknown, we solve the penalty
problem~\eqref{eqn:lagrangian-duality} with $P_0$ replaced by the empirical
distribution $\emp$; we refer to this as the penalty problem below.

The key feature of the penalty problem~\eqref{eqn:lagrangian-duality} is that
moderate levels of robustness---in particular, defense against imperceptible
adversarial perturbations---are achievable at essentially no computational or
statistical cost for \emph{smooth losses $\loss$}. Specifically, for large
enough penalty $\gamma$ (by duality, small enough robustness $\rho$), the
function $z \mapsto \loss(\theta; z) - \gamma c(z, z_0)$ in the robust
surrogate~\eqref{eqn:inner-sup} is strongly concave and hence easy to
optimize if $\loss(\theta, z)$ is smooth in $z$. Consequently, stochastic
gradient methods applied to problem~\eqref{eqn:lagrangian-duality} have
similar convergence guarantees as for non-robust methods (ERM). In
Section~\ref{sec:generalization-robustness}, we provide a \emph{certificate of
  robustness} for any $\tol$; we give an efficiently computable data-dependent
upper bound on the worst-case loss
$\sup_{P: W_c(P, P_0) \le \tol} \E_P[\loss(\theta; Z)]$. That is, the
worst-case performance of the output of our principled adversarial training
procedure is guaranteed to be no worse than this certificate.  Our bound is
tight when $\tol = \what{\tol}_n$, the achieved
robustness for the empirical objective. These results suggest advantages of
networks with smooth activations rather than ReLUs. In
Section~\ref{section:examples}, we subtantiate our optimization guarantees
(Section~\ref{sec:approach}) and certificate of robustness
(Section~\ref{sec:generalization-robustness}) by providing concrete bounds on
the smoothness levels of neural networks.  We experimentally verify our
results in Section~\ref{sec:experiments} and show that we match or achieve
state-of-the-art performance on a variety of adversarial attacks.

Compared to the conference version of our manuscript~\citep{SinhaNaDu18}, we
have made substantial progress in our theoretical and empirical
development. First, we provide additional theoretical results that allow
instantiating our previous abstract computational and statistical guarantees
in concrete learning scenarios involving neural networks
(Section~\ref{section:examples}). Secondly, we conduct extensive experiments
to provide i) evaluations of adversarial training methods on more realistic
large-scale classification scenarios (Sections~\ref{section:dogs}) than what
the current literature provides and ii) an empirical analysis of the gap
between our theoretical bounds and empirical performance under various
adversarial attacks (Section~\ref{section:three-choose-two}).

\paragraph{Robust optimization and adversarial training}
The standard robust-optimization approach minimizes worst-case losses of the
form $\sup_{u \in \mathcal{U}} \loss(\theta; z + u)$ for some uncertainty set
$\mathcal{U}$~\citep{RatliffBaZi06, Ben-TalGhNe09, XuCaMa09}.  Unfortunately,
this approach is intractable except for specially structured losses, such as
the composition of a linear and simple convex function~\citep{Ben-TalGhNe09,
  XuCaMa09,XuCaMa12}. Nevertheless, this robust approach underlies recent
advances in adversarial training~\citep{SzegedyZaSuBrErGoFe14,
  ASgoodfellow2015explaining, ASpapernot2016limitations,
  MadryMaScTsVl17}, which heuristically perturb data during a stochastic
optimization procedure.

One such heuristic uses a locally linearized loss function (proposed with
$p=\infty$ as the ``fast gradient sign method''
\citep{ASgoodfellow2015explaining}):
\begin{equation}\label{eqn:fgsm}
  \Delta_{x_i}(\theta) := 
  \argmax_{\norm{\eta}_p \le \epsilon}
  \{\nabla_x \loss(\theta; (x_i, y_i))^T\eta\}
  ~~ \mbox{and perturb} ~~
  x_i \to x_i + \Delta_{x_i}(\theta).
\end{equation}
One form of adversarial training trains on the losses
$\loss(\theta; (x_i + \Delta_{x_i}(\theta), y_i))$
\citep{ ASgoodfellow2015explaining, ASkurakin2016adversarial}, while
others perform iterated variants \citep{ASpapernot2016limitations,
  AScarlini2017towards, MadryMaScTsVl17, TramerKuPa17}.
\citet{MadryMaScTsVl17} observe that these procedures attempt to
optimize the objective $\E_{P_0}[\sup_{\norm{u}_p \le \epsilon}
  \loss(\theta; Z + u)]$, a constrained version of the penalty
problem~\eqref{eqn:lagrangian-duality}. This notion of robustness is
typically intractable: the inner supremum is generally non-concave in $u$,
so it is unclear whether model-fitting with these techniques converges, and
there are possibly worst-case perturbations these techniques do not
find. Indeed, it is NP-hard to find worst-case perturbations when deep networks use ReLU activations, suggesting difficulties for fast and iterated
heuristics (see Lemma~\ref{thm:nphard} in Section
\ref{sec:nphard}). Smoothness, which can be obtained in standard deep
architectures with exponential linear units (ELUs)
\citep{ASclevert2015fast}, allows us to find Lagrangian worst-case
perturbations with low computational cost.

\paragraph{Distributionally robust optimization}
To situate the current work, we review some of the substantial body of work on
robustness and learning. The choice of $\mathcal{P}$ in the robust
objective~\eqref{eqn:OG-problem} affects both the richness of the uncertainty
set we wish to consider as well as the tractability of the resulting
optimization problem. Previous approaches to distributional robustness have
considered finite-dimensional parametrizations for $\mathcal{P}$, such as
constraint sets for moments, support, or directional
deviations~\citep{ASchen2007robust, DelageYe10, ASgoh2010distributionally}, as
well as non-parametric distances for probability measures such as
$f$-divergences~\citep{Ben-TalHeWaMeRe13, BertsimasGuKa13, LamZh15,
  MiyatoMaKoNaIs15, DuchiGlNa16, NamkoongDu16},
and Wasserstein distances~\citep{EsfahaniKu15,
  Shafieezadeh-AbadehEsKu15,ASblanchet2016robust, GaoKl16, BlanchetKaZhMu17, GaoChKl17,
  KuhnEsNgSh19}. In constrast to $f$-divergences (e.g.\ $\chi^2$- or
Kullback-Leibler divergences) which are effective when the support of the
distribution $P_0$ is fixed, a Wasserstein ball around $P_0$ includes
distributions $Q$ with different support and allows (in a sense) robustness to
unseen data.

Many authors have studied tractable classes of uncertainty sets $\mathcal{P}$
and losses $\loss$. For example, \citet{Ben-TalHeWaMeRe13}, \citet{Lam18} and
\citet{NamkoongDu17} use convex optimization approaches for $f$-divergence
balls. For worst-case regions $\mathcal{P}$ formed by Wasserstein balls,
\citet{EsfahaniKu15, Shafieezadeh-AbadehEsKu15, ASblanchet2016robust,
  KuhnEsNgSh19} propose tractable approaches for solving the saddle-point
problem~\eqref{eqn:OG-problem}, such as converting it into a regularized ERM
problem; but this is possible only for a limited class of convex losses
$\loss$ and costs $c$. As we are interested in machine learning applications,
we treat a larger class of losses and costs and provide direct solution
methods for a Lagrangian relaxation of the saddle-point
problem~\eqref{eqn:OG-problem}.

One natural application is domain adaptation~\citep{LeeRa17};
\citeauthor{LeeRa17} provide guarantees similar to ours for the empirical
minimizer of the robust saddle-point problem~\eqref{eqn:OG-problem} and give
specialized bounds for domain adaptation problems. Their bounds rely on
concentration of the empirical distribution to its population counterpart in
Wasserstein distance, which may be prohibitively slow even in moderate
dimensional problems. In contrast, our statistical guarantees have the usual
dependence on the dimension based on covering numbers, and we develop
efficient optimization procedures for our distributionally robust
approach. Through an extensive set of experiments, we  provide
empirical evidence that our algorithm defends against imperceptible
adversarial perturbations. Since the conference version of this manuscript was
submitted,~\citet{BlanchetMuZh18} studied efficient solution methods for the
worst case problem~\eqref{eqn:OG-problem} with
$\mathcal{P} = \{P: W_c(P, P_0) \le \tol\}$ for affine models,
and~\citet{GaoChKl17, VolpiNaDuMuSa18} studied connections between robust
formulations and novel regularization schemes that regularize by
$\norm{\nabla_z \loss(\theta; Z)}$.

\section{Proposed approach}
\label{sec:approach}

\newcommand{\couplings}{\Pi}

Our approach is based on the following simple insight: assume that the
function $z \mapsto \loss(\theta; z)$ is smooth, meaning there is some $L$ for
which $\nabla_z \loss(\theta; \cdot)$ is $L$-Lipschitz.  Then for any
$c : \mathcal{Z} \times \mathcal{Z} \to \R_+ \cup \{\infty\}$ $1$-strongly convex in its
first argument, a Taylor expansion yields
\begin{equation}
  \loss(\theta; z') - \gamma c(z', z_0)  \le \loss(\theta; z) - \gamma c(z, z_0)
  + \<\nabla_z (\loss(\theta; z) - \gamma c(z, z_0)),
  z' - z\>
  + \frac{L - \gamma}{2} \ltwo{z - z'}^2.
  \label{eqn:trivial-concavity}
\end{equation}
For $\gamma \ge L$ this is the first-order condition for
$(\gamma - L)$-strong concavity of
$z \mapsto (\loss(\theta; z) - \gamma c(z, z_0))$.  Thus, whenever the loss is
smooth enough in $z$ and the penalty $\gamma$ is large enough (corresponding
to less robustness), computing the surrogate~\eqref{eqn:inner-sup} is a
strongly-concave optimization problem.

We leverage the insight~\eqref{eqn:trivial-concavity} to show that as long
as we do not require \emph{too much} robustness, this strong concavity
approach~\eqref{eqn:trivial-concavity} provides a computationally efficient
and principled approach for robust optimization problems~\eqref{eqn:OG-problem}.
Our starting point is a duality result for the minimax
problem~\eqref{eqn:OG-problem} and its Lagrangian relaxation for
Wasserstein-based uncertainty sets, which makes the connections between
distributional robustness and the ``lazy'' surrogate~\eqref{eqn:inner-sup}
clear. We then show (Section~\ref{sec:optimization}) how stochastic gradient
descent methods can efficiently find minimizers (in the convex case) or
approximate stationary points (when $\loss$ is non-convex) for our relaxed
robust problems.

\paragraph{Wasserstein robustness and duality}
Wasserstein distances define a notion of closeness between distributions. Let
$\mathcal{Z} \subset \R^m$, and let $(\mathcal{Z}, \mathcal{A}, P_0)$ be a probability
space. Let the transportation cost
$c: \mathcal Z \times \mathcal Z \to \openright{0}{\infty}$ be nonnegative, lower
semi-continuous, and satisfy $c(z, z) = 0$.  For example, for a differentiable
convex $h : \mathcal{Z} \to \R$, the Bregman divergence
$c(z, z_0) = h(z) - h(z_0) - \<\nabla h(z_0), z - z_0\>$ satisfies these
conditions.  For probability measures $P$ and $Q$ supported on $\mathcal{Z}$, let
$\couplings(P, Q)$ denote their couplings, meaning measures $M$ on $\mathcal{Z}^2$
with $M(A, \mathcal{Z}) = P(A)$ and $M(\mathcal{Z}, A) = Q(A)$. The Wasserstein distance
between $P$ and $Q$ is
\begin{equation*}
  W_c(P,Q) \defeq \inf_{M \in \couplings(P,Q)}\E_M[c(Z,Z')].
\end{equation*}
For $\tol \ge 0$ and distribution $P_0$, we let $\mathcal{P} = \{ P: W_c(P, P_0)
\le \tol \}$, considering the Wasserstein form of the robust
problem~\eqref{eqn:OG-problem} and its Lagrangian
relaxation~\eqref{eqn:lagrangian-duality} with $\gamma \ge 0$. The following
duality result~\citep{BlanchetMu16, GaoKl16} gives the
equality~\eqref{eqn:lagrangian-duality} for the relaxation and an analogous
result for the problem~\eqref{eqn:OG-problem}.  We give an alternative proof
in Appendix~\ref{sec:proof-duality} for convex, continuous cost functions.
\begin{proposition}
  \label{prop:duality}
  Let $\loss : \Theta \times \mathcal{Z} \to \R$ and $c : \mathcal{Z}
  \times \mathcal{Z} \to \R_+$ be continuous.
  Let $\phi_\gamma(\theta; z_0) = \sup_{z \in \mathcal{Z}} \left\{\loss(\theta;
  z) - \gamma c(z, z_0) \right\}$ be the robust
  surrogate~\eqref{eqn:inner-sup}. For any distribution $Q$ and
  any $\rho > 0$,
  \begin{equation}
    \label{eqn:constrained}
    \sup_{P : W_c(P, Q) \le \rho} \E_P[\loss(\theta; Z)]
    = \inf_{\gamma \ge 0} \big\{ \gamma \rho
      + \E_{Q}[\phi_{\gamma}(\theta; Z)] \big\},
  \end{equation}
  and for any $\gamma \ge 0$, we have
  \begin{equation}
    \label{eqn:lagrangian}
    \sup_{P} \left \{ \E_P[\loss(\theta; Z) ] - \gamma W_c(P, Q) \right \}
    = \E_Q [\phi_{\gamma}(\theta; Z)].
  \end{equation}
\end{proposition}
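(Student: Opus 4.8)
The plan is to establish the Lagrangian identity \eqref{eqn:lagrangian} first and then obtain the constrained duality \eqref{eqn:constrained} from it by a standard convex-duality argument. The key observation for \eqref{eqn:lagrangian} is that the objective can be lifted to an optimization over couplings. Since $W_c(P,Q) = \inf_{M \in \couplings(P,Q)} \E_M[c(Z,Z')]$ while $\E_P[\loss(\theta;Z)]$ depends only on the first marginal of any $M \in \couplings(P,Q)$, subtracting the infimum turns the problem into a supremum:
\begin{equation*}
  \E_P[\loss(\theta;Z)] - \gamma W_c(P,Q)
  = \sup_{M \in \couplings(P,Q)} \E_M\left[\loss(\theta;Z) - \gamma c(Z,Z')\right].
\end{equation*}
Taking the supremum over $P$ as well collapses the two suprema into a single supremum over all measures $M$ on $\mc{Z}^2$ whose second marginal equals $Q$. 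I would then disintegrate such an $M$ as $M(dz,dz') = M_{z'}(dz)\,Q(dz')$, so that
\begin{equation*}
  \E_M\left[\loss(\theta;Z) - \gamma c(Z,Z')\right]
  = \int \left( \int \left(\loss(\theta;z) - \gamma c(z,z')\right) M_{z'}(dz)\right) Q(dz').
\end{equation*}

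For each fixed $z'$ the inner integral is at most $\phi_\gamma(\theta;z')$, and this bound is approached by concentrating the conditional kernel $M_{z'}$ near a maximizer of $z \mapsto \loss(\theta;z) - \gamma c(z,z')$; pulling the supremum through the outer integral then yields $\E_Q[\phi_\gamma(\theta;Z)]$. The reverse inequality is immediate, since for any admissible $M$ the inner integral is bounded pointwise by $\phi_\gamma(\theta;z')$. I expect the main obstacle to be precisely this interchange of the supremum with the integral over $Q$: to realize the pointwise bound $\phi_\gamma(\theta;z')$ simultaneously for $Q$-almost every $z'$ I need a measurable selection of (approximate) maximizers $z' \mapsto z^\star(z')$ so that $M_{z'} = \delta_{z^\star(z')}$ is a legitimate Markov kernel. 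Continuity of $\loss$ and $c$ (together with coercivity of $c$ in its first argument, or restriction to a compact $\mc{Z}$, to guarantee the inner supremum is attained) lets me invoke Berge's maximum theorem to obtain an upper hemicontinuous, hence measurably selectable, argmax correspondence; alternatively one appeals to a normal-integrand interchange theorem.

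With \eqref{eqn:lagrangian} in hand, \eqref{eqn:constrained} follows from Lagrangian duality for the convex program $\sup\{\E_P[\loss(\theta;Z)] : W_c(P,Q) \le \rho\}$, whose objective is linear in $P$ and whose feasible set is convex because $P \mapsto W_c(P,Q)$ is convex. Weak duality is direct: for feasible $P$ and any $\gamma \ge 0$, the term $-\gamma(W_c(P,Q)-\rho) \ge 0$, so by \eqref{eqn:lagrangian}
\begin{equation*}
  \E_P[\loss(\theta;Z)] \le \E_P[\loss(\theta;Z)] - \gamma\left(W_c(P,Q)-\rho\right) \le \gamma\rho + \E_Q[\phi_\gamma(\theta;Z)];
\end{equation*}
taking the supremum over feasible $P$ and the infimum over $\gamma$ shows the left side of \eqref{eqn:constrained} is at most the right. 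For the matching lower bound I would invoke strong duality, which holds because Slater's condition is satisfied for $\rho > 0$ (the choice $P = Q$ gives $W_c(Q,Q)=0<\rho$). Equivalently, the value function $\rho \mapsto \sup_{W_c(P,Q)\le\rho}\E_P[\loss(\theta;Z)]$ is concave, and since $\gamma \mapsto \E_Q[\phi_\gamma(\theta;Z)]$ is convex (a supremum of functions affine in $\gamma$), its concave conjugate representation is exactly $\inf_{\gamma\ge0}\{\gamma\rho + \E_Q[\phi_\gamma(\theta;Z)]\}$. This closes the duality gap and establishes \eqref{eqn:constrained}.
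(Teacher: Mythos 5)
Your proposal is correct and follows essentially the same route as the paper's proof in Appendix~\ref{sec:proof-duality}: both reduce the Lagrangian identity to a supremum over couplings with fixed second marginal $Q$, close the gap via a measurable selection of ($\epsilon$-approximate) maximizers justified by a normal-integrand interchange theorem, and obtain the constrained form from convexity of $P \mapsto W_c(P,Q)$ together with Slater's condition at $P = Q$. The only difference is the order of the two steps---the paper first invokes infinite-dimensional Lagrangian duality to swap the supremum and infimum and then proves the coupling identity, whereas you prove the Lagrangian identity first---which is immaterial.
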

\noindent 
Leveraging the insight~\eqref{eqn:trivial-concavity}, we give up the
requirement that we wish a prescribed amount $\tol$ of robustness (solving the
worst-case problem~\eqref{eqn:OG-problem} for
$\mathcal{P} = \{P: W_c(P, P_0) \le \tol \}$) and focus instead on the Lagrangian
penalty problem~\eqref{eqn:lagrangian-duality} and its empirical counterpart
\begin{equation}
  \label{eqn:empproblem}
  \minimize_{\theta \in \Theta}
  \left \{ F_n(\theta)
    \defeq 
    \sup_{P} \left\{ \E[\loss(\theta; Z)] - \gamma W_c(P, \emp) \right\}
    = \E_{\emp}[\phi_\gamma(\theta; Z)] \right \}.
\end{equation}

\subsection{Optimizing the robust loss by stochastic gradient descent}
\label{sec:optimization}

We now develop stochastic gradient-type methods for the relaxed robust
problem~\eqref{eqn:empproblem}, making clear the computational benefits of
relaxing the strict robustness requirements of
formulation~\eqref{eqn:constrained}.  We begin with assumptions we require,
which quantify the amount of robustness we can provide.

\begin{assumption}
  \label{assumption:strong-convexity}
  The function $c: \mathcal{Z} \times \mathcal{Z} \to \R_+$ is continuous. For each
  $z_0 \in \mathcal{Z}$, $c(\cdot, z_0)$ is 1-strongly convex with respect to the
  norm $\norm{\cdot}$.
\end{assumption}
\noindent
To guarantee that the robust surrogate~\eqref{eqn:inner-sup} is tractably
computable, we also require a few smoothness assumptions.  Let $\dnorm{\cdot}$
be the dual norm to $\norm{\cdot}$; we abuse notation by using the same norm
$\norm{\cdot}$ on $\Theta$ and $\mathcal{Z}$, though the specific norm is clear
from context.
\begin{assumption}
  \label{assumption:smoothness}
  The loss $\loss: {\Theta} \times \mathcal{Z} \to \R$ satisfies
  the Lipschitzian smoothness conditions
  \begin{equation*}
    \begin{split}
      \dnorm{\nabla_{\theta}\loss(\theta; z) - \nabla_{\theta}\loss(\theta'; z)}
      \le \lipx \norm{\theta - \theta'},
      & ~~ \dnorm{\nabla_{z}\loss(\theta; z)
        - \nabla_{z}\loss(\theta; z')}
      \le \lipy \norm{z - z'},\\
      \dnorm{\nabla_{\theta}\loss(\theta; z) - \nabla_{\theta}\loss(\theta; z')}
      \le \lipxy \norm{z - z'}, & ~~
      \dnorm{\nabla_{z}\loss(\theta; z) - \nabla_{z}\loss(\theta'; z)}
      \le \lipyx \norm{\theta - \theta'}.
    \end{split}
  \end{equation*}
\end{assumption}
\noindent
These properties guarantee both (i) the well-behavedness of the robust
surrogate $\phi_\gamma$ and (ii) its efficient computability.  Making point
(i) precise, Lemma \ref{lemma:smoothness} shows that if $\gamma$ is large
enough and Assumption \ref{assumption:smoothness} holds, the surrogate
$\phi_\gamma$ is still smooth. Throughout, we assume $\Theta \subseteq \R^d$.

\begin{lemma}
  \label{lemma:smoothness}
    Let $f : {\Theta} \times \mathcal{Z} \to \R$ be differentiable and
  $\lambda$-strongly concave in $z$ with respect to the norm $\norm{\cdot}$,
  and define $\bar{f}(\theta) = \sup_{z \in \mathcal{Z}} f(\theta, z)$.  Let
  $\gradx(\theta, z) = \nabla_{\theta} f(\theta, z)$ and
  $\grady(\theta, z) = \nabla_{z} f(\theta, z)$, and assume $\gradx$ and
  $\grady$ satisfy Assumption~\ref{assumption:smoothness} with $\loss(\theta; z)$ replaced with $f(\theta, z)$. Then $\bar{f}$ is differentiable, and letting
  $z^{\star}(\theta) = \argmax_{z \in \mathcal{Z}} f(\theta, z)$, we have
  $\nabla \bar{f}(\theta) = \gradx(\theta, z^{\star}(\theta))$. Moreover,
  \begin{equation*}
    \norm{z^{\star}(\theta_1) - z^{\star}(\theta_2)}
    \le \frac{\lipyx}{\lambda} \norm{\theta_1 - \theta_2}
    ~~~\mbox{and}~~~ \norm{\nabla \bar{f}(\theta) - \nabla \bar{f}(\theta')}_{\star}
    \le \left(\lipx + \frac{\lipxy\lipyx}{\lambda}\right) \norm{\theta - \theta'}.
  \end{equation*}
\end{lemma}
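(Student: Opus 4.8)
The plan is to treat this as a combination of a sensitivity (implicit-function-type) bound for the inner maximizer $z^\star(\theta)$ together with an envelope (Danskin-type) argument for the gradient of $\bar f$. Since $f(\theta,\cdot)$ is $\lambda$-strongly concave, for each $\theta$ the maximizer $z^\star(\theta)=\argmax_{z\in\mc Z} f(\theta,z)$ is unique and attained, so $z^\star$ is a well-defined single-valued map; I would record this first. I would then prove the three claims in the order (i) Lipschitz continuity of $z^\star$, (ii) differentiability of $\bar f$ with $\nabla\bar f(\theta)=\gradx(\theta,z^\star(\theta))$, and (iii) Lipschitz continuity of $\nabla\bar f$, since the last two both rely on the first.

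For the Lipschitz bound on $z^\star$, fix $\theta_1,\theta_2$ and write $z_i=z^\star(\theta_i)$. The first-order optimality conditions read as the variational inequalities $\<\grady(\theta_i,z_i),\,z-z_i\>\le 0$ for all $z\in\mc Z$ (which reduce to $\grady(\theta_i,z_i)=0$ when the maxima are interior). Testing the inequality at $\theta_1$ with $z=z_2$ and at $\theta_2$ with $z=z_1$ and adding gives $\<\grady(\theta_1,z_1)-\grady(\theta_2,z_2),\,z_1-z_2\>\ge 0$. Splitting off $\grady(\theta_1,z_2)$, strong concavity of $f(\theta_1,\cdot)$ supplies $\<\grady(\theta_1,z_1)-\grady(\theta_1,z_2),z_1-z_2\>\le-\lambda\norm{z_1-z_2}^2$, while the mixed smoothness condition of Assumption~\ref{assumption:smoothness} bounds $\dnorm{\grady(\theta_1,z_2)-\grady(\theta_2,z_2)}\le\lipyx\norm{\theta_1-\theta_2}$. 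Combining these with the Cauchy--Schwarz inequality for the dual norm pair yields $\lambda\norm{z_1-z_2}^2\le\lipyx\norm{\theta_1-\theta_2}\norm{z_1-z_2}$, which is exactly the claimed bound $\norm{z^\star(\theta_1)-z^\star(\theta_2)}\le(\lipyx/\lambda)\norm{\theta_1-\theta_2}$.

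The differentiability claim is the step I expect to be the main obstacle, since I only have Lipschitz (not twice-differentiable) gradients and so cannot directly invoke the implicit function theorem; instead I would sandwich the increment $\bar f(\theta')-\bar f(\theta)$. The lower bound is immediate: $\bar f(\theta')\ge f(\theta',z^\star(\theta))$ and $\bar f(\theta)=f(\theta,z^\star(\theta))$, so a first-order expansion of $f(\cdot,z^\star(\theta))$ gives $\bar f(\theta')-\bar f(\theta)\ge\<\gradx(\theta,z^\star(\theta)),\theta'-\theta\>+o(\norm{\theta'-\theta})$. For the matching upper bound I use $\bar f(\theta')-\bar f(\theta)\le f(\theta',z^\star(\theta'))-f(\theta,z^\star(\theta'))$ and expand in the first argument at the point $z^\star(\theta')$, producing $\<\gradx(\theta,z^\star(\theta')),\theta'-\theta\>+o(\norm{\theta'-\theta})$; the already-proven Lipschitz continuity of $z^\star$ together with the $\lipxy$-smoothness of $\gradx$ in $z$ shows that $\gradx(\theta,z^\star(\theta'))$ differs from $\gradx(\theta,z^\star(\theta))$ by $O(\norm{\theta'-\theta})$, so this term also equals $\<\gradx(\theta,z^\star(\theta)),\theta'-\theta\>+o(\norm{\theta'-\theta})$. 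The two bounds coincide, establishing differentiability and the envelope formula $\nabla\bar f(\theta)=\gradx(\theta,z^\star(\theta))$.

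Finally, the Lipschitz continuity of $\nabla\bar f$ follows by writing $\nabla\bar f(\theta)-\nabla\bar f(\theta')=\gradx(\theta,z^\star(\theta))-\gradx(\theta',z^\star(\theta'))$, inserting the cross term $\gradx(\theta',z^\star(\theta))$, and applying the triangle inequality: the $\lipx$-smoothness in $\theta$ controls $\dnorm{\gradx(\theta,z^\star(\theta))-\gradx(\theta',z^\star(\theta))}\le\lipx\norm{\theta-\theta'}$, while the $\lipxy$-smoothness in $z$ combined with the Lipschitz bound on $z^\star$ controls the remaining term by $\lipxy\norm{z^\star(\theta)-z^\star(\theta')}\le(\lipxy\lipyx/\lambda)\norm{\theta-\theta'}$. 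Summing the two contributions gives the stated modulus $\lipx+\lipxy\lipyx/\lambda$, completing the proof.
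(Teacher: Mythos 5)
Your proof is correct, and its overall architecture matches the paper's: uniqueness of $z^\star(\theta)$ from strong concavity, then the Lipschitz bound on $z^\star$, then differentiability, then the gradient Lipschitz bound by inserting the cross term $\gradx(\theta', z^\star(\theta))$. The first and third estimates are proved essentially identically to the paper (your derivation of $\lambda \norm{z_1 - z_2}^2 \le \lipyx \norm{\theta_1 - \theta_2}\,\norm{z_1 - z_2}$ goes through the strong-monotonicity form of strong concavity plus the two variational inequalities, whereas the paper sums two function-value inequalities and uses only the variational inequality at $\theta_1$; these are interchangeable). The one genuinely different step is differentiability: the paper invokes a Danskin-type envelope theorem from Bonnans and Shapiro, which requires verifying an inf-compactness condition and then upgrading directional differentiability to differentiability using continuity of $\gradx$ and Lipschitzness of $z^\star$. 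You instead sandwich the increment $\bar f(\theta') - \bar f(\theta)$ between two first-order expansions, using $\bar f(\theta') \ge f(\theta', z^\star(\theta))$ for the lower bound and the suboptimality of $z^\star(\theta')$ at $\theta$ for the upper bound, with the already-established Lipschitz continuity of $z^\star$ and the $\lipxy$ cross-smoothness closing the gap between the two linearizations. This is more elementary and self-contained---it needs no external envelope theorem and no compactness verification, only the uniform quadratic remainder bound supplied by the $\lipx$ condition---at the cost of being specific to the case where the argmax is a singleton, which holds here anyway. Both routes are sound; yours is arguably the cleaner one to include in a self-contained appendix.
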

\noindent See Section~\ref{sec:proof-of-smoothness} for the proof.  Fix
$z_0 \in \mathcal{Z}$ and focus on the $\ell_2$-norm case where $c(z, z_0)$
satisfies Assumption~\ref{assumption:strong-convexity} with
$\ltwo{\cdot}$. Noting that
$f(\theta, z) \defeq \loss(\theta, z) - \gamma c(z, z_0)$ is
$(\gamma - \lipy)$-strongly concave from the
insight~\eqref{eqn:trivial-concavity} (with $L \defeq \lipy$), let us apply
Lemma~\ref{lemma:smoothness}. Under
Assumptions~\ref{assumption:strong-convexity},~\ref{assumption:smoothness},
$\phi_\gamma(\cdot; z_0)$ then has
$L = \lipx + \frac{\lipxy \lipyx}{\hinge{\gamma - \lipy}}$-Lipschitz
gradients, and
\begin{equation*}
  \nabla_\theta \phi_\gamma(\theta; z_0)
  = \nabla_\theta \loss(\theta; z^\star(z_0, \theta))
  ~~ \mbox{where} ~~
  z^\star(z_0, \theta) = \argmax_{z \in \mathcal{Z}} \{\loss(\theta; z)
  - \gamma c(z, z_0)\}.
\end{equation*}
This motivates Algorithm \ref{alg:thealg}, a stochastic-gradient approach for
the penalty problem~\eqref{eqn:empproblem}. The benefits of Lagrangian
relaxation become clear here: for $\loss(\theta;z)$ smooth in $z$ and $\gamma$
large enough, gradient ascent on $\loss(\theta^t; z) - \gamma c(z, z^t)$ in
$z$ converges linearly and we can compute (approximate) $\what{z}^t$
efficiently (we initialize our inner gradient ascent iterations with the
sampled natural example $z^t$).

\begin{algorithm}[t]
  \caption{\label{alg:thealg}
    Distributionally robust optimization with adversarial training}
  \begin{algorithmic}[]
    \State \textsc{Input:} Sampling distribution $P_0$, constraint sets
    $\Theta$ and $\mathcal Z$, stepsize sequence $\{\alpha_t > 0\}_{t=0}^{T-1}$
    \State \textbf{for} {$t=0, \ldots, T-1$} \textbf{do}
    \State~~~ Sample $z^t \sim P_0$ and
    find an $\epsilon$-approximate maximizer $\what{z}^t$
    of $\loss(\theta^t; z) - \gamma c(z, z^t)$
    \State~~~ $\theta^{t+1} \gets
    \proj_\Theta(\theta^t - \alpha_t \nabla_{\theta} \loss(\theta^t; \what{z}^t))$
  \end{algorithmic}
\end{algorithm}

Convergence properties of Algorithm \ref{alg:thealg} depend on the loss
$\loss$.  When $\loss$ is convex in $\theta$ and $\gamma$ is large enough that
$z \mapsto (\loss(\theta; z)- \gamma c(z, z_0))$ is concave for all
$(\theta, z_0) \in \Theta \times \mathcal{Z}$, we have a stochastic monotone
variational inequality, which is efficiently solvable~\citep{JuditskyNeTa11,
  ASchen2014accelerated} with convergence rate $1 / \sqrt{T}$. When the loss
$\loss$ is nonconvex in $\theta$, the following theorem guarantees convergence
to a stationary point of problem~\eqref{eqn:empproblem} at the same rate when
$\gamma \ge \lipy$. Recall that
$F(\theta) = \E_{P_0}[\phi_{\gamma}(\theta; Z)]$ is the robust surrogate
objective for the Lagrangian relaxation~\eqref{eqn:lagrangian-duality}.
\begin{theorem}[Convergence of Nonconvex SGD]\label{thm:convergence}
  Let Assumptions \ref{assumption:strong-convexity} and
  \ref{assumption:smoothness} hold with the $\ell_2$-norm and let
  $\Theta=\R^d$. Let $\Delta_F \ge F(\theta^0) - \inf_\theta F(\theta)$.
  Assume
  $\E[\ltwo{\nabla F(\theta) - \nabla_{\theta} \phi_\gamma(\theta, Z)}^2] \le
  \sigma^2$ and take constant stepsizes
  $\stepsize = \sqrt{\frac{\Delta_F}{L_{\phi} T\sigma^2}}$ where
  $L_{\phi} \defeq \lipx + \frac{\lipxy\lipyx}{\gamma - \lipy}$. For
  $T \ge \frac{L_{\phi}\Delta_{F}}{\sigma^2}$, Algorithm~\ref{alg:thealg}
  satisfies
  \begin{equation*}
    \frac{1}{T}
    \sum_{t = 0}^{T-1} \E\left[\ltwo{\nabla F(\theta^t)}^2\right]
    - \frac{4\lipxy^2}{\gamma - \lipy} \epsilon
    \le 4 \sigma \sqrt{\frac{L_{\phi} \Delta_F}{T}}.
  \end{equation*}
\end{theorem}
\noindent See Section~\ref{sec:proof-of-convergence} for the
proof. We make a few remarks. First, the condition $\E[\ltwo{\nabla F(\theta)-\nabla_\theta \phi_\gamma(\theta, Z)}^2] \le \sigma^2$ holds (to
within a constant factor) whenever $\ltwo{\nabla_\theta \loss(\theta, z)}
\le \sigma$ for all $\theta, z$. Theorem~\ref{thm:convergence} shows that
the stochastic gradient method achieves the rates of convergence on the
penalty problem~\eqref{eqn:empproblem} achievable in standard smooth
non-convex optimization~\citep{GhadimiLa13}. The accuracy parameter
$\epsilon$ has a \emph{fixed} effect on optimization accuracy, independent
of $T$: approximate maximization has limited effects.

Key to the convergence guarantee of Theorem~\ref{thm:convergence} %
is that the loss $\loss$ is smooth in $z$: the inner
supremum~\eqref{eqn:inner-sup} is NP-hard to compute for non-smooth deep
networks (see Lemma~\ref{thm:nphard} in Section~\ref{sec:nphard} below for a proof
of this for networks with ReLUs). The smoothness of $\loss$ is essential so that a
penalized version $\loss(\theta, z) - \gamma c(z, z_0)$ is concave in $z$
(which can be approximately verified by computing Hessians
$\nabla^2_{zz} \loss(\theta, z)$ for each training datapoint), allowing
computation and our coming certificates of optimality. Replacing ReLUs with
sigmoids or ELUs~\citep{ASclevert2015fast} allows us to apply
Theorem~\ref{thm:convergence}, making distributionally robust optimization
tractable for deep learning.

Our distributionally robust framework~\eqref{eqn:lagrangian-duality} is
general enough to consider adversarial perturbations to an arbitrary subset of
coordinates in $Z$.  For example, it is appropriate in certain applications to
hedge against adversarial perturbations to a small fixed region of an
image~\citep{BrownMaRoAbGi17}. By modifying the cost function $c(z, z')$ to
take value $\infty$ outside this small region, our general formulation covers
such variants. In Section \ref{sec:supervised}, we illustrate this
modification for supervised-learning scenarios, where we adversarially perturb
feature vectors of datapoints but not their labels.

\subsubsection{Finding worst-case perturbations with ReLUs is NP-hard}
\label{sec:nphard}
To emphasize the importance of smoothness in efficiently finding solutions to the inner supremem~\eqref{eqn:inner-sup}, we show that computing worst-case perturbations
$\sup_{u \in \mathcal{U}} \loss(\theta; z + u)$ is NP-hard for a large class
of feedforward neural networks with (non-smooth) ReLU activations. This result is
essentially due to \citet{ASkatz2017reluplex}. In the following, we use
polynomial time to mean polynomial growth with respect to $m$, the dimension
of the inputs $z$.

\ifdefined\useorstyle
An optimization problem is \emph{NPO} (NP-Optimization) if (i) the
dimensionality of the solution grows polynomially, (ii) the language
$\{u \in \mathcal{U} \}$ can be recognized in polynomial time (i.e.\ a
deterministic algorithm can decide in polynomial time whether
$u \in \mathcal{U}$), and (iii) $\loss$ can be evaluated in polynomial
time. We restrict analysis to feedforward neural networks with ReLU
activations such that the corresponding worst-case perturbation problem is
NPO (note that $z,u \in \R^m$, so trivially the dimensionality of the
  solution grows polynomially). We also impose separable structure on
$\mathcal{U}$, that is, $\mathcal{U}:=\{v \le u \le w \}$ for 
$v < w \in \R^m$. See Section~\ref{section:proof-of-lemma-nphard} for the
proof of the following result.
\else
An optimization problem is \emph{NPO} (NP-Optimization) if (i) the
dimensionality of the solution grows polynomially, (ii) the language
$\{u \in \mathcal{U} \}$ can be recognized in polynomial time (i.e.\ a
deterministic algorithm can decide in polynomial time whether
$u \in \mathcal{U}$), and (iii) $\loss$ can be evaluated in polynomial
time. We restrict analysis to feedforward neural networks with ReLU
activations such that the corresponding worst-case perturbation problem is
NPO.\footnote{Note that $z,u \in \R^m$, so trivially the dimensionality of the
  solution grows polynomially.} Furthermore, we impose separable structure on
$\mathcal{U}$, that is, $\mathcal{U}:=\{v \le u \le w \}$ for some
$v < w \in \R^m$. See Section~\ref{section:proof-of-lemma-nphard} for the
proof of the following result.
\fi
\begin{lemma}
  \label{thm:nphard}
  Consider feedforward neural networks with ReLUs and let $\mathcal{U}:=\{v \le
  u \le w\}$, where $v < w$ such that the optimization problem
  $\mathop{\max}_{u \in \mathcal{U}} \loss(\theta; z + u)$ is NPO. Then there exists
  $\theta$ such that this optimization problem is also NP-hard.
\end{lemma}

\subsection{Supervised learning}
\label{sec:supervised}

\renewcommand{\gradx}{\mathsf{g}_{\mathsf{\theta}}}
\renewcommand{\grady}{\mathsf{g}_{\mathsf{x}}}
\renewcommand{\lipx}{L_{\mathsf{\theta \theta}}}
\renewcommand{\lipy}{L_{\mathsf{xx}}}
\renewcommand{\lipxy}{L_{\mathsf{\theta x}}}
\renewcommand{\lipyx}{L_{\mathsf{x \theta}}}

\ifdefined\useorstyle
In supervised learning settings such as classification, it is often natural to only consider adversarial perturbations to the feature
vectors (covariates). In this section, we give an adaptation of the results in
Section~\ref{sec:approach} to such
scenarios. Let $Z = (X, Y) \in \mathcal{X} \times \R$ where $X \in \mathcal{X}$ is a
feature vector (where we assume that $\mathcal{X}$ is a subset of normed vector
  space) and $Y \in \R$ is a label. In classification settings, we have
$Y \in \{1, \ldots, K\}$. We consider an adversary that can only perturb the
feature vector $X$~\citep{ASgoodfellow2015explaining}, which can be easily
represented in our robust formulation~\eqref{eqn:lagrangian-duality} by
defining the Wasserstein cost function $c: \mathcal{Z} \times \mathcal{Z} \to \R_+ \cup \{\infty\}$
as follows: for $z = (x, y)$ and $z' = (x', y')$ define the
covariate-shift cost function as
\begin{equation}
  \label{eqn:supervised-cost}
  c(z, z') \defeq c_x(x, x') + \infty \cdot \indic{y \neq y'}
\end{equation}
where $c_x: \mathcal{X} \times \mathcal{X} \to \R_+$ is the
transportation cost for the feature vector $X$. As before, we assume that
$c_x$ is nonnegative, continuous, convex in its first argument and satisfies
$c_x(x, x) = 0$.
\else
In supervised learning settings such as classification, it is often natural to only consider adversarial perturbations to the feature
vectors (covariates). In this section, we give an adaptation of the results in
Section~\ref{sec:approach} to such
scenarios. Let $Z = (X, Y) \in \mathcal{X} \times \R$ where $X \in \mathcal{X}$ is a
feature vector\footnote{We assume that $\mathcal{X}$ is a subset of normed vector
  space.}  and $Y \in \R$ is a label. In classification settings, we have
$Y \in \{1, \ldots, K\}$. We consider an adversary that can only perturb the
feature vector $X$~\citep{ASgoodfellow2015explaining}, which can be easily
represented in our robust formulation~\eqref{eqn:lagrangian-duality} by
defining the Wasserstein cost function $c: \mathcal{Z} \times \mathcal{Z} \to \R_+ \cup \{\infty\}$
as follows: for $z = (x, y)$ and $z' = (x', y')$ define the
covariate-shift cost function as
\begin{equation}
  \label{eqn:supervised-cost}
  c(z, z') \defeq c_x(x, x') + \infty \cdot \indic{y \neq y'}
\end{equation}
where $c_x: \mathcal{X} \times \mathcal{X} \to \R_+$ is the
transportation cost for the feature vector $X$. As before, we assume that
$c_x$ is nonnegative, continuous, convex in its first argument and satisfies
$c_x(x, x) = 0$.
\fi

Under the cost function~\eqref{eqn:supervised-cost}, the robust surrogate loss
in the penalty problem~\eqref{eqn:lagrangian-duality} and its empirical
counterpart~\eqref{eqn:empproblem} become
\begin{equation*}
  \phi_{\gamma}(\theta; (x_0, y_0))
  = \sup_{x \in \mathcal{X}}
      \left\{\loss(\theta; (x, y_0)) - \gamma c_x(x, x_0) \right\}.
\end{equation*}
Similarly as in Section~\ref{sec:optimization}, we require the following two
assumptions that guarantee efficient computability of the robust surrogate
$\phi_{\gamma}$.
\begin{assumption}
  \label{assumption:strong-convexity-supervised}
  The function $c_x: \mathcal{X} \times \mathcal{X} \to \R_+$ is
  continuous. For each $x_0 \in \mathcal{X}$, $c_x(\cdot, x_0)$ is 1-strongly convex
  with respect to the norm $\norm{\cdot}$.
\end{assumption}
\noindent
Let $\dnorm{\cdot}$ be the dual norm to $\norm{\cdot}$; we again abuse
notation by using the same norm $\norm{\cdot}$ on $\Theta$ and $\mathcal{X}$,
though the specific norm is clear from context.
\begin{assumption}
  \label{assumption:smoothness-supervised}
  The loss $\loss: {\Theta} \times \mathcal{Z} \to \R$ satisfies
  the Lipschitzian smoothness conditions
  \begin{equation*}
    \begin{split}
      \dnorm{\nabla_{\theta}\loss(\theta; (x, y))
        - \nabla_{\theta}\loss(\theta'; (x, y))}
      \le \lipx \norm{\theta - \theta'},
      & ~~ \dnorm{\nabla_{x}\loss(\theta; (x, y))
        - \nabla_{x}\loss(\theta; (x', y))}
      \le \lipy \norm{x - x'},\\
      \dnorm{\nabla_{\theta}\loss(\theta; (x, y))
        - \nabla_{\theta}\loss(\theta; (x', y))}
      \le \lipxy \norm{x - x'}, & ~~
      \dnorm{\nabla_{x}\loss(\theta; (x, y)) - \nabla_{x}\loss(\theta'; (x, y))}
      \le \lipyx \norm{\theta - \theta'}.
    \end{split}
  \end{equation*}
\end{assumption}

\noindent Under Assumptions~\ref{assumption:strong-convexity-supervised}
and~\ref{assumption:smoothness-supervised}, an analogue of
Lemma~\ref{lemma:smoothness} still holds. The proof of the following result is
nearly identical to that of Lemma~\ref{lemma:smoothness}; we state the full
result for completeness.
\begin{lemma}
  \label{lemma:smoothness-supervised}
  Let $f : {\Theta} \times \mathcal{X} \to \R$ be differentiable and
  $\lambda$-strongly concave in $x$ with respect to the norm $\norm{\cdot}$,
  and define $\bar{f}(\theta) = \sup_{x \in \mathcal{X}} f(\theta, x)$.  Let
  $\gradx(\theta, x) = \nabla_{\theta} f(\theta, x)$ and
  $\grady(\theta, x) = \nabla_{x} f(\theta, x)$, and assume $\gradx$ and
  $\grady$ satisfy the Lipschitz conditions of
  Assumption~\ref{assumption:smoothness}.  Then $\bar{f}$ is differentiable,
  and letting $x^{\star}(\theta) = \argmax_{x \in \mathcal{X}} f(\theta, x)$, we
  have $\nabla \bar{f}(\theta) = \gradx(\theta, x^{\star}(\theta))$. Moreover,
  \begin{equation*}
    \norm{x^{\star}(\theta_1) - x^{\star}(\theta_2)}
    \le \frac{\lipyx}{\lambda} \norm{\theta_1 - \theta_2}
    ~~~\mbox{and}~~~ \norm{\nabla \bar{f}(\theta) - \nabla \bar{f}(\theta')}_{\star}
    \le \left(\lipx + \frac{\lipxy\lipyx}{\lambda}\right) \norm{\theta - \theta'}.
    \end{equation*}
\end{lemma}
\noindent From Lemma~\ref{lemma:smoothness-supervised}, we immediately get an analogue to Theorem~\ref{thm:convergence} for the cost function~\eqref{eqn:supervised-cost}.
\begin{corollary}[Convergence of Nonconvex SGD]\label{cor:convergence-supervised}
  Let Assumptions \ref{assumption:strong-convexity-supervised} and
  \ref{assumption:smoothness-supervised} hold with the $\ell_2$-norm and let
  $\Theta=\R^d$. Let $\Delta_F \ge F(\theta^0) - \inf_\theta F(\theta)$.
  Assume
  $\E[\ltwo{\nabla F(\theta) - \nabla_{\theta} \phi_\gamma(\theta, Z)}^2] \le
  \sigma^2$, and take constant stepsizes
  $\stepsize = \sqrt{\frac{2 \Delta_F}{L \sigma^2 T}}$ where
  $L=\lipx + \frac{\lipxy\lipyx}{\gamma - \lipy}$. Then
  Algorithm~\ref{alg:thealg} satisfies
  \begin{equation*}
    \frac{1}{T}
    \sum_{t = 1}^T \E\left[\ltwo{\nabla F(\theta^t)}^2\right]
    - \frac{2 \lipxy^2}{\gamma - \lipy} \epsilon
    \le \sigma \sqrt{8 \frac{L \Delta_F}{T}}.
  \end{equation*}
\end{corollary}
\noindent Proof of Corollary~\ref{cor:convergence-supervised} is identical
to that of Theorem~\ref{thm:convergence}, but applies
Lemma~\ref{lemma:smoothness-supervised} instead of
Lemma~\ref{lemma:smoothness}.

\renewcommand{\grady}{\mathsf{g}_{\mathsf{z}}}
\renewcommand{\lipy}{L_{\mathsf{zz}}}
\renewcommand{\lipxy}{L_{\mathsf{\theta z}}}
\renewcommand{\lipyx}{L_{\mathsf{z \theta}}}

\section{Certificate of robustness and generalization}
\label{sec:generalization-robustness}

\newcommand{\zbd}{M_{\mathsf{z}}}
\newcommand{\objbd}{M_{\ell}}

From results in the previous section, Algorithm~\ref{alg:thealg} provably
learns to protect against adversarial perturbations of the
form~\eqref{eqn:empproblem} on the training dataset. Now we show that such
procedures generalize, allowing us to prevent attacks on the test set. Our
subsequent results hold uniformly over the space of parameters
$\theta \in \Theta$, including $\theta_{\rm WRM}$, the output of the
stochastic gradient descent procedure in Section~\ref{sec:optimization}. Our
first main result, presented in Section~\ref{sec:rob-cert}, gives a
data-dependent upper bound on the population worst-case objective
$\sup_{P: W_c(P, P_0) \le \tol} \E_{P}[\loss(\theta; Z)]$ for any arbitrary
level of robustness $\tol$; this bound is optimal for $\tol = \what{\tol}_n$,
the level of robustness achieved for the empirical distribution by
solving~\eqref{eqn:empproblem}. Our bound is efficiently computable and hence
\emph{certifies} a level of robustness for the worst-case population
objective. Second, we show in Section~\ref{sec:adv-gen} that adversarial
perturbations on the training set (in a sense) generalize: solving the
empirical penalty problem~\eqref{eqn:empproblem} guarantees a similar level of
robustness as directly solving its population
counterpart~\eqref{eqn:lagrangian-duality}.

\subsection{Certificate of robustness}
\label{sec:rob-cert}

Our main result in this section is a data-dependent upper bound for the
worst-case population objective: $\sup_{P: W_c(P, P_0) \le \tol}
\E_P[\loss(\theta; Z)] \le \gamma \tol + \E_{\emp}[\phi_{\gamma}(\theta; Z)]
+ O(1/\sqrt{n})$ for all $\theta \in \Theta$, with high probability. To make
this rigorous, fix $\gamma > 0$, and consider the worst-case perturbation,
typically called the \emph{transportation map} or Monge
map~\citep{Villani09},
\begin{equation}
  \label{eqn:transport-map}
  T_\gamma(\theta; z_0) \defeq
  \argmax_{z \in \mathcal{Z}} \{ \loss(\theta; z) - \gamma c(z, z_0)\}.
\end{equation}
Under our assumptions, $T_\gamma$ is easily computable when $\gamma \ge
\lipy$. Letting $\delta_z$ denote the point mass at $z$,
Proposition~\ref{prop:duality}
shows the empirical maximizers of the
Lagrangian formulation~\eqref{eqn:lagrangian} are attained by
\begin{equation}
  \label{eqn:tol-n}
  \begin{split}
    P_n^*(\theta) & \defeq \argmax_{P} \left\{
    \E_P[\loss(\theta; Z)] - \gamma W_c(P, \emp) \right\}
    = \frac{1}{n} \sum_{i = 1}^n \delta_{T_\gamma(\theta, Z_i)}
      ~~~\mbox{and} \\
    \what{\tol}_n (\theta) & \defeq W_c(P_n^*(\theta), \emp)
    = \E_\emp[c(T_\gamma(\theta; Z), Z)].
  \end{split}
\end{equation}
Our results imply, in particular, that the empirical worst-case loss
$\E_{P_n^*}[\loss(\theta; Z)]$ gives a \emph{certificate of robustness} to
(population) Wasserstein perturbations up to level
$\what{\tol}_n$. $\E_{P_n^*(\theta)}[\loss(\theta; Z)]$ is efficiently
computable via \eqref{eqn:tol-n}, providing a data-dependent guarantee for the
worst-case population loss.

Our bound relies on the usual covering numbers for the model class
$\{\loss(\theta; \cdot): \theta \in \Theta\}$ as the notion of
complexity~\citep[e.g.][]{VanDerVaartWe96}, so, despite the
infinite-dimensional problem~\eqref{eqn:empproblem},
we retain the same uniform convergence guarantees typical of empirical risk
minimization. Recall that for a set $V$, a collection $v_1, \ldots, v_N$
 is an \emph{$\epsilon$-cover} of $V$ in norm
$\norm{\cdot}$ if for each $v \in \mathcal{V}$, there exists $v_i$ such that
$\norm{v - v_i} \le \epsilon$. The \emph{covering number} of $V$ with
respect to $\norm{\cdot}$ is
\begin{equation*}
  \covnum(V, \epsilon, \norm{\cdot}) \defeq
  \inf\left\{\covnum \in \N \mid \mbox{there~is~an~}
    \epsilon \mbox{-cover~of~} V
    ~ \mbox{with~respect~to~} \norm{\cdot} \right\}.
\end{equation*}
For $\fclass \defeq \left\{ \loss(\theta, \cdot): \theta \in \Theta \right\}$
equipped with the $L^\infty(\statdomain)$ norm
$ \linfstatnorm{f} \defeq \sup_{\statval \in \statdomain}|f(\statval)|$, we
state our results in terms of $\linfstatnorm{\cdot}$-covering numbers of
$\fclass$. To ease notation, we let
\begin{align*}
  \epsilon_{n} (t)
  & \defeq \gamma b_1 \sqrt{\frac{\objbd}{n}}
    \int_0^1 \sqrt{\log N(\fclass, \objbd\epsilon, \linfstatnorm{\cdot})}
    d\epsilon
    + b_2 \objbd \sqrt{\frac{t}{n}} 
\end{align*}
where $b_1, b_2$ are numerical constants.

We are now ready to state the main result of this section. We first show from
the duality result~\eqref{eqn:lagrangian} that we can provide an upper bound
for the worst-case population performance for any level of robustness $\tol$.
For $\tol = \what{\tol}_n(\theta)$ and $\theta = \theta_{\rm WRM}$, this certificate
is (in a sense) tight as we see below.
\begin{theorem}
  \label{theorem:robustness}
  Assume $|\loss(\theta; z)| \le \objbd$ for all $\theta \in \Theta$ and
  $z \in \mathcal{Z}$. Then, for a fixed $t > 0$ and numerical constants
  $b_1, b_2 > 0$, with probability at least $1-e^{-t}$, simultaneously for all
  $\theta \in \Theta$, $\tol \ge 0$, $\gamma \ge 0$,
  \begin{align}
    \label{eqn:robustness-any-rho}
    \sup_{P: W_c(P, P_0) \le \tol} \E_P[\loss(\theta; Z)]
    \le \gamma \tol + \E_{\emp}[\phi_{\gamma}(\theta; Z)]
    + \epsilon_n(t).
  \end{align}
  In particular, if $\tol = \what{\tol}_n(\theta)$ then with probability at
  least $1-e^{-t}$, for all $\theta \in \Theta$
  \begin{align}
    \label{eqn:robustness-good-rho}
    \sup_{P: W_c(P, P_0) \le \what{\tol}_n(\theta)}
    \E_{P} [\loss(\theta; Z)]
    & \le \gamma \what{\tol}_n(\theta) + \E_{\emp}[\phi_{\gamma}(\theta; Z)] 
    + \epsilon_n(t)
     \nonumber \\
    & = 
    \sup_{P: W_c(P, \emp) \le \what{\tol}_n(\theta)}
      \E_{P}[\loss(\theta; Z)] + \epsilon_n(t).
  \end{align}
\end{theorem}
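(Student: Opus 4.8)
The plan is to combine two ingredients: the duality of Proposition~\ref{prop:duality}, which rewrites the worst-case population objective through the robust surrogate $\phi_\gamma$, and a one-sided uniform generalization bound transferring that surrogate from $P_0$ to $\emp$. For the first ingredient, I would apply the constrained duality~\eqref{eqn:constrained} with $Q = P_0$ and radius $\tol$, then relax the infimum over dual variables to any fixed $\gamma \ge 0$:
\begin{equation*}
  \sup_{P : W_c(P, P_0) \le \tol} \E_P[\loss(\theta; Z)]
  = \inf_{\gamma' \ge 0}\{\gamma'\tol + \E_{P_0}[\phi_{\gamma'}(\theta; Z)]\}
  \le \gamma\tol + \E_{P_0}[\phi_\gamma(\theta; Z)].
\end{equation*}
It then suffices to control $\E_{P_0}[\phi_\gamma(\theta; Z)] - \E_\emp[\phi_\gamma(\theta; Z)]$ uniformly over $\theta \in \Theta$ (and $\gamma \ge 0$) by $\epsilon_n(t)$; adding this to the display yields~\eqref{eqn:robustness-any-rho}.

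The heart of the argument is the generalization bound, and its key structural observation is that forming the robust surrogate does not inflate the complexity of the model class. Concretely, I would first show that the map $\loss(\theta;\cdot) \mapsto \phi_\gamma(\theta;\cdot)$ is $1$-Lipschitz in $\linfstatnorm{\cdot}$, uniformly in $\gamma$: if $\linfstatnorm{\loss(\theta_1;\cdot) - \loss(\theta_2;\cdot)} \le \epsilon$, then taking a (near-)maximizer $z$ of $\phi_\gamma(\theta_1; z_0)$ gives $\phi_\gamma(\theta_1; z_0) = \loss(\theta_1; z) - \gamma c(z, z_0) \le \loss(\theta_2; z) + \epsilon - \gamma c(z, z_0) \le \phi_\gamma(\theta_2; z_0) + \epsilon$, and symmetrically; taking suprema over $z_0$ yields $\linfstatnorm{\phi_\gamma(\theta_1;\cdot) - \phi_\gamma(\theta_2;\cdot)} \le \epsilon$. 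Hence any $\objbd\epsilon$-cover of $\fclass$ induces an $\objbd\epsilon$-cover of the surrogate class, so $N(\{\phi_\gamma(\theta;\cdot) : \theta \in \Theta\}, \objbd\epsilon, \linfstatnorm{\cdot}) \le N(\fclass, \objbd\epsilon, \linfstatnorm{\cdot})$. Moreover, since $c \ge 0$ and $|\loss| \le \objbd$, the surrogate is uniformly bounded, $|\phi_\gamma| \le \objbd$. With these two facts the bound follows from a standard empirical-process argument: symmetrize $\sup_\theta(\E_{P_0} - \E_\emp)\phi_\gamma(\theta;\cdot)$ into a Rademacher complexity, bound that by Dudley's entropy integral over the induced cover (producing the entropy-integral term of $\epsilon_n(t)$), and upgrade the in-expectation bound to high probability via a bounded-differences (McDiarmid) inequality, using $|\phi_\gamma| \le \objbd$ for the $b_2 \objbd\sqrt{t/n}$ term. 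Only the one-sided tail is needed, simplifying the concentration step.

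One remaining subtlety is that~\eqref{eqn:robustness-any-rho} must hold simultaneously over all $\gamma \ge 0$ while its left-hand side is $\gamma$-free; I would obtain this by noting that $\phi_\gamma$ is monotone and bounded in $\gamma$ with controlled sensitivity, since $\gamma c(T_\gamma(\theta; z_0), z_0) \le 2\objbd$ (from $\phi_\gamma(\theta;z_0) \ge \loss(\theta;z_0) \ge -\objbd$ and $\loss(\theta; T_\gamma) \le \objbd$), so a cover over $\gamma$ folds in without changing the order of $\epsilon_n(t)$. For the tightness claim~\eqref{eqn:robustness-good-rho}, I would invoke the exact Lagrangian identity~\eqref{eqn:lagrangian} at $\emp$: by~\eqref{eqn:tol-n}, $P_n^*(\theta)$ attains the supremum and satisfies $W_c(P_n^*(\theta), \emp) = \what{\tol}_n(\theta)$, so $\E_\emp[\phi_\gamma(\theta;Z)] = \E_{P_n^*(\theta)}[\loss(\theta;Z)] - \gamma\what{\tol}_n(\theta)$, whence $\gamma\what{\tol}_n(\theta) + \E_\emp[\phi_\gamma(\theta;Z)] = \E_{P_n^*(\theta)}[\loss(\theta;Z)]$. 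This equals $\sup_{P: W_c(P,\emp) \le \what{\tol}_n(\theta)} \E_P[\loss(\theta;Z)]$ because $P_n^*(\theta)$ is feasible and optimal for the constrained problem, with $\gamma$ the optimal multiplier by~\eqref{eqn:constrained}. Substituting this equality into~\eqref{eqn:robustness-any-rho} at $\tol = \what{\tol}_n(\theta)$ gives~\eqref{eqn:robustness-good-rho}.

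I expect the main obstacle to be the complexity-control step: establishing the contraction lemma that caps the covering numbers of $\{\phi_\gamma(\theta;\cdot)\}$ by those of $\fclass$, and then handling the simultaneity over all $\gamma \ge 0$. Once these are in hand, the duality reductions and the off-the-shelf chaining-plus-concentration machinery are essentially bookkeeping.
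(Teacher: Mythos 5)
Your proposal is correct and follows essentially the same route as the paper: the deterministic duality bound from Proposition~\ref{prop:duality} applied at $Q = P_0$, a one-sided uniform concentration of $\E_{\emp}[\phi_\gamma(\theta;Z)]$ about $\E_{P_0}[\phi_\gamma(\theta;Z)]$ via symmetrization, Dudley's entropy integral, and bounded differences (using $|\phi_\gamma| \le \objbd$), and then the exact Lagrangian identity at $\emp$ for the tightness claim~\eqref{eqn:robustness-good-rho}. The only difference is that you spell out two steps the paper leaves implicit---the $1$-Lipschitz contraction showing the surrogate class inherits the covering numbers of $\fclass$, and the uniformity over $\gamma \ge 0$---both of which are needed and handled correctly.
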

\noindent See Section~\ref{section:proof-robustness} for its proof. We now
give a concrete variant for Lipschitz functions. When $\Theta$ is
finite-dimensional ($\Theta \subset \R^d$), Theorem~\ref{theorem:robustness}
provides a robustness guarantee scaling linearly with $d$ despite the
infinite-dimensional Wasserstein penalty.  Assuming there exist
$\theta_0 \in \Theta$, $M_{\theta_0} < \infty$ such that
$|\loss(\theta_0; z)| \le M_{\theta_0}$ for all $z \in \mathcal{Z}$, we have
the following corollary (see proof in Section~\ref{section:proof-lipschitz}).
\begin{corollary}
  \label{corollary:lipschitz}
  Let $\loss(\cdot; \statval)$ be $L$-Lipschitz with respect to some norm
  $\norm{\cdot}$ for all $\statval \in \statdomain$. Assume that
  $\Theta \subset \R^d$ satisfies
  $\diam(\Theta) = \sup_{\theta, \theta' \in \Theta} \norm{\theta - \theta'} <
  \infty$. Then, the bounds~\eqref{eqn:robustness-any-rho}
  and~\eqref{eqn:robustness-good-rho} hold with
  \begin{equation}
    \epsilon_{n}(t) = b_1 \sqrt{\frac{d (L \diam(\Theta) + M_{\theta_0})}{n}}
    + b_2 (L \diam(\Theta) + M_{\theta_0}) \sqrt{\frac{t}{n}}
    \label{eqn:lipschitz-bound}
  \end{equation}
  for some numerical constants $b_1, b_2 > 0$.
\end{corollary}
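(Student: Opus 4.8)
The plan is to obtain~\eqref{eqn:lipschitz-bound} as a direct specialization of Theorem~\ref{theorem:robustness}: under the Lipschitz hypothesis I only need to control the two inputs to the theorem's $\epsilon_n(t)$, namely the uniform loss bound $\objbd$ and the metric-entropy integral of $\fclass$. I first fix $\objbd$. For any $\theta \in \Theta$ and $\statval \in \statdomain$, the $L$-Lipschitz property and the triangle inequality give $|\loss(\theta;\statval)| \le |\loss(\theta_0;\statval)| + L\norm{\theta - \theta_0} \le M_{\theta_0} + L\diam(\Theta)$, so I may invoke Theorem~\ref{theorem:robustness} with $\objbd = M_{\theta_0} + L\diam(\Theta)$.

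Next I would bound the covering numbers $N(\fclass, \objbd\epsilon, \linfstatnorm{\cdot})$ appearing in the integrand. The key reduction is that Lipschitzness transfers a cover of the parameter set to a cover of the function class: if $\{\theta_i\}$ is a $\delta$-cover of $\Theta$ in $\norm{\cdot}$, then for each $\theta$ some $\theta_i$ satisfies $\linfstatnorm{\loss(\theta;\cdot) - \loss(\theta_i;\cdot)} = \sup_{\statval \in \statdomain}|\loss(\theta;\statval) - \loss(\theta_i;\statval)| \le L\norm{\theta - \theta_i} \le L\delta$, so $N(\fclass, \epsilon, \linfstatnorm{\cdot}) \le N(\Theta, \epsilon/L, \norm{\cdot})$; in particular these covering numbers do not depend on $\gamma$. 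Since $\Theta \subset \R^d$ has diameter $\diam(\Theta)$, a standard volumetric bound gives $N(\Theta, \epsilon/L, \norm{\cdot}) \le (1 + 2L\diam(\Theta)/\epsilon)^d$, whence $\log N(\fclass, \objbd\epsilon, \linfstatnorm{\cdot}) \le d\log(1 + 2L\diam(\Theta)/(\objbd\epsilon))$.

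The final step---and the one place where the normalization must be handled with care---is the Dudley-type integral. Because I chose $\objbd \ge L\diam(\Theta)$, the ratio $2L\diam(\Theta)/\objbd$ is at most $2$, so the integrand is dominated by $\sqrt{d\log(1 + 2/\epsilon)}$, and $\int_0^1 \sqrt{\log(1 + 2/\epsilon)}\,d\epsilon$ is a finite numerical constant (the $\epsilon \to 0$ singularity grows only like $\sqrt{\log(1/\epsilon)}$ and is integrable). Thus $\int_0^1 \sqrt{\log N(\fclass, \objbd\epsilon, \linfstatnorm{\cdot})}\,d\epsilon \le C\sqrt{d}$; substituting this bound and $\objbd = L\diam(\Theta) + M_{\theta_0}$ into the expression for $\epsilon_n(t)$ from Theorem~\ref{theorem:robustness} and absorbing numerical factors into $b_1, b_2$ produces~\eqref{eqn:lipschitz-bound}. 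I expect the integral evaluation to be the main (if modest) obstacle: it is precisely the scale-matching $\objbd \asymp L\diam(\Theta)$ built into the definition of $\epsilon_n(t)$---covering at radius $\objbd\epsilon$ with prefactor $\sqrt{\objbd}$---that keeps the logarithm's argument bounded over $(0,1]$ and lets the dimension $d$ emerge cleanly under the square root with no residual $\sqrt{\log(L\diam(\Theta))}$ factor. Everything else is routine bookkeeping.
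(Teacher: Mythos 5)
Your proposal is correct and follows essentially the same route as the paper's proof: transfer an $\epsilon/L$-cover of $\Theta$ to an $\linfstatnorm{\cdot}$-cover of $\fclass$ via Lipschitzness, bound $\objbd$ by $L\diam(\Theta) + M_{\theta_0}$, and substitute into $\epsilon_n(t)$. Your explicit evaluation of the entropy integral (using the scale-matching $\objbd \ge L\diam(\Theta)$ to keep the logarithm's argument bounded by $1 + 2/\epsilon$) is a detail the paper leaves implicit, but it is the intended computation.
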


A key consequence of the bound~\eqref{eqn:robustness-any-rho} is that
$\gamma \tol + \E_{\emp}[\phi_{\gamma}(\theta; Z)]$ \emph{certifies}
robustness for the worst-case population objective for any $\rho$ and
$\theta$.  For a given $\theta$, this certificate is tightest at the achieved
level of robustness $\what{\tol}_n(\theta)$, as noted in the refined
bound~\eqref{eqn:robustness-good-rho} which follows from the duality result
\begin{align}
  \underbrace{\E_{\emp}[\phi_\gamma(\theta; Z)]}_{\rm surrogate~loss}
  +  \underbrace{\gamma\what{\tol}_n(\theta)}_{\rm robustness}
  & = \sup_{P: W_c(P, \emp) \le \what{\tol}_n(\theta)} \E_{P}[\loss(\theta; Z)]
  = \E_{P_n^*(\theta)}[\loss(\theta; Z)].
  \label{eqn:certificate}
\end{align}
(See Section~\ref{section:proof-robustness} for a proof of these equalities.)
We expect $\theta_{\rm WRM}$, the output of Algorithm \ref{alg:thealg}, to be
close to the minimizer of the surrogate loss
$\E_{\emp}[\phi_{\gamma}(\theta; Z)]$ and therefore have the best
guarantees. Most importantly, the certificate~\eqref{eqn:certificate} is easy
to compute via expression~\eqref{eqn:tol-n}: as noted in
Section~\ref{sec:optimization}, the mappings $T(\theta, Z_i)$ are efficiently
computable for large enough $\gamma$, and
$\what{\rho}_n = \E_{\emp}[c(T(\theta, Z), Z)]$.

The bounds~\eqref{eqn:robustness-any-rho}--\eqref{eqn:lipschitz-bound} may
be too large---because of their dependence on covering numbers and
dimension---for practical use in security-critical applications.  With that
said, the strong duality result, Proposition~\ref{prop:duality}, still
applies to any distribution. Given a collection of test
examples $Z^{\rm test}_i$, we may interrogate possible losses under
perturbations for the \emph{test} examples by noting that, if
$\what{P}_{\rm test}$ denotes the empirical distribution on
the test set (say, with putative assigned labels), then
\begin{equation}
  \label{eqn:certify-tests}
  \frac{1}{n_{\rm test}}
  \sum_{i = 1}^n \sup_{z : c(z, Z_i^{\rm test}) \le \tol}
  \left\{\loss(\theta; z)\right\}
  \le \sup_{P : W_c(P, \what{P}_{\rm test})
    \le \tol} \E_P[\loss(\theta; Z)]
  \le \gamma \tol + \E_{\what{P}_{\rm test}}[
    \phi_\gamma(\theta; Z)]
\end{equation}
for all $\gamma, \tol \ge 0$. Whenever $\gamma$ is large enough
(so that this is tight for small $\tol$),
we may efficiently compute the Monge-map~\eqref{eqn:transport-map}
and the test loss~\eqref{eqn:certify-tests} to guarantee bounds on the
sensitivity of a parameter $\theta$ to a particular sample and
predicted labeling based on the sample.

\subsection{Generalization of adversarial examples}
\label{sec:adv-gen}

We can also show that the level of robustness on the training set
generalizes. Our starting point is
Lemma~\ref{lemma:smoothness}, which shows that $T_{\gamma}(\cdot; z)$ is
smooth under Assumptions~\ref{assumption:strong-convexity}
and~\ref{assumption:smoothness}:
\begin{equation}
  \norm{T_\gamma(\theta_1; z)
    - T_\gamma(\theta_2; z)}
  \le \frac{\lipyx}{\hinge{\gamma-\lipy}} \norm{\theta_1 - \theta_2}
  \label{eqn:transport-smoothness}
\end{equation}
for all $\theta_1, \theta_2$, where $\lipy$ was the Lipschitz
constant of $\nabla_z \loss(\theta; z)$. We provide explicit bounds on the smoothness constant $\lipy$ for neural networks with smooth
activation functions in Section \ref{section:examples}. Leveraging this smoothness, we now show that
$\what{\tol}_n(\theta) = \E_{\emp}[c(T_{\gamma}(\theta; Z), Z)]$, the level of
robustness achieved for the empirical problem, concentrates uniformly around
its population counterpart.
\begin{theorem}
  \label{theorem:dist-concentration}
  Let $\mathcal{Z} \subset \{z \in \R^m: \norm{z} \le \zbd\}$ so that
  $\norm{Z} \le \zbd$ almost surely and assume either that (i)
  $c(\cdot, \cdot)$ is $\lipc$-Lipschitz over $\mathcal{Z}$ with respect to the
  norm $\norm{\cdot}$ in each argument, or (ii) that
  $\loss(\theta, z) \in [0, \objbd]$ and $z \mapsto \loss(\theta, z)$ is
  $\gamma \lipc$-Lipschitz for all $\theta \in \Theta$.

  If Assumptions~\ref{assumption:strong-convexity}
  and~\ref{assumption:smoothness} hold, then with probability at least
  $1-e^{-t}$,
  \begin{equation}
    \label{eqn:dist-concentration-supervised}
    \sup_{\theta \in \Theta}
    | \E_\emp[c(T_\gamma(\theta; Z), Z)]
    - \E_{P_0}[c(T_\gamma(\theta; Z), Z)] |
    \le 4 B \sqrt{\frac{1}{n}
      \left(t + \log N \left( \Theta,
          \frac{\hinge{\gamma-\lipy} t}{4\lipc \lipyx},
          \norm{\cdot} \right) \right)}.
  \end{equation}
  where $B = \lipc \zbd$ under assumption $(i)$ and $B = \objbd/\gamma$ under
  assumption $(ii)$.
\end{theorem}
\noindent See Section~\ref{sec:proof-of-dist-concentration} for the proof.
For $\Theta \subset \R^d$, we have
$\log \covnum(\Theta, \epsilon, \norm{\cdot}) \le d \log (1 +
  \frac{\diam(\Theta)}{\epsilon})$
so that the bound~\eqref{eqn:dist-concentration} gives the usual $\sqrt{d/n}$
generalization rate for the distance between adversarial perturbations and
natural examples. Another consequence of Theorem~\ref{theorem:dist-concentration} is that
$\what{\tol}_n(\theta_{\rm WRM})$ in the
certificate~\eqref{eqn:robustness-good-rho} is positive as long as the loss
$\loss$ is not completely invariant to data. To see this, note from the
optimality conditions for $T_{\gamma}(\theta; Z)$ that
$\E_{P_0}[c(T_{\gamma}(\theta; Z), Z)] = 0$ iff
$\nabla_{z} \loss(\theta; z) = 0$ almost surely, and hence for large enough
$n$, we have $\what{\tol}_n (\theta) > 0$ by the
bound~\eqref{eqn:dist-concentration}.

\renewcommand{\grady}{\mathsf{g}_{\mathsf{x}}}
\renewcommand{\lipy}{L_{\mathsf{xx}}}
\renewcommand{\lipxy}{L_{\mathsf{\theta x}}}
\renewcommand{\lipyx}{L_{\mathsf{x \theta}}}

An analogous result to Theorem~\ref{theorem:dist-concentration}
holds in the supervised-learning setting with the modified cost
function~\eqref{eqn:supervised-cost}. Abusing notation,  redefine the
transport map with the modified cost function~\eqref{eqn:supervised-cost}
\begin{equation*}
  \label{eqn:transport-map-supervised}
  T_\gamma(\theta; (x_0, y_0)) \defeq
  \argmax_{x \in \mathcal{X}} \{ \loss(\theta; (x, y_0)) - \gamma c_x(x, x_0)\}.
\end{equation*}
\begin{corollary}
  \label{corollary:dist-concentration-supervised}
  Let $\mathcal{Z} \subset \{z \in \R^m: \norm{z} \le \zbd\}$ so that
  $\norm{Z} \le \zbd$ almost surely and assume either that (i)
  $c_x(\cdot, \cdot)$ is $\lipc$-Lipschitz over $\mathcal{X}$ with respect to the
  norm $\norm{\cdot}$ in each argument, or (ii) that
  $\loss(\theta, z) \in [0, \objbd]$ and $x \mapsto \loss(\theta, (x, y))$ is
  $\gamma \lipc$-Lipschitz for all $\theta \in \Theta$.  If
  Assumptions~\ref{assumption:strong-convexity-supervised}
  and~\ref{assumption:smoothness-supervised} hold, then with probability at
  least $1-e^{-t}$,
  \begin{equation}
    \label{eqn:dist-concentration}
    \sup_{\theta \in \Theta}
    | \E_\emp[c(T_\gamma(\theta; Z), Z)]
    - \E_{P_0}[c(T_\gamma(\theta; Z), Z)] |
    \le 4 D \sqrt{\frac{1}{n}
      \left(t + \log N \left( \Theta,
          \frac{\hinge{\gamma-\lipy} t}{4\lipc \lipyx},
          \norm{\cdot} \right) \right)}.
  \end{equation}
  where $B = \lipc \zbd$ under assumption $(i)$ and $B = \objbd/\gamma$ under
  assumption $(ii)$.
\end{corollary}
\noindent Proof of Corollary~\ref{corollary:dist-concentration-supervised}
is identical to that of Theorem~\ref{theorem:dist-concentration}, but applies Lemma~\ref{lemma:smoothness-supervised} instead of
Lemma~\ref{lemma:smoothness}.

\renewcommand{\grady}{\mathsf{g}_{\mathsf{z}}}
\renewcommand{\lipy}{L_{\mathsf{zz}}}
\renewcommand{\lipxy}{L_{\mathsf{\theta z}}}
\renewcommand{\lipyx}{L_{\mathsf{z \theta}}}

\section{Bounds on smoothness of neural networks}
\label{section:examples}

In this section, we give upper bounds on the Lipschitz constant $L_{\mathsf{xx}}$ of
the loss $x \mapsto \loss(\theta; (x, y))$. We focus on the supervised-learning setting presented in Section~\ref{sec:supervised} for ease of
notation.
Since our optimization and generalization guarantees apply only for
$\gamma \ge L_{\mathsf{xx}}$, our below examples serve as concrete numbers at
which we can provide theoretical guarantees on adversarial training. We first
provide bounds on deep neural networks with smooth activation functions, and
apply them in the classification setting. Our bounds are based on worst-case
matrix norm bounds that can be prohibitively loose for deep architectures but
often provide reasonable estimates in moderate scales. Due to the conservative
nature of the bound, choosing $\gamma$ larger than this value---so that
theoretical results in previous sections apply---may not yield appreciable
adversarial robustness in practice. In Section~\ref{sec:experiments}, we
provide empirical discussion of the bounds, and further provide some negative
results even in the MNIST setting in Section~\ref{section:three-choose-two}.

Before we present our examples, we first set some notation. For a parameter
$\theta$ and a feature vector $x \in \mathcal{X} \subset \R^p$, we denote a
deep network with $L$ layers by $x \mapsto F_L(\theta; x)$ which maps inputs
$x$ to an ouput $y = F(\theta; X) \in \R^{\numclass}$. In the classification
setting, $\numclass$ denotes the number of classes, and $F_{L, k}(\theta; x)$
is the $k$-th element of $F_{L}(\theta; x)$; the final loss is computed using
the softmax loss
\begin{equation}
  \label{eqn:softmax}
  \loss(\theta; (x, y)) = - \log p_y(\theta; x)
  ~~\mbox{where}~~
  p_{y}(\theta; x) \defeq \frac{\exp(F_{L, y}(\theta; x))}
  {\sum_{k=1}^K \exp(F_{L, k}(\theta; x))}
\end{equation}
where $p_y(\theta; x)$ is the softmax probability for class
$y \in \{1, \ldots, \numclass\}$.

We let $\theta = (\theta_1, \ldots, \theta_L)$, where
$\theta_l \in \R^{d_{l, I} \times d_{l-1, O}}$ is the weight matrix at the
$l$-th layer of the deep network, where $d_{0, O} = p$ and
$d_{L, O} = \numclass$. We denote a nonlinear operation function after the
$l$-th linear layer $\sigma_{l}: \R^{d_{l, I}} \to \R^{d_{l, O}}$, so that
output after the operation is given by
\begin{equation}
  \label{eqn:deep-net}
  F_l(\theta; x) \defeq \sigma_l(\theta_l \cdot \sigma_{l-1}(\theta_{l-1} \cdots
  \sigma_1(\theta_1 \cdot x) \cdots )).
\end{equation}
For convolutional neural networks, our notation $\sigma_l$ includes both
pooling operations and activation functions. We also denote the Jacobian of $x \mapsto F_{l}(\theta; x)$ as $J_xF_l(\theta; x)$.  

To proceed with providing an upper bound on the Lipschitz constant of a neural network, we assume a given level of smoothness for each respective layer of the network.
\begin{assumption}
  \label{assumption:activation}
  For all $l =1, \ldots, L$, $\sigma_l: \R^{d_{l, I}} \to \R^{d_{l, O}}$ is
  $L_l^0$-Lipschitz w.r.t. the $\ell_2$-norm $\ltwo{\cdot}$, and its Jacobian
  $J\sigma_l: \R^{d_{l, I}} \to \R^{d_{l, O} \times d_{l, I}}$ is
  $L_l^1$-Lipschitz w.r.t. $(\opnorm{\cdot}, \ltwo{\cdot})$, where
  $\opnorm{\cdot}$ is the $\ell_2$-operator (spectral) norm. Furthermore, $L_l^0 > 0$ for all $l$.
\end{assumption}
\noindent The last part of the assumption that $L_l^0 > 0$ for all layers in the network assures that the network is not degenerate, as $L^0=0$ defines a layer whose output is constant and independent of the input. We now provide a few examples of layer-wise Lipschitz constants for smooth operations common in convolutional neural networks.

\begin{example}[Average pooling]
  Let $\sigma: \R^{d_I} \to \R^{d_O}$ be given by
  $\sigma(x)_j =\frac{1}{|I_j|} \sum_{a \in I_j} x_a $, where
  $I_j \subseteq \{1, \ldots, d_I\}$ with $\min_{j} |I_j| \ge m_l$ and $\max_{j} |I_j| \le m_u$. Further, let $N$ be the maximum number of times an index appears in $I_j$ for $j=1, \ldots, d_O$ so that $N \defeq \max_{1 \le a \le d_{I}} \sum_{j=1}^{d_{O}} \indic{a \in I_j} \le d_O$. Then, $\sigma$ satisfies Assumption~\ref{assumption:activation} with $L^0 = \frac{1}{m_l}\sqrt{{N m_u}}$ and $L^1 = 0$. To see this, note that
  \begin{align*}
    | \sigma(x)_j - \sigma(x')_j|
    = \left| \frac{1}{|I_j|} \sum_{a \in I_j} (x_a - x_a') \right|
    \le \frac{1}{m_l} \left | \sum_{a \in I_j} (x_a - x_a') \right | \le \frac{\sqrt{m_u}}{m_l}\ltwo{x_{I_j} -x'_{I_j}},
  \end{align*}
  where $x_{I_j}= \sum_{a \in I_j} x_a \mathbf{e}_a$. Then, 
  \begin{equation*}
    \ltwo{\sigma(x) - \sigma(x')} = 
    \left( \sum_{j=1}^{d_O} (\sigma(x)_j - \sigma(x')_j)^2 \right)^{1/2}
    \le \frac{\sqrt{m_u}}{m_l}\left( \sum_{j=1}^{d_O} \ltwo{x_{I_j} -x'_{I_j}}^2 \right)^{1/2}
    \le \frac{\sqrt{Nm_u}}{m_l} \ltwo{x - x'}.
  \end{equation*}  
  Since $\sigma$ is linear, we have $L^1 = 0$.
\end{example}

\begin{example}[Sigmoid] \label{example:sigmoid}
  Let $\sigma: \R^d \to \R^d$ be the elementwise sigmoid activation
  $\sigma(x)_j = \frac{1}{1+e^{-x_j}}$. Then, $\sigma$ satisfies
  Assumption~\ref{assumption:activation} with $L^0=1/4$ and $L^1=1/10$. To see this, note that $J_x\sigma(x)$, the Jacobian of $x \mapsto \sigma(x)$
  is the diagonal matrix with $i$-th diagonal entry
  $\sigma'(x)_i := \frac{d}{dx_i}\sigma(x)_i=
  \sigma(x)_i(1-\sigma(x)_i)$. $L_0$ is given by the spectral norm of this
  matrix, which is $\max_i \sigma'(x)_i \le \max_i \sup_x \sigma'(x)_i =
  1/4$. For $L^1$, we first define
  $\sigma''(x)_i := \frac{d^2}{dx_i^2}\sigma(x)_i=
  \sigma(x)_i(1-\sigma(x)_i)(1-2\sigma(x)_i)$ and note that
    $\sup_x \sigma''(x)_i \le -\frac{(1-\sqrt{3})(2-\sqrt{3})}{(3-\sqrt{3})^3}
    \le \frac{1}{10}$.
Then, we have
\begin{align*}
  \opnorm{J_x\sigma(x) - J_x\sigma(x')}
  & = \max_i | \sigma'(x)_i - \sigma'(x')_i|
    \le \max_i \sup_{x,x'} | \sigma'(x)_i - \sigma'(x')_i| \\
  & \le \max_i \sup_{y}|{\sigma''(y)_i}| \linf{x-x'} 
  \le \frac{1}{10}\ltwo{x-x'}
  \end{align*}
 \end{example}

\begin{example}[ELU]
  Let $\sigma: \R^d \to \R^d$ be the ELU activation with scale parameter
  $\alpha = 1$:
  \begin{equation*}
    \sigma(x)_j = x_j \indic{x_j\ge 0}
    + (e^{x_j}-1) \indic{x_j < 0}.
  \end{equation*}
  Then, $\sigma$ satisfies Assumption~\ref{assumption:activation} with
  $L^0 = L^1 = 1$. %
\end{example}

The following lemma bounds the smoothness of the map
$x \mapsto F_l(\theta; x)$. To ease notation,  define
\begin{equation}
  \label{eq:lipschitzconsts}
  \alpha_l(\theta) \defeq \prod_{j=1}^l L_j^0 \opnorm{\theta_j}
   ~~~~~\mbox{and}~~~~~ \beta_l(\theta) \defeq \alpha_l(\theta)
    \sum_{j=1}^l \left\{ \frac{L_j^1}{(L_j^0)^2}\alpha_j(\theta)
  \right\}.
\end{equation}
Assumption~\ref{assumption:activation} guarantees $\beta_l(\theta)$ is well-defined, as we consider non-degenerate networks with $L_j^0>0$. 
\begin{proposition}
  \label{proposition:deep-net-smooth}
  Let Assumption~\ref{assumption:activation} hold. For all $l = 1, \ldots, L$,
  $F_l(\theta; \cdot): \R^{d_{0, I}} \to \R^{d_{l, O}}$ is
  $\alpha_l(\theta)$-Lipschitz w.r.t. the $\ell_2$-norm
  $\ltwo{\cdot}$, and its Jacobian
  $J_x F_l(\theta; \cdot): \R^{d_{0, I}} \to \R^{d_{l, O} \times d_{0, I}}$ is
  $\beta_l(\theta)$-Lipschitz w.r.t. $(\opnorm{\cdot}, \ltwo{\cdot})$, where $\opnorm{\cdot}$ is
  the $\ell_2$-operator (spectral) norm.
\end{proposition}
\noindent See Section~\ref{section:proof-of-deep-net-smooth} for the proof. We can use these bounds to derive our desired bound on the
Lipschitz constant of the final output of the network~\eqref{eqn:softmax}. We defer its proof
to Section~\ref{section:proof-of-softmax-smooth}.
\begin{corollary}
  \label{corollary:softmax-smooth}
Let Assumption~\ref{assumption:activation} hold. For the softmax loss
  defined by expression~\eqref{eqn:softmax},
  $x \mapsto \nabla_x \loss(\theta; (x, y))$ is
  $\left (\sqrt{2}\beta_L(\theta) +\alpha_L(\theta)^2 \right )$-Lipschitz w.r.t. $(\opnorm{\cdot}, \ltwo{\cdot})$.
\end{corollary}

\section{Experiments}
\label{sec:experiments}

Our technique for distributionally robust optimization with adversarial training
extends beyond supervised learning. To that end,
we present empirical evaluations on
supervised and reinforcement learning tasks where we compare performance with
empirical risk minimization (ERM) and, where appropriate, models trained with
the fast-gradient method~\eqref{eqn:fgsm}
(FGM)~\citep{ASgoodfellow2015explaining}, its iterated variant
(IFGM)~\citep{ASkurakin2016adversarial}, and the projected-gradient method
(PGM)~\citep{MadryMaScTsVl17}. PGM augments stochastic gradient steps
for the parameter $\theta$ with projected gradient ascent
over $x \mapsto \loss(\theta; x, y)$, iterating (for data point $x_i, y_i$)
\begin{equation}
  \label{eqn:madry}
  \Delta x_i^{t+1}(\theta) := 
  \argmax_{\norm{\eta}_p \le \epsilon}
  \{\nabla_x \loss(\theta; x_i^t, y_i)^T\eta\}
  ~~ \mbox{and} ~~
  x_i^{t+1} \defeq \Pi_{\mathcal{B}_{\epsilon, p}(x_i^t)}\left \{ x_i^t
    + \alpha_t\Delta x_i^t(\theta) \right \}
\end{equation}
for $t = 1, \ldots, T_{\rm adv}$, where $\Pi$ denotes projection onto
$\mathcal{B}_{\epsilon, p}(x_i):=\{x: \norm{x - x_i}_p \le \epsilon\}$.  

The adversarial training literature (e.g.~\citet{ASgoodfellow2015explaining})
usually considers $\linf{\cdot}$-norm attacks, which allow imperceptible
perturbations to all input features.  Since in most scenarios it is reasonable
to defend against weaker adversaries that instead perturb influential features
more, we consider training against $\ltwo{\cdot}$-norm attacks. Namely, we use
the squared Euclidean cost for the feature vectors
$c_x(x,x') \defeq \ltwo{x - x'}^2$ and define the overall cost as the
covariate-shift adversary~\eqref{eqn:supervised-cost} for WRM
(Algorithm~\ref{alg:thealg}), and we use $p = 2$ for FGM, IFGM, PGM training
in all experiments; we still test against adversarial perturbations with
respect to the norms $p = 2, \infty$. We use $T_{\rm adv} = 15$ iterations for
all iterative methods (IFGM, PGM, and WRM) in training and attacks.

In Section~\ref{sec:toy}, we visualize differences between our approach and
ad-hoc methods to illustrate the benefits of certified robustness. In
Section~\ref{sec:mnist} we consider supervised learning problems  where we adversarially perturb the test data for the MNIST and Stanford Dogs~\cite{khosla2011novel} datasets. Finally, we consider a reinforcement learning problem in Section~\ref{sec:cartpole}, where the Markov
decision process used for training differs from that for testing.

WRM enjoys the theoretical guarantees of Sections~\ref{sec:approach}
and~\ref{sec:generalization-robustness} for large $\gamma$, but for small $\gamma$ (large
adversarial budgets), WRM becomes a heuristic like other methods. In
Appendix~\ref{sec:large-adversary}, we compare WRM with other methods on
attacks with large adversarial budgets. In Appendix~\ref{section:sup-norm}, we
further compare WRM---which is trained to defend against
$\ltwo{\cdot}$-adversaries---with other heuristics trained to defend against
$\linf{\cdot}$-adversaries. WRM matches or outperforms other
heuristics against imperceptible attacks, while it
underperforms for attacks with large adversarial budgets.

\subsection{Visualizing the benefits of certified robustness}\label{sec:toy}
For our first experiment, we generate synthetic data $Z = (X, Y) \sim P_0$ by
$X_i \simiid \normal(0_2, I_2)$ with labels
$Y_i = \sign(\ltwo{x} - \sqrt{2})$, where $X \in \R^2$ and $I_2$ is the
identity matrix in $\R^2$. Furthermore, to create a wide margin separating the
classes, we remove data with $\ltwo{X} \in (\sqrt{2}/1.3, 1.3\sqrt{2})$. We
train a small neural network with 2 hidden layers of size 4 and 2 and either
all ReLU or all ELU activations between layers, comparing our approach (WRM)
with ERM and the 2-norm FGM. For our approach we use
$\gamma=2$, and to make fair comparisons with FGM we use
\begin{equation}
  \label{eqn:fgm-fair}
  \epsilon^2 = \what{\rho}_n(\theta_{\rm WRM})
  = W_c(P_n^*(\theta_{ \rm WRM}), \emp)
  = \E_{\emp}[c(T(\theta_{ \rm WRM}, Z),Z)],
\end{equation}
\ifdefined\useorstyle
for the fast-gradient perturbation magnitude $\epsilon$, where
$\theta_{\rm WRM}$ is the output of Algorithm~\ref{alg:thealg} (for ELU activations with scale parameter $1$, $\gamma=2$ makes problem
  \eqref{eqn:inner-sup} strongly concave over the training data. ReLUs have no guarantees, but we use 15 gradient steps
  with stepsize $1/\sqrt{t}$ for both activations). 
\else
for the fast-gradient perturbation magnitude $\epsilon$, where
$\theta_{\rm WRM}$ is the output of Algorithm~\ref{alg:thealg}.\footnote{For ELU activations with scale parameter $1$, $\gamma=2$ makes problem
  \eqref{eqn:inner-sup} strongly concave over the training data. ReLUs have no guarantees, but we use 15 gradient steps
  with stepsize $1/\sqrt{t}$ for both activations.}
\fi

Figure~\ref{fig:toy} illustrates the classification boundaries for the three
training procedures over the ReLU-activated (Figure~\ref{fig:toy}(a)) and
ELU-activated (Figure~\ref{fig:toy}(b)) models. Since $70\%$ of the data are
of the {\color{matlab_blue} blue} class ($\ltwo{X} \le \sqrt{2}/1.3$),
distributional robustness favors pushing the classification boundary outwards;
intuitively, adversarial examples will come from pushing {\color{matlab_blue}
  blue} points outwards across the boundary. ERM and FGM suffer from
sensitivities to various regions of the data, as evidenced by the lack of
symmetry in their classification boundaries. For both activations, WRM pushes
the classification boundaries further outwards than ERM or FGM.  However, WRM
with ReLUs still suffers from sensitivities (e.g.\ radial asymmetry in the
classification surface) due to the lack of robustness guarantees. WRM with
ELUs provides a certified level of robustness, yielding an axisymmetric
classification boundary that hedges against adversarial perturbations in all
directions.

\begin{figure}[!!t]
\begin{minipage}{0.45\columnwidth}
\centering
\subfigure[ReLU model]{\includegraphics[width=0.85\textwidth]{./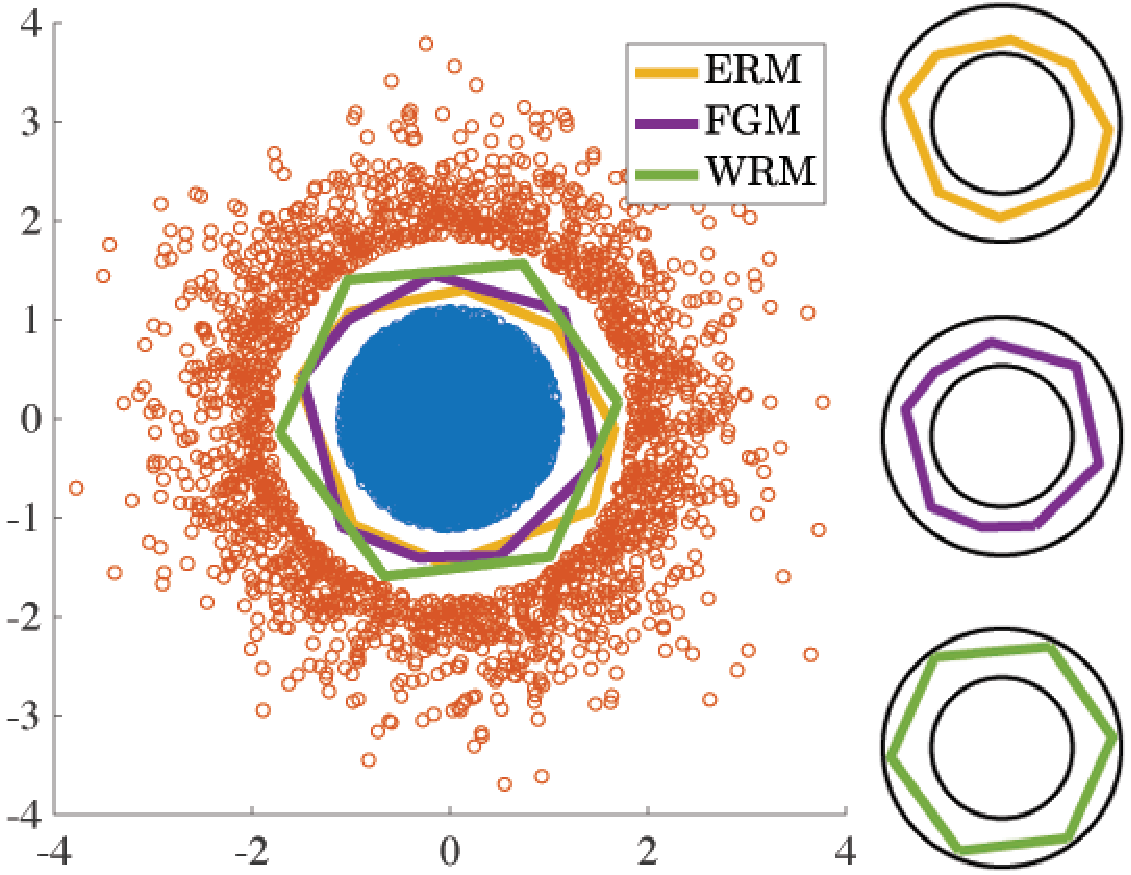}}%
\end{minipage}
\centering
\begin{minipage}{0.45\columnwidth}%
\centering
\subfigure[ELU model]{\includegraphics[width=0.85\textwidth]{./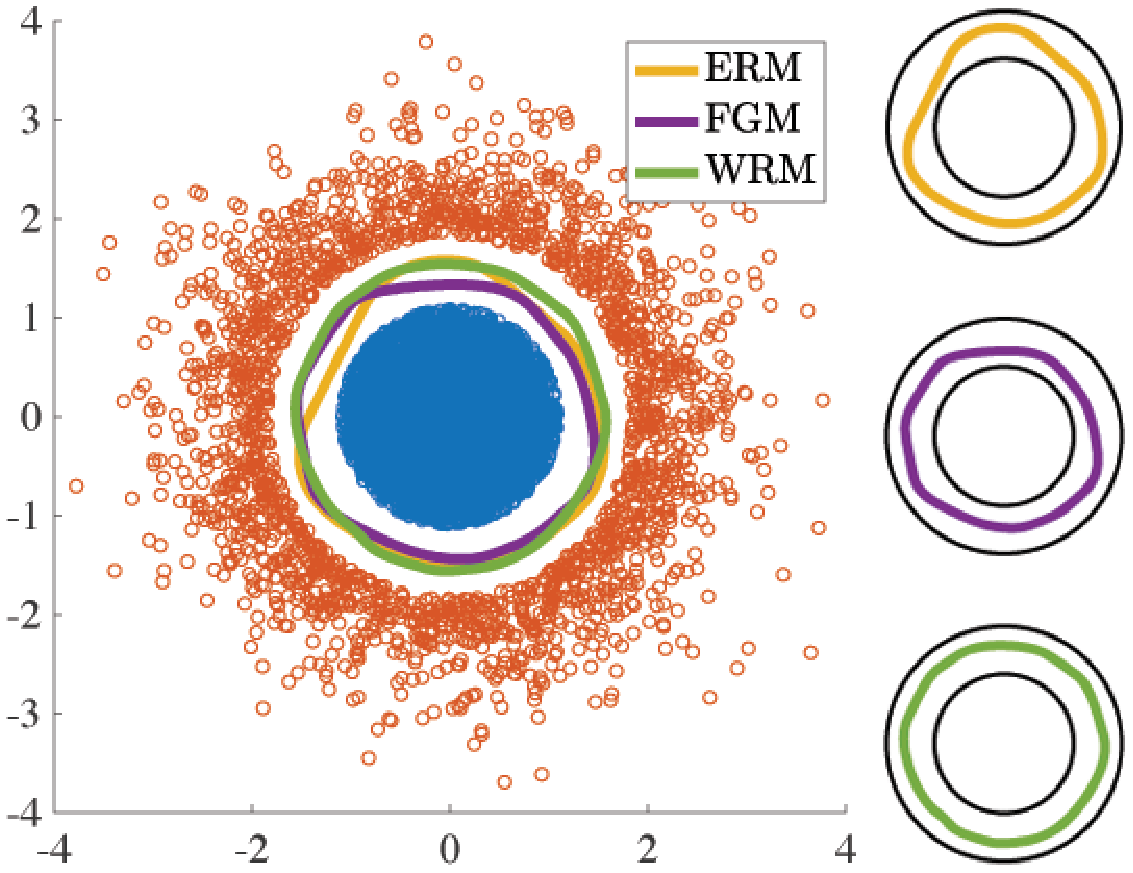}}%
\end{minipage}
\centering
\caption[]{\label{fig:toy}Experimental results on synthetic data. Training data are shown in {\color{matlab_blue} blue} and {\color{matlab_red} red}. Classification boundaries are shown in {\color{matlab_yellow} yellow}, {\color{matlab_purple} purple}, and {\color{matlab_green} green} for ERM, FGM, and WRM respectively. The boundaries are shown with the training data as well as separately with the true class boundaries.}
\end{figure}
\begin{figure}[!!t]
\begin{minipage}{0.45\columnwidth}
\centering
\subfigure[Synthetic data]{\includegraphics[width=0.85\textwidth]{./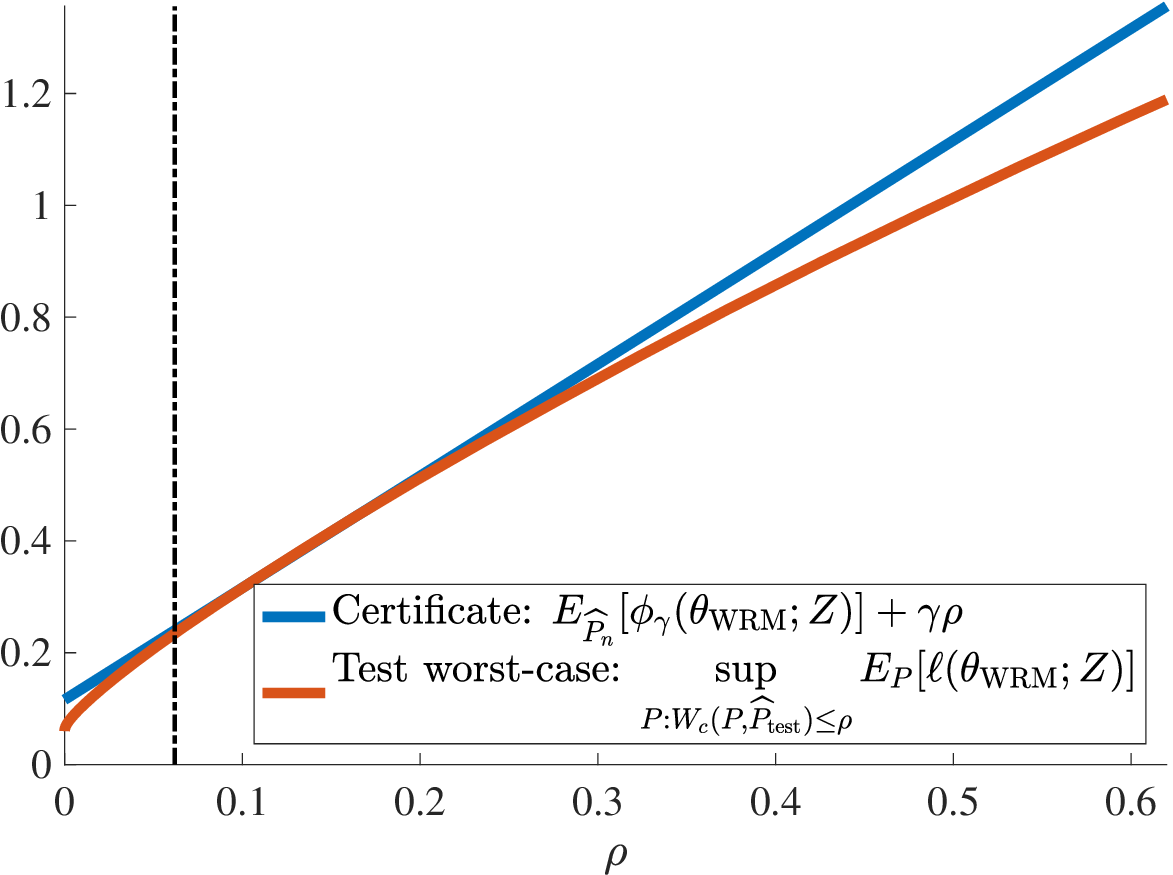}}%
\end{minipage}
\centering
\begin{minipage}{0.45\columnwidth}%
  \centering
  \subfigure[MNIST]{\includegraphics[width=0.85\textwidth]{./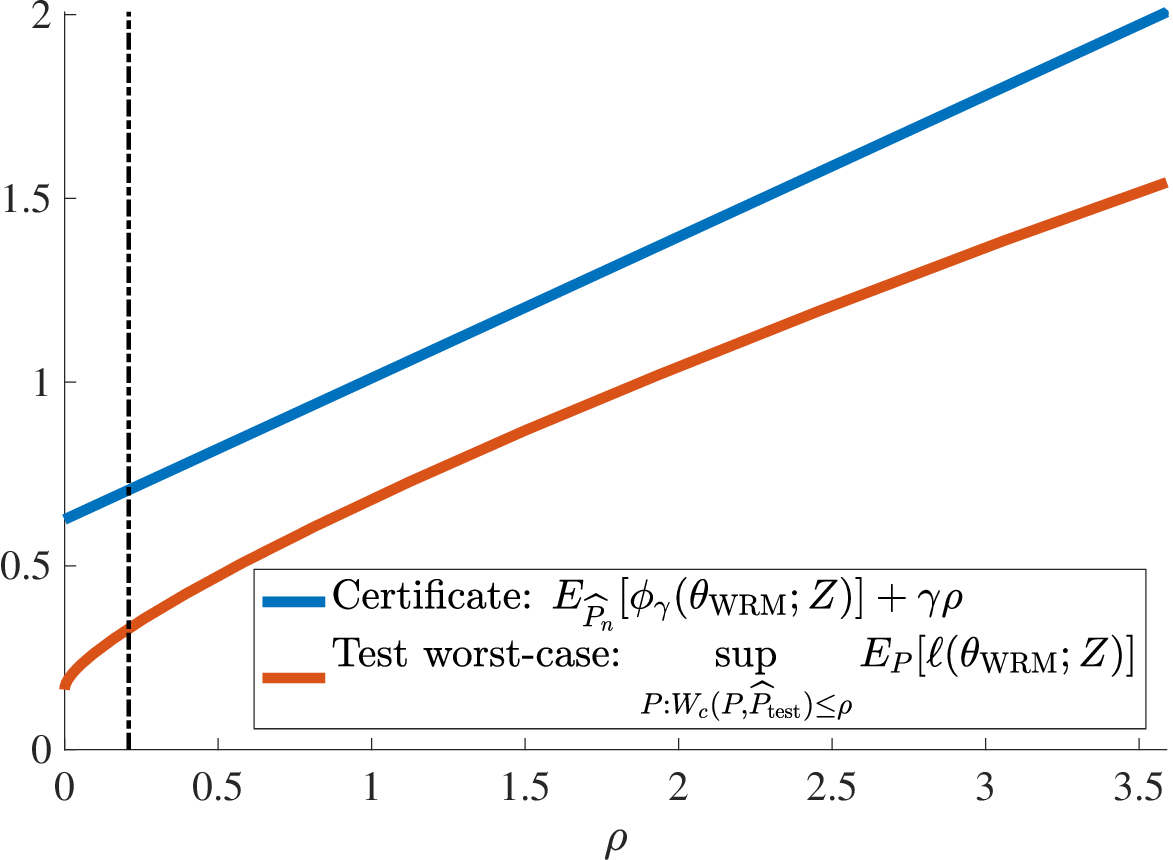}}%
\end{minipage}
\centering
\caption[]{\label{fig:loss-bound} Empirical comparison between certificate of robustness~\eqref{eqn:robustness-any-rho} ({\color{matlab_blue} blue}) and test
  worst-case performance ({\color{matlab_red} red}) for experiments with (a) synthetic data and (b) MNIST. We omit the certificate's error term $\epsilon_n(t)$. The vertical bar indicates the achieved level of robustness on the training set $\what{\tol}_n$($\theta_{\rm WRM}$).}
\end{figure}

Recall that our \emph{certificates of robustness} on the worst-case
performance given in Theorem~\ref{theorem:robustness} applies for any level of
robustness $\tol$. In Figure~\ref{fig:loss-bound}(a), we plot our
certificate~\eqref{eqn:robustness-any-rho} against the out-of-sample (test)
worst-case performance $\sup_{P: W_c(P, P_0) \le \tol}
\E_P[\loss(\theta;Z)]$ for WRM with ELUs. Since the worst-case loss is hard to evaluate
directly, we solve its Lagrangian relaxation~\eqref{eqn:lagrangian} for
different values of $\gamma_{\rm adv}$. For each $\gamma_{\rm adv}$, we
consider the distance to adversarial examples in the test dataset
\begin{equation}
  \label{eqn:rho-hat-test}
  \what{\rho}_{\rm test}(\theta) :=
  \E_{\what{P}_{\rm test}}[c({T_{\gamma_{\rm adv}}(\theta, Z), Z})],
\end{equation}
where $\what{P}_{\rm test}$ is the test distribution,
$c(z,z'):=\| x-x'\|_2^2 + \infty \cdot \indic{y \neq y'}$ as before, and
$T_{\gamma_{\rm adv}} (\theta, Z) = \argmax_{z} \{ \loss(\theta; z) -
\gamma_{\rm adv} c({z, Z}) \}$ is the adversarial perturbation of $Z$ (Monge
map) for the model $\theta$. The worst-case losses on the test dataset are
then given by
\begin{equation*}
  \E_{\what{P}_{\rm test}}[\phi_{\gamma_{\rm adv}}(\theta_{\rm WRM}; Z)] +
  \gamma_{\rm adv} \what{\rho}_{\rm test}(\theta_{\rm WRM}) = \sup_{P: W_c(P,
    P_{\rm test}) \le \what{\tol}_{\rm test}(\theta_{\rm WRM})}
  \E_{P}[\loss(\theta_{\rm WRM}; Z)].
\end{equation*}
As anticipated, our certificate is almost tight near the achieved level of
robustness $\what{\tol}_n(\theta_{\rm WRM})$ for WRM~\eqref{eqn:tol-n} and
provides a performance guarantee even for other values of $\tol$.

\subsection{Learning a more robust classifier}\label{sec:mnist}
We now consider two real-world supervised-learning benchmarks. In the first,
we use our adversarial training method on the MNIST dataset in an unprincipled
manner: we use a fixed level $\gamma$ but without prescribing this level to be
such that the robust surrogate~\eqref{eqn:inner-sup} is concave. For the more
realistic Stanford Dogs dataset~\cite{khosla2011novel}, we first present
results where we ran Algorithm~\ref{alg:thealg} on a semantic feature space
with $\gamma$ chosen large enough to satisfy bounds given in
Section~\ref{section:examples}. We complement these experiments with results
where we ran Algorithm~\ref{alg:thealg} in an unprincipled manner on raw pixel
perturbations, analogous to our MNIST results.

\subsubsection{The MNIST dataset}\label{sec:mnist-real}
\ifdefined\useorstyle
For the MNIST dataset, we train small neural network classifiers consisting of $8\times8$, $6\times6$, and
$5\times5$ convolutional filter layers with ELU activations followed by a
fully connected layer and softmax output. We train WRM with
$\gamma=0.04 \E_{\emp}[\norm{X}_2]$, and for the other methods we choose
$\epsilon$ as the level of robustness achieved by
WRM~\eqref{eqn:fgm-fair} (for this $\gamma$,
  $\phi_\gamma(\theta_{\rm WRM}; z)$ is strongly concave for $98\%$ of the
  training data). In the figures, we scale the budgets $1/\gamma_{\rm adv}$
and $\epsilon_{\rm adv}$ for the adversary with
$C_p:=\E_{\emp}[\norm{X}_p]$ (for the standard MNIST dataset,
  $C_2:=\E_{\emp}\norm{X}_2=9.21$ and
  $C_{\infty}:=\E_{\emp}\norm{X}_\infty=1.00$).
\else
For the MNIST dataset, we train a small neural network classifiers consisting of $8\times8$, $6\times6$, and
$5\times5$ convolutional filter layers with ELU activations followed by a
fully connected layer and softmax output. We train WRM with
$\gamma=0.04 \E_{\emp}[\norm{X}_2]$, and for the other methods we choose
$\epsilon$ as the level of robustness achieved by
WRM~\eqref{eqn:fgm-fair}.\footnote{For this $\gamma$,
  $\phi_\gamma(\theta_{\rm WRM}; z)$ is strongly concave for $98\%$ of the
  training data.} In the figures, we scale the budgets $1/\gamma_{\rm adv}$
and $\epsilon_{\rm adv}$ for the adversary with
$C_p:=\E_{\emp}[\norm{X}_p]$.\footnote{For the standard MNIST dataset,
  $C_2:=\E_{\emp}\norm{X}_2=9.21$ and
  $C_{\infty}:=\E_{\emp}\norm{X}_\infty=1.00$.}
 \fi

In Figure~\ref{fig:loss-bound}(b) we illustrate the validity of our
certificate of robustness~\eqref{eqn:robustness-any-rho} for the worst-case
test performance for arbitrary level of robustness $\rho$.  We see that our
certificate provides a performance guarantee for out-of-sample worst-case
performance. In Figure~\ref{fig:mnist}, we compare our method against
different adversarial training techniques; all methods achieve  $> 99\%$
test-set accuracy, implying there is little test-time penalty for the
robustness levels ($\epsilon$ and $\gamma$) used for training. It is thus
important to distinguish the methods' abilities to combat attacks.  We test
performance of the five methods (ERM, FGM, IFGM, PGM, WRM) under PGM
attacks~\eqref{eqn:madry} with respect to $2$- and $\infty$-norms. In
Figure~\ref{fig:mnist}(a) and (b), all adversarial methods outperform ERM, and
WRM offers more robustness even with respect to these PGM attacks. Training
with the Euclidean cost still provides robustness to $\infty$-norm fast
gradient attacks. We provide further evidence in Appendix
\ref{sec:more-attacks}.

In Figure~\ref{fig:mnist2}(a), we study stability of the loss surface with
respect to perturbations to inputs. We note that small values of
$\what{\tol}_{\rm test}(\theta)$, the distance to adversarial
examples~\eqref{eqn:rho-hat-test}, correspond to small magnitudes of
$\nabla_z \loss(\theta; z)$ in a neighborhood of the nominal input, which
ensures stability of the model. Figure~\ref{fig:mnist2}(a) shows that
$\what{\rho}_{\rm test}$ differs by orders of magnitude between the training
methods (models
$\theta = \theta_{\rm ERM}, \theta_{\rm FGM}, \theta_{\rm IFGM}, \theta_{\rm
  PGM}, \theta_{\rm WRM}$); the trend is nearly uniform over all
$\gamma_{\rm adv}$, with $\theta_{\rm WRM}$ being the most stable. Thus, we
see that our adversarial-training method defends against gradient-exploiting
attacks by reducing the magnitudes of gradients near the nominal input.

Figure~\ref{fig:mnist2}(b) presents qualitative examples that illustrate
the utility of our approach. For a single test datapoint, we adversarially
perturb the image until the model misclassifies it. We again
consider WRM attacks and we decrease $\gamma_{\rm adv}$ until each model
misclassifies the input. The original label is 8, whereas on the adversarial
examples IFGM predicts 2, PGM predicts 0, and the other models predict
3. WRM's ``misclassifications'' appear consistently reasonable to the human
eye (see Appendix \ref{sec:more-stability} for examples of other digits); WRM
defends against gradient-based exploits by learning a representation that
makes gradients point towards inputs of other classes. Together, Figures
~\ref{fig:mnist2}(a) and (b) depict our method's defense mechanisms to
gradient-based attacks: creating a  stable loss surface by reducing the
magnitude of gradients and improving their interpretability.

\begin{figure}[!!t]
\begin{minipage}{0.45\columnwidth}
\centering
\subfigure[Test error vs. $\epsilon_{\rm adv}$ for $\|\cdot\|_2$ attack]{\includegraphics[width=0.9\textwidth]{./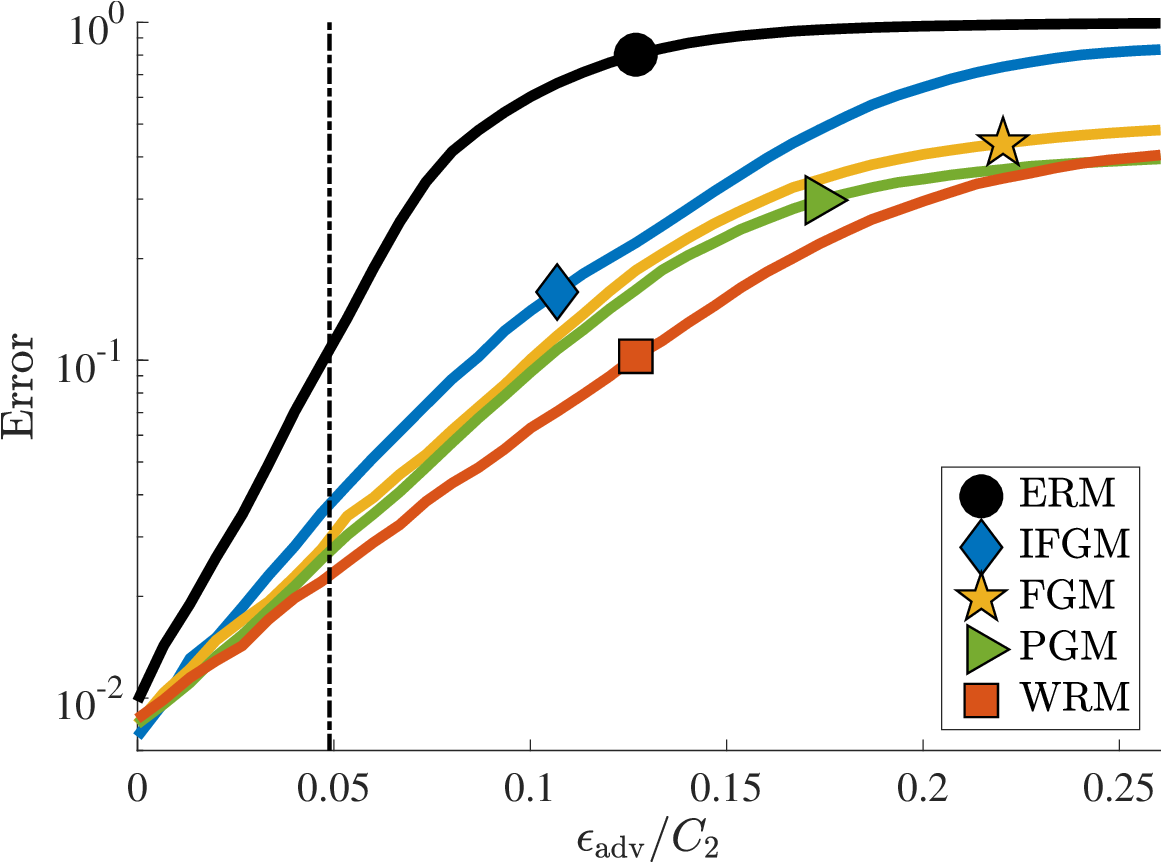}}%
\end{minipage}
\centering
\begin{minipage}{0.45\columnwidth}%
\centering
\subfigure[Test error vs. $\epsilon_{\rm adv}$ for $\|\cdot\|_{\infty}$ attack]{\includegraphics[width=0.9\textwidth]{./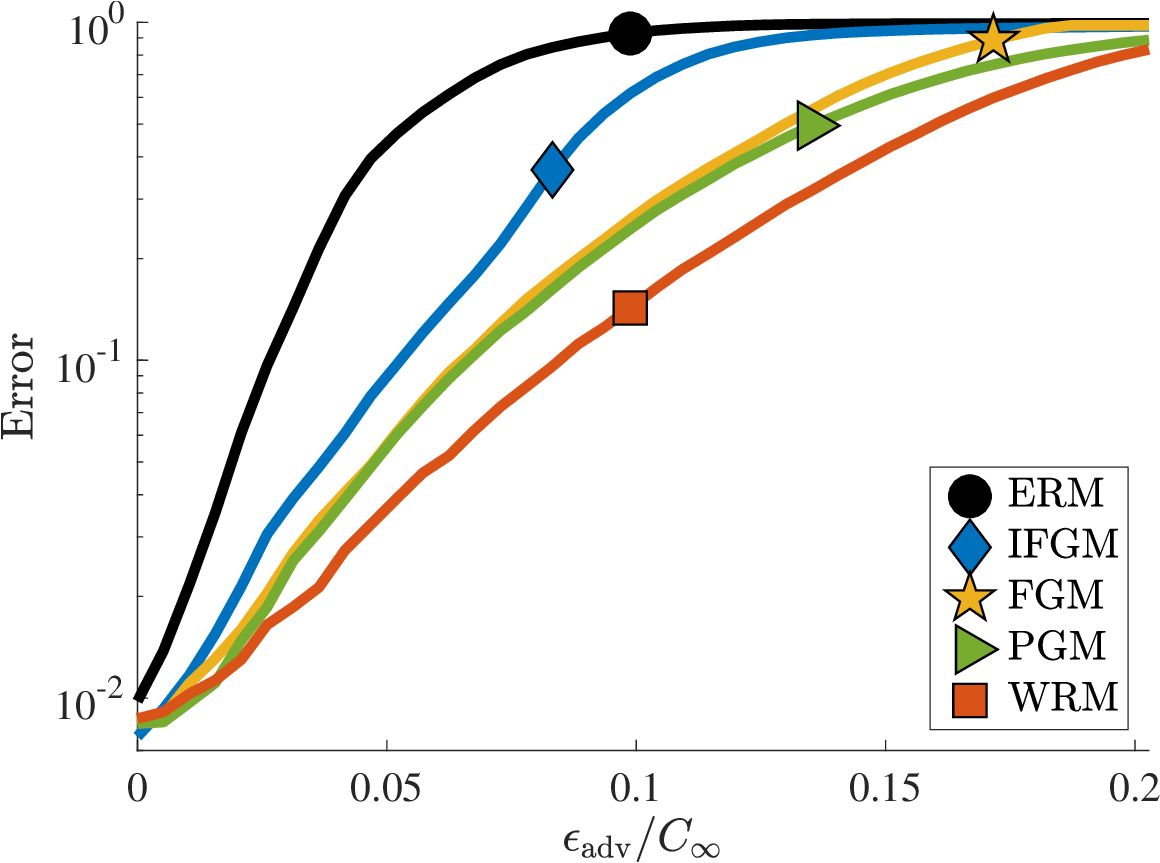}}%
\end{minipage}
\centering
\caption[]{\label{fig:mnist}PGM attacks on the MNIST
  dataset. (a) and (b) show test misclassification error vs. the adversarial
  perturbation level $\epsilon_{\rm adv}$ for the PGM attack with
  respect to Euclidean and $\infty$ norms respectively. The vertical bar
  in (a) indicates the perturbation level used for training the PGM, FGM, and IFGM models as well as the estimated radius
  $\sqrt{\what{\rho}_n(\theta_{\rm WRM})}$. For MNIST, $C_2=9.21$ and $C_{\infty}=1.00$. }
\end{figure}

\ifarxiv
\begin{figure}[!!t]
\begin{minipage}{0.45\columnwidth}
\centering
\subfigure[$\what{\rho}_{test}$ vs. $1/\gamma_{\rm adv}$]
{\includegraphics[width=0.9\textwidth]{./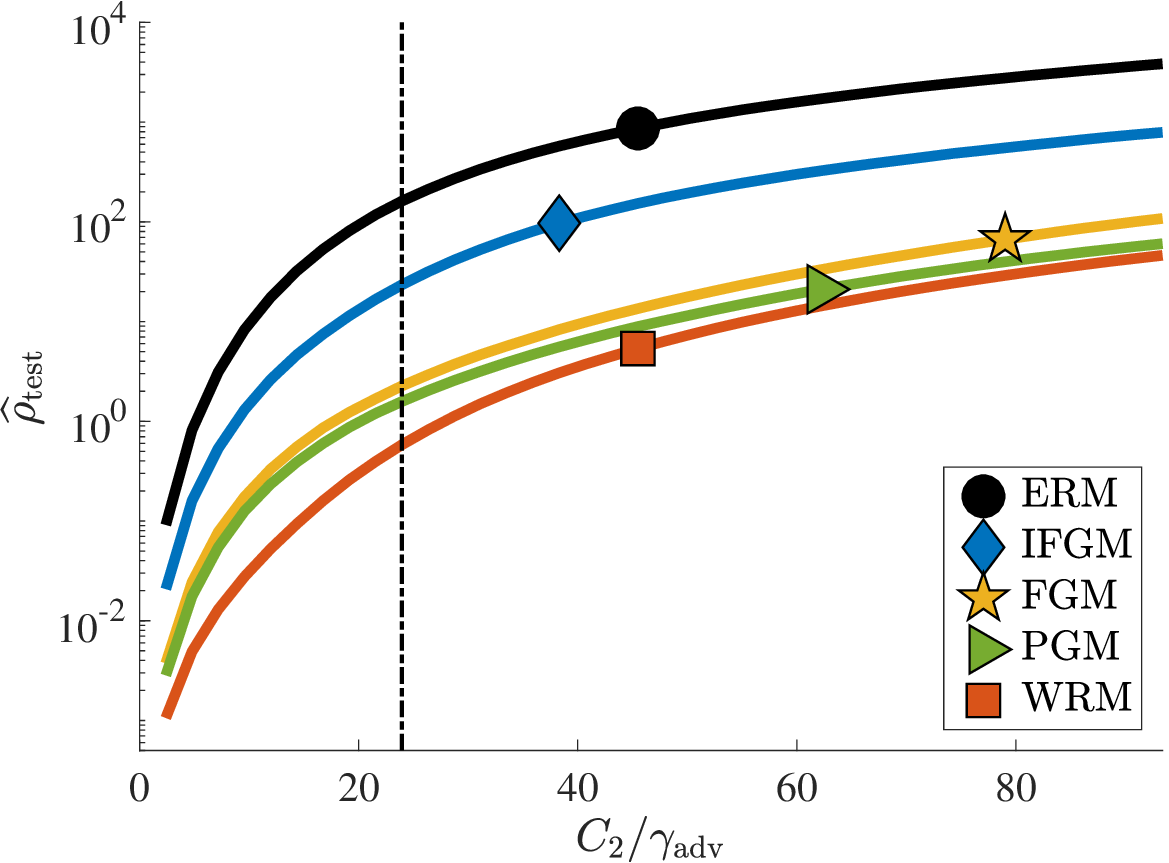}}%
\end{minipage}
\centering
\begin{minipage}{0.45\columnwidth}%
\centering
\subfigure[Perturbations on a test datapoint]{\includegraphics[width=0.9\textwidth]{./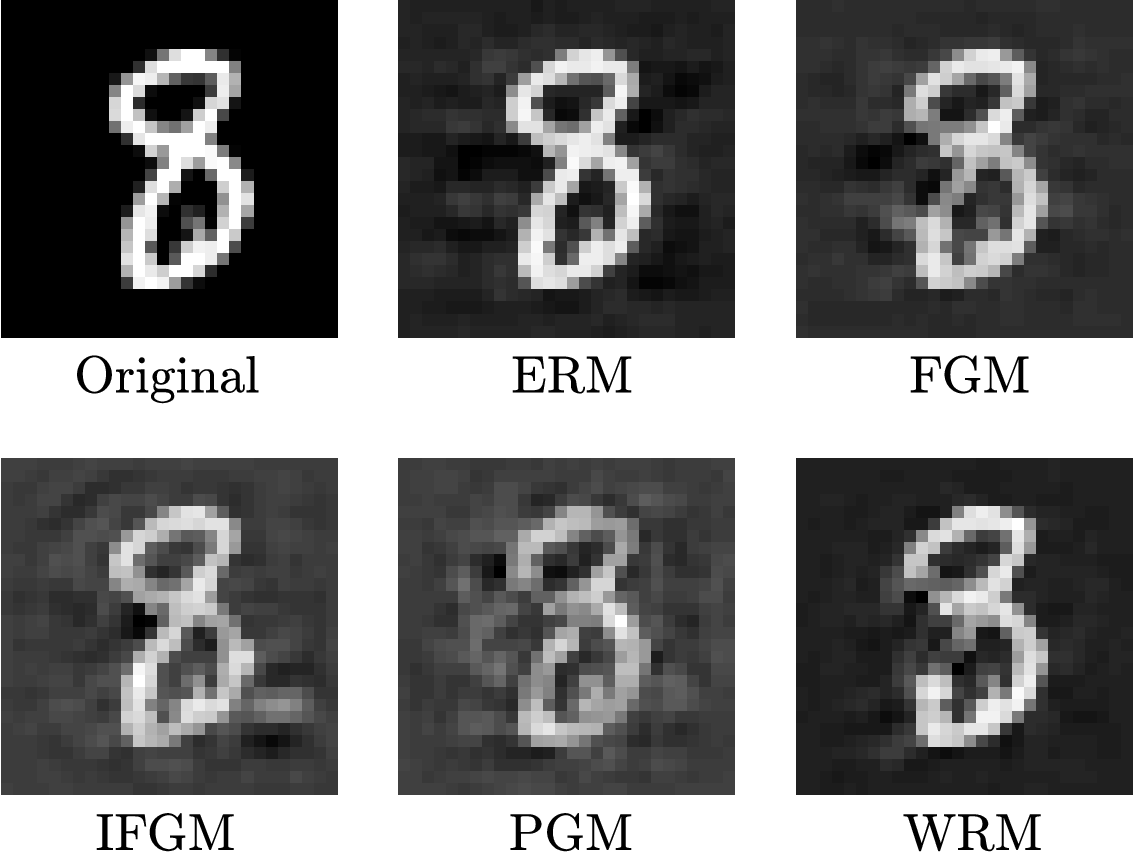}}%
\end{minipage}
\centering
\caption[]{\label{fig:mnist2}Stability of the loss surface. In (a), we show
  the average distance of the perturbed distribution $\what{\rho}_{\rm test}$
  for a given $\gamma_{\rm adv}$, an indicator of local stability to inputs
  for the decision surface. The vertical bar in (a) indicates the
  $\gamma$ we use for training WRM. In (b) we visualize the smallest WRM perturbation (largest $\gamma_{\rm adv}$) necessary to make a model misclassify a datapoint. More examples are in Appendix \ref{sec:more-stability}.}
\end{figure}
\else
\begin{figure}[!!t]
\begin{minipage}{0.45\columnwidth}
\centering
\subfigure[$\what{\rho}_{test}$ vs. $1/\gamma_{\rm adv}$]
{\includegraphics[width=0.85\textwidth]{./pix/mnist_wrm_dist.eps}}%
\end{minipage}
\centering
\begin{minipage}{0.45\columnwidth}%
\centering
\subfigure[Perturbations on a test datapoint]{\includegraphics[width=0.85\textwidth]{./pix/mnist_pix_qualitative_8.eps}}%
\end{minipage}
\centering
\caption[]{\label{fig:mnist2}Stability of the loss surface. In (a), we show
  the average distance of the perturbed distribution $\what{\rho}_{\rm test}$
  for a given $\gamma_{\rm adv}$, an indicator of local stability to inputs
  for the decision surface. The vertical bar in (a) indicates the
  $\gamma$ we use for training WRM. In (b) we visualize the smallest WRM perturbation (largest $\gamma_{\rm adv}$) necessary to make a model misclassify a datapoint. More examples are in Appendix \ref{sec:more-stability}.}
\end{figure}
\fi

\subsubsection{The Stanford Dogs dataset}
\label{section:dogs}

We now test our approach in a more realistic classification scenario, where
our goal is to reliably classify images of dogs to one of $120$ breeds using
the Stanford Dogs dataset~\cite{khosla2011novel}. In our experiments, we use
$2000$ training images resized to $224 \times 224$ pixels, and use the
ResNet-50 network~\cite{HeZhReSu16} pre-trained on the ImageNet dataset for
initialization.

First, we begin with a principled experiment, where we choose $\gamma$ based
on concrete upper bounds derived in the previous section. Due to the large
size of the ResNet model, our concrete upper bounds on the smoothness of this
model become vacuous even when replace ReLU activations with its smooth
counterparts. Therefore, we begin this subsection by considering adversarial
perturbations in the learned semantic feature space given by the output of the
pre-trained ResNet on the large ImageNet dataset. We use the pre-trained
network as a fixed feature extractor, and only fine-tune the last few layers
following our principled adversarial training
procedure~Algorithm~\ref{alg:thealg}, with $\gamma$ chosen according to our
bounds in Section~\ref{section:examples}.

We pass the images through the pre-trained ResNet architecture and extract the
$7 \times 7 \times 2048$ semantic features (normalized to $[0, 1]$) before
they are fed into the output layer. Instead of passing the features through a
global average pooling layer followed by a softmax layer, as usually done when
fine-tuning a ResNet on a new dataset, we design a classifier with smooth
activation functions (ELUs). Specifically, we use a convolutional layer with
$1024$ convolutions of kernel size $7 \times 7$ followed by an ELU, a fully
connected layer with 120 outputs, and finally a softmax layer. For our
experiments, we set $\gamma=10^5$, observing that for this value we satisfy
have a smoothness upper bound (via the bound~\eqref{eq:lipschitzconsts}(b)) of
822.5. We set the number of iterations $T_{\rm adv}=20$ and the step size
$\eta=10.0$. Figure~\ref{fig:supremum} shows the convergence of the supremum
indicating that we have indeed solved the inner problem~\eqref{eqn:inner-sup},
which plots averages of the adversarial loss~\eqref{eqn:inner-sup}
$\phi_{\gamma}(\theta; z_i)$ during the first epoch of the training
procedure. In Figure~\ref{fig:dogs-theory}, we present classification errors
under both PGD and WRM attacks in the semantic feature space as in the last
subsection.

We now illustrate the usefulness of our approach even large-scale scenarios
where our conservative bounds on smoothness given in
Section~\ref{section:examples} become too loose. We fine-tune the whole
ResNet-50 network on the Stanford Dogs dataset with Algorithm~\ref{alg:thealg}
with perturbations on raw pixels. We use $\gamma=1.0$, and set the adversarial
budget $\epsilon$ for the other adversarial training algorithms as the level
of robustness achieved, as in the MNIST
experiment. Figure~\ref{fig:dogs-hacky} shows results over PGM
(Figure~\ref{fig:dogs-hacky}(a)) and WRM (Figure~\ref{fig:dogs-hacky}(b))
attacks. We see that, as with the MNIST dataset, WRM is competitive against
the other baselines over a full range of test perturbations.

\begin{figure}[!!t]
\centering
{\includegraphics[width=0.4\textwidth]{./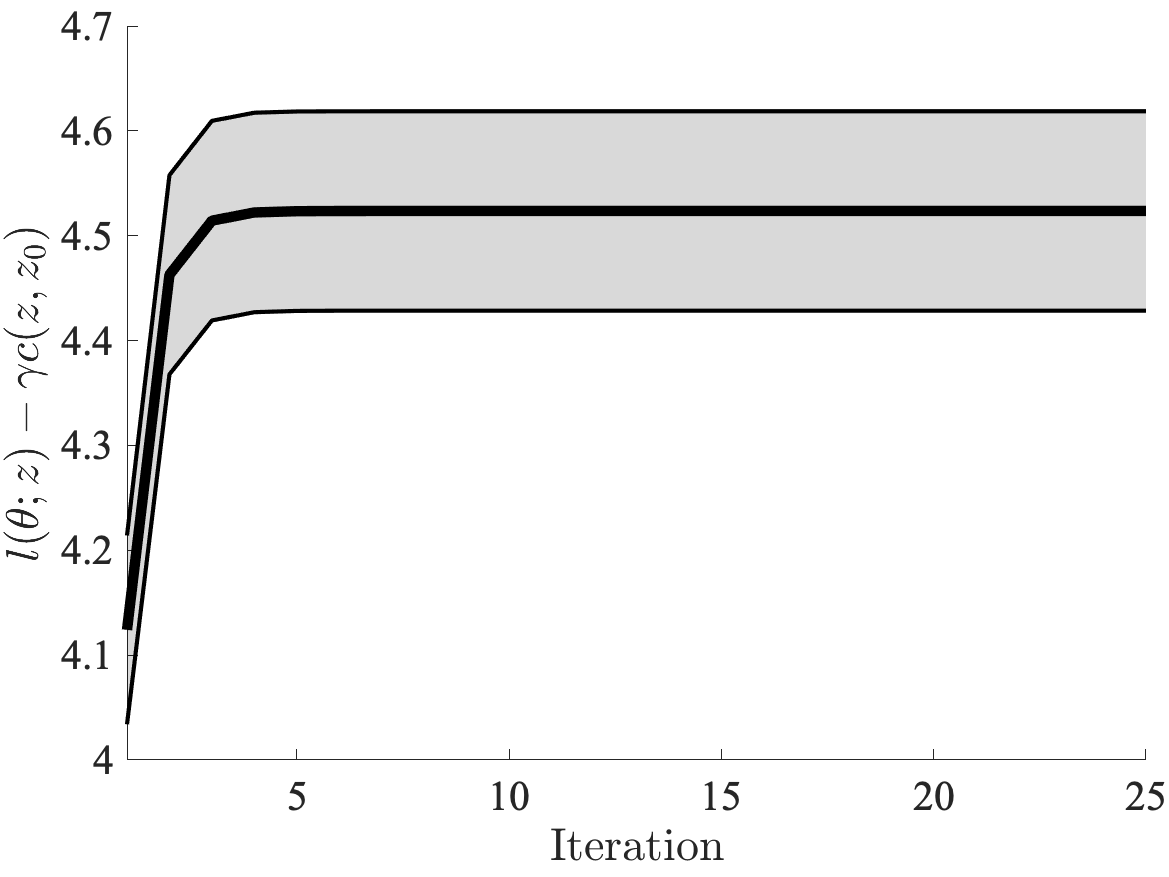}}%
\centering
\caption[]{\label{fig:supremum}Convergence of the inner problem~\eqref{eqn:inner-sup}. We show the mean and standard error across all datapoints of the loss $l(\theta; z) - \gamma c(z, z_0)$ with respect to iteration. The plot is shown for the the first epoch of training. }
\end{figure}

\begin{figure}[!!t]
\begin{minipage}{0.49\columnwidth}
\centering
\subfigure[Test error vs. $\epsilon_{\rm adv}$ for $\|\cdot\|_{\infty}$-PGM attack]
{\includegraphics[width=0.9\textwidth]{./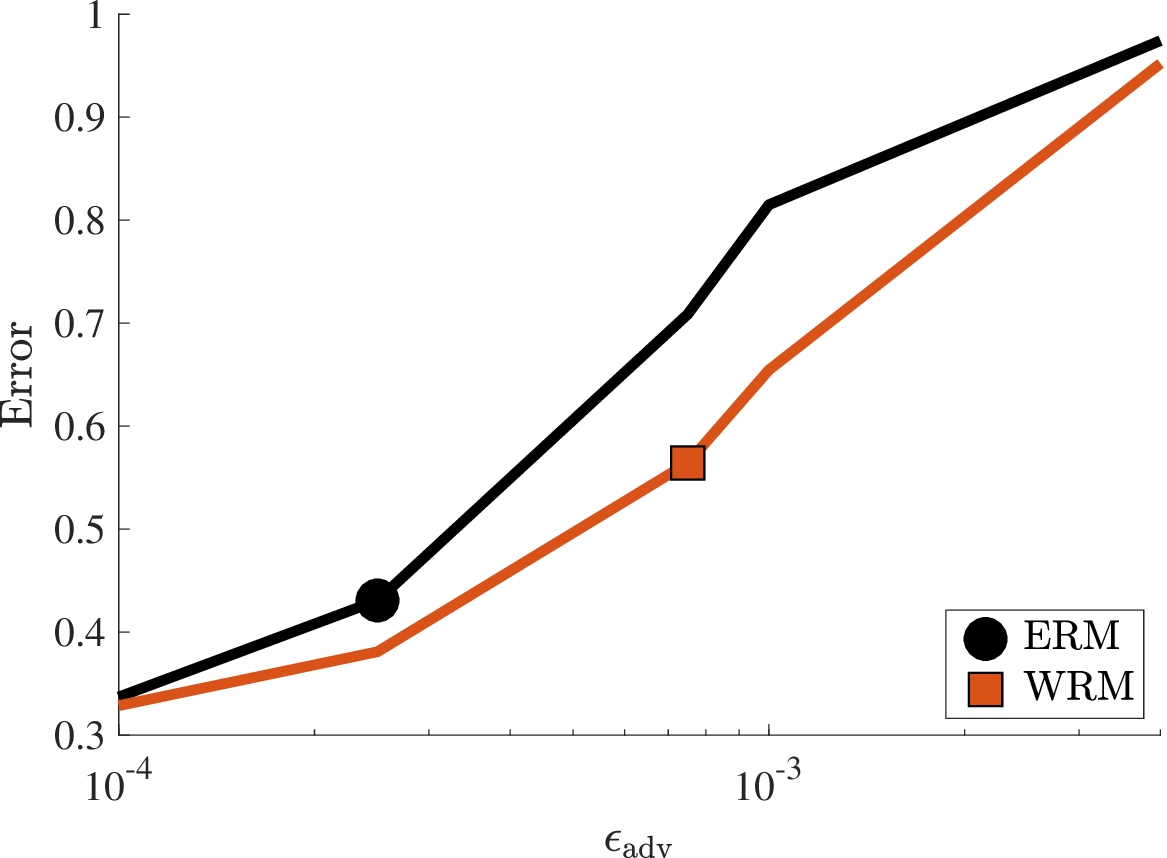}}%
\end{minipage}
\centering
\begin{minipage}{0.49\columnwidth}%
\centering
\subfigure[Test error vs. $1/\gamma_{\rm adv}$ for $\|\cdot\|_{2}$-WRM attack]{\includegraphics[width=0.9\textwidth]{./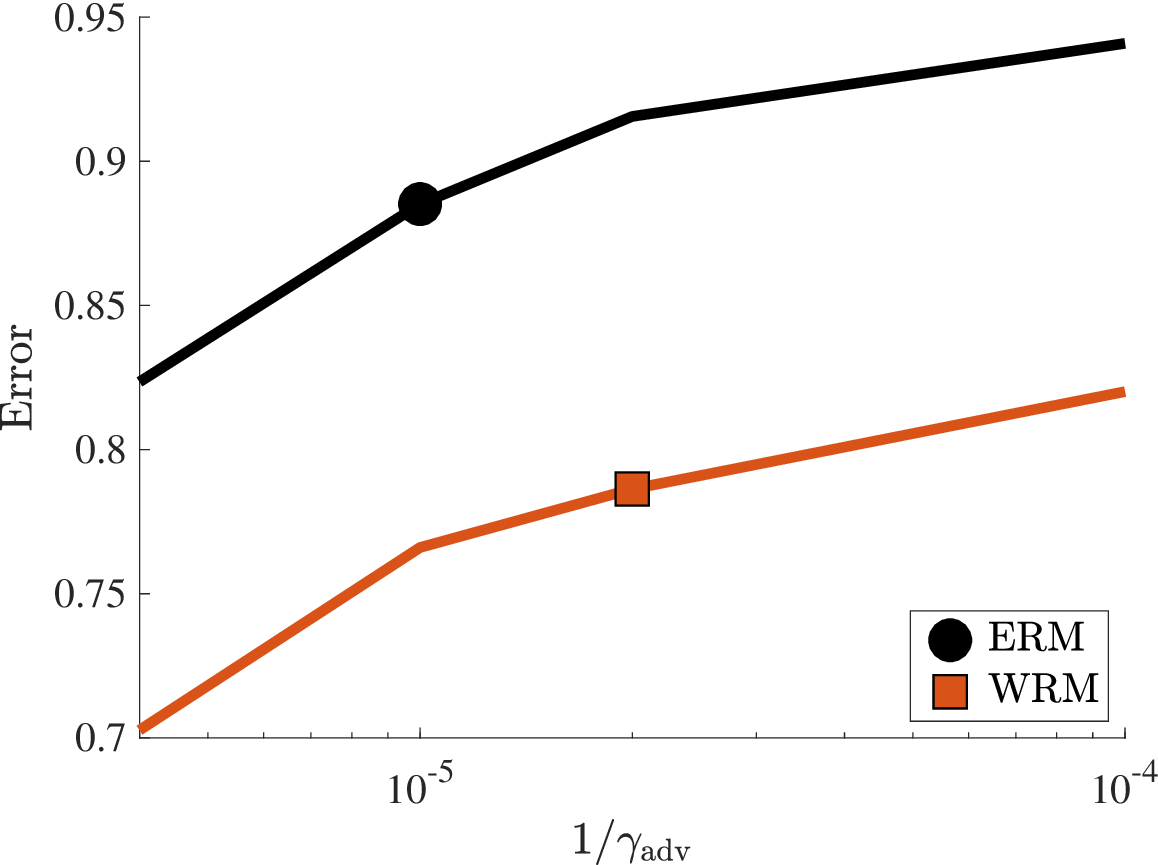}}%
\end{minipage}
\centering
\caption[]{\label{fig:dogs-theory}PGM and WRM attacks on the Stanford Dogs dataset.}
\end{figure}

\begin{figure}[!!t]
\begin{minipage}{0.49\columnwidth}
\centering
\subfigure[Test error vs. $\epsilon_{\rm adv}$ for $\|\cdot\|_{\infty}$-PGM attack]
{\includegraphics[width=0.9\textwidth]{./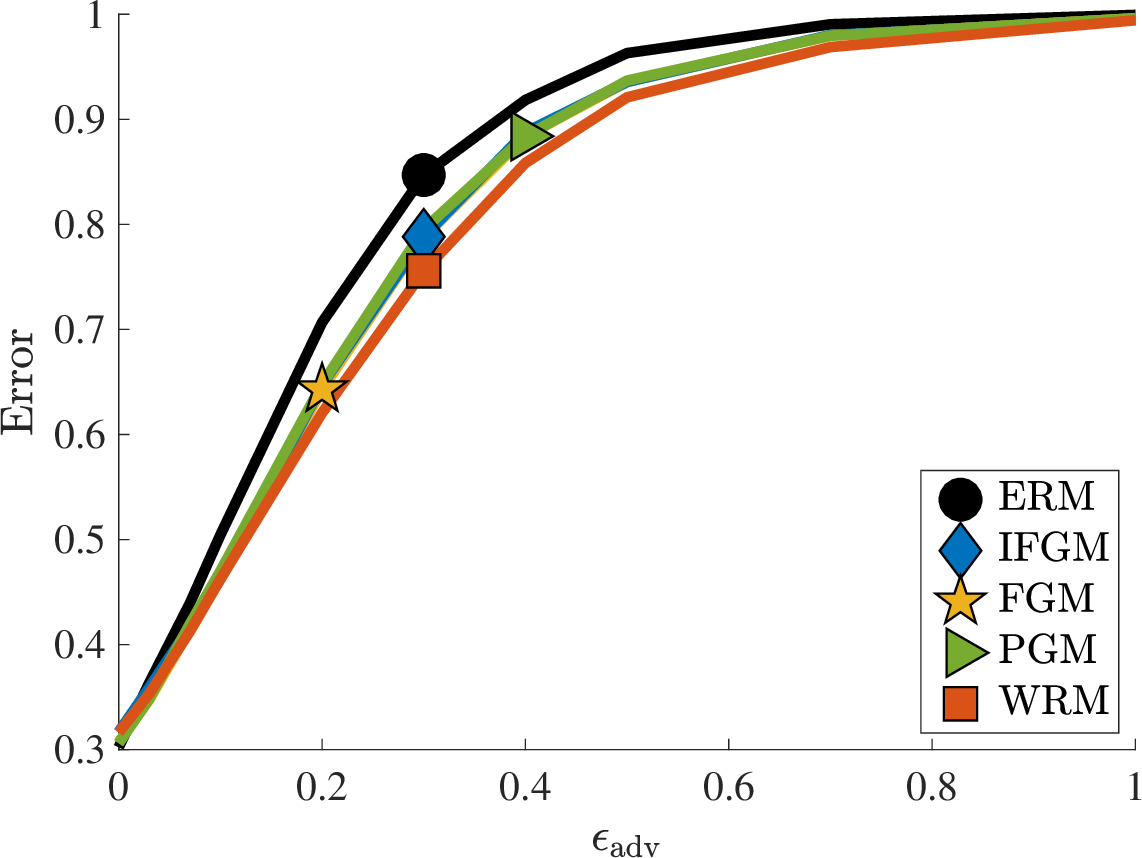}}%
\end{minipage}
\centering
\begin{minipage}{0.49\columnwidth}%
\centering
\subfigure[Test error vs. $1/\gamma_{\rm adv}$ for $\|\cdot\|_{2}$-WRM attack]{\includegraphics[width=0.9\textwidth]{./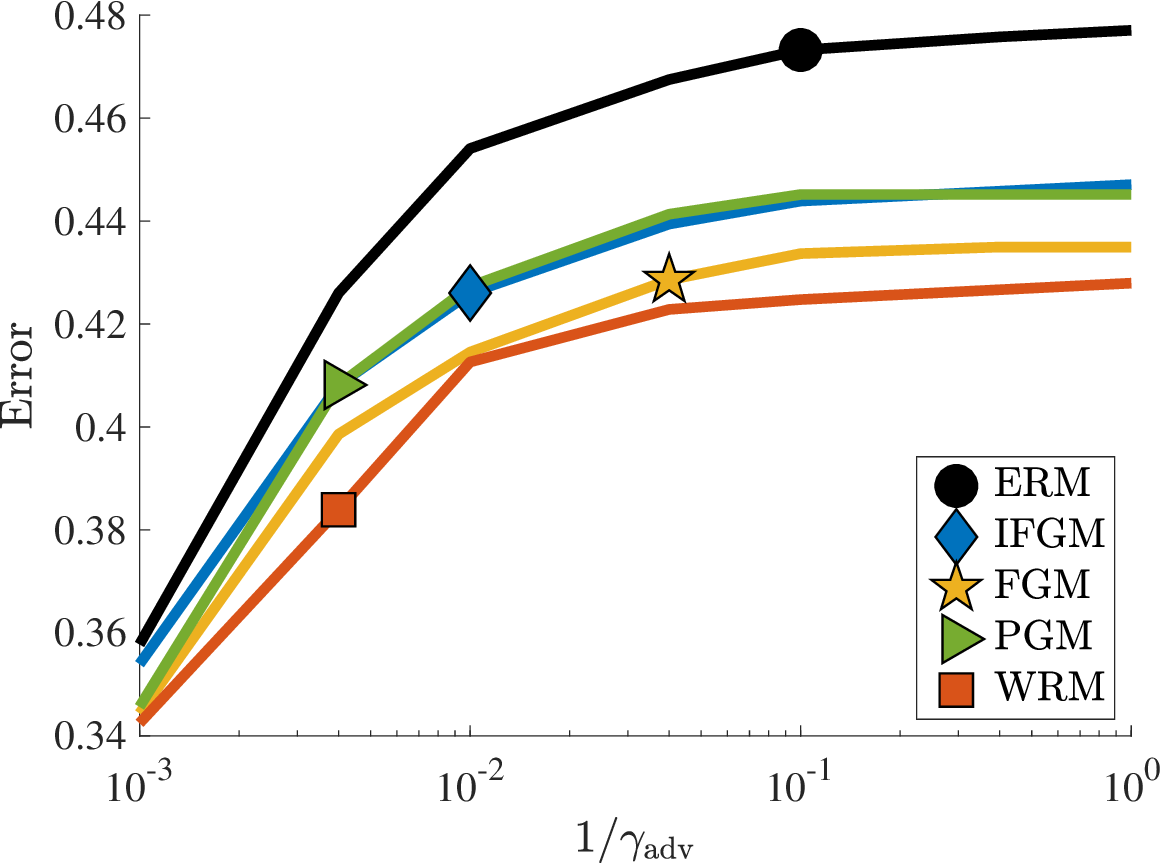}}%
\end{minipage}
\centering
\caption[]{\label{fig:dogs-hacky}PGM and WRM attacks on the Stanford Dogs dataset.}
\end{figure}

\subsection{Robust Markov decision processes}\label{sec:cartpole}

For our final experiments, we consider distributional robustness in the
context of Q-learning, a model-free reinforcement learning technique. We
consider Markov decision processes (MDP's) $(\mathcal S, \mathcal A, P_{sa}, r)$ with state space $\mathcal S$, action space $\mathcal A$, state-action transition
probabilities $P_{sa}$, and rewards $r: \mathcal S \to \R$. The goal of a
reinforcement-learning agent is to maximize (discounted) cumulative rewards
$\sum_t \lambda^t \E[ r(s^t)]$ (with discount factor $\lambda$); this is analogous to minimizing $\E_P[\loss(\theta; Z)]$ in supervised learning. Robust MDP's consider an ambiguity set $\mathcal{P}_{sa}$ for state-action
transitions. The goal is maximizing the worst-case realization $\inf_{P
  \in \mathcal{P}_{sa}} \sum_t \lambda^t \E_P[ r(s^t)]$, analogous to
problem \eqref{eqn:OG-problem}.

In a standard MDP, Q-learning learns a quality function
$Q: \mathcal{S} \times \mathcal{A} \to \R$ via the iterations
\begin{equation}\label{eqn:qlearning}
Q(s^t, a^t) \gets Q(s^t, a^t) + \alpha_t \left ( r(s^{t}) + \lambda \max_a Q(s^{t+1}, a) - Q(s^t,a^t) \right )
\end{equation}
such that $\argmax_a Q(s, a)$ is (eventually) the optimal action in
state $s$ to maximize cumulative reward. When the underlying
environment has a continuous state-space and we represent $Q$ with a
differentiable function (e.g. \cite{ASmnih2015human}), we can modify
the update~\eqref{eqn:qlearning} with an adversarial state perturbation to incorporate distributional robustness. Namely, we draw the nominal state-transition
update $\what{s}^{t+1} \sim p_{sa}(s^t, a^t)$, and proceed with the update
\eqref{eqn:qlearning} using the perturbation
\begin{equation}\label{eqn:qlearningthing}
  s^{t+1} \gets \argmin_{s} \left \{ r(s)  + \lambda \max_a Q(s, a)  + \gamma c(s, \hat s^{t+1}) \right \}.
\end{equation}
For large $\gamma$, we can again solve
problem~\eqref{eqn:qlearningthing} efficiently using gradient descent. This
 provides robustness to uncertainties in state-action
transitions. For tabular Q-learning, where we represent $Q$ only over
a discretized covering of the underlying state-space, we can
either neglect the second term in the update \eqref{eqn:qlearningthing} and,
after performing the update, round $s^{t+1}$ as usual, or we can perform
minimization directly over the discretized covering. In the former case,
since the update~\eqref{eqn:qlearningthing} simply modifies the state-action
transitions (independent of $Q$), standard results on convergence for
tabular Q-learning (e.g. \citet{ASszepesvari1999unified}) apply under these
adversarial dynamics.

We test our adversarial training procedure in the cart-pole
environment, where the goal is to balance a pole on a cart by moving the cart
left or right. The environment caps episode lengths to 400 steps and ends the
episode prematurely if the pole falls too far from the vertical or the cart
translates too far from its origin. We use reward
$r(\beta):=e^{-|\beta|}$ for the angle $\beta$ of the pole from the
vertical. We use a tabular representation for $Q$ with 30
discretized states for $\beta$ and 15 for its time-derivative $\dot{\beta}$
(we perform the update \eqref{eqn:qlearningthing} without the $Q$-dependent term). The action space is binary: push the cart left or right
with a fixed force.  Due to the nonstationary, policy-dependent radius for the Wasserstein ball, an analogous $\epsilon$ for the fast-gradient
method (or other variants) is not well-defined. Thus, we only compare with an
agent trained on the nominal MDP. We test both models with perturbations to the physical parameters: we shrink/magnify the pole's mass by 2, the pole's
length by 2, and the strength of gravity $g$ by 5. The
system's dynamics are such that the heavy, short, and strong-gravity
cases are more unstable than the original environment, whereas
their counterparts are less unstable.

Table 1 shows performance of trained models over the original and all
perturbed MDPs. Both models perform similarly over easier environments, but
the robust model greatly outperforms in harder environments. Interestingly, as
shown in Figure 5, the robust model also learns more efficiently than the
nominal model in the original MDP. We hypothesize that a potential side-effect
of robustness is that adversarial perturbations encourage better exploration
of the environment.

\begin{figure}[!!t]
\begin{minipage}{\textwidth}
\begin{minipage}{0.49\textwidth}
\centering
\ifdefined\useorstyle
{\scriptsize
\begin{tabular}{c|cc}
  Environment & Regular & Robust \\ \hline \hline
  Original & 399.7 $\pm$ 0.1 & 400.0 $\pm$ 0.0 \\
  \hline 
  \multicolumn{3}{c} {}  \\
     \multicolumn{3}{c} {Easier environments} \\
  Light & 400.0 $\pm$ 0.0 & 400.0 $\pm$ 0.0 \\
   Long & 400.0 $\pm$ 0.0 & 400.0 $\pm$ 0.0 \\
   Soft $g$ & 400.0 $\pm$ 0.0 & 400.0 $\pm$ 0.0 \\
   \hline 
  \multicolumn{3}{c} {}     \\
      \multicolumn{3}{c} {Harder environments} \\
  Heavy & 150.1 $\pm$ 4.7 & 334.0 $\pm$ 3.7 \\
  Short & 245.2 $\pm$ 4.8 & 400.0 $\pm$ 0.0 \\
  Strong $g$ & 189.8 $\pm$ 2.3 & 398.5 $\pm$ 0.3 \\
  \hline
  \hline
  \end{tabular}
  \captionof{table}{Episode length over 1000 trials\\(mean $\pm$ standard error)}
 }
\else
{\small
\begin{tabular}{c|cc}
  Environment & Regular & Robust \\ \hline \hline
  Original & 399.7 $\pm$ 0.1 & 400.0 $\pm$ 0.0 \\
  \hline 
  \multicolumn{3}{c} {}   \\ %
     \multicolumn{3}{c} {Easier environments} \\
  Light & 400.0 $\pm$ 0.0 & 400.0 $\pm$ 0.0 \\
   Long & 400.0 $\pm$ 0.0 & 400.0 $\pm$ 0.0 \\
   Soft $g$ & 400.0 $\pm$ 0.0 & 400.0 $\pm$ 0.0 \\
   \hline 
  \multicolumn{3}{c} {}   \\ %
      \multicolumn{3}{c} {Harder environments} \\
  Heavy & 150.1 $\pm$ 4.7 & 334.0 $\pm$ 3.7 \\
  Short & 245.2 $\pm$ 4.8 & 400.0 $\pm$ 0.0 \\
  Strong $g$ & 189.8 $\pm$ 2.3 & 398.5 $\pm$ 0.3 \\
  \hline
  \hline
  \end{tabular}
  \captionof{table}{Episode length over 1000 trials\\(mean $\pm$ standard error)}
 }
 \fi
\end{minipage}
\begin{minipage}{0.49\columnwidth}%
\centering
\includegraphics[width=0.87\textwidth]{./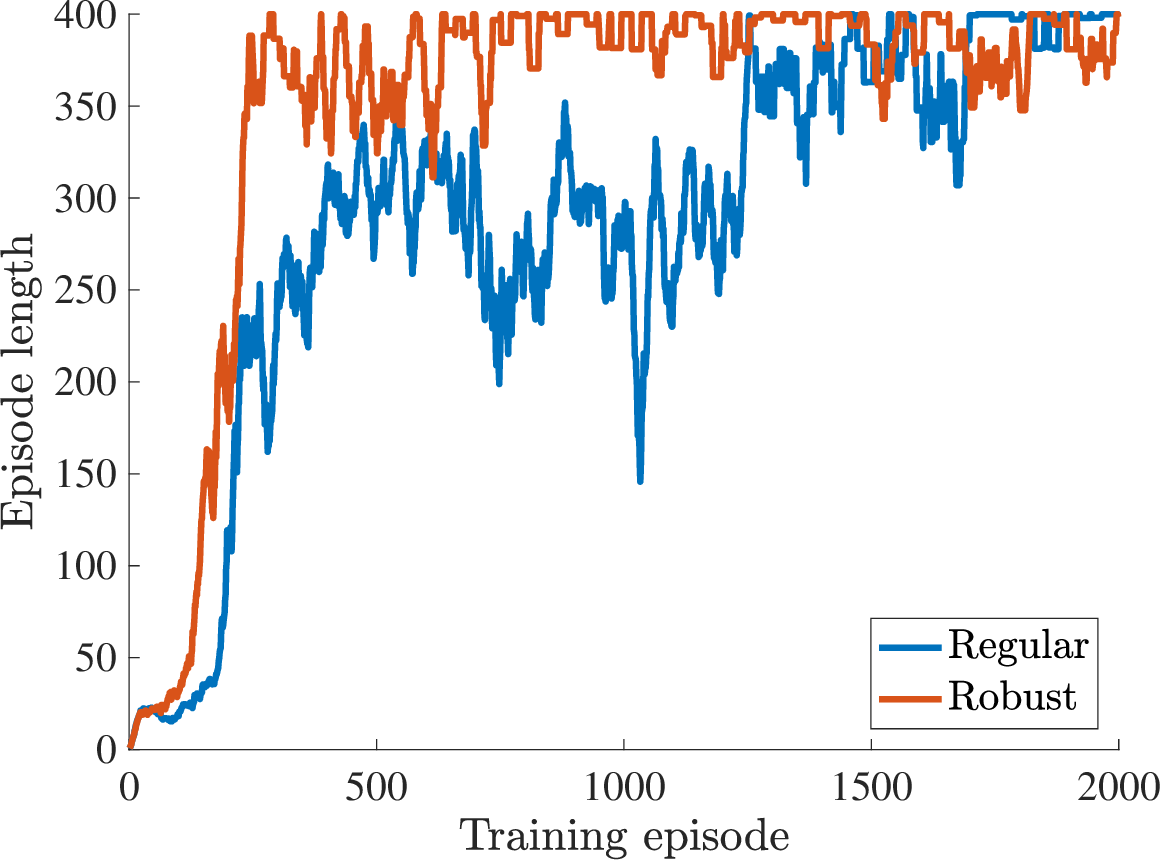}
\captionof{figure}{Episode lengths during training. The environment caps episodes to $400$ steps.}
\end{minipage}
\end{minipage}
\end{figure}
\section{Discussion and future work}

Explicit distributional robustness of the form~\eqref{eqn:constrained} is
intractable except in limited cases. We provide a principled method for
efficiently guaranteeing distributional robustness with a simple form of
adversarial data perturbation. Using only assumptions about the smoothness of
the loss function $\loss$, we prove that our method enjoys strong statistical
guarantees and fast optimization rates for a large class of problems. The
NP-hardness of certifying robustness for ReLU networks, coupled with our
empirical success and theoretical certificates for smooth networks in deep
learning, suggest that using smooth networks may be preferable if we wish to
guarantee robustness.  Empirical evaluations indicate that our methods are in
fact robust to perturbations in the data, and they match or outperform less-principled
adversarial training techniques. The major benefit of our approach is its
simplicity and wide applicability across many models and machine-learning
scenarios.

There remain many avenues for future investigation. Our optimization result
(Theorem~\ref{thm:convergence}) applies only for small values of robustness
$\rho$ and to a limited class of Wasserstein costs. Our
statistical guarantees (Theorems~\ref{theorem:robustness}
and~\ref{theorem:dist-concentration}) use $\linf{\cdot}$-covering numbers as
a measure of model complexity, which can become prohibitively large for deep
networks. In a learning-theoretic context, where the goal is to provide insight into
convergence behavior as well as comfort that a procedure will ``work''
given enough data, such guarantees are satisfactory, but this may not be
enough in security-essential contexts. This problem currently persists for most learning-theoretic
guarantees in deep learning, and the recent works of~\citet{BartlettFoTe17},
  \citet{DziugaiteRo17}, and  \citet{NeyshaburBhMcSr17} attempt to mitigate this shortcoming. Replacing our  covering number
arguments with more intricate notions such as margin-based
bounds~\citep{BartlettFoTe17} would extend the scope and usefulness of our
theoretical guarantees. Of course, the certificate~\eqref{eqn:certify-tests}
still holds regardless.

More broadly, this work focuses on small-perturbation attacks, and our
theoretical guarantees show that it is possible to efficiently build models
that provably guard against such attacks. Our method becomes another
heuristic for protection against attacks with large adversarial
budgets. Indeed, in the large-perturbation regime, efficiently training
certifiably secure systems remains an important open question. We believe
that conventional $\linf{\cdot}$-defense heuristics developed for image classification
do not offer much comfort in the large-perturbation/perceptible-attack
setting: $\linf{\cdot}$-attacks with a large budget can render images
indiscernible to human eyes, while, for example, $\lone{\cdot}$-attacks
allow a concerted perturbation to critical regions of the image. Certainly
$\linf{\cdot}$-attack and defense models have been fruitful in building a foundation
for security research in deep learning, but moving beyond them may be
necessary for more advances in the large-perturbation regime.
 \clearpage 
\subsubsection*{Acknowledgments}

We thank Jacob Steinhardt for valuable feedback. AS, HN, and JD were
partially supported by the SAIL-Toyota Center for AI Research. AS was also
partially supported by a Stanford Graduate Fellowship and a Fannie \& John
Hertz Foundation Fellowship. HN was partially supported by a Samsung
Fellowship. JD was partially supported by the National Science
Foundation award NSF-CAREER-1553086.

\setlength{\bibsep}{3pt}

\bibliographystyle{abbrvnat}

\ifdefined\useorstyle
\else

\newpage
\appendix
\section{Additional Experiments}
\label{sec:more-experiments}
\subsection{MNIST attacks}
\label{sec:more-attacks}
We repeat Figure \ref{fig:mnist} using FGM (tow row of Figure \ref{fig:mnist-fgm}) and IFGM (bottom row of Figure \ref{fig:mnist-fgm}) attacks. The same trends are evident as in Figure \ref{fig:mnist}.

\begin{figure}[!!h]
\begin{minipage}{0.49\columnwidth}
\centering
\subfigure[Test error vs. $\epsilon_{\rm adv}$ for $\|\cdot\|_2$-FGM attack]{\includegraphics[width=0.85\textwidth]{./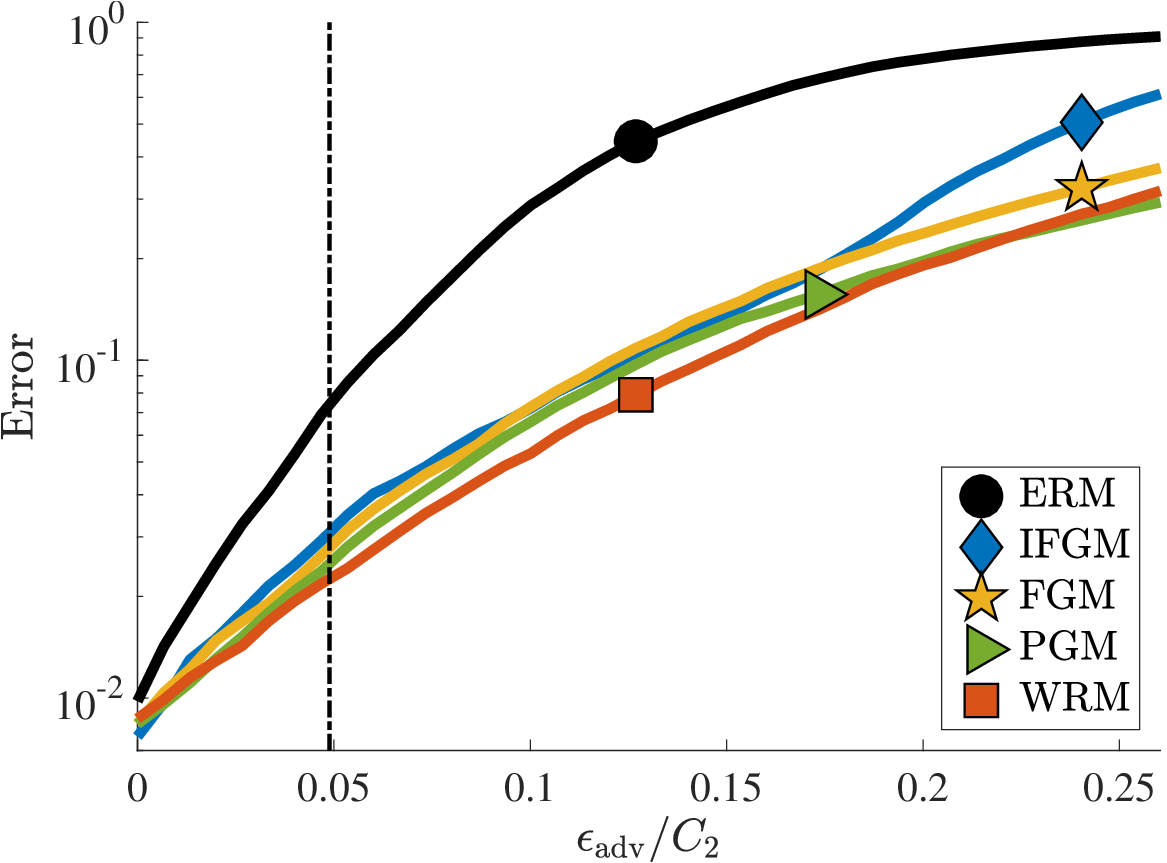}}%
\end{minipage}
\centering
\begin{minipage}{0.49\columnwidth}%
\centering
\subfigure[Test error vs. $\epsilon_{\rm adv}$ for $\|\cdot\|_{\infty}$-FGM attack]{\includegraphics[width=0.85\textwidth]{./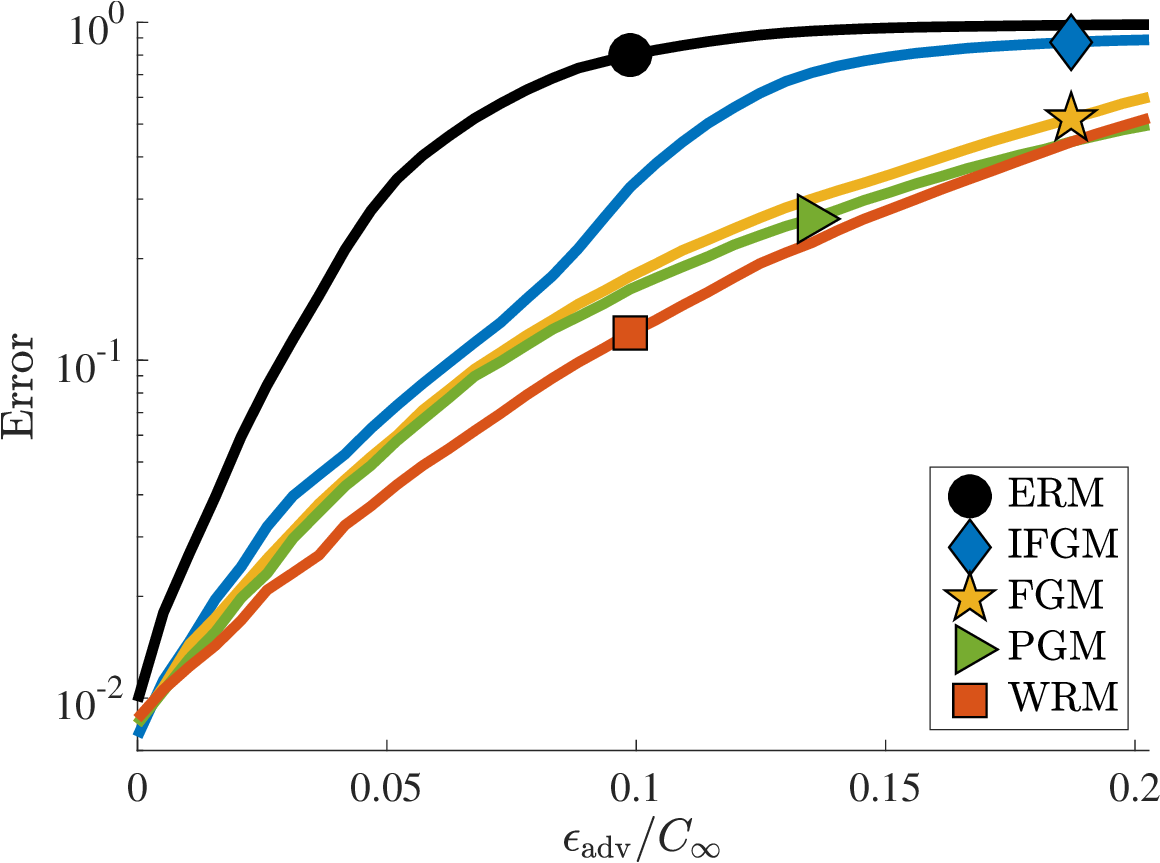}}%
\end{minipage}
\begin{minipage}{0.49\columnwidth}
\centering
\subfigure[Test error vs. $\epsilon_{\rm adv}$ for $\|\cdot\|_2$-IFGM attack]{\includegraphics[width=0.85\textwidth]{./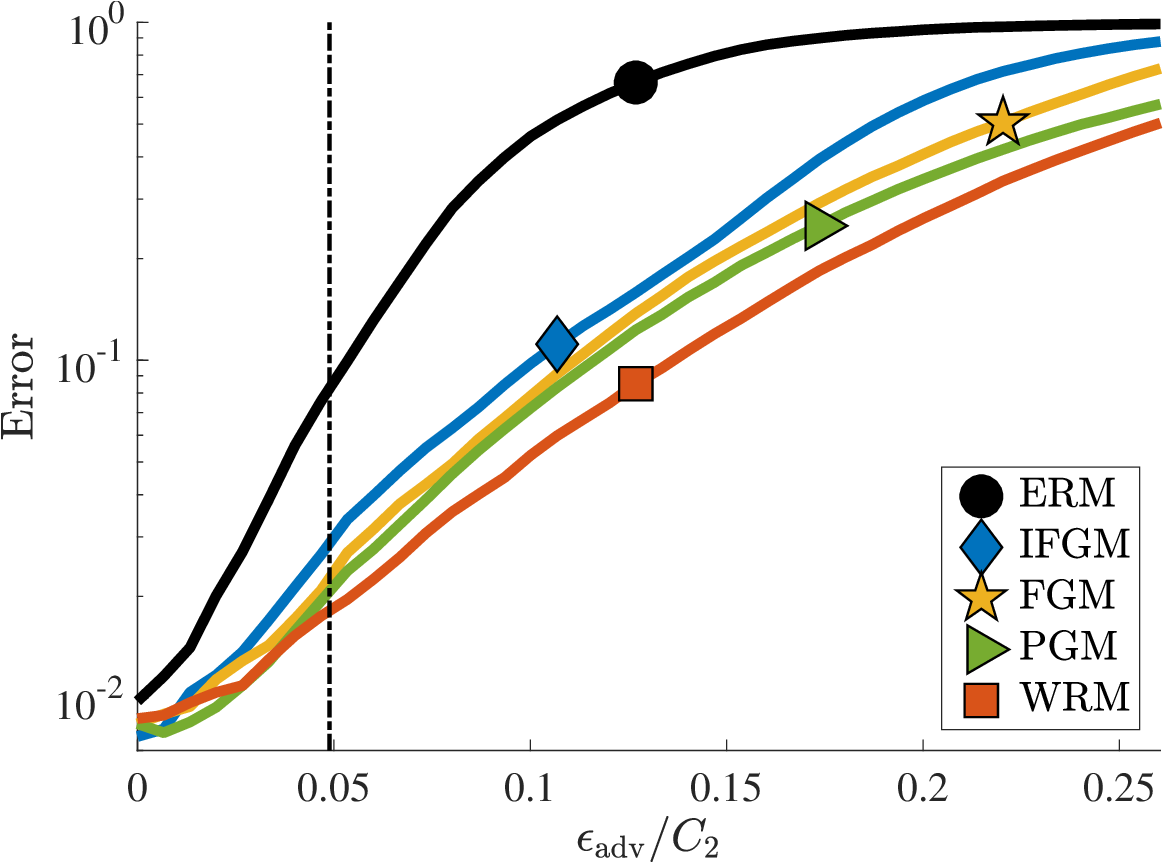}}%
\end{minipage}
\centering
\begin{minipage}{0.49\columnwidth}%
\centering
\subfigure[Test error vs. $\epsilon_{\rm adv}$ for $\|\cdot\|_{\infty}$-IFGM attack]{\includegraphics[width=0.85\textwidth]{./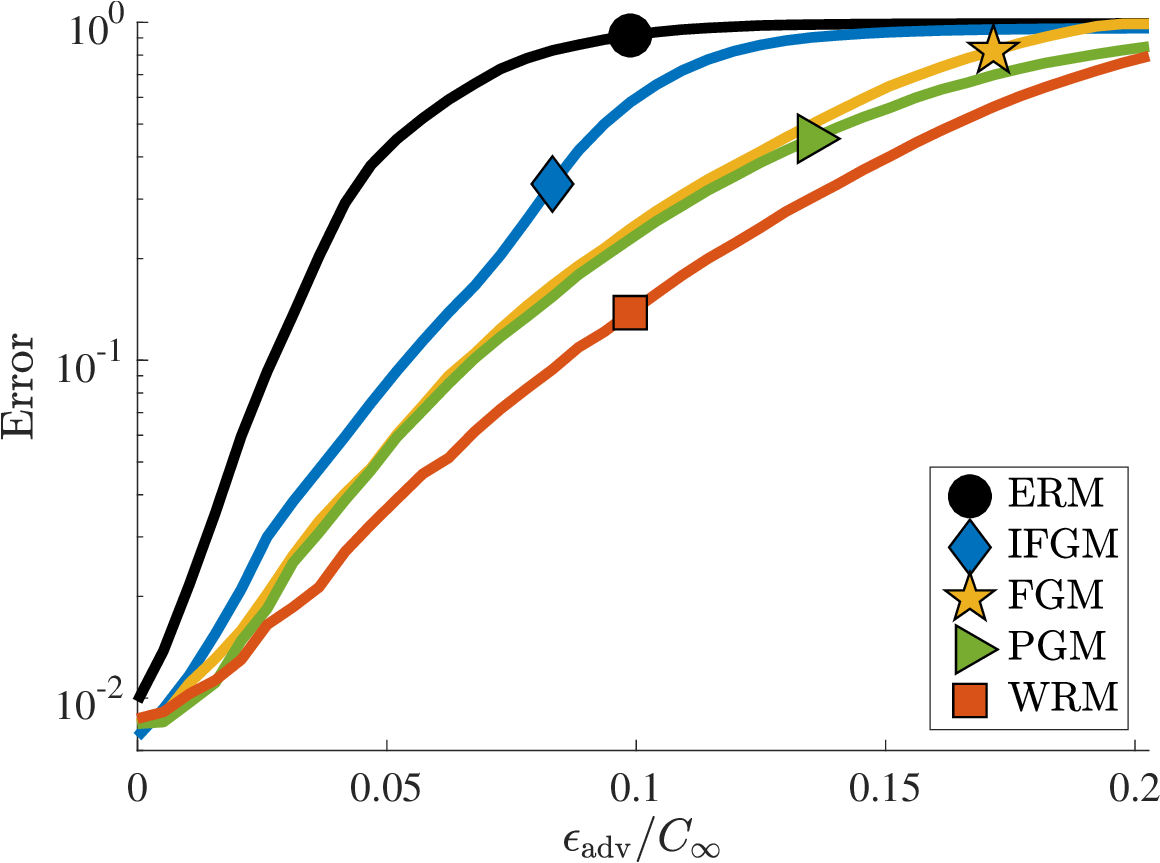}}%
\end{minipage}

\centering

\caption[]{\label{fig:mnist-fgm}Further attacks on the MNIST
  dataset. We illustrate test misclassification error vs. the adversarial
  perturbation level $\epsilon_{\rm adv}$. Top row: FGM attacks, bottom row: IFGM attacks. Left column: Euclidean-norm attacks, right column: $\infty$-norm attacks. The vertical bar
  in (a) and (c) indicates the perturbation level that was used for training the PGM, FGM, and IFGM models and the estimated radius
  $\sqrt{\what{\rho}_n(\theta_{\rm WRM})}$.}
\end{figure}

\subsection{MNIST stability of loss surface}
\label{sec:more-stability}
In Figure \ref{fig:mnist3}, we repeat the illustration in Figure \ref{fig:mnist2}(b) for more digits. WRM's ``misclassifications'' are consistently reasonable to the human eye, as gradient-based perturbations actually transform the original image to other labels. Other models do not exhibit this behavior with the same consistency (if at all). Reasonable misclassifications correspond to having learned a data representation that makes gradients interpretable.

\begin{figure}[!!t]
\begin{minipage}{0.31\columnwidth}%
\centering
\subfigure[]{\includegraphics[width=0.8\textwidth]{./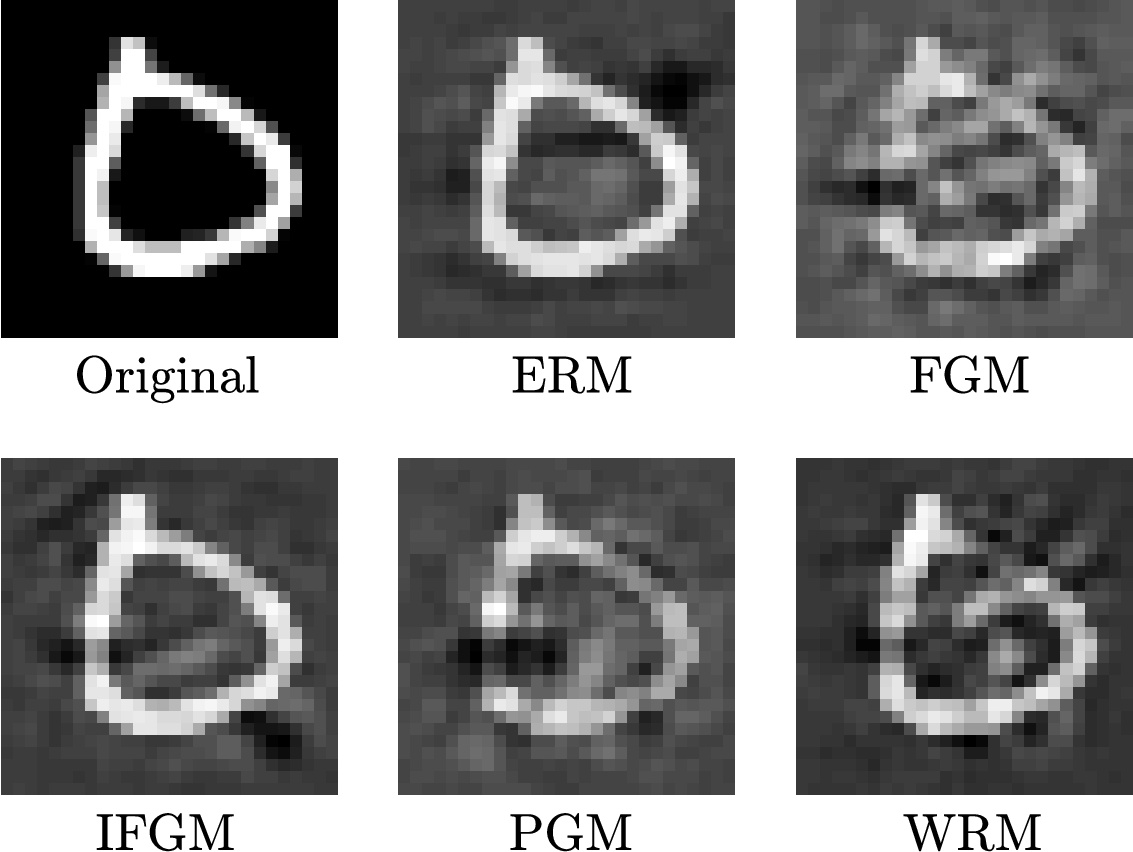}}%
\end{minipage}\centering\begin{minipage}{0.31\columnwidth}%
\centering
\subfigure[]{\includegraphics[width=0.8\textwidth]{./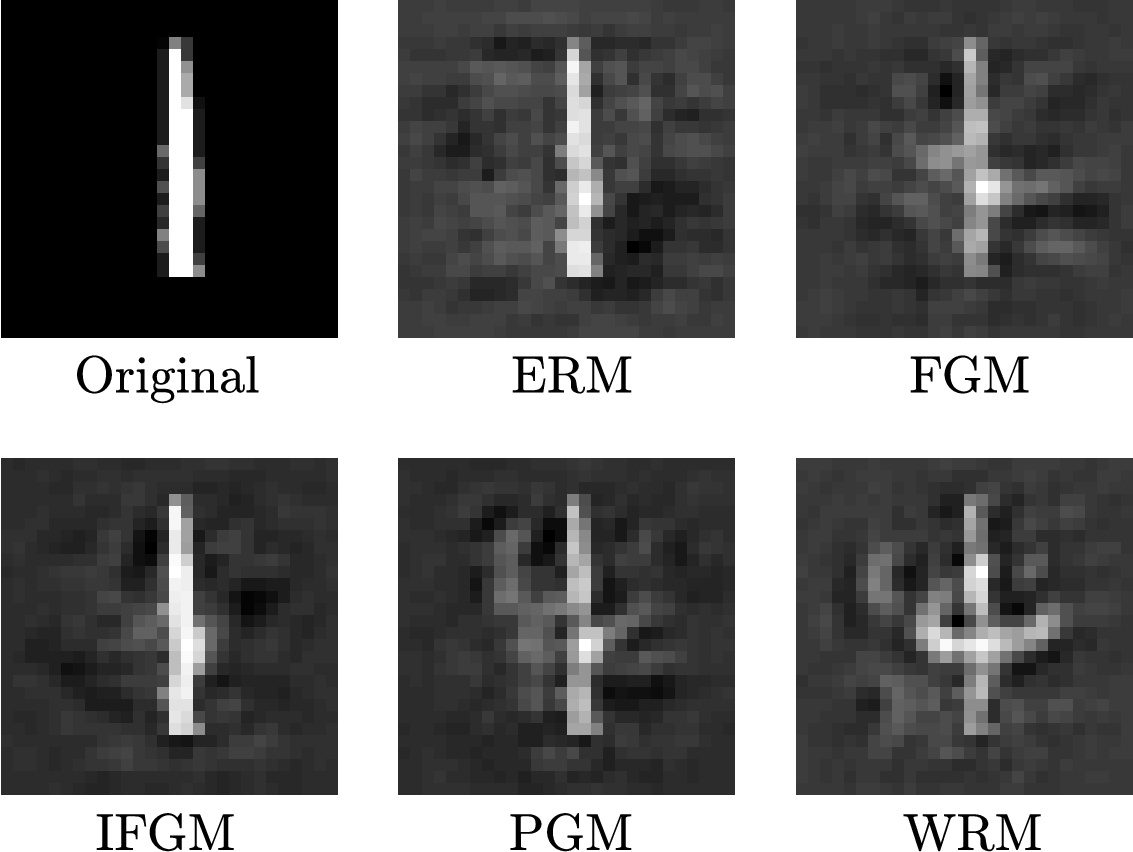}}%
\end{minipage}\begin{minipage}{0.31\columnwidth}%
\centering
\subfigure[]{\includegraphics[width=0.8\textwidth]{./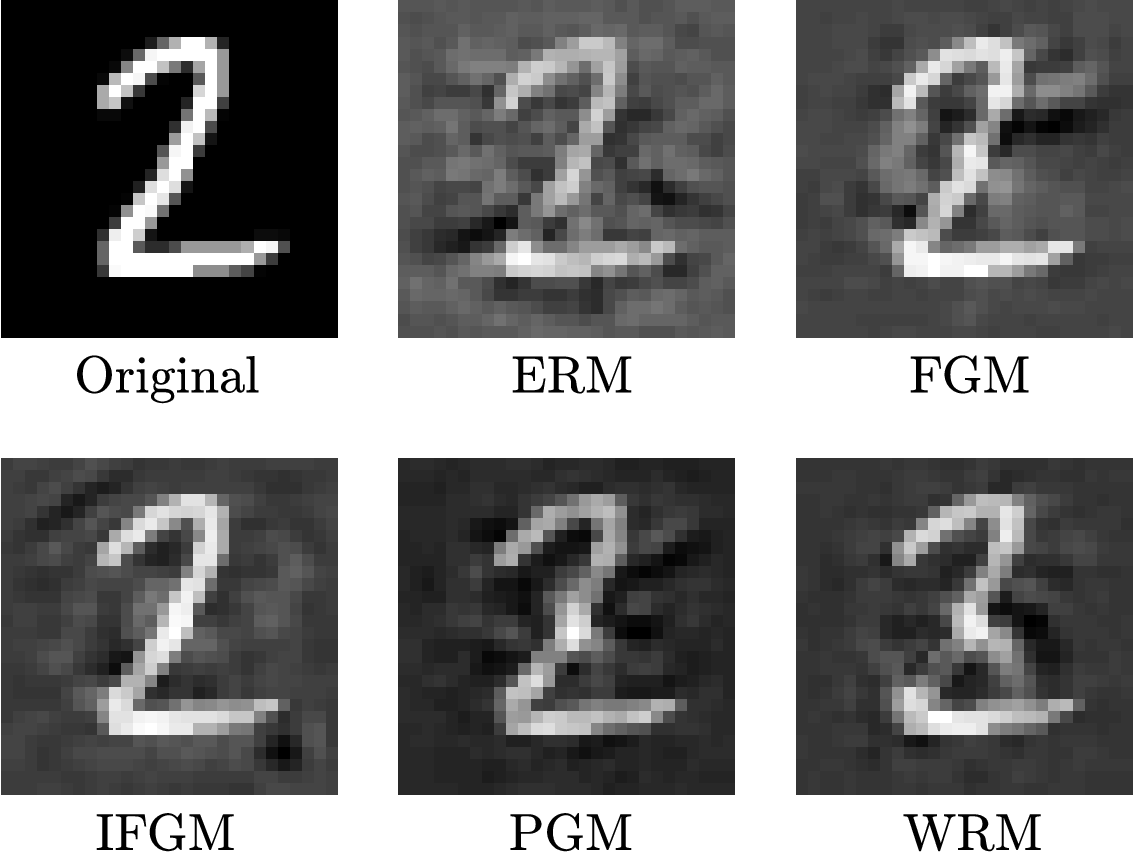}}%
\end{minipage}
\begin{minipage}{0.31\columnwidth}%
\centering
\subfigure[]{\includegraphics[width=0.8\textwidth]{./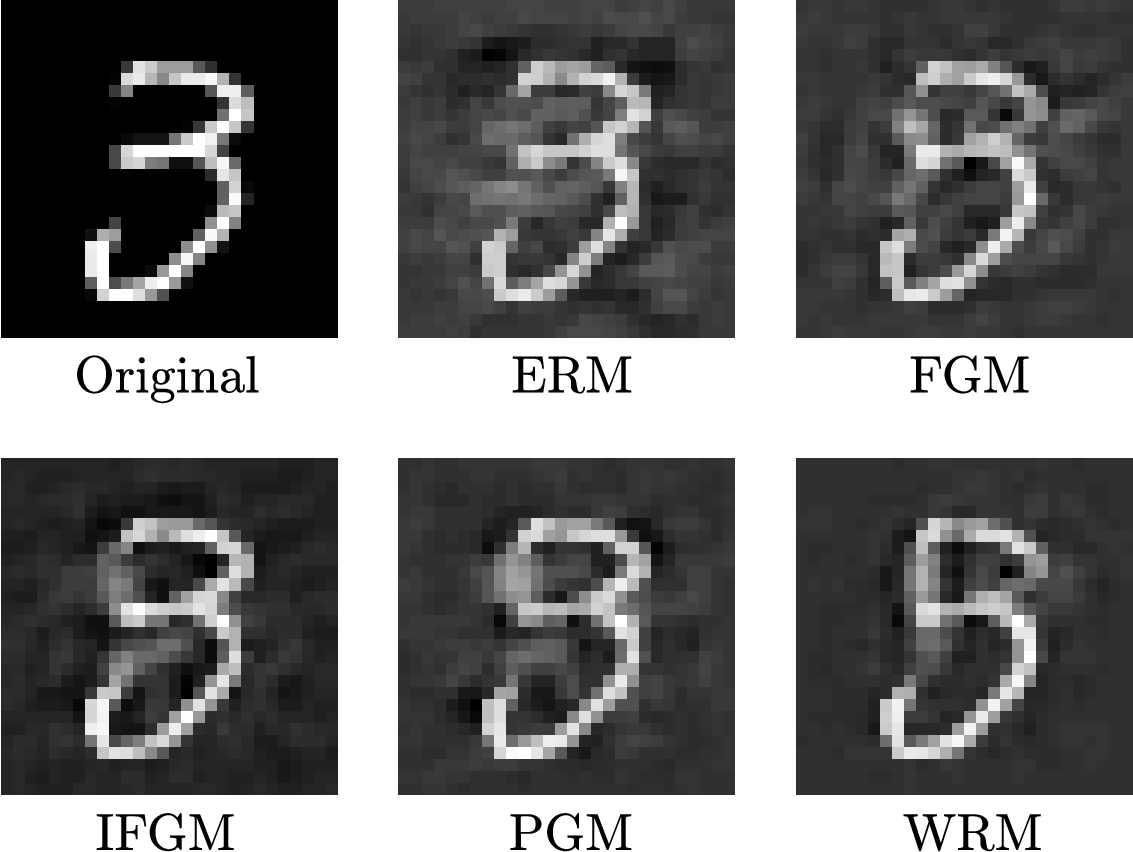}}%
\end{minipage}\centering\begin{minipage}{0.31\columnwidth}%
\centering
\subfigure[]{\includegraphics[width=0.8\textwidth]{./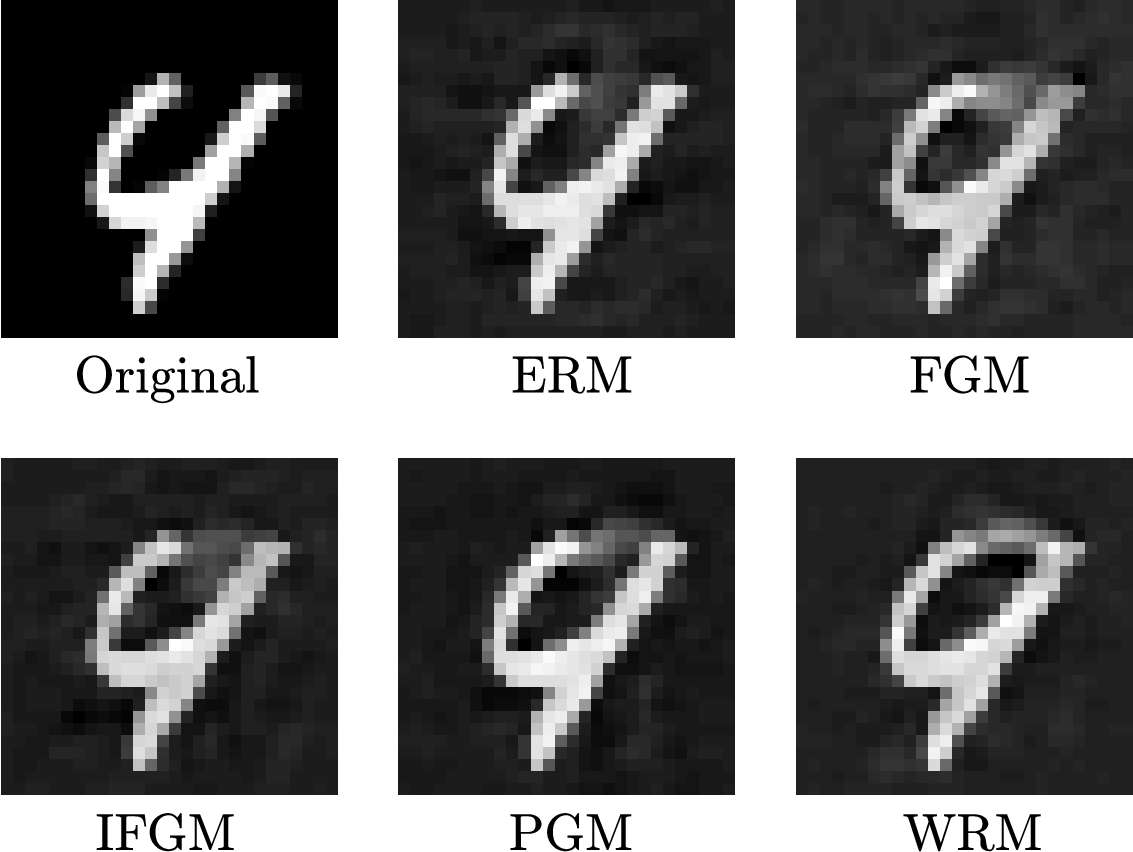}}%
\end{minipage}\begin{minipage}{0.31\columnwidth}%
\centering
\subfigure[]{\includegraphics[width=0.8\textwidth]{./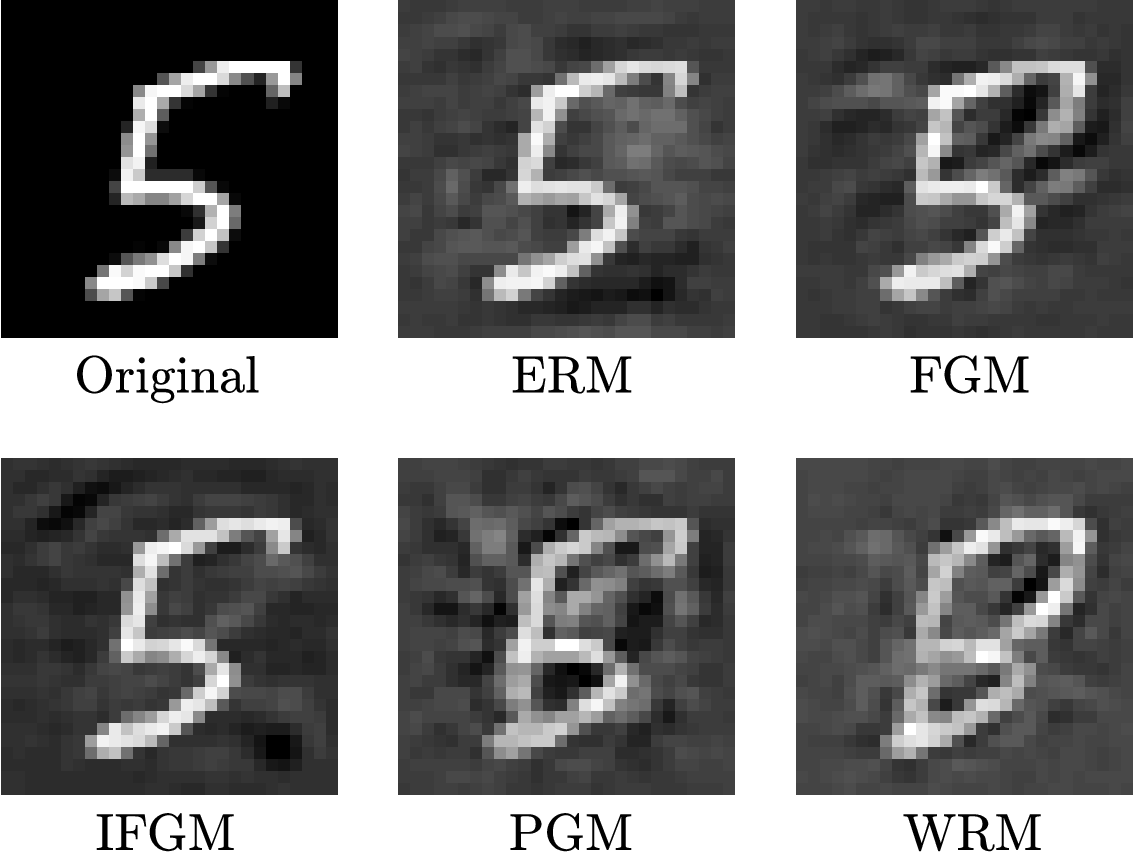}}%
\end{minipage}
\begin{minipage}{0.31\columnwidth}%
\centering
\subfigure[]{\includegraphics[width=0.8\textwidth]{./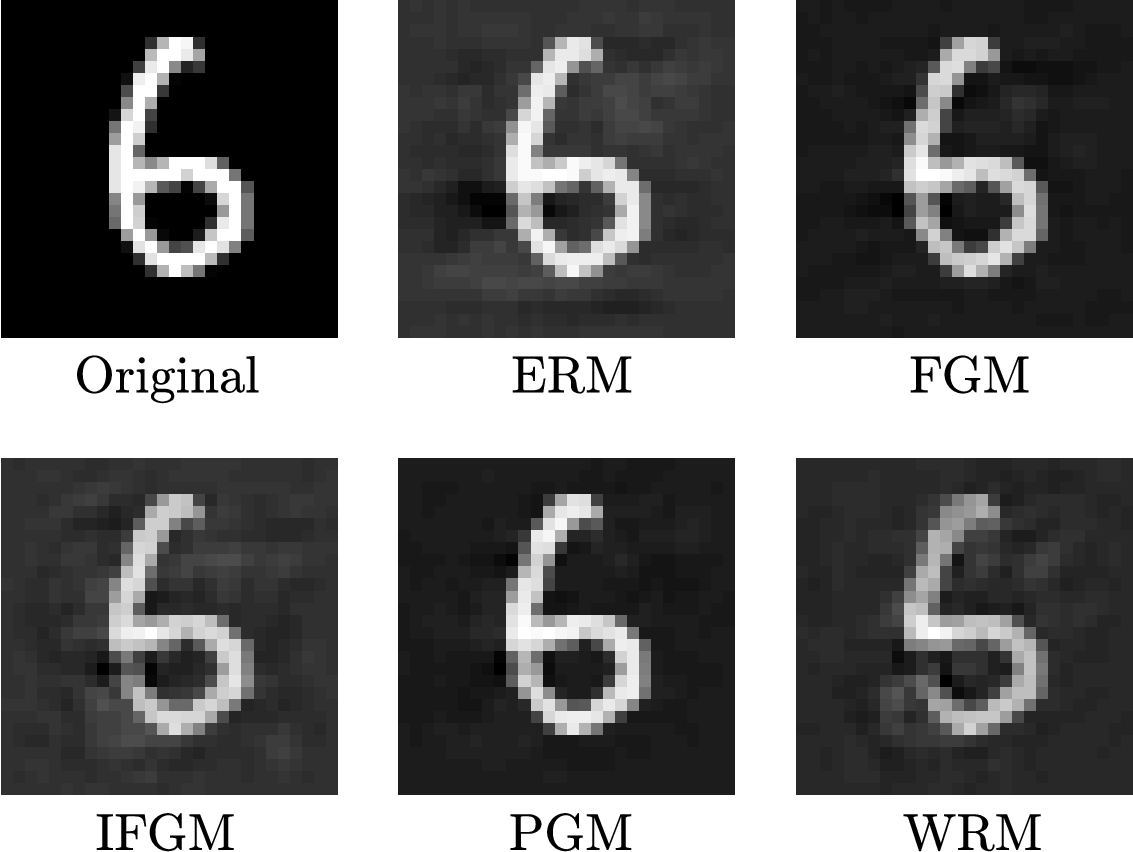}}%
\end{minipage}\centering\begin{minipage}{0.31\columnwidth}%
\centering
\subfigure[]{\includegraphics[width=0.8\textwidth]{./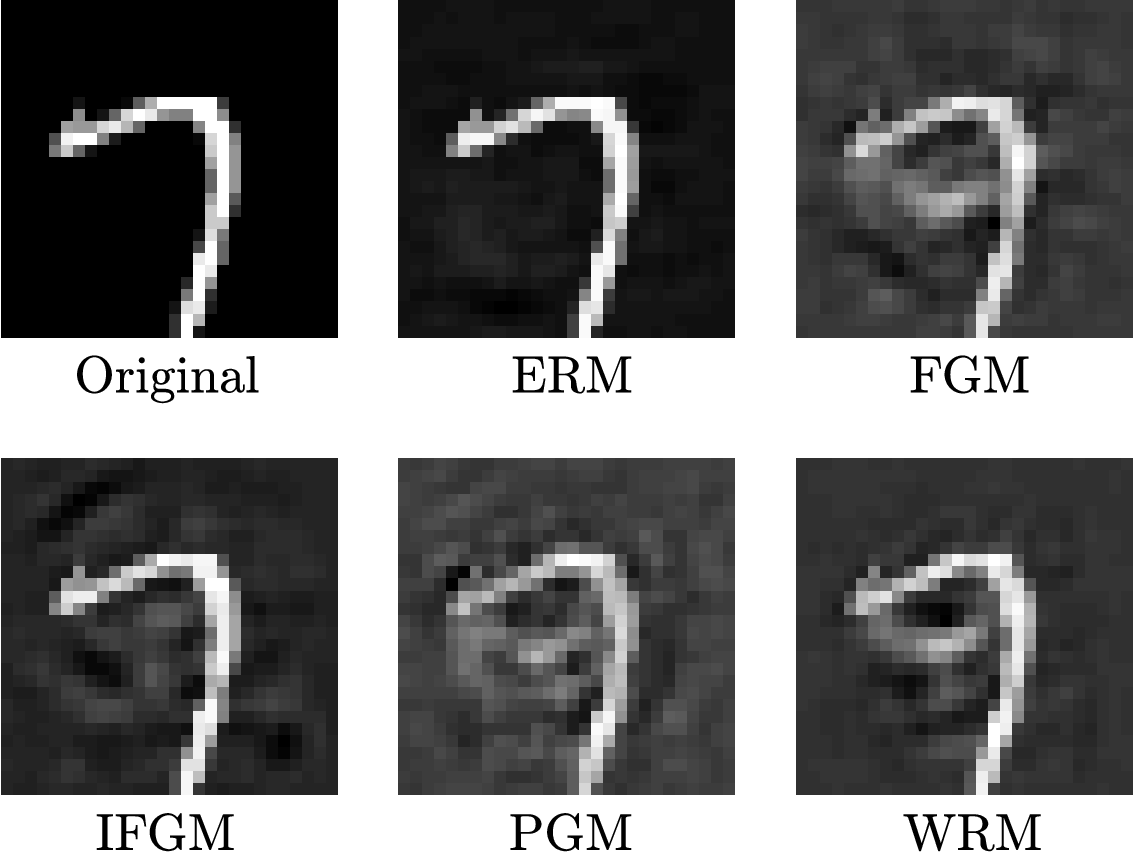}}%
\end{minipage}\begin{minipage}{0.31\columnwidth}%
\centering
\subfigure[]{\includegraphics[width=0.8\textwidth]{./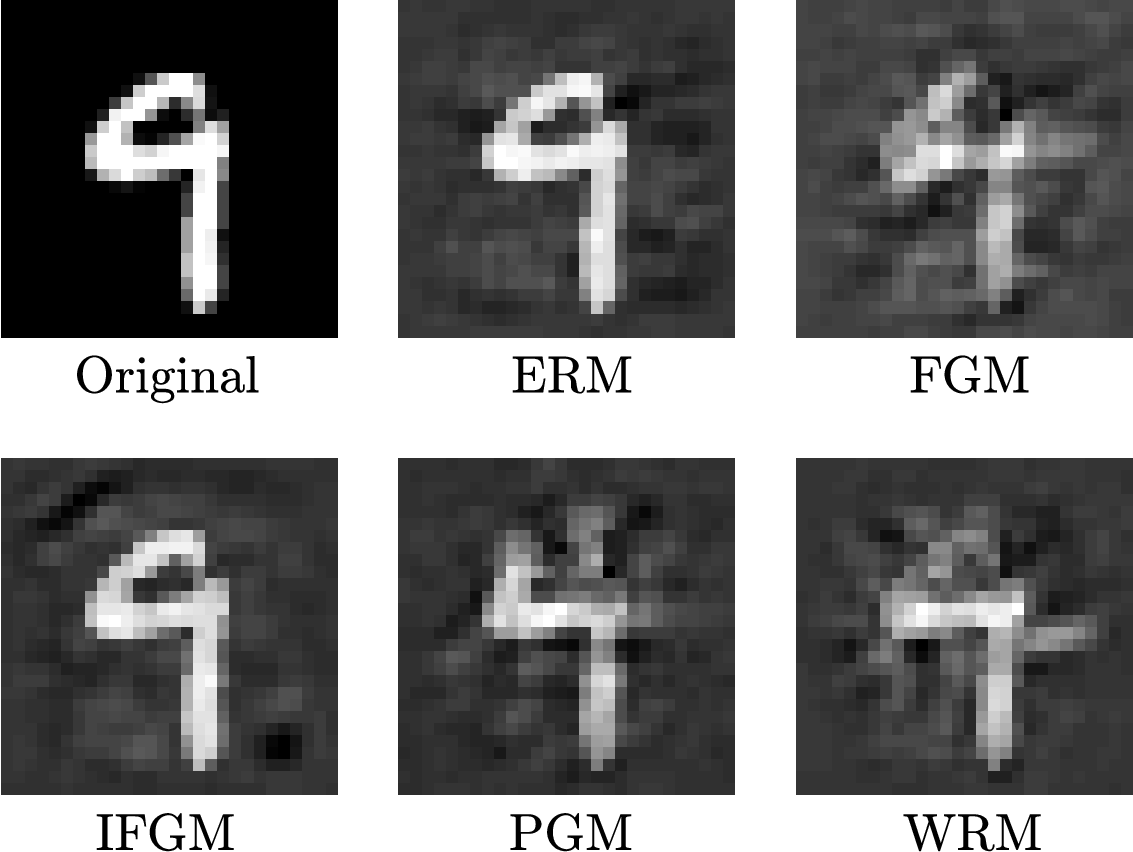}}%
\end{minipage}
\centering
\caption[]{\label{fig:mnist3}Visualizing stability
  over inputs. We illustrate the smallest WRM perturbation (largest $\gamma_{\rm adv}$) necessary to make a model misclassify a datapoint.}
\end{figure}

\subsection{MNIST Experiments with varied $\gamma$}
In Figure \ref{fig:mnist4}, we choose a fixed WRM adversary (fixed
$\gamma_{\rm adv}$) and perturb WRM models trained with various penalty
parameters $\gamma$. As the bound~\eqref{eqn:robustness-any-rho}
with $\eta = \gamma$ suggests, even when the adversary has more budget than
that used for training ($1/\gamma < 1/\gamma_{\rm adv}$), degradation in
performance is still \emph{smooth}. Further, as we decrease the penalty
$\gamma$, the amount of achieved robustness---measured here by
test error on adversarial perturbations with $\gamma_{\rm adv}$---has
diminishing gains; this is again consistent with our theory which says that the
inner problem~\eqref{eqn:inner-sup} is not efficiently computable for small $\gamma$.

\begin{figure}[!!t]
\begin{minipage}{0.49\columnwidth}
\centering
\subfigure[$\what{\rho}_n$ vs. $1/\gamma$]{\includegraphics[width=0.95\textwidth]{./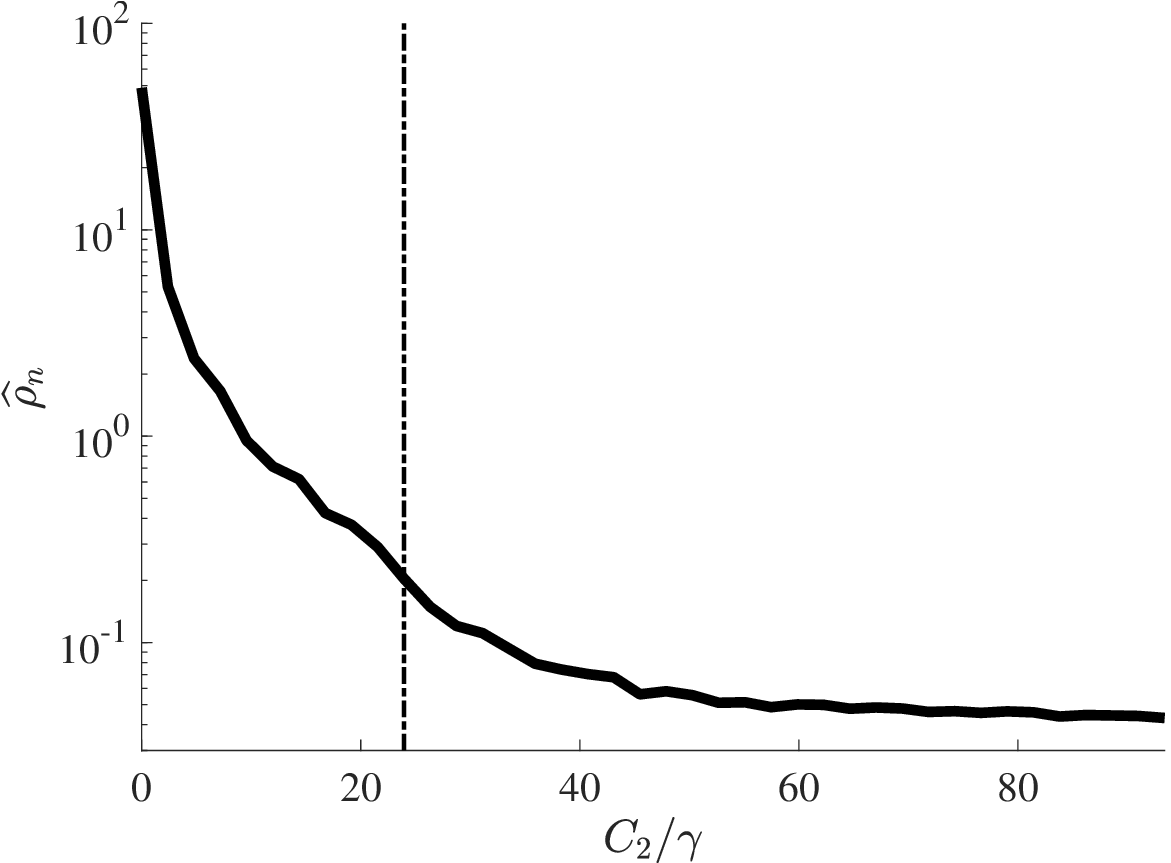}}%
\end{minipage}
\centering
\begin{minipage}{0.49\columnwidth}%
\centering
\subfigure[Test error vs. $1/\gamma$]{\includegraphics[width=0.95\textwidth]{./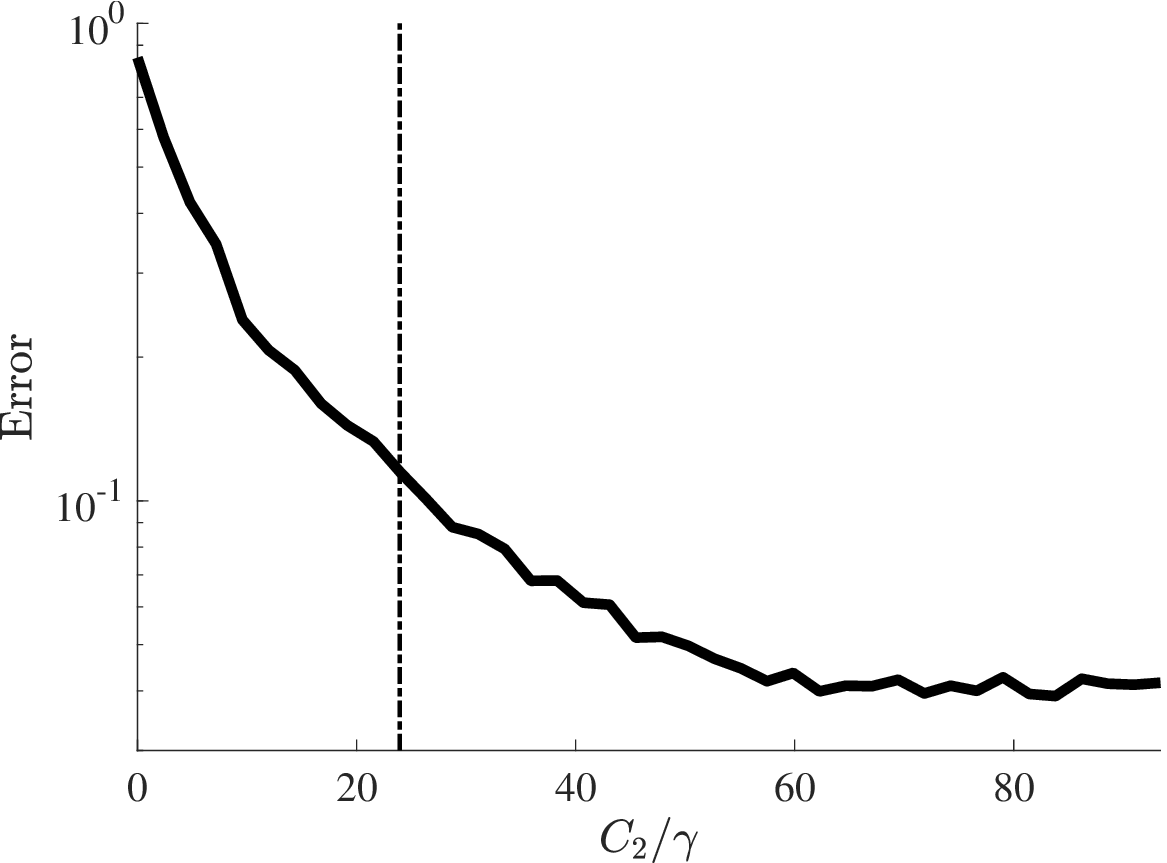}}%
\end{minipage}
\centering

\caption[]{\label{fig:mnist4}(a) Stability and (b) test error for a fixed adversary. We train WRM models with various levels of $\gamma$ and perturb them with a fixed WRM adversary ($\gamma_{\rm adv}$ indicated by the vertical bar). }
\end{figure}

\clearpage 
\subsection{MNIST experiments with a larger adversarial budget}
\label{sec:large-adversary}

Figures \ref{fig:mnist_bound_big} and \ref{fig:mnist-big} repeat Figures
\ref{fig:loss-bound}(b), \ref{fig:mnist}, and \ref{fig:mnist-fgm} for a larger training adversarial budget
($\gamma = 0.02C_2$) as well as larger test adversarial budgets. The
distinctions in performance between various methods are less apparent now. For
our method, the inner supremum
is no longer strongly concave for over 10\% of the data,
indicating that we no longer have guarantees of performance. For large
adversaries (i.e. large desired robustness values) our approach becomes a
heuristic just like the other approaches.

\begin{figure}[!!h]
\centering
\includegraphics[width=0.4\textwidth]{./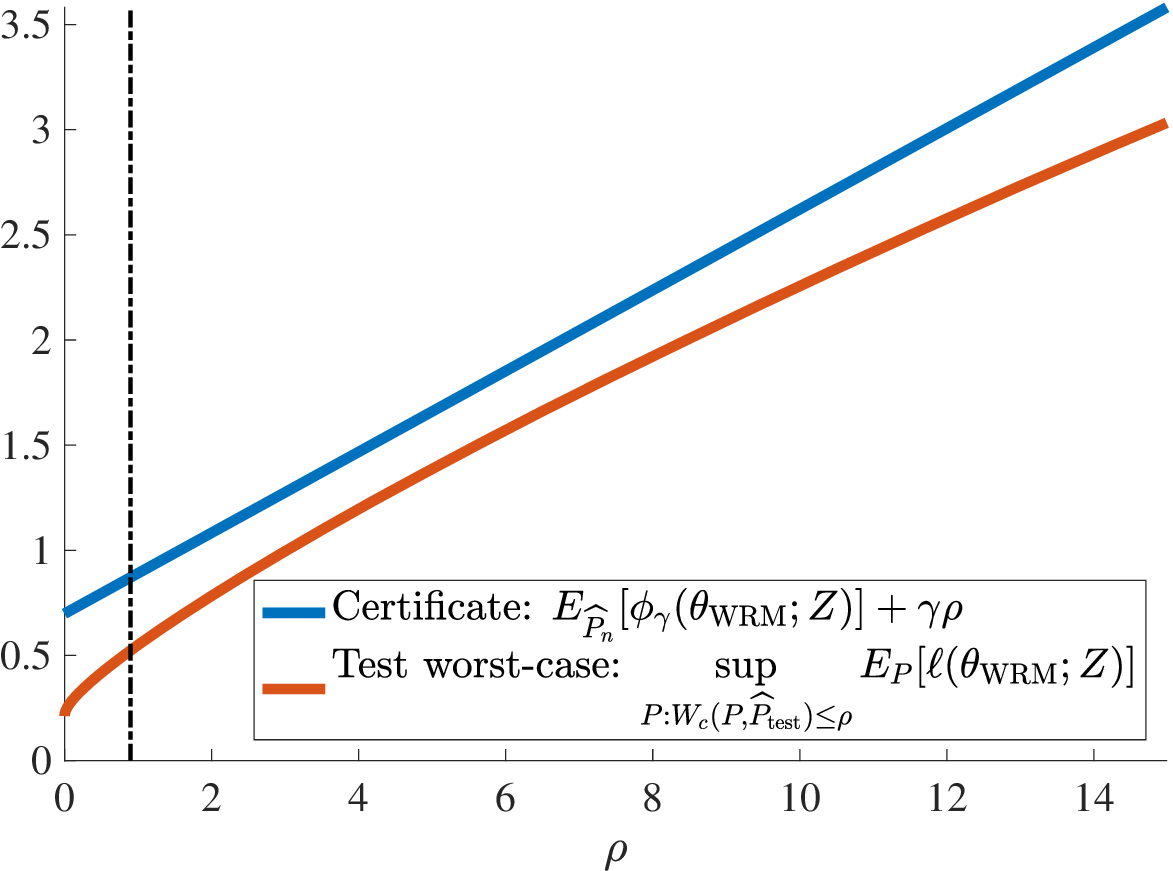}%
\centering
\caption[]{\label{fig:mnist_bound_big}Empirical comparison between certificate of robustness~\eqref{eqn:robustness-any-rho} ({\color{matlab_blue} blue}) and out-of-sample (test)
  worst-case performance ({\color{matlab_red} red}) for the experiments on MNIST with a larger training adversary. The statistical error term $\epsilon_n(t)$ is
  omitted from the certificate. The vertical bar indicates the achieved level of robustness on the training set $\what{\tol}_n$($\theta_{\rm WRM}$). }
\end{figure}

\begin{figure}[!!h]
\begin{minipage}{0.49\columnwidth}
\centering
\subfigure[Test error vs. $\epsilon_{\rm adv}$ for $\|\cdot\|_2$-PGM attack]{\includegraphics[width=0.85\textwidth]{./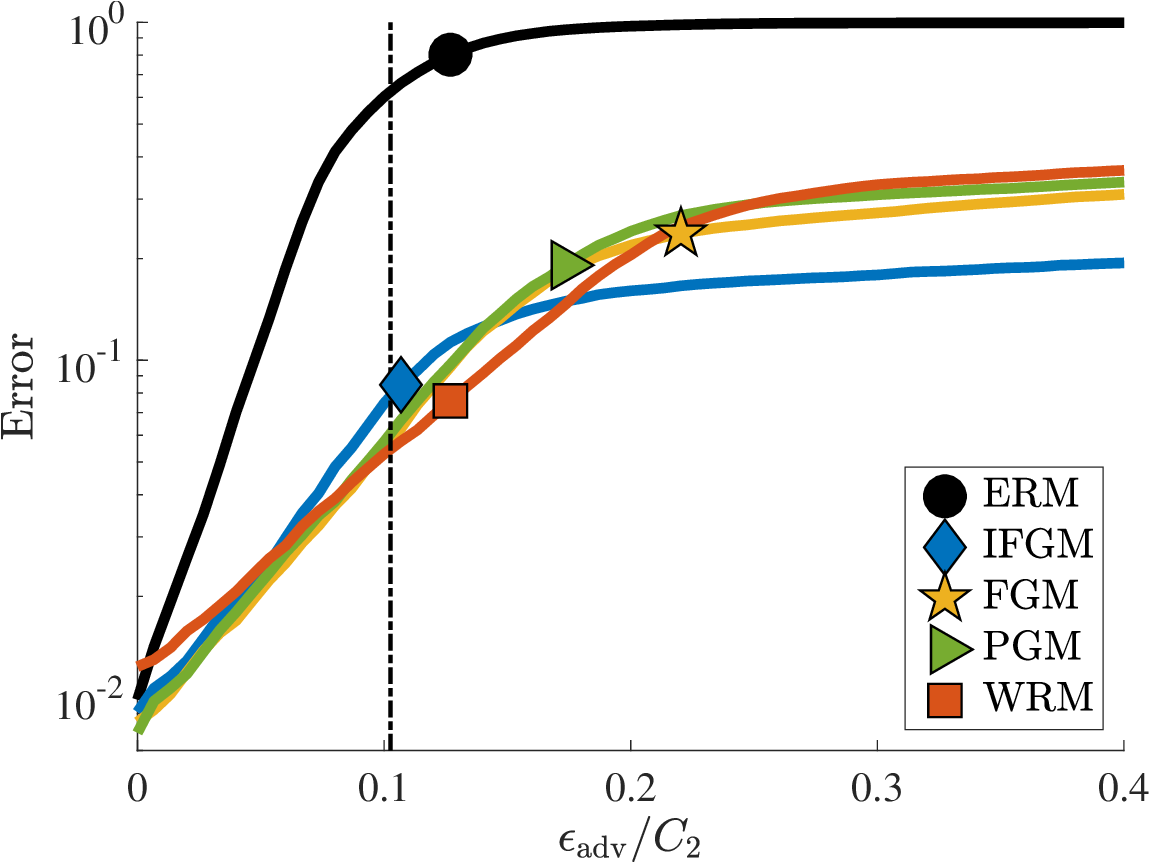}}%
\end{minipage}
\centering
\begin{minipage}{0.49\columnwidth}%
\centering
\subfigure[Test error vs. $\epsilon_{\rm adv}$ for $\|\cdot\|_{\infty}$-PGM attack]{\includegraphics[width=0.85\textwidth]{./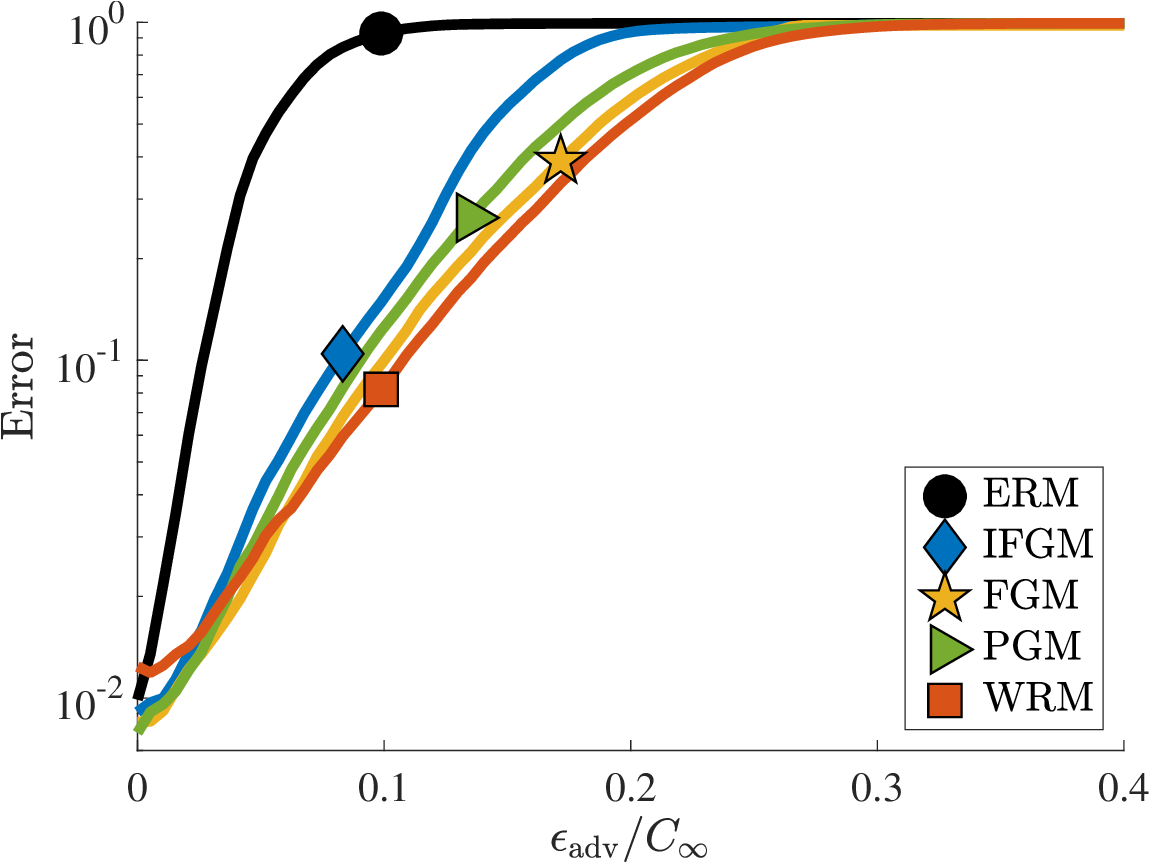}}%
\end{minipage}
\begin{minipage}{0.49\columnwidth}
\centering
\subfigure[Test error vs. $\epsilon_{\rm adv}$ for $\|\cdot\|_2$-FGM attack]{\includegraphics[width=0.85\textwidth]{./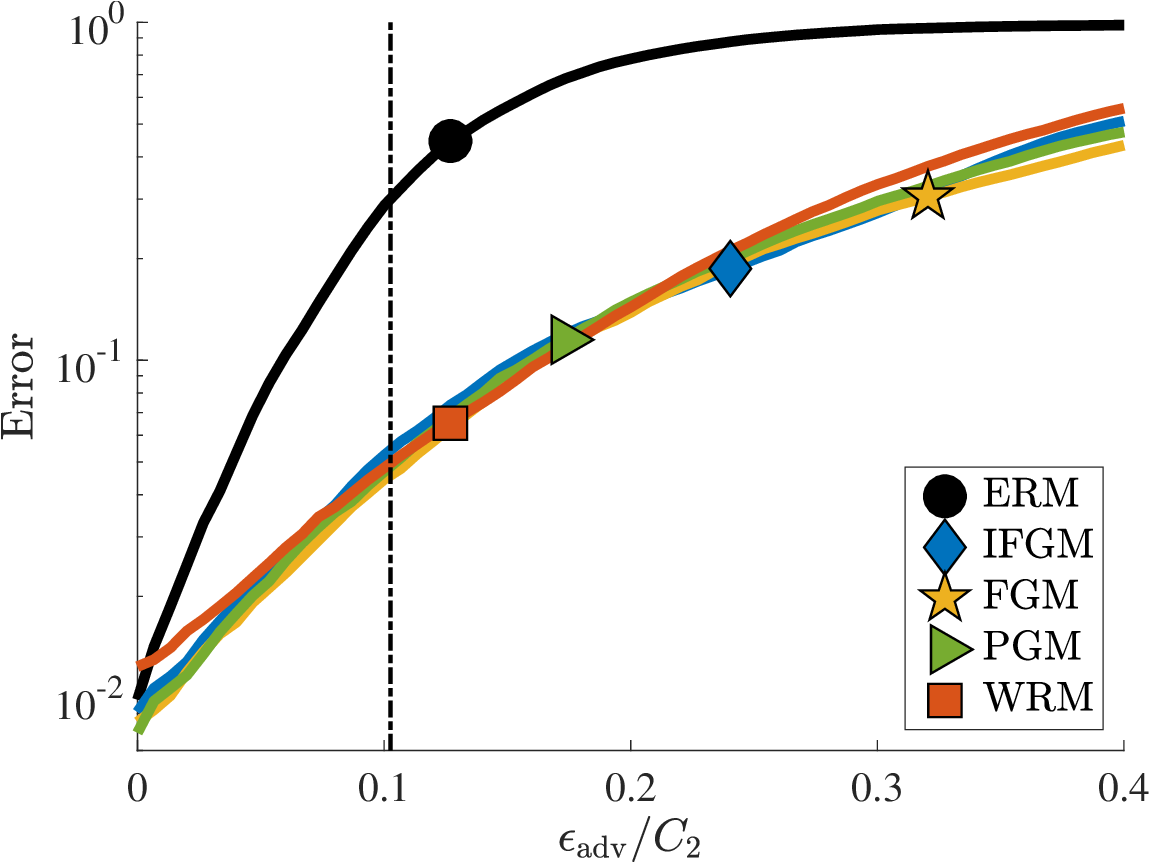}}%
\end{minipage}
\centering
\begin{minipage}{0.49\columnwidth}%
\centering
\subfigure[Test error vs. $\epsilon_{\rm adv}$ for $\|\cdot\|_{\infty}$-FGM attack]{\includegraphics[width=0.85\textwidth]{./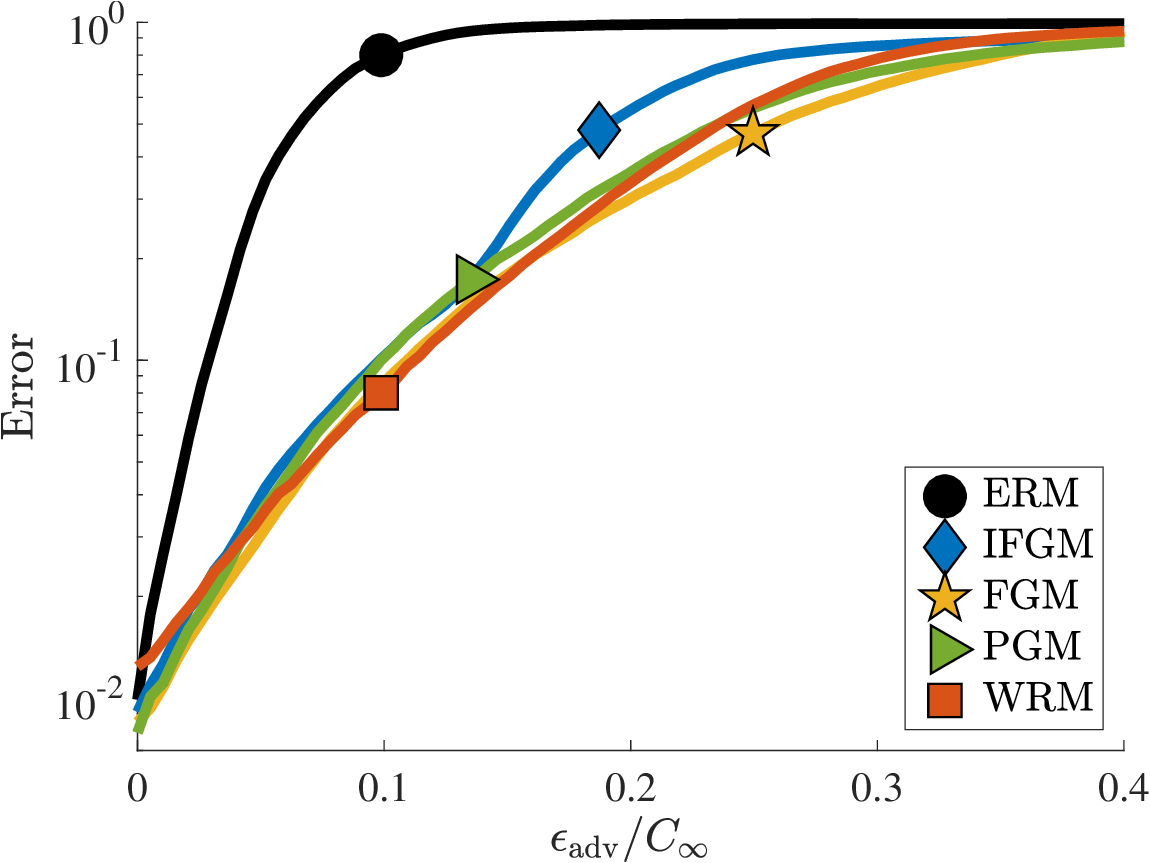}}%
\end{minipage}
\begin{minipage}{0.49\columnwidth}
\centering
\subfigure[Test error vs. $\epsilon_{\rm adv}$ for $\|\cdot\|_2$-IFGM attack]{\includegraphics[width=0.85\textwidth]{./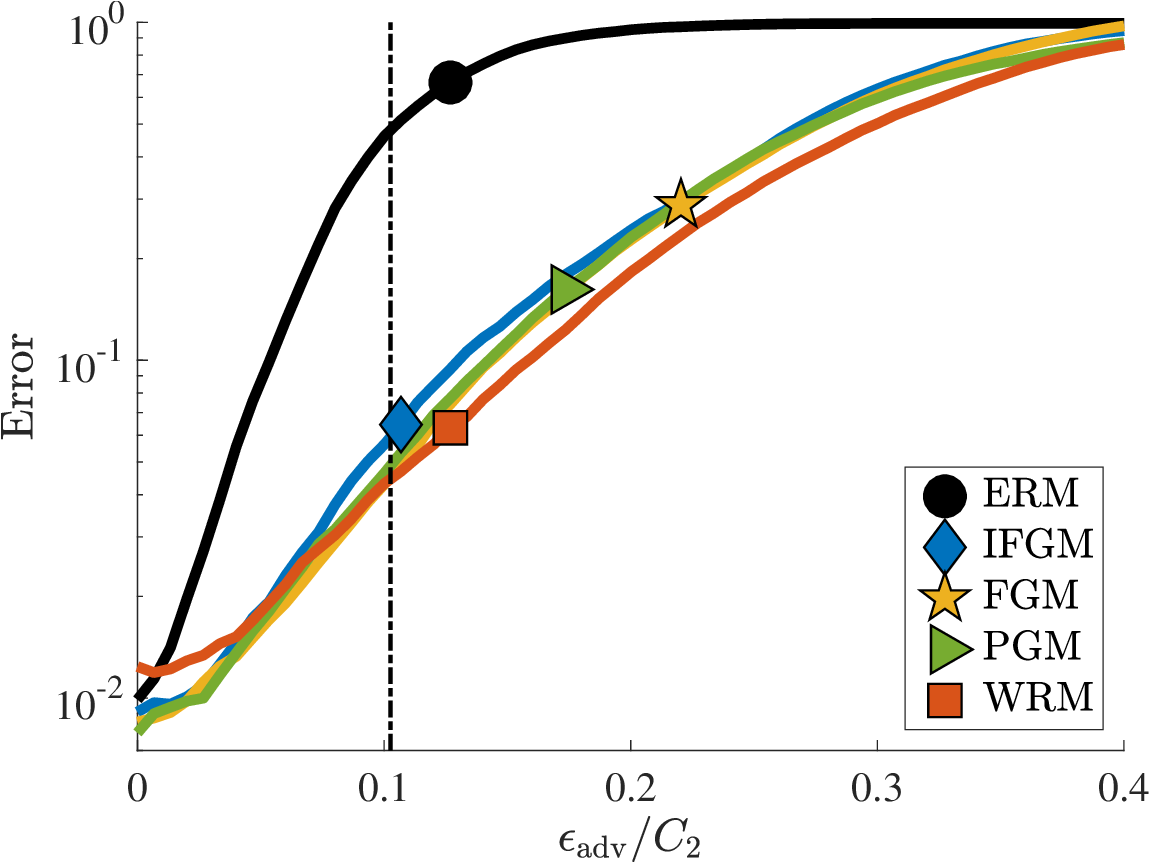}}%
\end{minipage}
\centering
\begin{minipage}{0.49\columnwidth}%
\centering
\subfigure[Test error vs. $\epsilon_{\rm adv}$ for $\|\cdot\|_{\infty}$-IFGM attack]{\includegraphics[width=0.85\textwidth]{./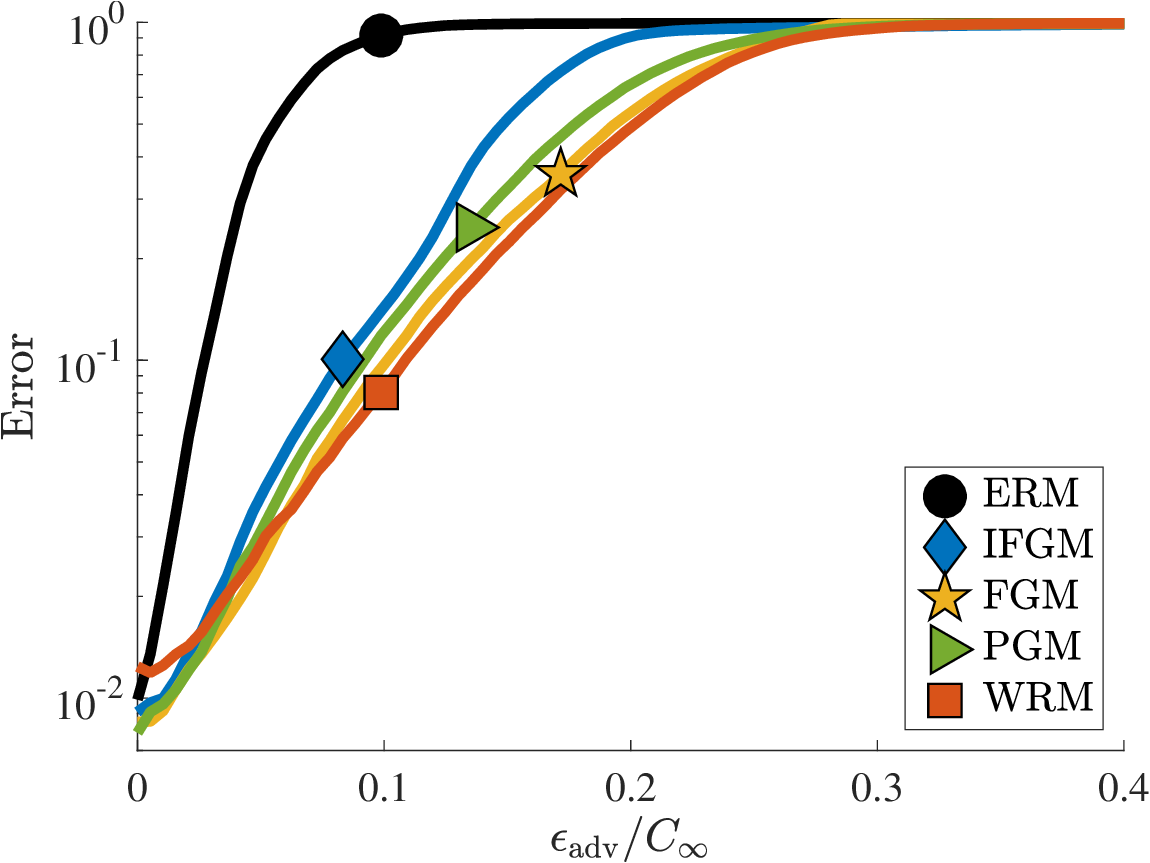}}%
\end{minipage}
\centering

\caption[]{\label{fig:mnist-big}Attacks on the MNIST
  dataset with larger (training and test) adversarial budgets.  We illustrate test misclassification error vs. the adversarial
  perturbation level $\epsilon_{\rm adv}$. Top row: PGM attacks, middle row: FGM attacks, bottom row: IFGM attacks. Left column: Euclidean-norm attacks, right column: $\infty$-norm attacks. The vertical bar in (a), (c), and (e)
  indicates the perturbation level that was used for training the PGM, FGM, and IFGM models and the estimated radius
  $\sqrt{\what{\rho}_n(\theta_{\rm WRM})}$.}
\end{figure}

\newpage 
\subsection{MNIST $\infty$-norm experiments}
\label{section:sup-norm}
We consider training FGM, IFGM, and PGM with $p=\infty$. We first compare with WRM trained in the same manner as before---with the squared Euclidean cost. Then, we consider a heuristic Lagrangian approach for training WRM with the squared $\infty$-norm cost. 

\subsubsection{Comparison with standard WRM}\label{sec:infElse-2WRM}
Our method (WRM) is trained to defend against $\ltwo{\cdot}$-norm attacks by
using the cost function
\begin{equation*}
  c((x, y), (x_0, y_0)) = \ltwo{x - x_0}^2 + \infty \cdot \indic{y \neq y_0}
\end{equation*}
with the convention that $0 \cdot \infty = 0$. Standard adversarial training
methods often train to defend against $\linf{\cdot}$-norm attacks, which we
compare our method against in this subsection. Direct comparison between these
approaches is not immediate, as we need to determine a suitable $\epsilon$ to
train FGM, IFGM, and PGM in the $\infty$-norm that corresponds to the penalty
parameter $\gamma$ for the $\ltwo{\cdot}$-norm that we use. Similar to the expression~\eqref{eqn:fgm-fair}, we use
\begin{equation}\label{eq:inf-from-2}
\epsilon := \E_{\emp}[\linf{T(\theta_{ \rm WRM}, Z)-Z}]
\end{equation}
as the adversarial training budget for FGM, IFGM and PGM with
$\linf{\cdot}$-norms. Because 2-norm adversaries tend to focus budgets on a
subset of features, the resulting $\infty$-norm perturbations are relatively
large. In Figure~\ref{fig:mnist-infwrm2} we show the results trained with a small
training adversarial budget.  In this regime, (large $\gamma$, small $\epsilon$), WRM matches the performance of other techniques. 

In
Figure~\ref{fig:mnist-infwrm2big} we show the results trained with a large
training adversarial budget.
In this regime (small $\gamma$, large
$\epsilon$), performance between WRM and other methods diverge. WRM, which provably defends against small perturbations, outperforms other
heuristics against imperceptible attacks for both Euclidean and $\infty$ norms. Further, it outperforms other heuristics on natural images, showing
that it consistently achieves a smaller price of robustness. On attacks with
large adversarial budgets (large $\epsilon_{\rm adv}$), however, the performance of WRM is worse than that
of the other methods (especially in the case of $\infty$-norm attacks). These findings verify
that WRM is a practical alternative over existing heuristics for the moderate levels of robustness where our guarantees hold.

\subsubsection{Comparison with $\linf{\cdot}$-WRM}

Our computational guarantees given in Theorem~\ref{thm:convergence} does not
hold anymore when we consider $\infty$-norm adversaries:
\begin{equation}
  \label{eqn:sup-cost}
  c((x, y), (x_0, y_0)) = \linf{x - x_0}^2 + \infty \cdot \indic{y \neq y_0}.
\end{equation}
Optimizing the Lagrangian formulation~\eqref{eqn:inner-sup} with the
$\infty$-norm is difficult since subtracting a multiple of the
$\infty$-norm does not add (negative) curvature in all directions. In
Appendix~\ref{section:prox}, we propose a heuristic algorithm for solving the
inner supremum problem~\eqref{eqn:inner-sup} with the above cost
function~\eqref{eqn:sup-cost}. Our approach is based on a variant of proximal
algorithms.

We compare our proximal heuristic introduced in Appendix~\ref{section:prox}
with other adversarial training procedures that were trained against
$\infty$-norm adversaries.  Results are shown in Figure
\ref{fig:mnist-proxsmall} for a small training adversary and Figure
\ref{fig:mnist-proxbig} for a large training adversary. We observe that
similar trends as in Section~\ref{sec:infElse-2WRM} hold again.
 
\clearpage

\begin{figure}[!!t]
\begin{minipage}{0.49\columnwidth}
\centering
\subfigure[Test error vs. $\epsilon_{\rm adv}$ for $\|\cdot\|_2$-PGM attack]{\includegraphics[width=0.85\textwidth]{./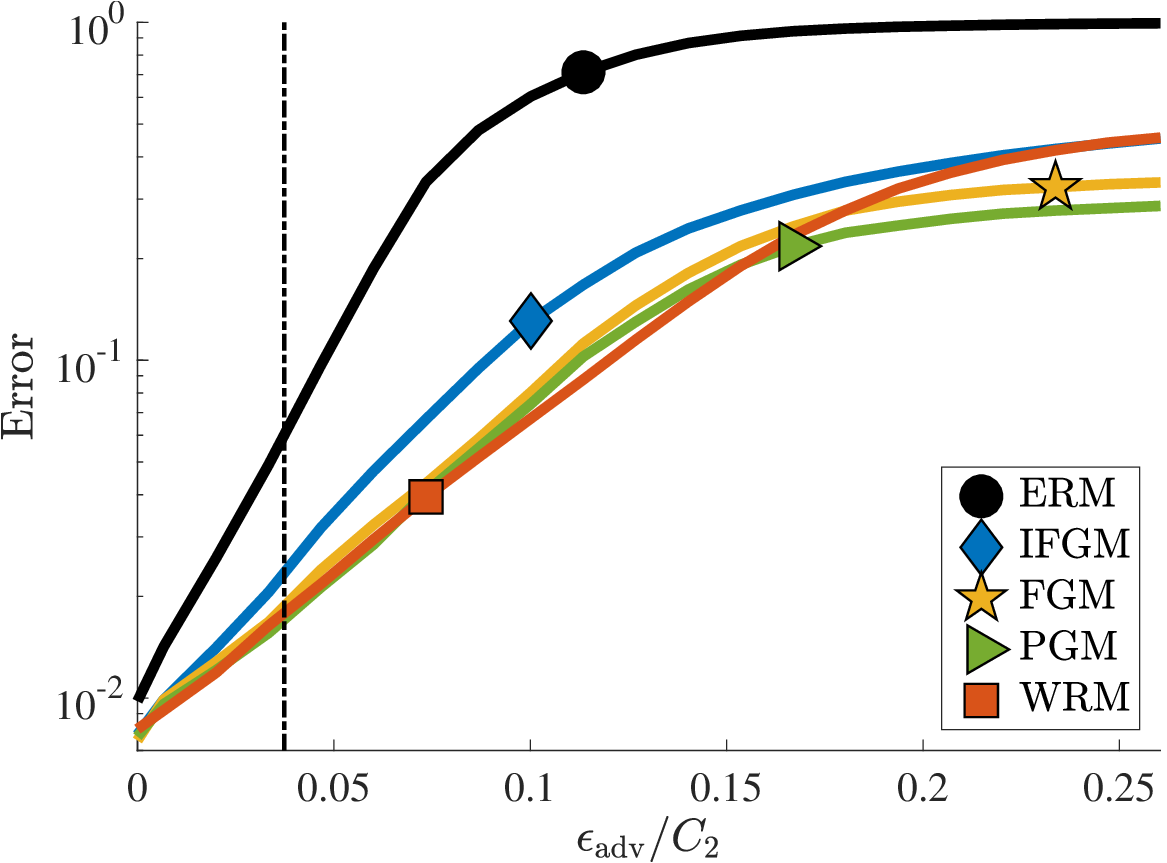}}%
\end{minipage}
\centering
\begin{minipage}{0.49\columnwidth}%
\centering
\subfigure[Test error vs. $\epsilon_{\rm adv}$ for $\|\cdot\|_{\infty}$-PGM attack]{\includegraphics[width=0.85\textwidth]{./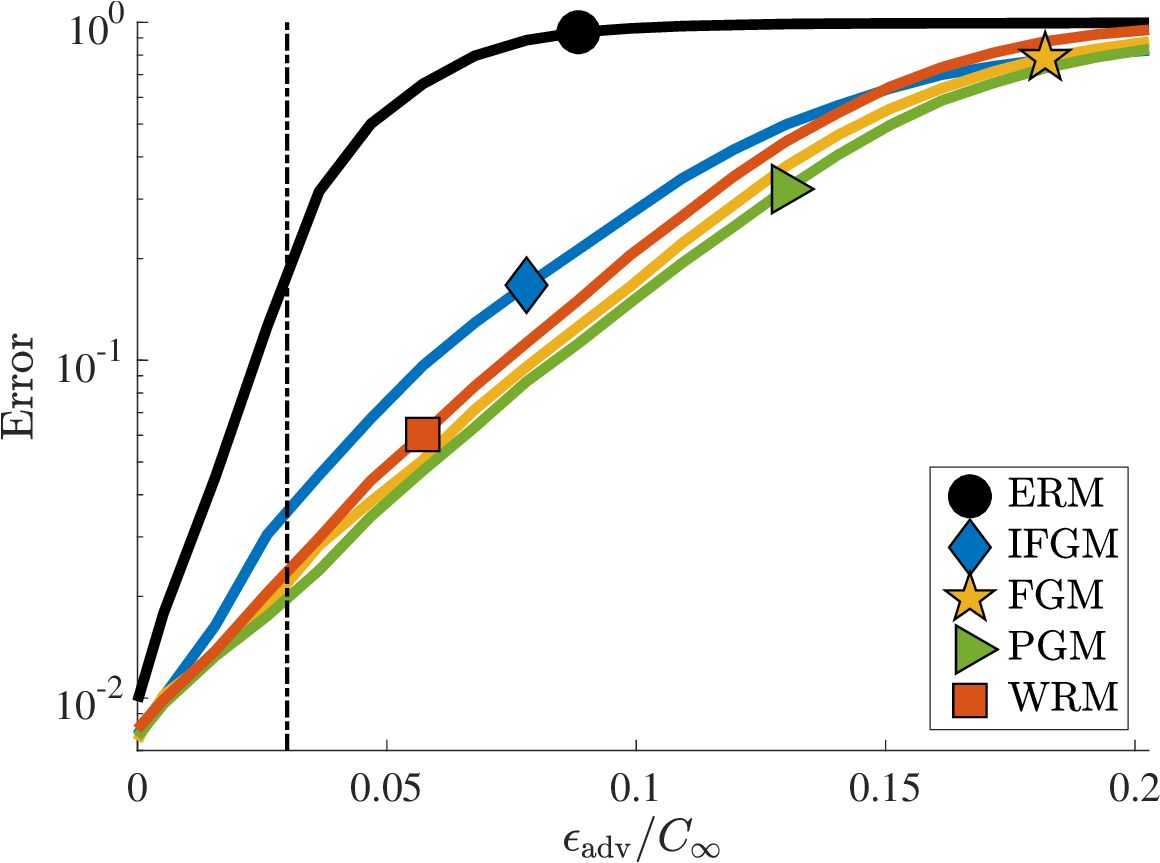}}%
\end{minipage}
\begin{minipage}{0.49\columnwidth}
\centering
\subfigure[Test error vs. $\epsilon_{\rm adv}$ for $\|\cdot\|_2$-FGM attack]{\includegraphics[width=0.85\textwidth]{./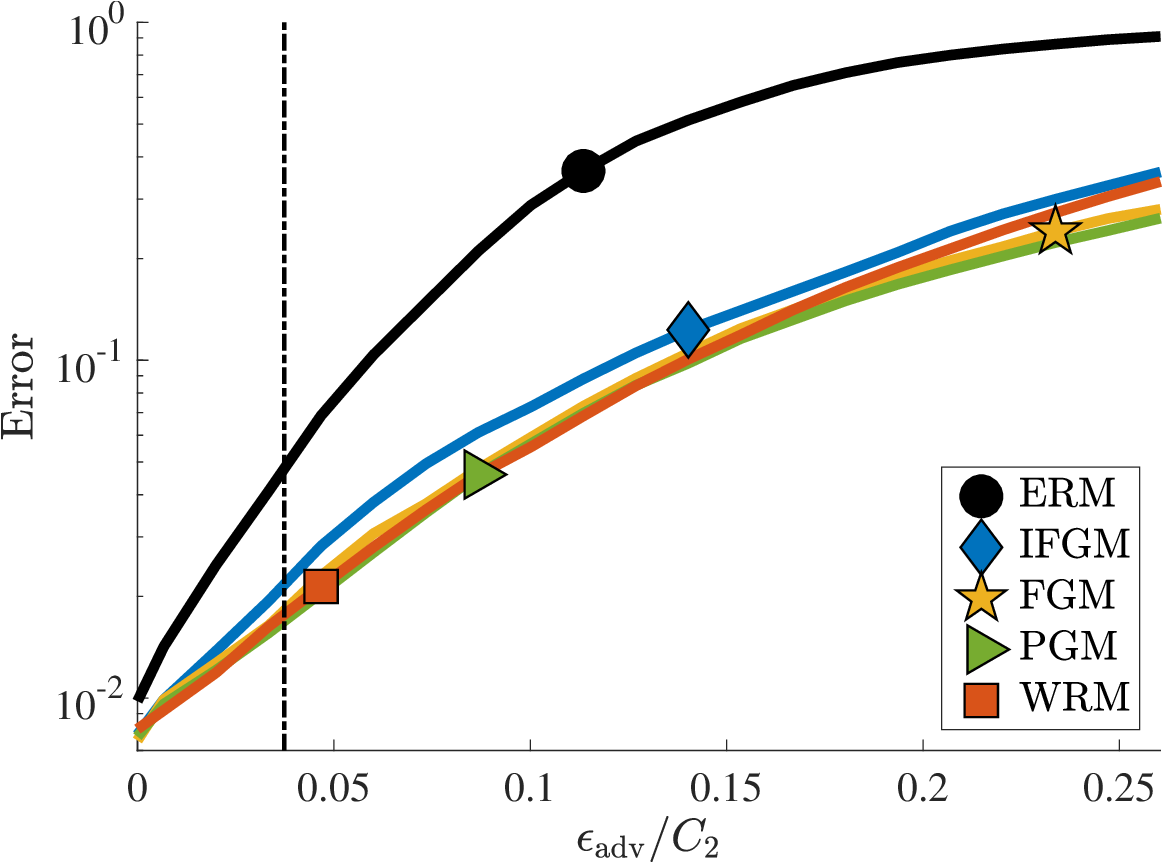}}%
\end{minipage}
\centering
\begin{minipage}{0.49\columnwidth}%
\centering
\subfigure[Test error vs. $\epsilon_{\rm adv}$ for $\|\cdot\|_{\infty}$-FGM attack]{\includegraphics[width=0.85\textwidth]{./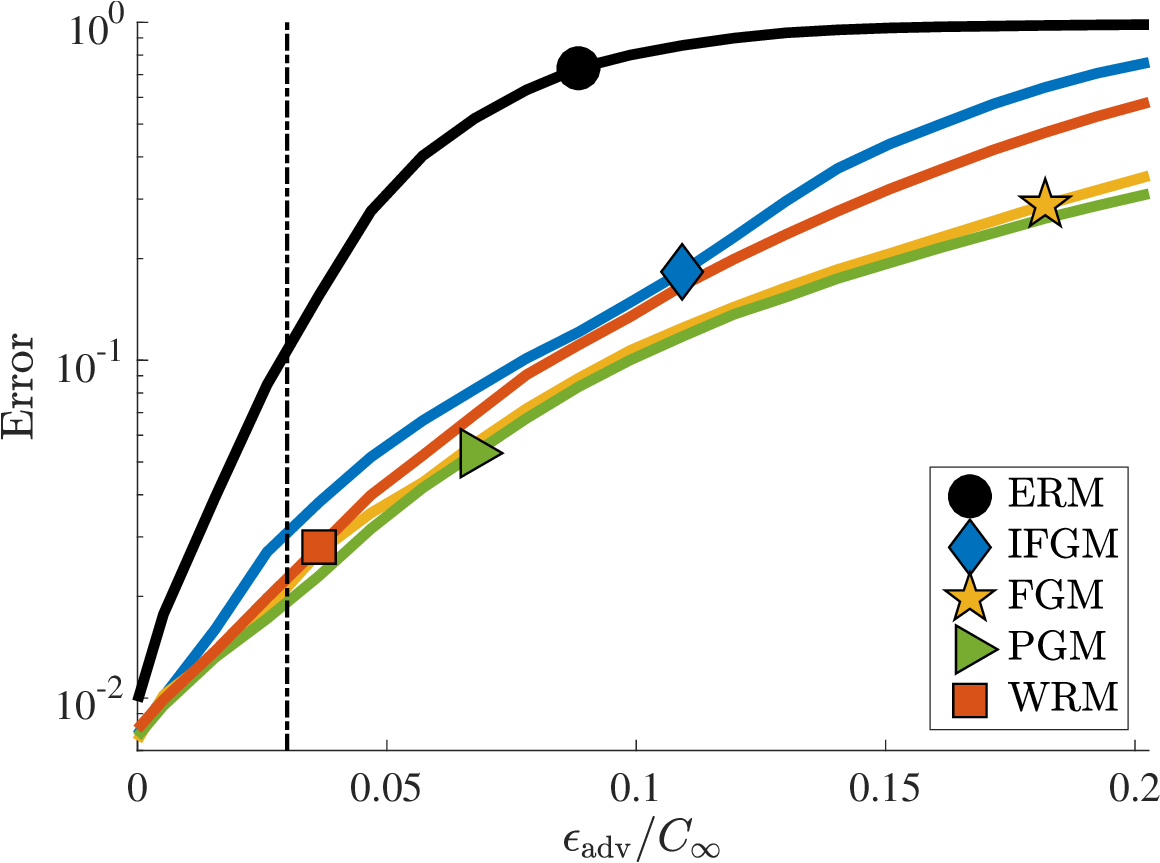}}%
\end{minipage}
\begin{minipage}{0.49\columnwidth}
\centering
\subfigure[Test error vs. $\epsilon_{\rm adv}$ for $\|\cdot\|_2$-IFGM attack]{\includegraphics[width=0.85\textwidth]{./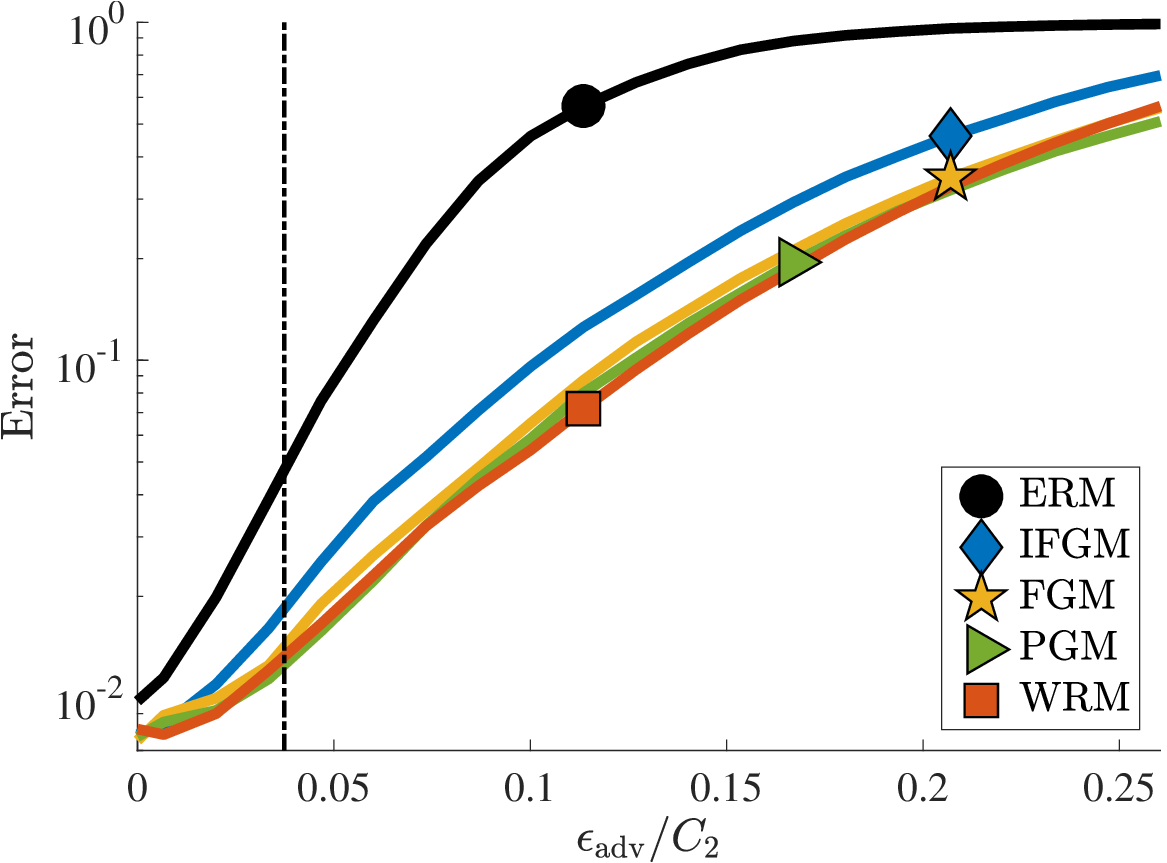}}%
\end{minipage}
\centering
\begin{minipage}{0.49\columnwidth}%
\centering
\subfigure[Test error vs. $\epsilon_{\rm adv}$ for $\|\cdot\|_{\infty}$-IFGM attack]{\includegraphics[width=0.85\textwidth]{./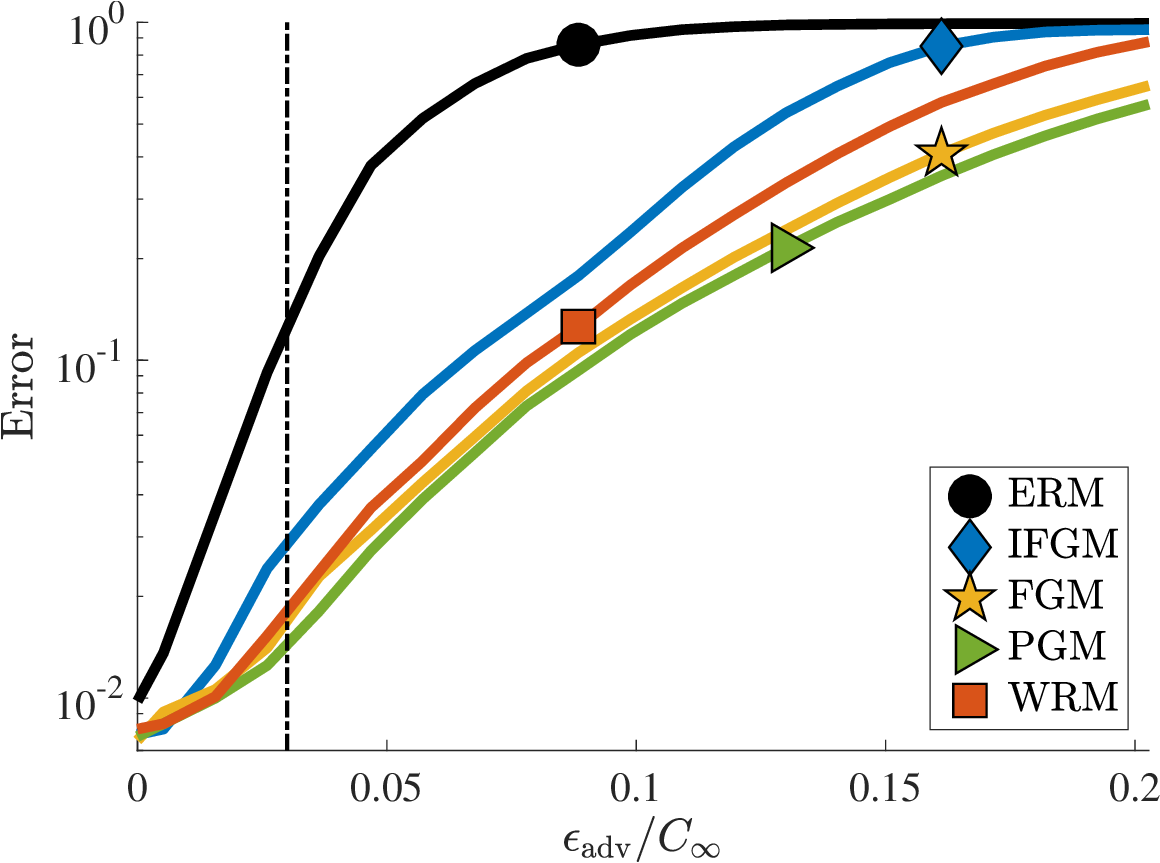}}%
\end{minipage}
\centering
\caption[]{\label{fig:mnist-infwrm2}Attacks on the MNIST
  dataset. We compare standard WRM with $\infty$-norm PGM, FGM, IFGM. We illustrate test misclassification error vs. the adversarial
  perturbation level $\epsilon_{\rm adv}$. Top row: PGM attacks, middle row: FGM attacks, bottom row: IFGM attacks. Left column: Euclidean-norm attacks, right column: $\infty$-norm attacks. The vertical bar
  in (a), (c), and (e) indicates the estimated radius
  $\sqrt{\what{\rho}_n(\theta_{\rm WRM})}$. The vertical bar in (b), (d), and (f) indicates the perturbation level that was used for training the PGM, FGM, and IFGM models via \eqref{eq:inf-from-2}.}
\end{figure}

\clearpage

\begin{figure}[!!t]
\begin{minipage}{0.49\columnwidth}
\centering
\subfigure[Test error vs. $\epsilon_{\rm adv}$ for $\|\cdot\|_2$-PGM attack]{\includegraphics[width=0.85\textwidth]{./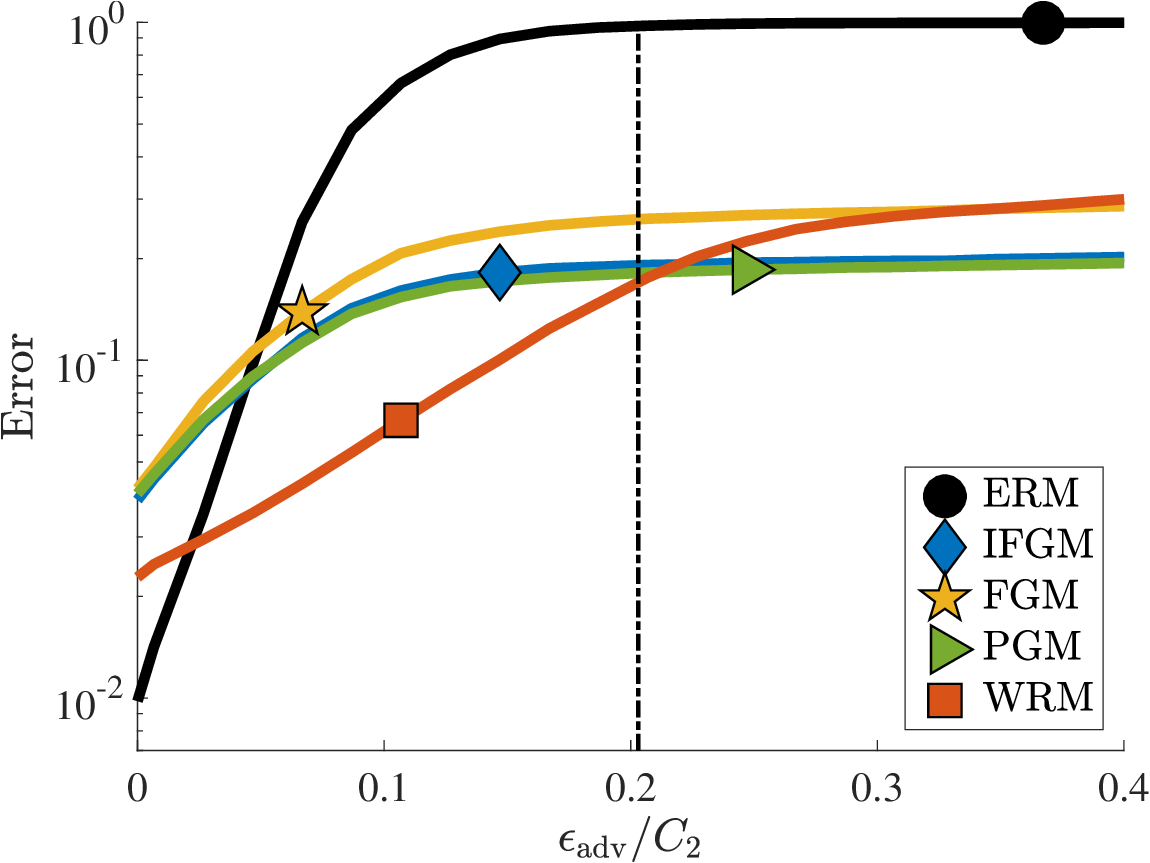}}%
\end{minipage}
\centering
\begin{minipage}{0.49\columnwidth}%
\centering
\subfigure[Test error vs. $\epsilon_{\rm adv}$ for $\|\cdot\|_{\infty}$-PGM attack]{\includegraphics[width=0.85\textwidth]{./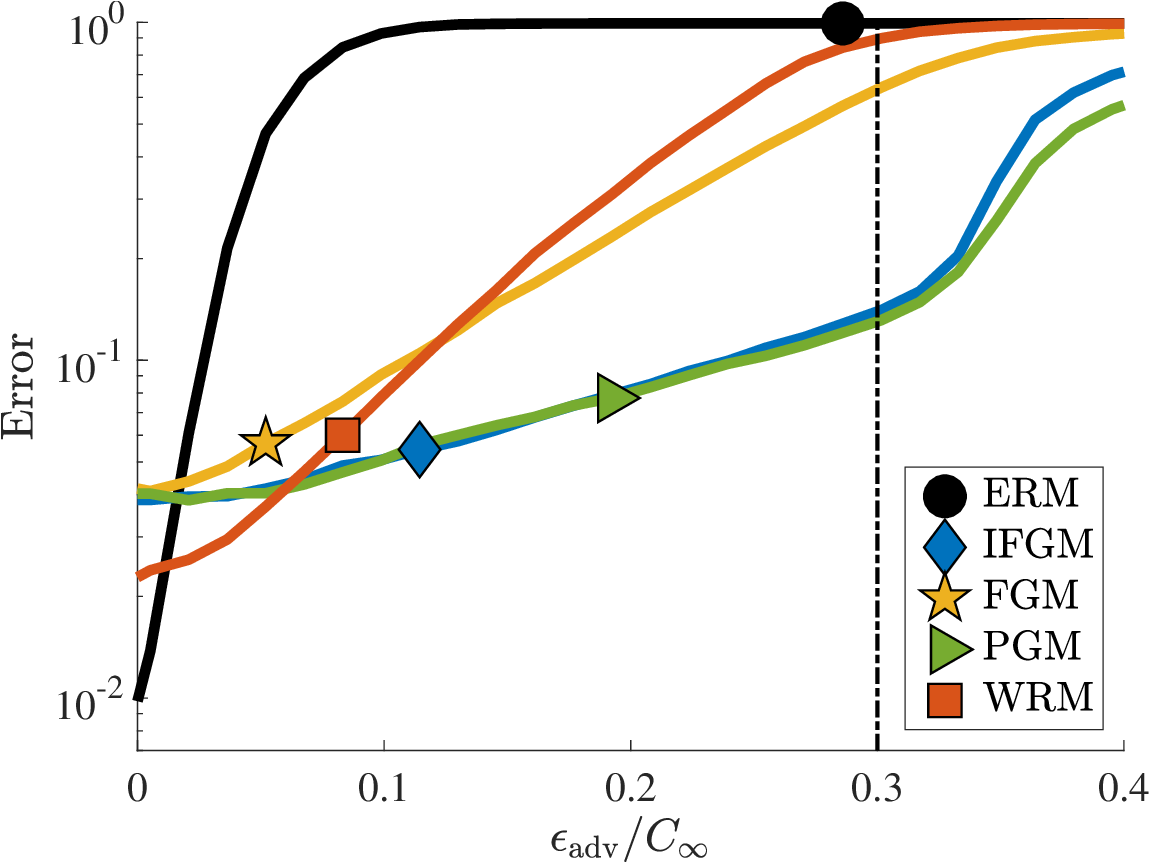}}%
\end{minipage}
\begin{minipage}{0.49\columnwidth}
\centering
\subfigure[Test error vs. $\epsilon_{\rm adv}$ for $\|\cdot\|_2$-FGM attack]{\includegraphics[width=0.85\textwidth]{./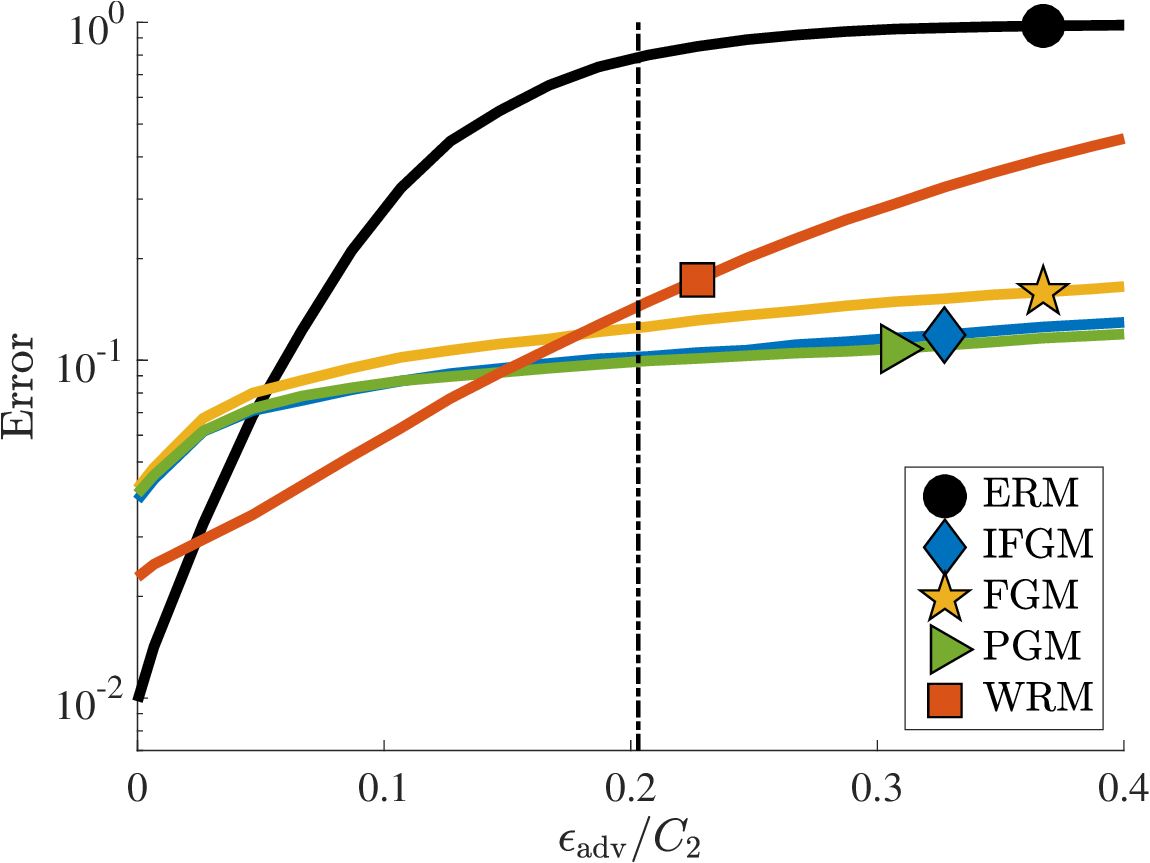}}%
\end{minipage}
\centering
\begin{minipage}{0.49\columnwidth}%
\centering
\subfigure[Test error vs. $\epsilon_{\rm adv}$ for $\|\cdot\|_{\infty}$-FGM attack]{\includegraphics[width=0.85\textwidth]{./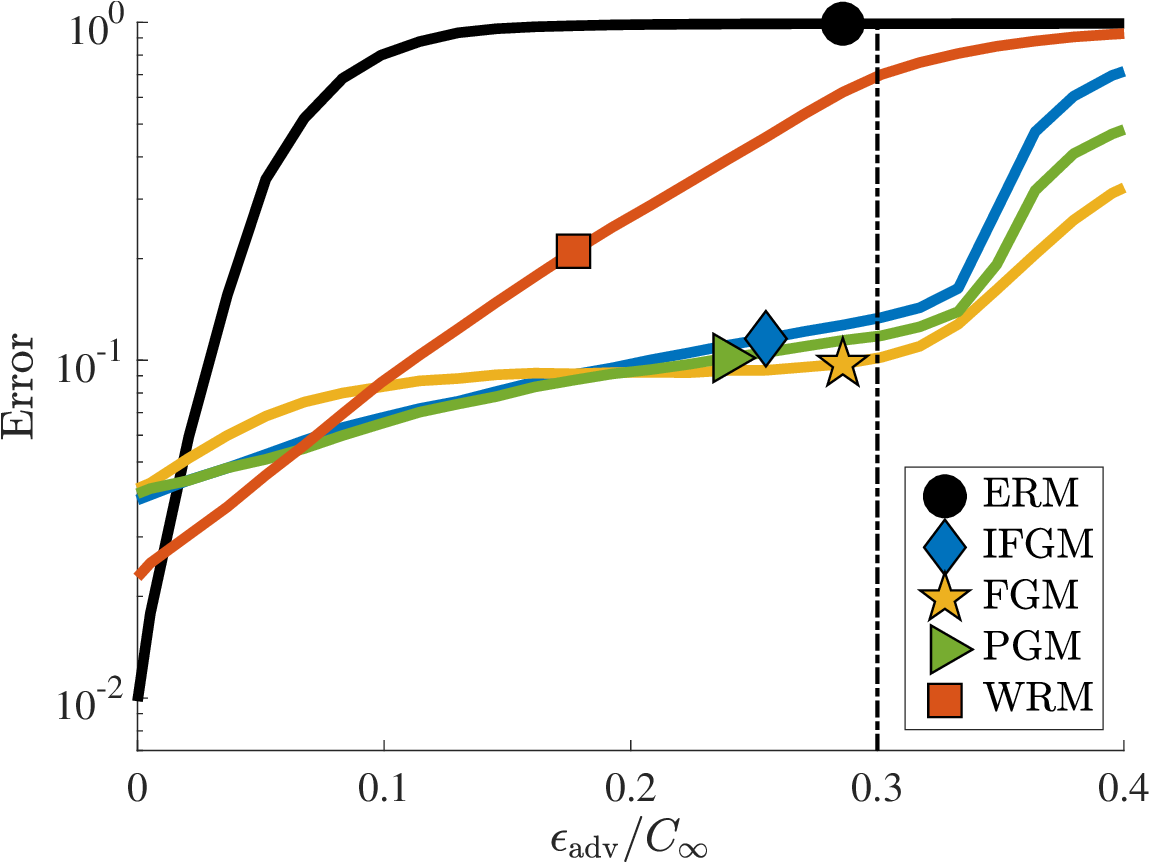}}%
\end{minipage}
\begin{minipage}{0.49\columnwidth}
\centering
\subfigure[Test error vs. $\epsilon_{\rm adv}$ for $\|\cdot\|_2$-IFGM attack]{\includegraphics[width=0.85\textwidth]{./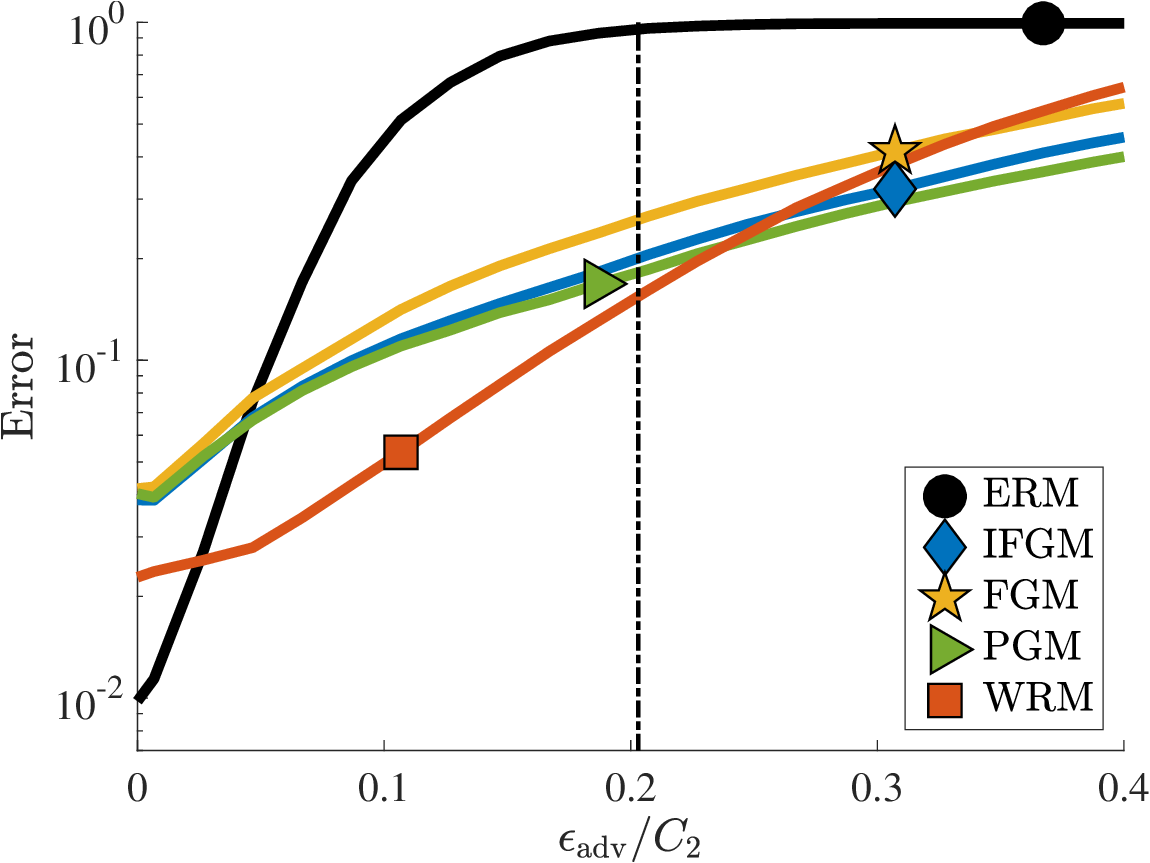}}%
\end{minipage}
\centering
\begin{minipage}{0.49\columnwidth}%
\centering
\subfigure[Test error vs. $\epsilon_{\rm adv}$ for $\|\cdot\|_{\infty}$-IFGM attack]{\includegraphics[width=0.85\textwidth]{./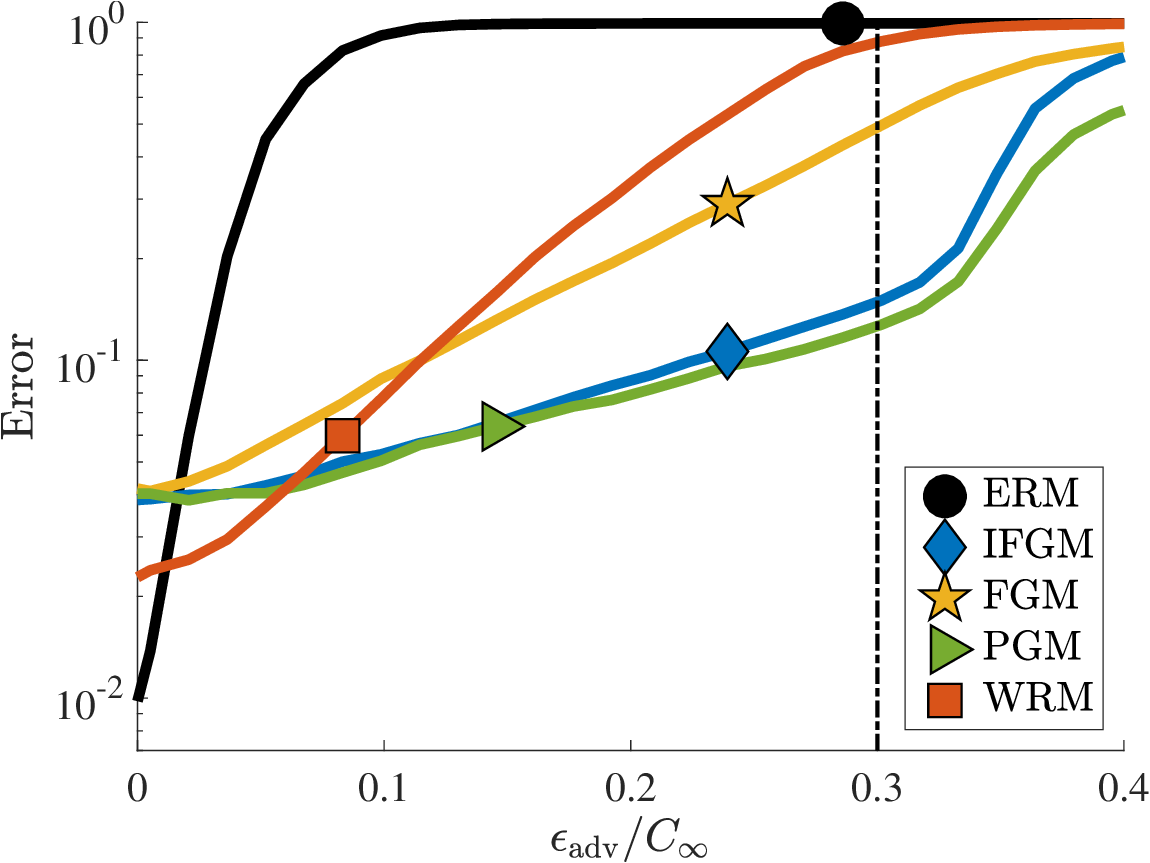}}%
\end{minipage}
\centering
\caption[]{\label{fig:mnist-infwrm2big}Attacks on the MNIST
  dataset with larger (training and test) adversarial budgets. We compare standard WRM with $\infty$-norm PGM, FGM, IFGM models. We illustrate test misclassification error vs. the adversarial
  perturbation level $\epsilon_{\rm adv}$. Top row: PGM attacks, middle row: FGM attacks, bottom row: IFGM attacks. Left column: Euclidean-norm attacks, right column: $\infty$-norm attacks. The vertical bar
  in (a), (c), and (e) indicates the estimated radius
  $\sqrt{\what{\rho}_n(\theta_{\rm WRM})}$. The vertical bar in (b), (d), and (f) indicates the perturbation level that was used for training the PGM, FGM, and IFGM models via \eqref{eq:inf-from-2}.}
\end{figure}

\clearpage

\begin{figure}[!!t]
\begin{minipage}{0.49\columnwidth}
\centering
\subfigure[Test error vs. $\epsilon_{\rm adv}$ for $\|\cdot\|_2$-PGM attack]{\includegraphics[width=0.85\textwidth]{./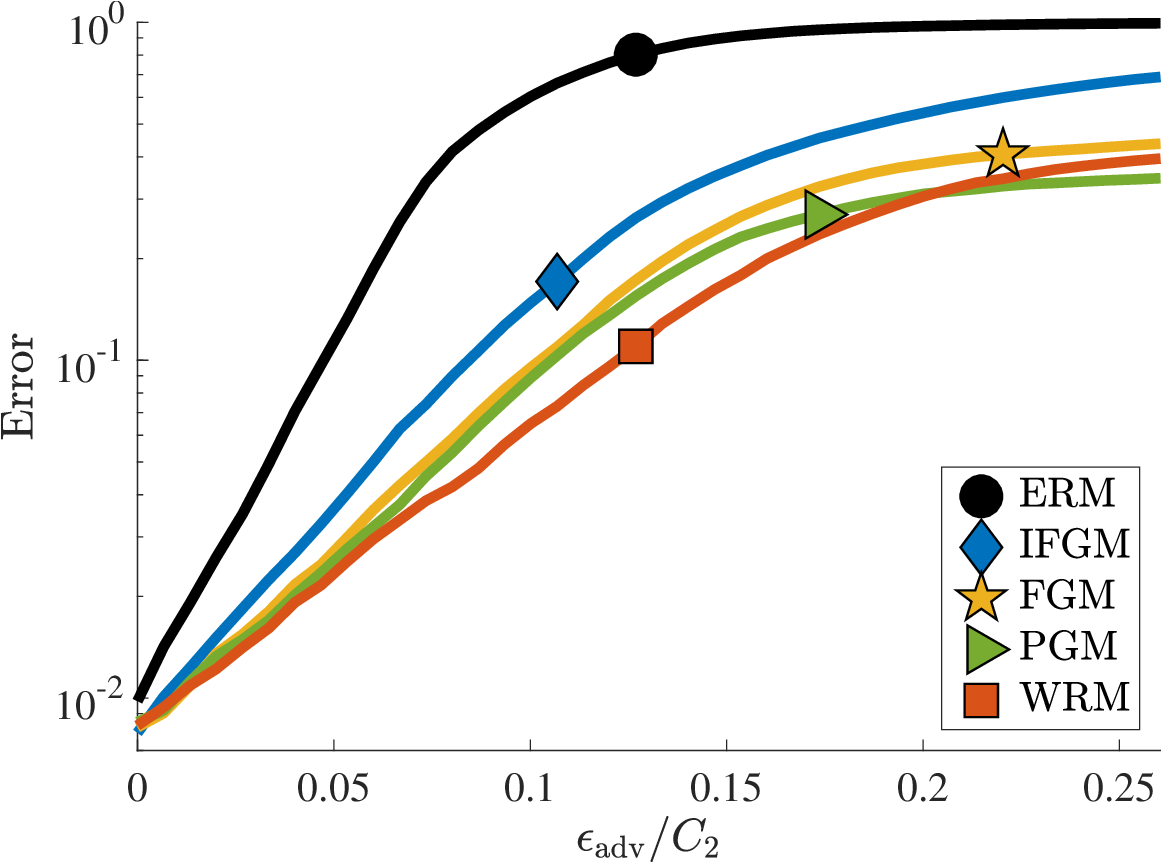}}%
\end{minipage}
\centering
\begin{minipage}{0.49\columnwidth}%
\centering
\subfigure[Test error vs. $\epsilon_{\rm adv}$ for $\|\cdot\|_{\infty}$-PGM attack]{\includegraphics[width=0.85\textwidth]{./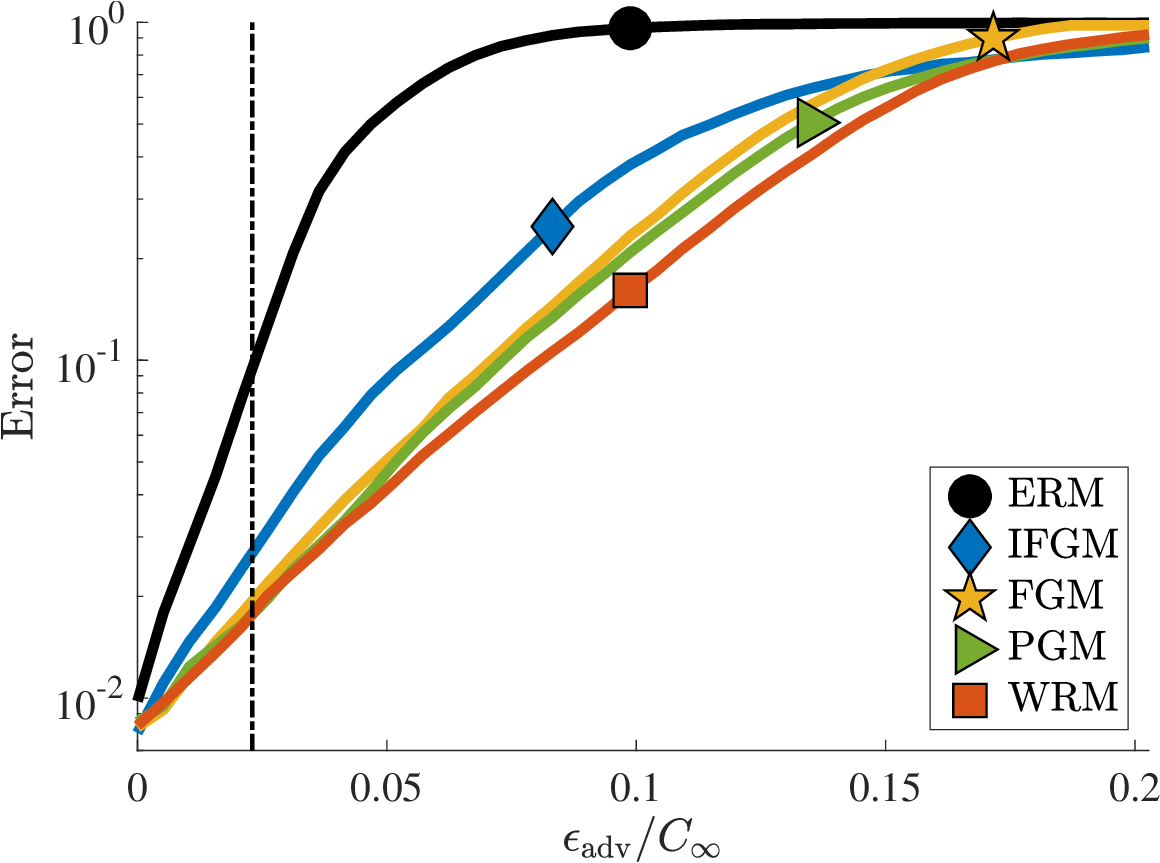}}%
\end{minipage}
\begin{minipage}{0.49\columnwidth}
\centering
\subfigure[Test error vs. $\epsilon_{\rm adv}$ for $\|\cdot\|_2$-FGM attack]{\includegraphics[width=0.85\textwidth]{./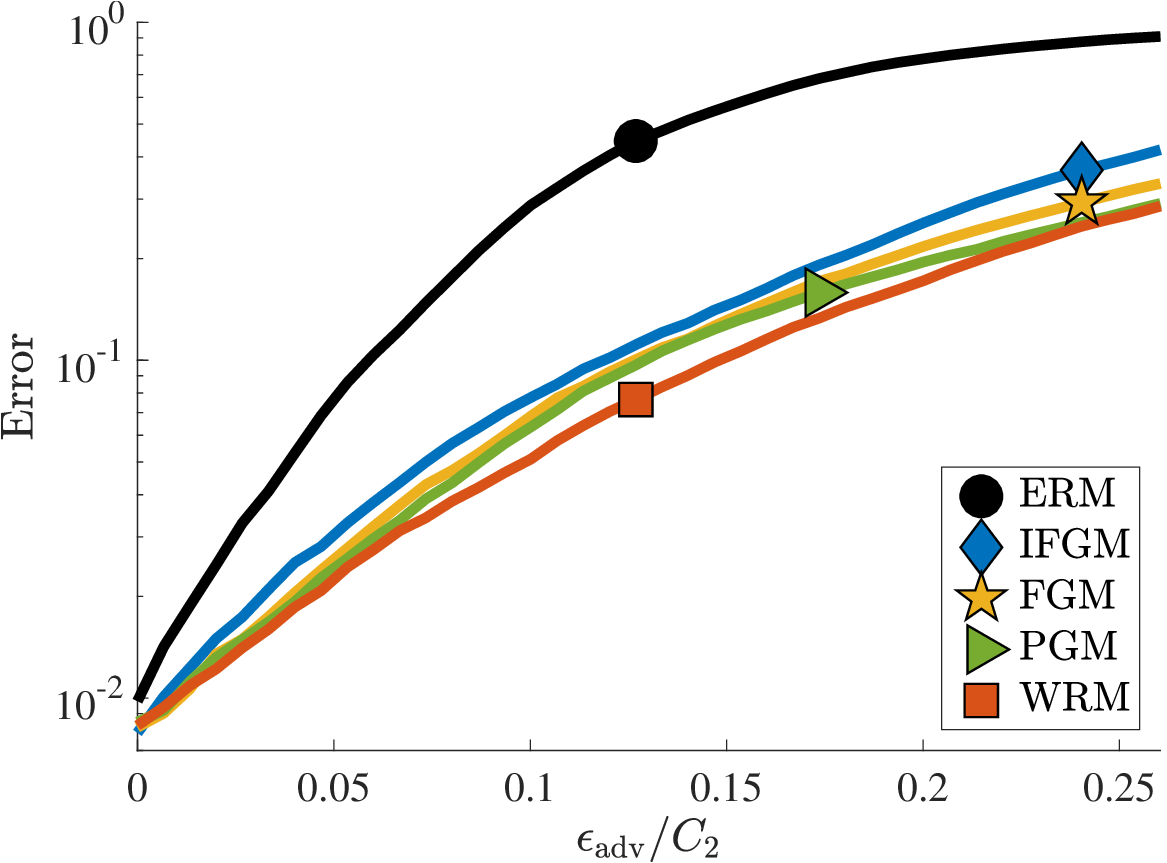}}%
\end{minipage}
\centering
\begin{minipage}{0.49\columnwidth}%
\centering
\subfigure[Test error vs. $\epsilon_{\rm adv}$ for $\|\cdot\|_{\infty}$-FGM attack]{\includegraphics[width=0.85\textwidth]{./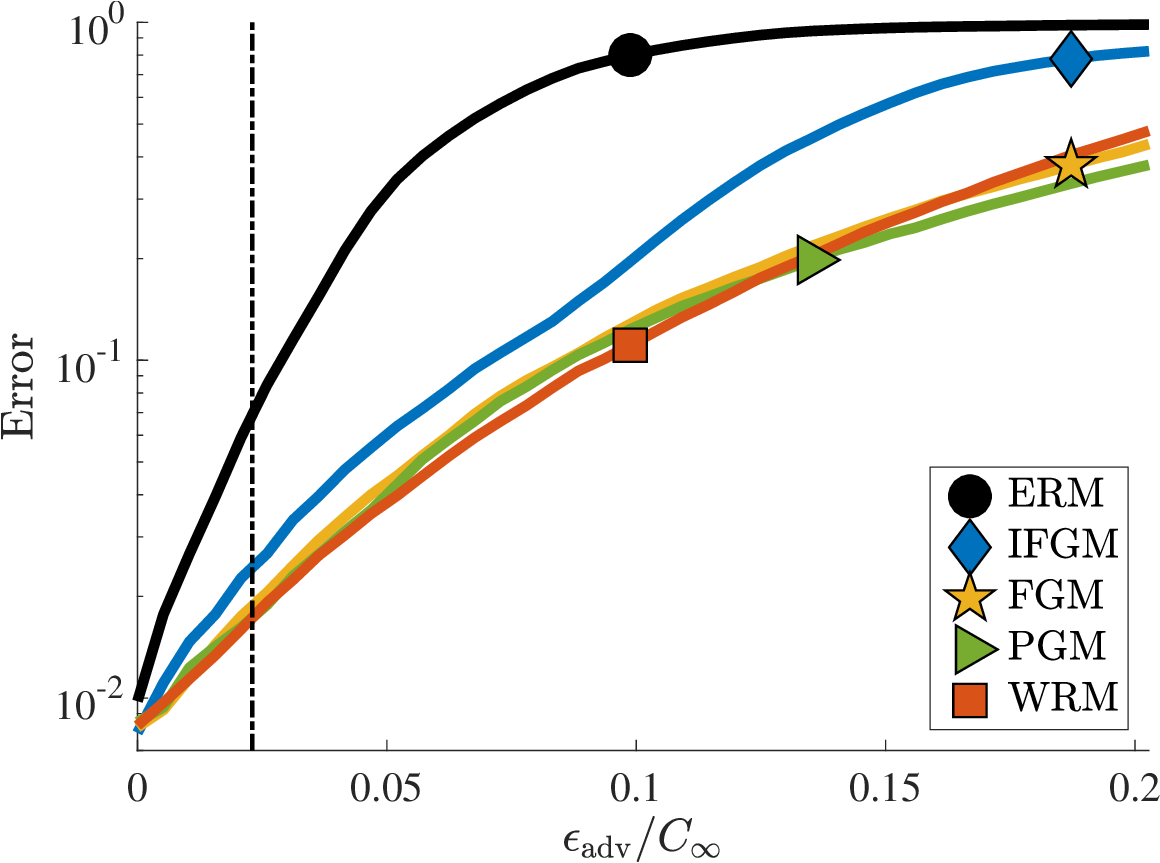}}%
\end{minipage}
\begin{minipage}{0.49\columnwidth}
\centering
\subfigure[Test error vs. $\epsilon_{\rm adv}$ for $\|\cdot\|_2$-IFGM attack]{\includegraphics[width=0.85\textwidth]{./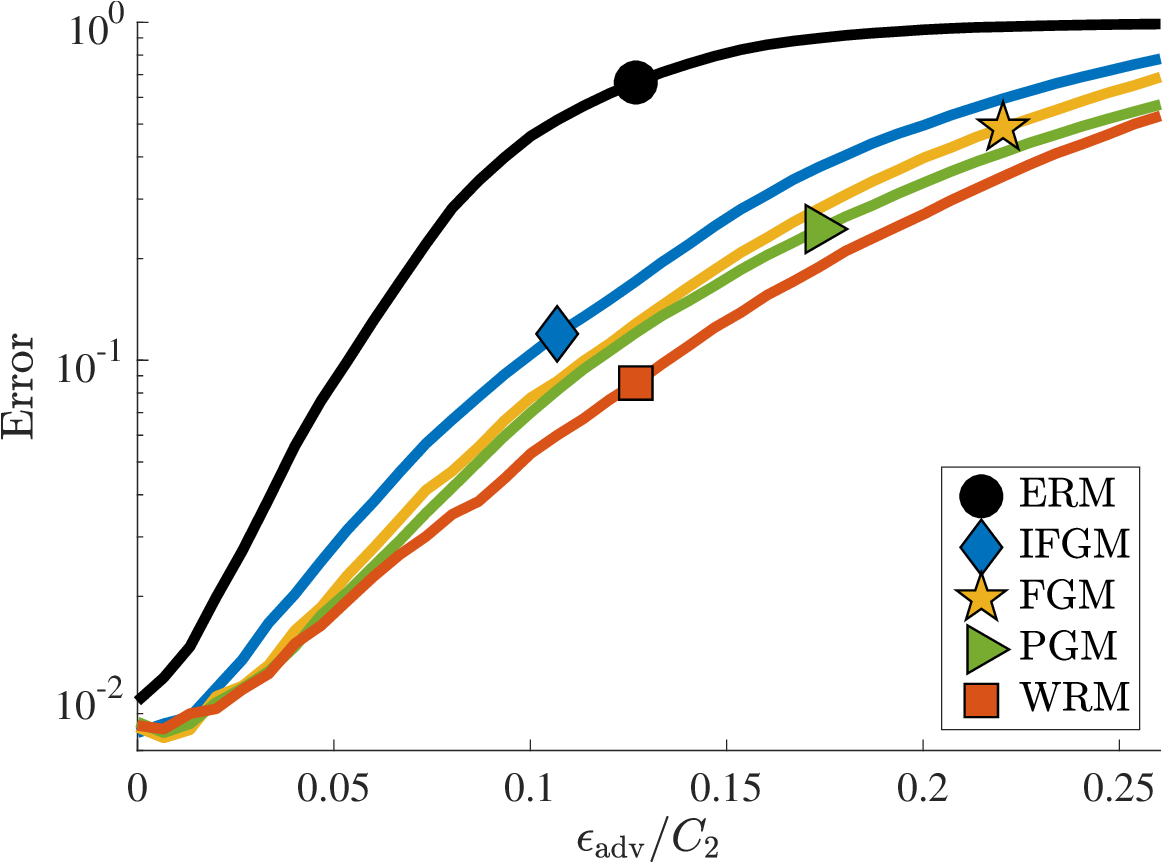}}%
\end{minipage}
\centering
\begin{minipage}{0.49\columnwidth}%
\centering
\subfigure[Test error vs. $\epsilon_{\rm adv}$ for $\|\cdot\|_{\infty}$-IFGM attack]{\includegraphics[width=0.85\textwidth]{./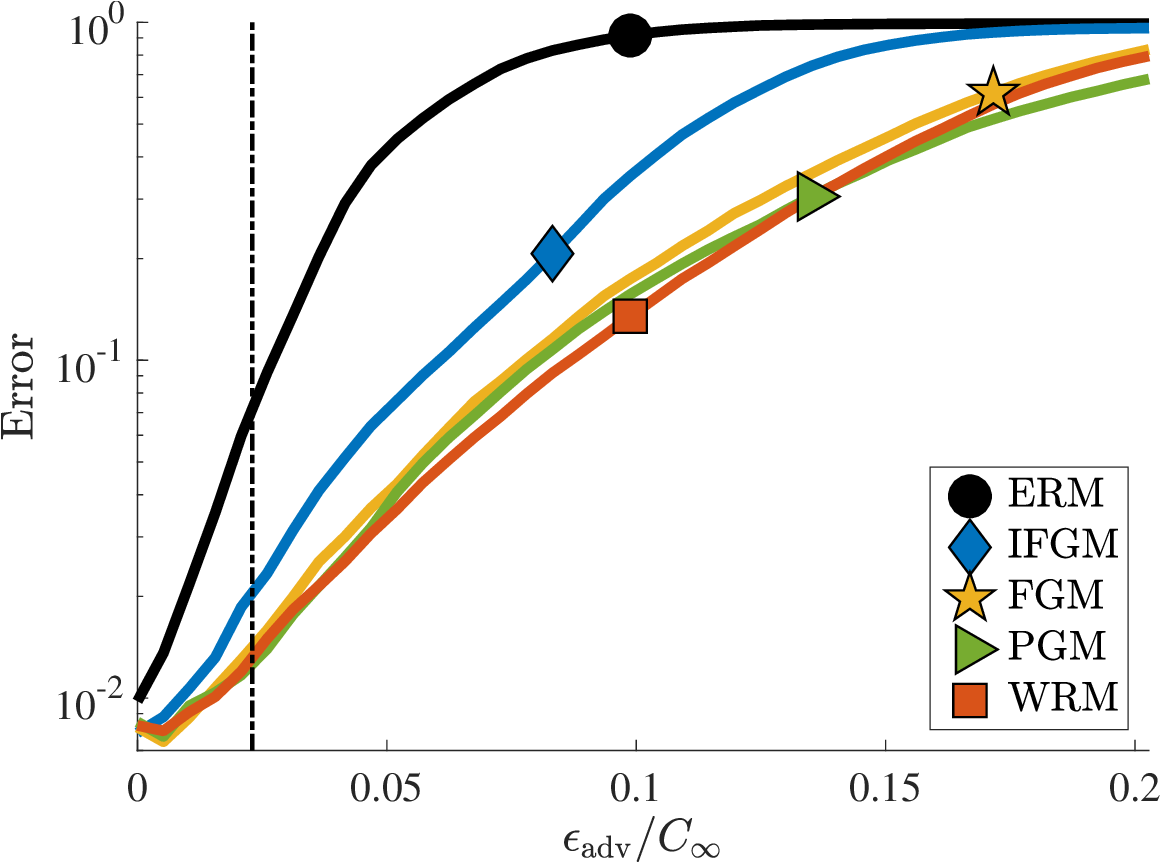}}%
\end{minipage}
\centering
\caption[]{\label{fig:mnist-proxsmall}Attacks on the MNIST
  dataset. All models are trained in the $\infty$-norm. We illustrate test misclassification error vs. the adversarial
  perturbation level $\epsilon_{\rm adv}$. Top row: PGM attacks, middle row: FGM attacks, bottom row: IFGM attacks. Left column: Euclidean-norm attacks, right column: $\infty$-norm attacks. The vertical bar in (b), (d), and (f) indicates the perturbation level that was used for training the PGM, FGM, and IFGM models and the estimated radius
  $\sqrt{\what{\rho}_n(\theta_{\rm WRM})}$.}
\end{figure}

\clearpage

\begin{figure}[!!t]
\begin{minipage}{0.49\columnwidth}
\centering
\subfigure[Test error vs. $\epsilon_{\rm adv}$ for $\|\cdot\|_2$-PGM attack]{\includegraphics[width=0.85\textwidth]{./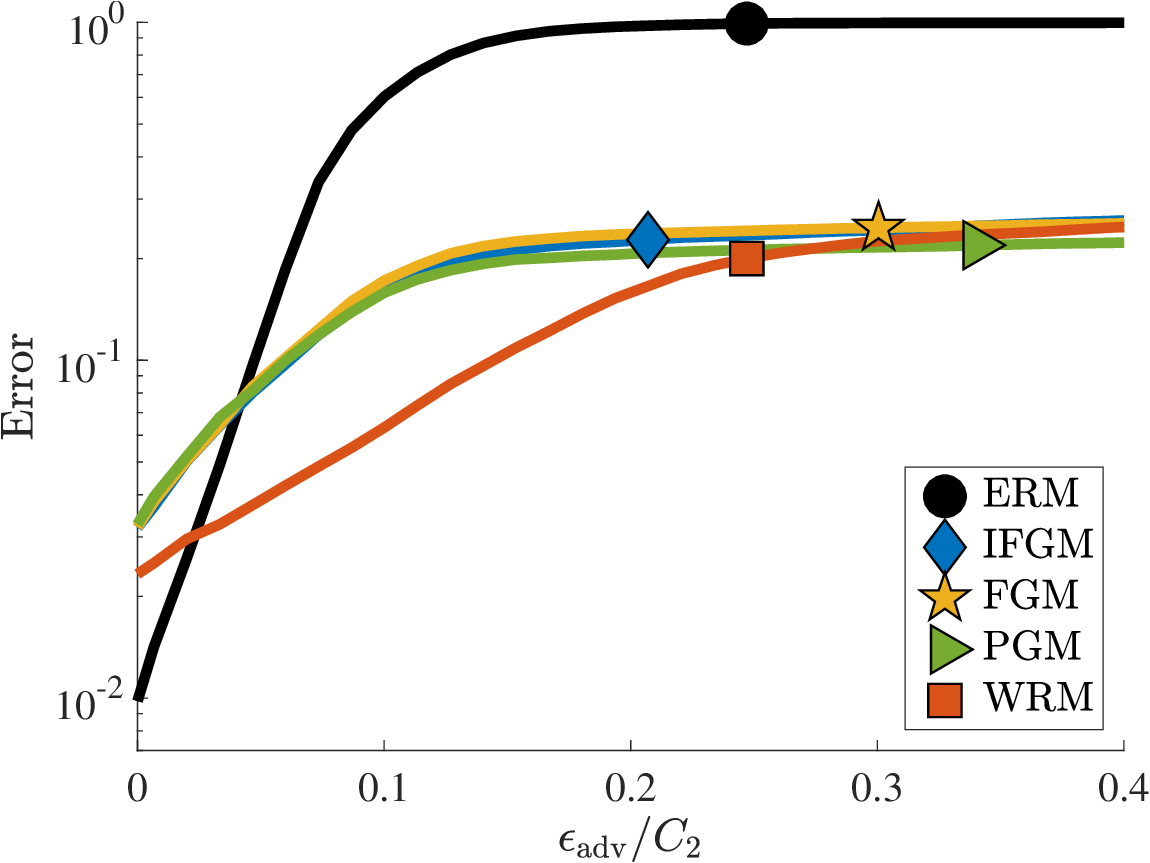}}%
\end{minipage}
\centering
\begin{minipage}{0.49\columnwidth}%
\centering
\subfigure[Test error vs. $\epsilon_{\rm adv}$ for $\|\cdot\|_{\infty}$-PGM attack]{\includegraphics[width=0.85\textwidth]{./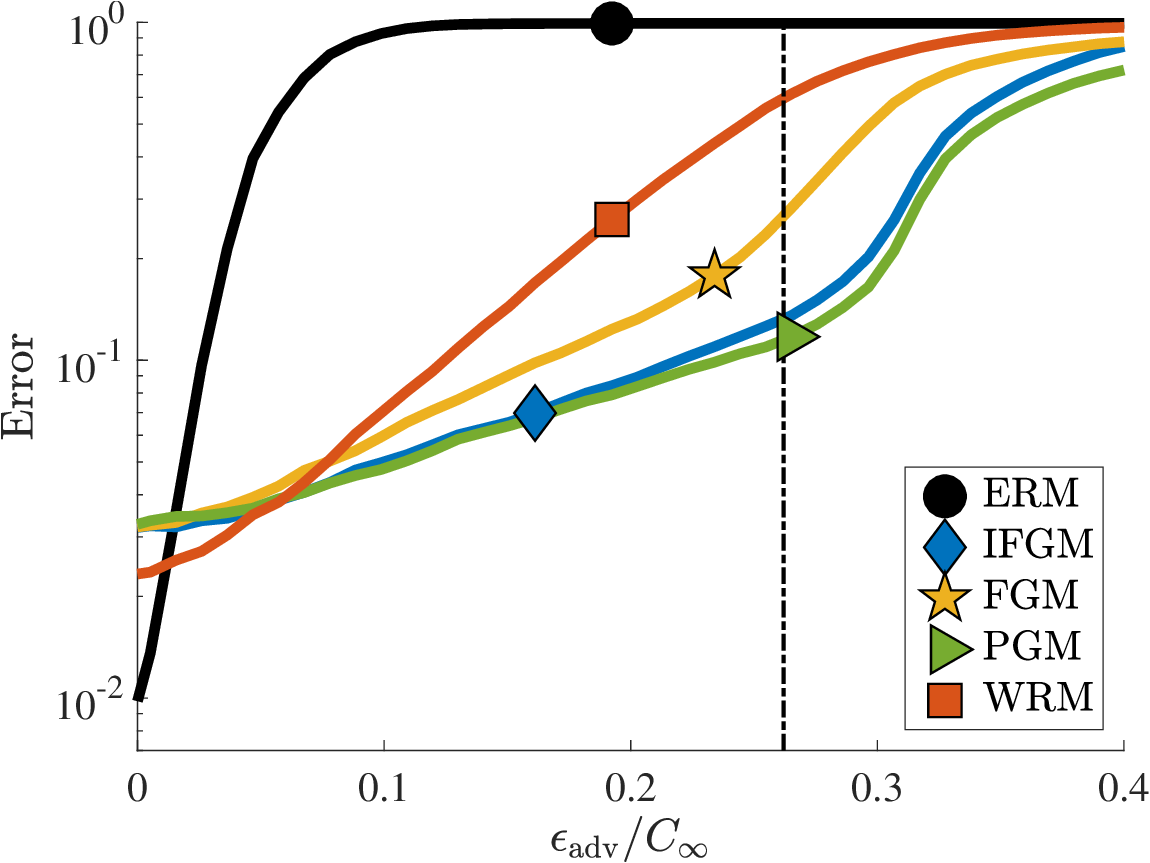}}%
\end{minipage}
\begin{minipage}{0.49\columnwidth}
\centering
\subfigure[Test error vs. $\epsilon_{\rm adv}$ for $\|\cdot\|_2$-FGM attack]{\includegraphics[width=0.85\textwidth]{./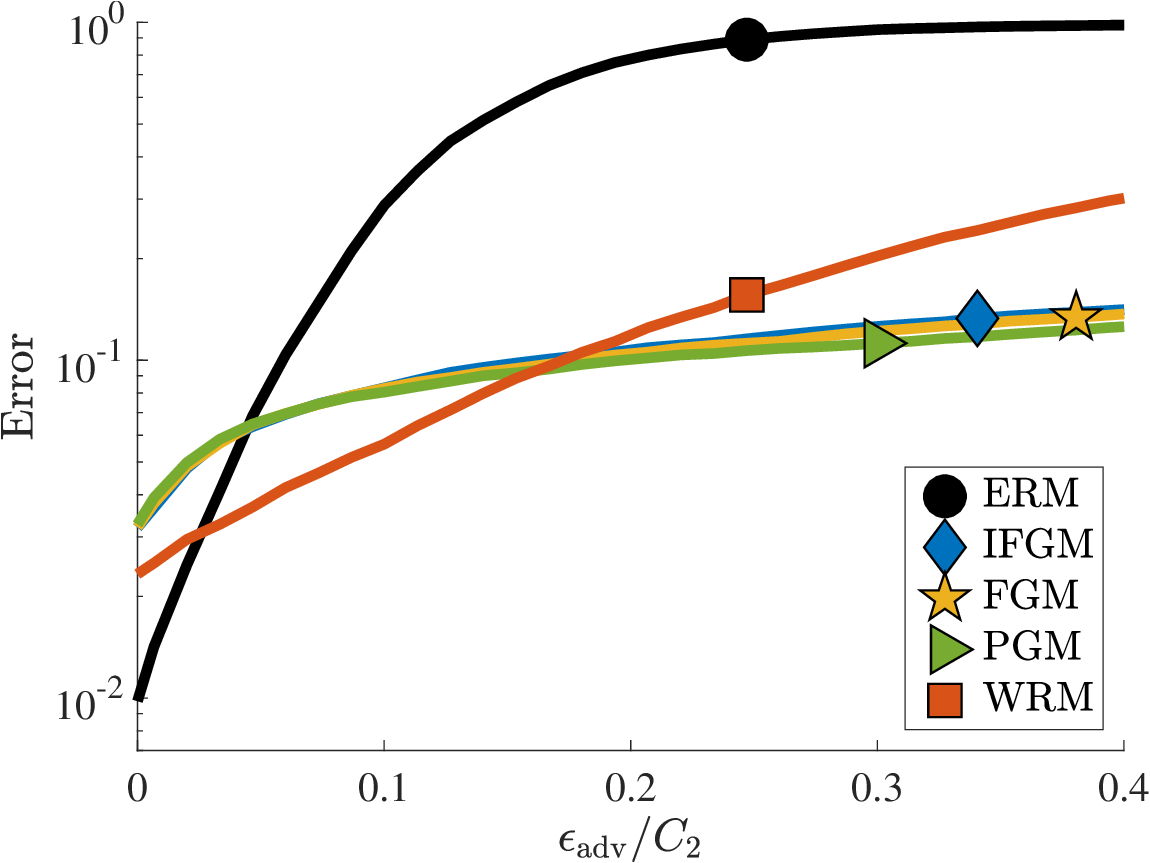}}%
\end{minipage}
\centering
\begin{minipage}{0.49\columnwidth}%
\centering
\subfigure[Test error vs. $\epsilon_{\rm adv}$ for $\|\cdot\|_{\infty}$-FGM attack]{\includegraphics[width=0.85\textwidth]{./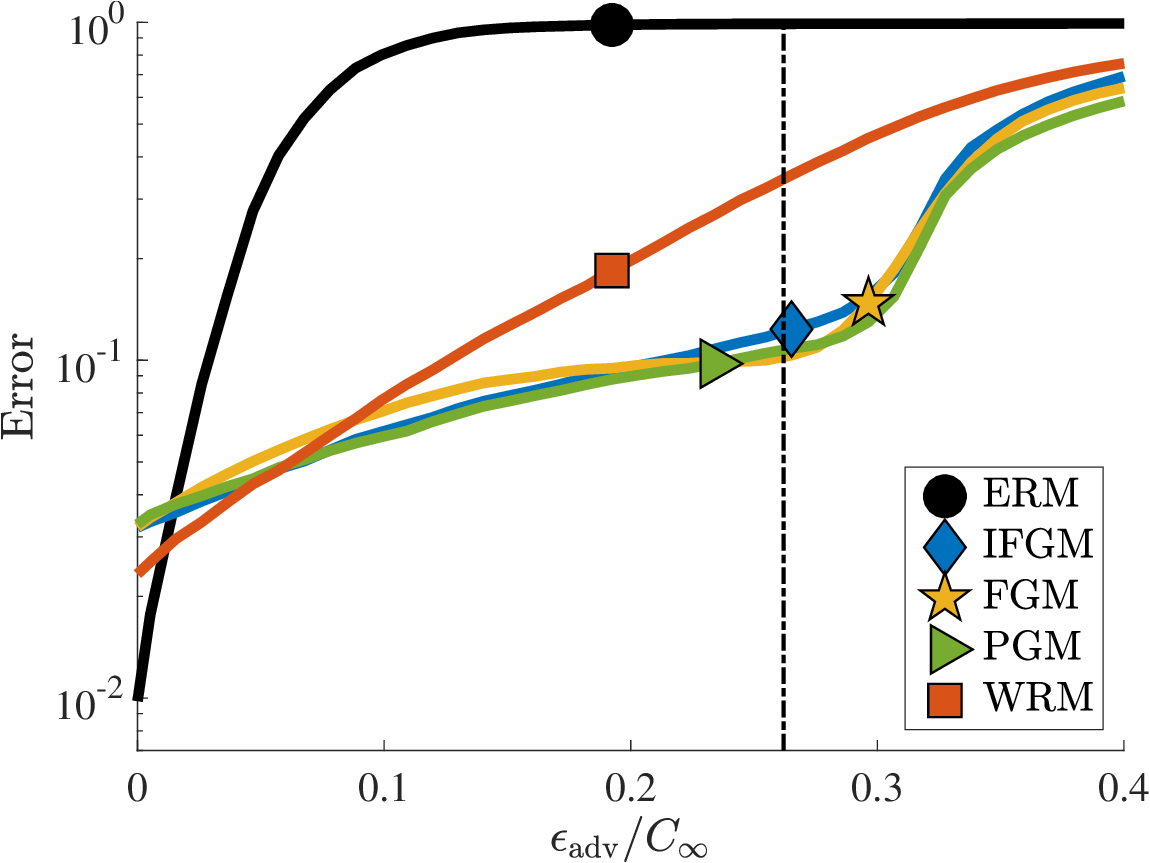}}%
\end{minipage}
\begin{minipage}{0.49\columnwidth}
\centering
\subfigure[Test error vs. $\epsilon_{\rm adv}$ for $\|\cdot\|_2$-IFGM attack]{\includegraphics[width=0.85\textwidth]{./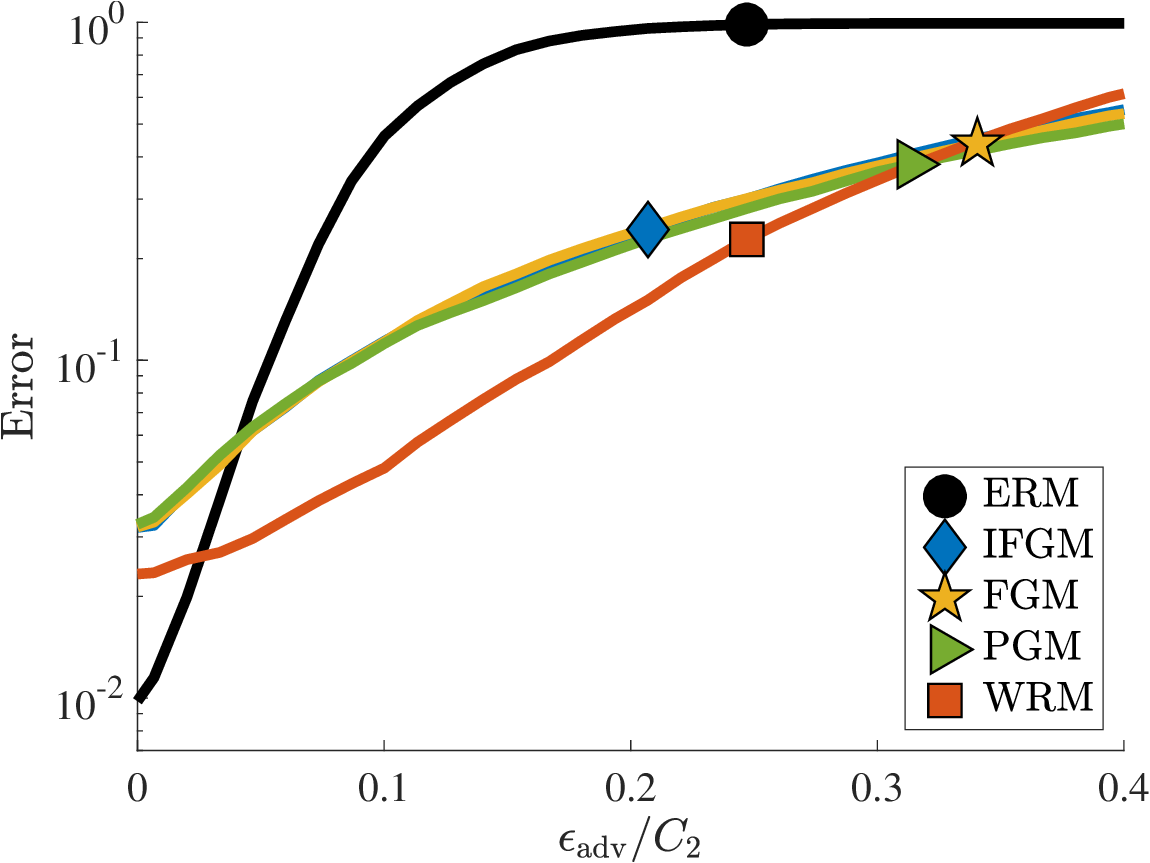}}%
\end{minipage}
\centering
\begin{minipage}{0.49\columnwidth}%
\centering
\subfigure[Test error vs. $\epsilon_{\rm adv}$ for $\|\cdot\|_{\infty}$-IFGM attack]{\includegraphics[width=0.85\textwidth]{./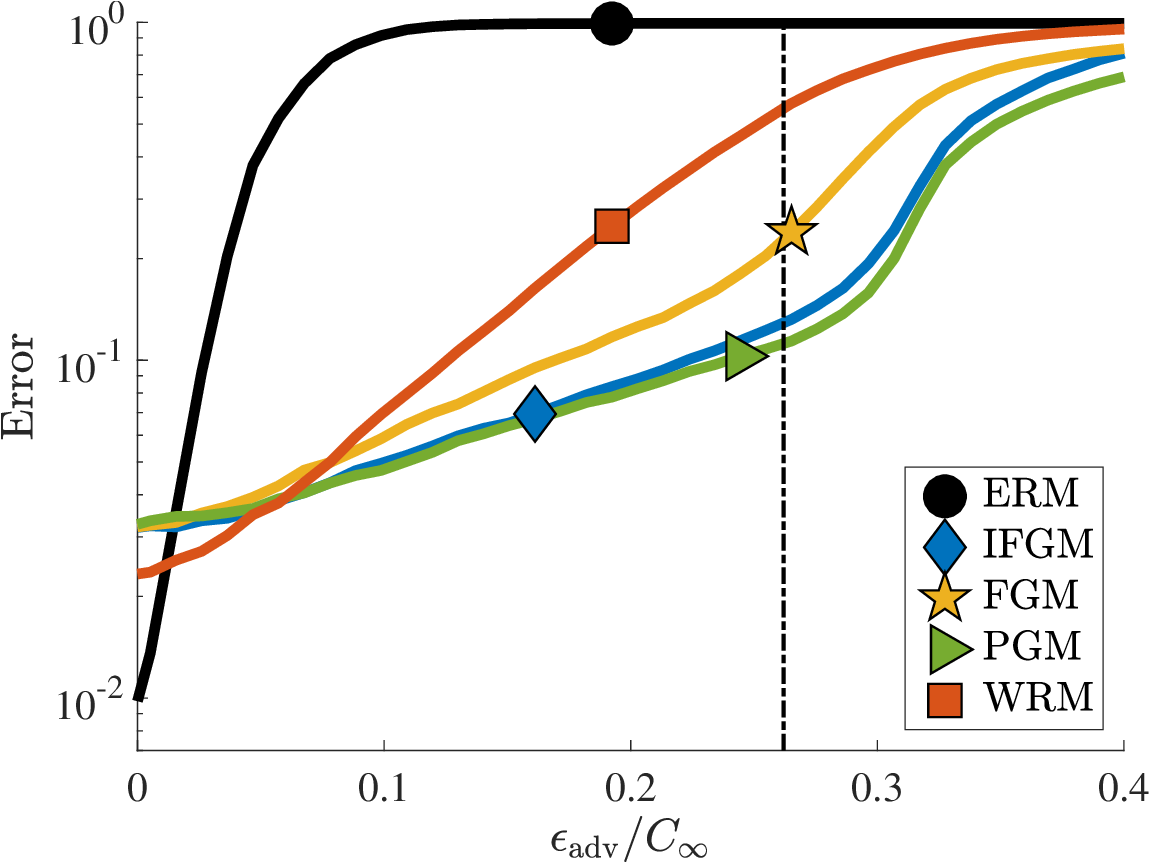}}%
\end{minipage}
\centering
\caption[]{\label{fig:mnist-proxbig}Attacks on the MNIST
  dataset with larger (training and test) adversarial budgets. All models are trained in the $\infty$-norm. We illustrate test misclassification error vs. the adversarial
  perturbation level $\epsilon_{\rm adv}$. Top row: PGM attacks, middle row: FGM attacks, bottom row: IFGM attacks. Left column: Euclidean-norm attacks, right column: $\infty$-norm attacks. The vertical bar in (b), (d), and (f) indicates the perturbation level that was used for training the PGM, FGM, and IFGM models and the estimated radius
  $\sqrt{\what{\rho}_n(\theta_{\rm WRM})}$.}
\end{figure}

\subsection{MNIST experiments when $\gamma$ is chosen according to Section~\ref{section:examples}}
\label{section:three-choose-two}

\begin{figure}[!t]
\begin{center}
{
\begin{tabular}{cc|lllll}
\multicolumn{1}{c}{$\gamma$} &  \multicolumn{1}{c|}{$\bar{\gamma}$}& \multicolumn{1}{c}{$\epsilon_{\rm adv}=0$} & \multicolumn{1}{c}{$\epsilon_{\rm adv}=0.05$} & \multicolumn{1}{c}{$\epsilon_{\rm adv}=0.10$} & \multicolumn{1}{c}{$\epsilon_{\rm adv}=0.15$} & \multicolumn{1}{c}{$\epsilon_{\rm adv}=0.20$}\\
\hline
\hline
ERM & 1.97$\times 10^6$  &   0.015 &  0.965  & 1  &  1 & 1 \\ \hline 
1 & 1.23$\times 10^6$    &   0.015 &  0.112  & 0.484  &  0.916 & 0.994 \\
1 $\times 10^2$ & 1.81$\times 10^6$     &   0.015 &  0.916  & 1  &  1 & 1 \\
1 $\times 10^4$ & 1.97$\times 10^6$   &   0.015 &  0.965  & 1  &  1 & 1 \\ 
1$\times 10^6$ & 1.97$\times 10^6$   &   0.015 &  0.966  & 1  &  1 & 1 \\ 
1 $\times 10^7$ & 1.97$\times 10^6$  &   0.015 &  0.966  & 1  &  1 & 1 \\ 
\hline
\hline
\end{tabular}
}
\end{center}

\captionof{table}{\label{tab:mnist-extra1} Performance of various WRM models. We show the smoothness upper bound ($\bar\gamma$) and test error vs. $\epsilon_{\rm adv}$ for $\|\cdot \|_{\infty}$-norm PGM attacks. }
\end{figure}

\begin{figure}[!t]
\begin{center}
{
\begin{tabular}{cc|lllll}
\multicolumn{1}{c}{$\gamma$} &  \multicolumn{1}{c|}{$\bar{\gamma}$}& \multicolumn{1}{c}{$\epsilon_{\rm adv}=0$} & \multicolumn{1}{c}{$\epsilon_{\rm adv}=0.05$} & \multicolumn{1}{c}{$\epsilon_{\rm adv}=0.10$} & \multicolumn{1}{c}{$\epsilon_{\rm adv}=0.15$} & \multicolumn{1}{c}{$\epsilon_{\rm adv}=0.20$}\\
\hline
\hline
ERM & 7.77$\times 10^3$  &   0.045 &  0.318  & 0.855  &  0.992 & 0.999 \\ \hline 
1 & 8.69$\times 10^3$    &   0.041 &  0.183  & 0.606  &  0.925 & 0.996 \\
1 $\times 10^1$ & 6.56$\times 10^3$     &   0.046 &  0.303  & 0.835  &  0.991 & 1.000 \\
1 $\times 10^2$ & 7.70$\times 10^3$   &   0.048 &  0.326  & 0.858  &  0.992 & 1.000 \\ 
1$\times 10^3$ & 7.84$\times 10^3$   &   0.046 &  0.320  & 0.857  &  0.992 & 0.999 \\ 
1 $\times 10^4$ & 7.75$\times 10^3$  &   0.048 &  0.329  & 0.860  &  0.993 & 1.000 \\
\hline
\hline
\end{tabular}
}
\end{center}

\captionof{table}{\label{tab:mnist-extra2} Performance of various $l_2$-regularized WRM models. The regularization multiplier is 0.05 for the first half of the neural net layers and 0.01 for the latter half. We show the smoothness upper bound ($\bar\gamma$) and test error vs. $\epsilon_{\rm adv}$ for $\|\cdot \|_{\infty}$-norm PGM attacks. }
\end{figure}

\begin{figure}[!t]
\begin{center}
{
\begin{tabular}{cc|lllll}
\multicolumn{1}{c}{$\gamma$} &  \multicolumn{1}{c|}{$\bar{\gamma}$}& \multicolumn{1}{c}{$\epsilon_{\rm adv}=0$} & \multicolumn{1}{c}{$\epsilon_{\rm adv}=0.05$} & \multicolumn{1}{c}{$\epsilon_{\rm adv}=0.10$} & \multicolumn{1}{c}{$\epsilon_{\rm adv}=0.15$} & \multicolumn{1}{c}{$\epsilon_{\rm adv}=0.20$}\\
\hline
\hline
ERM & 5.06$\times 10^3$  &   0.059 &  0.353  & 0.881  &  0.996 & 1.000 \\ \hline 
1 & 5.80$\times 10^3$    &   0.056 &  0.225  & 0.657  &  0.941 & 0.997 \\
1$\times 10^1$ & 4.40$\times 10^3$     &   0.057 &  0.325  & 0.853  &  0.994 & 1 \\
1$\times 10^2$ & 4.65$\times 10^3$   &   0.059 &  0.352  & 0.872  &  0.995 & 1 \\ 
1$\times 10^3$ & 4.71$\times 10^3$   &   0.060 &  0.346  & 0.876  &  0.991 & 1.000 \\ 
1 $\times 10^4$ & 4.76$\times 10^3$  &   0.060 &  0.349  & 0.851  &  0.993 & 1.000 \\
\hline
\hline
\end{tabular}
}
\end{center}

\captionof{table}{\label{tab:mnist-extra3} Performance of various $l_2$-regularized WRM models. The regularization multiplier is 0.1 for the first half of the neural net layers and 0.01 for the latter half. We show the smoothness upper bound ($\bar\gamma$) and test error vs. $\epsilon_{\rm adv}$ for $\|\cdot \|_{\infty}$-norm PGM attacks.}
\end{figure}

\begin{figure}[!t]
\begin{center}
{
\begin{tabular}{cc|lllll}
\multicolumn{1}{c}{$\gamma$} &  \multicolumn{1}{c|}{$\bar{\gamma}$}& \multicolumn{1}{c}{$\epsilon_{\rm adv}=0$} & \multicolumn{1}{c}{$\epsilon_{\rm adv}=0.05$} & \multicolumn{1}{c}{$\epsilon_{\rm adv}=0.10$} & \multicolumn{1}{c}{$\epsilon_{\rm adv}=0.15$} & \multicolumn{1}{c}{$\epsilon_{\rm adv}=0.20$}\\
\hline
\hline
ERM & 5.76$\times 10^2$  &   0.117 &  0.433  & 0.852  &  0.988 & 1.000 \\ \hline 
1 & 5.87$\times 10^2$    &   0.117 &  0.364  & 0.717  &  0.952 & 0.996 \\
1$\times 10^1$ & 7.61$\times 10^2$     &   0.116 &  0.429  & 0.832  &  0.986 & 1.000 \\
1$\times 10^2$ & 8.03$\times 10^2$   &   0.121 &  0.442  & 0.849  &  0.986 & 1.000 \\ 
5$\times 10^2$ & 8.06$\times 10^2$   &   0.120 &  0.443  & 0.857  &  0.990 & 1.000 \\
1$\times 10^3$ & 8.04$\times 10^2$   & 0.115 & 0.436 & 0.846 & 0.987 & 1\\
\hline
\hline
\end{tabular}
}
\end{center}

\captionof{table}{\label{tab:mnist-extra4} Performance of various $l_2$-regularized WRM models. The regularization multiplier is 0.5 for the first half of the neural net layers and 0.01 for the latter half. We show the smoothness upper bound ($\bar\gamma$) and test error vs. $\epsilon_{\rm adv}$ for $\|\cdot \|_{\infty}$-norm PGM attacks. }
\end{figure}
We now present analysis of principled experiments on the MNIST dataset. For
the chosen architecture, we find that it is difficult to satisfy the following
three objectives: $\gamma$ is larger than the bound given by
Corollary~\ref{corollary:softmax-smooth}, the model has high enough capacity
such that the test accuracy on clean, unperturbed data is on par with standard
benchmarks (less than $10\%$ test error for the MNIST dataset), and the WRM
model's performance differs appreciably from ERM's performance. Table
\ref{tab:mnist-extra1} tests ERM and WRM models trained with different
$\gamma$ with $\|\cdot \|_{\infty}$ PGM attacks of various
$\epsilon_{\rm adv}$. Although all models enjoy good test accuracy on clean
test examples, we see that when $\gamma$ is large enough to satisfy the bound
of Corollary~\ref{corollary:softmax-smooth}, WRM and ERM do not differ in
performance, as the perturbed examples during training are essentially the
same as the originals.

To ameliorate this issue, we regularize the weights to decrease the bound of
Corollary~\ref{corollary:softmax-smooth}. Tables~\ref{tab:mnist-extra2},~\ref{tab:mnist-extra3},
and~\ref{tab:mnist-extra4} present the same analysis for architectures with
different $l_2$-regularization schemes. We choose to regularize the earlier
layers (those closer to the inputs) more heavily than later layers (those
closer to the output), as the bound in
Corollary~\ref{corollary:softmax-smooth} scales with the norm of earlier layer
weights exponentially in the depth of the network. We see that although 
 
Overall, we see that high accuracy on clean test examples and appreciable
adversarial robustness hold comes at the expense of $\gamma < \bar{\gamma}$
(the top rows of all tables). For $\gamma \gtrsim \bar{\gamma}$, high accuracy
on clean test examples hold, but adversarial robustness seems similar to that
of ERM (the bottom rows of all tables). For the most regularized models in
Table~\ref{tab:mnist-extra4}, the large values of $\gamma$ achieves
appreciable adversarial robustness, although the regularization is too heavy
for good performance on clean examples. For this architecture, we are unable
unable to easily discover a paramtrization that satisfied all three
objectives.

\section{Proofs}

\subsection{Proof of Proposition~\ref{prop:duality}}
\label{sec:proof-duality}

For completeness, we provide an alternative proof to that given
in~\citet{BlanchetMu16} using convex analysis. Our proof is less general, requiring the cost function $c$ to be continuous and convex in its first
argument. The below general duality result gives
Proposition~\ref{prop:duality} as an immediate special case.  Recalling
\citet[Def.~14.27 and Prop.~14.33]{RockafellarWe98}, we say that a function
$g : X \times Z \to \R$ is a \emph{normal integrand} if for each $\alpha$, the
mapping
\begin{equation*}
  z \mapsto \{x \mid g(x, z) \le \alpha\}
\end{equation*}
is closed-valued and measurable.  We recall that if $g$ is continuous, then
$g$ is a normal integrand~\cite[Cor.~14.34]{RockafellarWe98};
therefore, $g(x, z) = \gamma c(x, z)
- \loss(\theta; x)$ is a normal integrand. We have the
following theorem.
\begin{theorem}
  Let $f, c$ be such that for any $\gamma \ge 0$, the function $g(x, z) =
  \gamma c(x, z) - f(x)$ is a normal integrand.  (For example, continuity of
  $f$ and closed convexity of $c$ is sufficient.)  For any $\rho > 0$ we
  have
  \begin{equation*}
    \sup_{P : W_c(P, Q)}
    \int f(x) dP(x)
    =
    \inf_{\gamma \ge 0}
    \left\{\int \sup_{x \in X}
    \left\{f(x) - \gamma c(x, z)\right\} dQ(z)
    + \gamma \rho \right\}.
  \end{equation*}
\end{theorem}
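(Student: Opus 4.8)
The plan is to prove the theorem in two stages: first establish the pointwise (penalty) dual identity for a fixed multiplier $\gamma$, and then optimize over $\gamma \ge 0$ using concavity of the primal value as a function of the radius $\rho$. Throughout write $\phi_\gamma(z) \defeq \sup_{x \in X}\{f(x) - \gamma c(x,z)\}$.

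For the penalty stage I would first re-express the objective through couplings. For any $P$, since the first marginal of any $M \in \Pi(P,Q)$ is $P$ we have $\int f\,dP = \int f(x)\,dM(x,z)$, and because $\gamma \ge 0$, $-\gamma W_c(P,Q) = \sup_{M \in \Pi(P,Q)}\{-\gamma \E_M[c]\}$. Hence
\[
\sup_P\Big\{\textstyle\int f\,dP - \gamma W_c(P,Q)\Big\}
= \sup_{M:\,\pi_2 M = Q}\int \big(f(x) - \gamma c(x,z)\big)\,dM(x,z),
\]
where the supremum on the right ranges over all joint measures with second marginal $Q$. The key step is then to interchange this supremum with the integral over $z$: since $g(x,z) = \gamma c(x,z) - f(x)$ is a normal integrand by hypothesis, the interchange theorem of \citet[Thm.~14.60]{RockafellarWe98} applies and gives $\sup_{M:\pi_2 M=Q}\int(f-\gamma c)\,dM = \int \phi_\gamma(z)\,dQ(z)$. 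Concretely this is a measurable-selection argument: $z \mapsto \phi_\gamma(z)$ is measurable, and for each $\epsilon>0$ there is a measurable $\epsilon$-optimal map $x_\epsilon(\cdot)$ whose pushforward coupling $(x_\epsilon,\mathrm{id})_\# Q$ nearly attains the right side. This establishes the Lagrangian identity $\sup_P\{\int f\,dP - \gamma W_c(P,Q)\} = \int \phi_\gamma\,dQ$, and the normal-integrand hypothesis is precisely what licenses it; this is the crux of the whole argument.

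With the penalty identity in hand, define the primal value $v(\rho) \defeq \sup_{P: W_c(P,Q)\le\rho}\int f\,dP$. Weak duality is immediate: for any feasible $P$ and any $\gamma\ge0$, $\int f\,dP \le \int f\,dP - \gamma(W_c(P,Q)-\rho) \le \int\phi_\gamma\,dQ + \gamma\rho$ using $W_c(P,Q)\le\rho$, so $v(\rho)\le\inf_{\gamma\ge0}\{\int\phi_\gamma\,dQ+\gamma\rho\}$. For the reverse inequality I would use that $v$ is concave and nondecreasing on $[0,\infty)$: concavity follows by mixing, since for $P_1,P_2$ feasible at radii $\rho_1,\rho_2$ the mixture $\lambda P_1+(1-\lambda)P_2$ has objective the corresponding convex combination and, by mixing near-optimal couplings, $W_c(\lambda P_1+(1-\lambda)P_2,Q)\le\lambda\rho_1+(1-\lambda)\rho_2$. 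In the nontrivial case $v(\rho)<\infty$, concavity forces $v$ to be finite on all of $[0,\infty)$, so at the interior point $\rho>0$ the proper concave function $v$ admits a finite supergradient $\gamma^*\in\partial v(\rho)$, with $\gamma^*\ge0$ since $v$ is nondecreasing.

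Finally I would combine the two stages. For any $P$ put $\tau=W_c(P,Q)$; then $\int f\,dP\le v(\tau)$ and the supergradient inequality gives $v(\tau)\le v(\rho)+\gamma^*(\tau-\rho)$, whence $\int f\,dP-\gamma^*W_c(P,Q)\le v(\rho)-\gamma^*\rho$. Taking the supremum over $P$ and invoking the penalty identity yields $\int\phi_{\gamma^*}\,dQ\le v(\rho)-\gamma^*\rho$, i.e.\ $\int\phi_{\gamma^*}\,dQ+\gamma^*\rho\le v(\rho)$. Together with weak duality this forces equality and exhibits $\gamma^*$ as an optimal multiplier. The restriction $\rho>0$ enters only here, ensuring $\rho$ lies in the interior of the domain so that a finite supergradient exists (intuitively, the diagonal coupling with $P=Q$ has cost $0<\rho$, a Slater point); the measurable-selection interchange of the first stage is the main technical obstacle, while the concavity/supergradient step is routine convex analysis.
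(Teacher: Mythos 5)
Your proposal is correct, and its second half (the evaluation of the penalized supremum via couplings and the measurable-selection interchange of \citet[Thm.~14.60]{RockafellarWe98}) is exactly the argument the paper uses; the normal-integrand hypothesis is indeed what that step consumes. Where you genuinely diverge is in how you obtain the minimax swap for the constrained problem. The paper invokes an abstract infinite-dimensional Lagrange duality theorem \citep[Thm.~8.7.1]{Luenberger69}, checking Slater's condition via $P = Q$, and only afterwards evaluates the inner supremum. You instead prove the penalty identity first and then derive constrained duality by hand: concavity and monotonicity of the value function $v(\rho)$ (by mixing couplings), existence of a nonnegative supergradient $\gamma^\star \in \partial v(\rho)$ at the interior point $\rho > 0$, and the supergradient inequality to close the gap with weak duality. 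This buys a self-contained, elementary argument that avoids the external duality citation and, as a byproduct, establishes both equalities~\eqref{eqn:lagrangian} and~\eqref{eqn:constrained} of Proposition~\ref{prop:duality} in a clean logical order (the second implying the first); the paper's route buys brevity by outsourcing exactly this step. Two small points you should tidy up, though neither is a real gap and the paper's own proof is no more careful: (i) your claim that finiteness of $v(\rho)$ at one point propagates to all of $(0,\infty)$ needs the mixing-with-$Q$ argument spelled out and implicitly assumes $\int f\,dQ > -\infty$; (ii) in the final step, when $\gamma^\star = 0$ the chain $\int f\,dP \le v(\tau) \le v(\rho) + \gamma^\star(\tau - \rho)$ is vacuous for measures $P$ with $W_c(P, Q) = \infty$, so you should either note that such $P$ contribute $-\infty$ whenever $\gamma^\star > 0$ and handle the $\gamma^\star = 0$ case separately, or restrict the supremum defining the penalty problem to measures at finite transport cost.
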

\begin{proof}
  First, the mapping $P \mapsto W_c(P, Q)$ is convex in the space of
  probability measures. As taking $P = Q$ yields $W_c(Q, Q) = 0$, Slater's
  condition holds and we may apply standard (infinite dimensional) duality
  results~\cite[Thm.~8.7.1]{Luenberger69} to obtain
  \begin{align*}
    \sup_{P : W_c(P, Q)}
    \int f(x) dP(x)
    & =
    \sup_{P : W_c(P, Q)}
    \inf_{\gamma \ge 0}
    \left\{ \int f(x) dP(x)
    - \gamma W_c(P, Q) + \gamma \rho \right\} \\
    & =
    \inf_{\gamma \ge 0}
    \sup_{P : W_c(P, Q)}
    \left\{ \int f(x) dP(x)
    - \gamma W_c(P, Q) + \gamma \rho \right\}.
  \end{align*}
  Now, noting that
  for any $M \in \joints(P, Q)$ we have
  $\int f dP = \iint f(x) dM(x, z)$, we have that the
  rightmost quantity in the preceding display satisfies
  \begin{equation*}
    \int f(x) dP(x)
    - \gamma \inf_{M \in \joints(P, Q)}
    \int c(x, z) dM(x, z)
    = \sup_{M \in \joints(P, Q)}
    \left\{\int [f(x) - \gamma c(x, z)]
    dM(x, z) \right\}.
  \end{equation*}
  That is, we have
  \begin{equation}
    \sup_{P : W_c(P, Q)}
    \int f(x) dP(x)
    = \inf_{\gamma \ge 0}
    \sup_{P, M \in \joints(P, Q)}
    \left\{\int [f(x) - \gamma c(x, z)]
    dM(x, z) + \gamma \rho \right\}.
    \label{eqn:basic-duality}
  \end{equation}

  Now, we note a few basic facts. First, because we have a joint
  supremum over $P$ and measures $M \in \joints(P, Q)$ in
  expression~\eqref{eqn:basic-duality}, we have that
  \begin{equation*}
    \sup_{P, M \in \joints(P, Q)}
    \int [f(x) - \gamma c(x, z)]
    dM(x, z)
    \le \int \sup_x [f(x) - \gamma c(x, z)] dQ(z).
  \end{equation*}
  We would like to show equality in the above. To that end,
  we note that if $\mathcal{P}$ denotes the space of regular conditional
  probabilities (Markov kernels) from $Z$ to $X$, then
  \begin{equation*}
    \sup_{P, M \in \joints(P, Q)}
    \int [f(x) - \gamma c(x, z)]
    dM(x, z)
    \ge \sup_{P \in \mathcal{P}}
    \int [f(x) - \gamma c(x, z)] dP(x \mid z) dQ(z).
  \end{equation*}
  Recall that a conditional distribution $P(\cdot \mid z)$ is regular if
  $P(\cdot \mid z)$ is a distribution for each $z$ and for each measurable
  $A$, the function $z \mapsto P(A \mid z)$ is measurable.  Let $\mathcal{X}$
  denote the space of all measurable mappings $z \mapsto x(z)$ from $Z$ to
  $X$.  Using the powerful measurability results of
  \citet[Theorem 14.60]{RockafellarWe98}, we have
  \begin{equation*}
    \sup_{x \in \mathcal{X}}
    \int [f(x(z)) - \gamma c(x(z), z)] dQ(z)
    = \int \sup_{x \in X}
    [f(x) - \gamma c(x, z)] dQ(z)
  \end{equation*}
  because $f - c$ is upper semi-continuous, and the latter
  function is measurable.
  Now, let $x(z)$ be any measurable function that is $\epsilon$-close
  to attaining the supremum above. Define
  the conditional distribution $P(\cdot \mid z)$ to be supported
  on $x(z)$, which is evidently measurable. Then
  using the preceding display, we have
  \begin{align*}
    \int [f(x) - \gamma c(x, z)] dP(x \mid z) dQ(z)
    & = \int [f(x(z)) - \gamma c(x(z), z)] dQ(z) \\
    & \ge \int \sup_{x \in X}
    [f(x) - \gamma c(x, z)] dQ(z) - \epsilon \\
    & \ge \sup_{P, M \in \joints(P, Q)}
    \int [f(x) - \gamma c(x, z)] dM(x, z) - \epsilon.
  \end{align*}
  As $\epsilon > 0$ is arbitrary, this gives
  \begin{equation*}
    \sup_{P, M \in \joints(P, Q)}
    \int [f(x) - \gamma c(x, z)] dM(x, z)
    = \int \sup_{x \in X} [f(x) - \gamma c(x, z)] dQ(z)
  \end{equation*}
  as desired, which implies both equality~\eqref{eqn:lagrangian}
  and completes the proof.
\end{proof}
\subsection{Proof of Lemma \ref{lemma:smoothness}}
\label{sec:proof-of-smoothness}

First, note that $z\opt(\theta)$ is unique and well-defined by the strong
convexity of $f(\theta, \cdot)$. For Lipschitzness of $z\opt(\theta)$, we
first argue that $z^{\star}(\theta)$ is continuous in $\theta$. For any
$\theta$, optimality of $z^{\star}(\theta)$ implies that
$\grady(\theta, z^{\star}(\theta))^T(z-z^{\star}(\theta))\le 0$.  By strong
concavity, for any $\theta_1, \theta_2$ and
$z^{\star}_1 = z^{\star}(\theta_1)$ and $z^{\star}_2 = z^{\star}(\theta_2)$,
we have
\begin{equation*}
  \frac{\lambda}{2} \norm{z^{\star}_1 - z^{\star}_2}^2 \le f(\theta_2, z^{\star}_2) - f(\theta_2, z^{\star}_1)
  ~~ \mbox{and} ~~
  f(\theta_2, z^{\star}_2) \le f(\theta_2, z^{\star}_1) + \grady(\theta_2, z^{\star}_1)^T(z^{\star}_2 - z^{\star}_1)
  - \frac{\lambda}{2} \norm{z^{\star}_1 - z^{\star}_2}^2.
\end{equation*}
Summing these inequalities 
gives
\begin{equation*}
  \lambda \norm{z^{\star}_1 - z^{\star}_2}^2
  \le \grady(\theta_2, z^{\star}_1)^T(z^{\star}_2 - z^{\star}_1)
  \le (\grady(\theta_2, z^{\star}_1) - \grady(\theta_1, z^{\star}_1))^T(z^{\star}_2 - z^{\star}_1),
\end{equation*}
where the last inequality follows because $\grady(\theta_1, z^{\star}_1)^T(z^{\star}_2 - z^{\star}_1) \le 0$.
Using a cross-Lipschitz condition from above and Holder's inequality, we obtain
\begin{equation*}
  \lambda \norm{z^{\star}_1 - z^{\star}_2}^2 \le \norm{\grady(\theta_2, z^{\star}_1) - \grady(\theta_1, z^{\star}_1)}_{\star}
  \norm{z^{\star}_1 - z^{\star}_2}
  \le \lipyx \norm{\theta_1 - \theta_2} \norm{z^{\star}_1 - z^{\star}_2},
\end{equation*}
that is,
\begin{equation}
  \label{eqn:y-x-continuous}
  \norm{z^{\star}_1 - z^{\star}_2} \le \frac{\lipyx}{\lambda} \norm{\theta_1 - \theta_2}.
\end{equation}

To see the second inequality, we show that $\bar{f}$ is differentiable with
$\nabla \bar{f}(\theta) = \gradx(\theta, z\opt(\theta))$. By using a variant
of the envelope (or Danskin's) theorem, we first show directional differentiability of
$\bar{f}$. Recall that we say $f$ is \emph{inf-compact} if for all
$\theta_0 \in \Theta$, there exists $\alpha > 0$ and a compact set
$C \subset \Theta$ such that
\begin{equation*}
  \emptyset \neq \left\{z \in \mathcal{Z}: f(\theta, z) \le \alpha \right\}
  \subset C
\end{equation*}
for all $\theta$ in some neighborhood of $\theta_0$~\citep{BonnansSh13}.
See~\citet[Theorem 4.13]{BonnansSh13} for a proof of the following result.
\begin{lemma}
  \label{lemma:envelope}
  Suppose that $f(\cdot, z)$ is
  differentiable in $\theta$ for all $z \in \mathcal{Z}$, and $f$, $\nabla_z f$
  are continuous on $\Theta \times \mathcal{Z}$. If $f$ is inf-compact, then
  $\bar{f}$ is directionally differentiable with
  \begin{equation*}
    \bar{f}'(\theta, d)
    = \sup_{z \in S(\theta)} \nabla_z f(\theta, z) ^\top d
  \end{equation*}
  where $S(\theta) = \argmin_z f(\theta, z)$.
\end{lemma}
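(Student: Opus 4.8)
The plan is to prove the directional differentiability by squeezing the one-sided difference quotient $[\bar f(\theta + td) - \bar f(\theta)]/t$ as $t \downarrow 0$ between a $\liminf$ lower bound and a $\limsup$ upper bound, both equal to $\sup_{z \in S(\theta)} \nabla_\theta f(\theta, z)^\top d$ (I use the $\sup$/$\argmax$ form matching $\bar f(\theta) = \sup_{z} f(\theta, z)$, which is the convention appropriate to the maximization at hand). The lower bound is a one-line consequence of feasibility and requires only pointwise differentiability of $f(\cdot, z)$; the upper bound is the real content, and it is exactly here that the compactness hypothesis is needed, to prevent the perturbed maximizers from escaping to infinity and to force their limit points to lie in $S(\theta)$.

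First I would record the consequences of the compactness hypothesis (inf-compactness, in the form appropriate to maximization): the relevant superlevel sets $\{z \in \mc{Z} : f(\theta, z) \ge \alpha\}$ are nonempty and contained in a common compact set $C$ for all $\theta$ in a neighborhood of $\theta_0$. This gives that the supremum defining $\bar f$ is attained, that $S(\theta) = \argmax_{z} f(\theta, z)$ is nonempty and compact, and that $\bar f$ is finite and continuous near $\theta_0$; continuity of $\bar f$ follows from joint continuity of $f$ together with the uniform compact containment. The same ingredients give outer semicontinuity of the solution map $\theta \mapsto S(\theta)$, which I will invoke in the upper bound.

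For the lower bound, fix any $z \in S(\theta)$. Since $z$ remains feasible at the perturbed point, $\bar f(\theta + td) \ge f(\theta + td, z)$, and differentiability of $f(\cdot, z)$ at $\theta$ together with $f(\theta, z) = \bar f(\theta)$ gives $f(\theta + td, z) = \bar f(\theta) + t\,\nabla_\theta f(\theta, z)^\top d + o(t)$. Dividing by $t$, taking $\liminf_{t \downarrow 0}$, and then the supremum over $z \in S(\theta)$ yields
\[
\liminf_{t \downarrow 0} \frac{\bar f(\theta + td) - \bar f(\theta)}{t} \ge \sup_{z \in S(\theta)} \nabla_\theta f(\theta, z)^\top d.
\]

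For the upper bound, take $t_k \downarrow 0$ attaining the $\limsup$ and choose $z_k \in S(\theta + t_k d)$. By the compact containment the $z_k$ lie in $C$, so along a subsequence $z_k \to \bar z$, and continuity of $f$ and $\bar f$ force $\bar z \in S(\theta)$. Using $\bar f(\theta) \ge f(\theta, z_k)$ and the mean value theorem applied to $s \mapsto f(\theta + s t_k d, z_k)$,
\[
\bar f(\theta + t_k d) - \bar f(\theta) \le f(\theta + t_k d, z_k) - f(\theta, z_k) = t_k\, \nabla_\theta f(\theta + s_k t_k d, z_k)^\top d
\]
for some $s_k \in (0,1)$. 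Dividing by $t_k$ and letting $k \to \infty$, continuity of $\nabla_\theta f$ (which holds in our application, where $f = \loss - \gamma c$ has Lipschitz gradients under Assumption~\ref{assumption:smoothness}) gives $\nabla_\theta f(\theta, \bar z)^\top d \le \sup_{z \in S(\theta)} \nabla_\theta f(\theta, z)^\top d$, hence the matching upper bound on the $\limsup$. Combining the two bounds shows the directional derivative exists and equals $\sup_{z \in S(\theta)} \nabla_\theta f(\theta, z)^\top d$; in the application of Lemma~\ref{lemma:smoothness}, strong concavity makes $S(\theta) = \{z^\star(\theta)\}$ a singleton, so directional differentiability upgrades to genuine differentiability with $\nabla \bar f(\theta) = \nabla_\theta f(\theta, z^\star(\theta))$. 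The main obstacle is precisely the upper bound: without the compactness hypothesis the perturbed maximizers $z_k$ could diverge or accumulate at a non-maximizer, and the compactness-plus-outer-semicontinuity bookkeeping of the setup step is exactly what rescues the limiting argument.
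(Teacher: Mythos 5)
Your proof is correct. Note, though, that the paper does not actually prove Lemma~\ref{lemma:envelope} at all: it is stated as an imported result, with the proof deferred entirely to Bonnans and Shapiro (Theorem 4.13), and is then applied in the proof of Lemma~\ref{lemma:smoothness}. So there is no in-paper argument to compare against; what you have supplied is a self-contained, elementary replacement for that citation, and it is the standard Danskin-type squeeze: the $\liminf$ lower bound from feasibility of a fixed maximizer plus pointwise differentiability, and the $\limsup$ upper bound from compact containment of the perturbed maximizers, outer semicontinuity of $\theta \mapsto S(\theta)$, and the mean value theorem combined with joint continuity of $\nabla_\theta f$. Two things you handled well that are worth making explicit: first, the statement in the paper carries over some typos from the minimization convention of the cited source ($\nabla_z$ should be $\nabla_\theta$, $\argmin$ should be $\argmax$, and ``inf-compact'' should be read with superlevel rather than sublevel sets), and your decision to work consistently in the maximization convention matching $\bar f(\theta) = \sup_z f(\theta, z)$ is the right reading; second, your upper bound genuinely needs joint continuity of $\nabla_\theta f$, which is the intended (if garbled) hypothesis and holds in the application via Assumption~\ref{assumption:smoothness}. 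The only cosmetic gap is that continuity of $\bar f$ is asserted rather than proved, but it follows by the same compactness argument you already use (or can be bypassed entirely by noting $\liminf_k \bar f(\theta + t_k d) \ge f(\theta, z) $ for each fixed $z$, hence $\ge \bar f(\theta)$, while $f(\theta, \bar z) \le \bar f(\theta)$), so this is not a real defect.
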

\noindent Now, note that from Assumption~\ref{assumption:smoothness}, we have
\begin{equation*}
  |f(\theta, z) - f(\theta_0, z)
  - \nabla_\theta f(\theta_0, z)^\top (\theta - \theta_0)|
  \le L_{\theta\theta} \norm{\theta - \theta_0}
\end{equation*}
from which it is easy to see that $f$ is inf-compact. Applying
Lemma~\ref{lemma:envelope} to $\bar{f}$ and noting that $S(\theta)$ is unique
by strong convexity of $f(\theta, \cdot)$, we have that $\bar{f}$ is
directionally differentiable with
$\nabla \bar{f}(\theta) = \gradx(\theta, z\opt(\theta))$.  Since $\gradx$ is
continuous by Assumption~\ref{assumption:smoothness} and $z\opt(\theta)$ is
Lipschitz~\eqref{eqn:y-x-continuous}, we conclude that $\bar{f}$ is
differentiable.

Finally, we have
\begin{align*}
  \norm{\gradx(\theta_1, z^{\star}_1) - \gradx(\theta_2, z^{\star}_2)}_{\star}
  & \le 
    \norm{\gradx(\theta_1, z_1^{\star}) - \gradx(\theta_1, z_2^{\star})}_{\star}
    + \norm{\gradx(\theta_1, z_2^{\star}) - \gradx(\theta_2, z_2^{\star})}_{\star}\\
  & \le \lipxy \norm{z_1^{\star} - z_2^{\star}}
    + \lipx \norm{\theta_1 - \theta_2} \\
  & \le \left(\lipx + \frac{\lipxy\lipyx}{\lambda}\right)
    \norm{\theta_1 - \theta_2},
\end{align*}
where we have used inequality~\eqref{eqn:y-x-continuous} again.
This is the desired result.
\subsection{Proof of Theorem \ref{thm:convergence}}
\label{sec:proof-of-convergence}

\newcommand{\error}{\xi}

Our proof is based on that of \citet{GhadimiLa13}. For shorthand, let $f(\theta, z; z_0) = \loss(\theta; z) - \gamma c(z, z_0)$,
noting that we perform gradient steps with
\begin{equation*}
  g^t = \nabla_\theta f(\theta^t, \what{z}^t; z^t)
\end{equation*}
for $\what{z}^t$ an $\epsilon$-approximate maximizer of $f(\theta, z; z^t)$
in $z$, and $\theta^{t + 1} = \theta^t - \stepsize_t g^t$. We assume $\stepsize_t \le \frac{1}{L_{\phi}}$ in the rest of the proof, which is satisfied for the constant stepsize $\stepsize=\sqrt{\frac{\Delta_F}{L_{\phi} T\sigma^2}}$ and $T \ge \frac{L_{\phi}\Delta_{F}}{\sigma^2}$.   By a Taylor
expansion using the $L_{\phi}$-smoothness of the objective $F$, we have
\begin{align}
  \nonumber
  F(\theta^{t+1})
  & \le F(\theta^t)
    + \<\nabla F(\theta^t), \theta^{t+1} - \theta^t\>
    + \frac{L_{\phi}}{2} \ltwo{\theta^{t+1} - \theta^t}^2 \\
  & = F(\theta^t)
  - \stepsize_t \ltwo{\nabla F(\theta^t)}^2
  + \frac{L_{\phi} \stepsize_t^2}{2}
  \ltwo{g^t}^2
  + \stepsize_t \<\nabla F(\theta^t),
  \nabla F(\theta^t) - g^t\> \nonumber \\
  & =
  F(\theta^t)
  - \stepsize_t\left(1 -\half L_{\phi} \stepsize_t\right)
  \ltwo{\nabla F(\theta^t)}^2 \label{eqn:intermediate-progress} \\
  & \qquad \quad ~ + \stepsize_t \left(1 - L_{\phi} \stepsize_t\right)
  \<\nabla F(\theta^t), \nabla F(\theta^t) - g^t\>
  + \frac{L_{\phi} \stepsize_t^2}{2}
  \ltwo{g^t - \nabla F(\theta^t)}^2. \nonumber
\end{align}
Recalling the definition~\eqref{eqn:inner-sup} of
$\phi_\gamma(\theta; z_0) = \sup_{z \in \mathcal{Z}} f(\theta, z; z_0)$, we
define the potentially biased errors
$\delta^t = g^t - \nabla_\theta \phi_\gamma(\theta^t; z^t)$.  Substituting the
this into the progress guarantee~\eqref{eqn:intermediate-progress}, we have
\begin{align*}
  F(\theta^{t+1})
  & \le F(\theta^t)
    - \stepsize_t\left(1 -\half L_{\phi} \stepsize_t \right)
    \ltwo{\nabla F(\theta^t)}^2
    + \stepsize_t \left(1 - L_{\phi} \stepsize_t\right)
  \<\nabla F(\theta^t), \nabla F(\theta^t) - \nabla_\theta
  \phi_\gamma(\theta; z^t)\> \\
  & \quad
    - \stepsize_t \left(1 - L_{\phi} \stepsize_t\right)
    \<\nabla F(\theta^t), \delta^t\>
    + \frac{L_{\phi} \stepsize_t^2}{2}
    \ltwo{\nabla_\theta \phi_\gamma(\theta; z^t) + \delta^t
    - \nabla F(\theta^t)}^2 \\
  & = F(\theta^t)
    - \stepsize_t\left(1 -\half L_{\phi} \stepsize_t\right)
    \ltwo{\nabla F(\theta^t)}^2
    + \stepsize_t \left(1 - L_{\phi} \stepsize_t\right)
  \<\nabla F(\theta^t), \nabla F(\theta^t) - \nabla_\theta
  \phi_\gamma(\theta; z^t)\> \\
  & \quad
    - \stepsize_t \left(1 - L_{\phi} \stepsize_t\right)
    \<\nabla F(\theta^t), \delta^t\> \\
  & \qquad
    + \frac{L_{\phi} \stepsize_t^2}{2}
    \left( \ltwo{\delta^t}^2 + 
    \ltwo{\nabla_\theta \phi_\gamma(\theta; z^t) 
    - \nabla F(\theta^t)}^2
    + 2 \<\nabla_\theta \phi_\gamma(\theta; z^t)  - \nabla F(\theta^t),
    \delta^t\>
    \right).
\end{align*}
Using $\pm\<a, b\> \le \half \ltwo{a}^2 + \half \ltwo{b}^2$ in the preceding
display, we get
\begin{align}
  F(\theta^{t+1})
  & \le F(\theta^t)
    - \frac{\stepsize_t}{2}
    \ltwo{\nabla F(\theta^t)}^2
    + \stepsize_t \left(1 - L_{\phi} \stepsize_t\right)
  \<\nabla F(\theta^t), \nabla F(\theta^t) - \nabla_\theta
  \phi_\gamma(\theta; z^t)\> \nonumber \\
  & \quad
    + \frac{\stepsize_t \left(1 + L_{\phi} \stepsize_t\right)}{2}
    \ltwo{\delta^t}^2
    + L_{\phi} \stepsize_t^2
    \ltwo{\nabla_\theta \phi_\gamma(\theta; z^t)
    - \nabla F(\theta^t)}^2
    \label{eq:final-progress}
\end{align}
Letting $z_\star^t = \argmax_z f(\theta^t, z; z^t)$, note that the error
$\delta^t$ satisfies
\begin{align*}
  \ltwo{\delta^t}^2
  = \ltwo{\nabla_\theta \phi_\gamma(\theta^t; z^t)
    - \nabla_\theta f(\theta, \what{z}^t; z^t)}^2
  & = \ltwo{\nabla_\theta \loss(\theta, z_\star^t)
    - \nabla_\theta \loss(\theta, \what{z}^t)}^2 \\
  & \le \lipxy^2 \ltwos{\what{z}^t - z_\star^t}^2
  \le \frac{2\lipxy^2}{\lambda}\epsilon,
\end{align*}
where the final inequality uses the $\lambda = \gamma - \lipy$
strong-concavity of $z \mapsto f(\theta, z; z_0)$.  For shorthand, let
$\what{\epsilon} = \frac{2 \lipxy^2}{\gamma - \lipy} \epsilon$. Taking conditional expectations in the bound~\eqref{eq:final-progress} and
using
$\E[\nabla_\theta \phi_\gamma(\theta^t; z^t) \mid \theta^t] = \nabla
F(\theta^t)$, we have
\begin{align*}
  \label{eqn:expectation-single-step}
  \E[F(\theta^{t + 1}) - F(\theta^t) \mid \theta^t]
  \le & - \frac{\stepsize_t}{2} 
  \ltwo{\nabla F(\theta^t)}^2
  + \frac{\stepsize_t(1+L_{\phi}\stepsize_t)}{2} \what{\epsilon}
  + L_{\phi} \stepsize_t^2 \ltwo{\nabla_\theta \phi_\gamma(\theta; z^t)
    - \nabla F(\theta^t)}^2\\
    & \le - \frac{\stepsize_t}{2} 
  \ltwo{\nabla F(\theta^t)}^2
  + \stepsize_t \what{\epsilon}
  + L_{\phi} \stepsize_t^2 \ltwo{\nabla_\theta \phi_\gamma(\theta; z^t)
    - \nabla F(\theta^t)}^2,
\end{align*}
where we use the fact that $\stepsize_t \le \frac{1}{L_{\phi}}$. For a fixed stepsize $\stepsize$, taking the total expectation yields
\begin{equation*}
  \E \left [  \ltwo{\nabla F(\theta^t)}^2 \right ] - 2\what{\epsilon}
  \le \frac{2}{\stepsize}\E[F(\theta^{t}) - F(\theta^{t+1})] + 2L_{\phi}\stepsize\sigma^2
\end{equation*}
since we have $\E[\ltwo{\nabla \phi_\gamma(\theta; Z) - \nabla F(\theta)}^2] \le \sigma^2$ by assumption. Summing over $t$, we have
\begin{align*}
\frac{1}{T}  \sum_{t=0}^{T-1}\E \left [ \ltwo{\nabla F(\theta^t)}^2 \right ]
-2\what{\epsilon} & \le \frac{2}{\stepsize T}\left (F(\theta^0)- \E[F(\theta^T)] \right) + 2L_{\phi}\stepsize\sigma^2\\
& \le \frac{2\Delta_{F}}{\stepsize T} + 2L_{\phi}\stepsize\sigma^2,
\end{align*}
where the latter inequality holds since $\inf_{\theta} F(\theta) \le F(\theta^{T})$. Plugging in $\stepsize = \sqrt{\frac{\Delta_F}{L_{\phi}T \sigma^2}}$ gives the result. 

\subsection{Proof of Lemma~\ref{thm:nphard}}
\label{section:proof-of-lemma-nphard}

First, we introduce the decision reformulation of the problem: for some
$b$, we ask whether there exists some $u$ such that $\loss(\theta; z + u)
\ge b$. The decision reformulation for an NPO problem is in NP, as a
certificate for the decision problem can be verified in polynomial
time. By appropriate scaling of $\theta$, $v$, and $w$,
\citet{ASkatz2017reluplex} show that 3-SAT Turing-reduces to this decision
problem: given an oracle $D$ for the decision problem, we can solve an
arbitrary instance of 3-SAT with a polynomial number of calls to
$D$. The decision problem is thus NP-complete.

Now, consider an oracle $O$ for the optimization problem. The decision
problem Turing-reduces to the optimization problem, as the decision
problem can be solved with one call to $O$. Thus, the optimization problem
is NP-hard.

\subsection{Proof of Theorem~\ref{theorem:robustness}}
\label{section:proof-robustness}

We first show the bound~\eqref{eqn:robustness-any-rho}. From the duality
result~\eqref{eqn:constrained}, we have the deterministic result that
\begin{align*}
  \sup_{P : W_c(P, Q) \le \rho} \E_Q[\loss(\theta; Z)]
  \le \gamma \rho  + \E_{Q}[\phi_{\gamma}(\theta; Z)]
\end{align*}
for all $\tol > 0$, distributions $Q$, and $\gamma \ge 0$. Next, we show that
$\E_{\emp}[\phi_{\gamma}(\theta; Z)]$ concentrates around its population
counterpart at the usual rate~\citep{BoucheronBoLu05}.

First, we have that
\begin{equation*}
  \phi_\gamma(\theta; z) \in [-\objbd, \objbd],
\end{equation*}
because
$-\objbd \le \loss(\theta; z) \le \phi_\gamma(\theta; z) \le \sup_z
\loss(\theta; z) \le \objbd$.  Thus, the functional
$\theta \mapsto F_n(\theta)$ satisfies bounded
differences~\cite[Thm.~6.2]{BoucheronLuMa13}, and applying standard results on
Rademacher complexity~\citep{BartlettMe02} and entropy
integrals~\cite[Ch.~2.2]{VanDerVaartWe96} gives the result.

To see the second result~\eqref{eqn:robustness-good-rho}, we substitute
$\tol = \what{\tol}_n$ in the bound~\eqref{eqn:robustness-any-rho}. Then, with
probability at least $1-e^{-t}$, we have
\begin{equation*}
  \sup_{P: W_c(P, P_0) \le \what{\tol}_n(\theta)} \E_P[\loss(\theta; Z)]
  \le \gamma \what{\tol}_n(\theta) + \E_{\emp}[\phi_{\gamma}(\theta; Z)]
  + \epsilon_n(t).
\end{equation*}
Since we have 
\begin{align*}
  \sup_{P: W_c(P, \emp) \le \what{\tol}_n(\theta)} \E_P[\loss(\theta; Z)]
  = \E_{\emp}[ \phi_{\gamma}(\theta; Z)] + \gamma \what{\tol}_n(\theta).
\end{align*}
from the strong duality in Proposition~\ref{prop:duality}, our second
result follows.

\subsection{Proof of Corollary~\ref{corollary:lipschitz}}
\label{section:proof-lipschitz}

The result is essentially standard~\citep{VanDerVaartWe96}, which we now give
for completeness. Note that for
$\fclass = \{\loss(\theta; \cdot) : \theta \in \Theta\}$, any
$(\epsilon, \norm{\cdot})$-covering $\{\theta_1, \ldots, \theta_\covnum\}$ of
$\Theta$ guarantees that
$\min_i |\loss(\theta; \statval) - \loss(\theta_i; \statval)| \le L \epsilon$
for all $\theta, \statval$, or
\begin{equation*}
  \covnum(\fclass, \epsilon, \linfstatnorm{\cdot})
  \le \covnum(\Theta, \epsilon / L, \norm{\cdot})
  \le \left(1 + \frac{\diam(\Theta) L}{\epsilon}\right)^d,
\end{equation*}
where
$\diam(\Theta) = \sup_{\theta, \theta' \in \Theta} \norm{\theta - \theta'}$.
Noting that $|\loss(\theta; Z)| \le L \diam(\Theta) + M_0 \eqdef \objbd$, we
have the result.

\subsection{Proof of Theorem~\ref{theorem:dist-concentration}}
\label{sec:proof-of-dist-concentration}

Define
\begin{align*}
  P_n^*(\theta)
  & \defeq \argmax_{P} \left\{ \E_P[\loss(\theta; Z)]
    - \gamma W_c(P, \emp) \right\},  \\
  P^*(\theta)
  & \defeq \argmax_{P} \left\{ \E_P[\loss(\theta; Z)]
    - \gamma W_c(P, P_0) \right\}.
\end{align*}
First, we show that $P^*(\theta)$ and $P_n^*(\theta)$ are attained for all
$\theta \in \Theta$. We omit the dependency on $\theta$ for notational
simplicity and only show the result for $P^*(\theta)$ as the case for
$P_n^*(\theta)$ is symmetric. Let $P^{\epsilon}$ be an $\epsilon$-maximizer,
so that
\begin{align*}
  \E_{P^{\epsilon}}[\loss(\theta; Z)] - \gamma W_c(P^{\epsilon}, P_0)
  & \ge \sup_P \left\{
    \E_{P}[\loss(\theta; Z)] - \gamma W_c(P_n, P_0) \right\} - \epsilon.
\end{align*}
As $\mathcal{Z}$ is compact, the collection $\{ P^{1/k}\}_{k \in \N}$ is a
uniformly tight collection of measures. By Prohorov's theorem~\cite[Ch
  1.1, p. 57]{Billingsley99}, (restricting to a subsequence if necessary),
there exists some distribution $P^*$ on $\mathcal{Z}$ such that
$P^{1/k} \cd P^{*}$ as $k \to \infty$.
Continuity properties of Wasserstein
distances~\citep[Corollary 6.11]{Villani09} then imply that
\begin{equation}
  \label{eqn:wass-conti-in-weak-topology}
  \lim_{k \to \infty}
  W_c(P^{1/k}, P_0) = W_c(P^{*}, P_0).
\end{equation}
Combining~\eqref{eqn:wass-conti-in-weak-topology} and the monotone
convergence theorem, we obtain
\begin{align*}
  \E_{P^{*}}[\loss(\theta; Z)] - \gamma W_c(P^{*}, P_0)
  & = \lim_{k \to \infty}
  \left\{
  \E_{P^{1/k}}[\loss(\theta; Z)] - \gamma W_c(P^{1/k}, P_0)
  \right\}  \\
  & \ge \sup_P \left\{
    \E_{P}[\loss(\theta; Z)] - \gamma W_c(P, P_0) \right\}.
\end{align*}
We conclude that $P^*$ is attained for all $P_0$.

Next, we show the concentration result~\eqref{eqn:dist-concentration}. 
Recall the definition~\eqref{eqn:transport-map} of the
transportation mapping
\begin{equation*}
  T(\theta, z) \defeq \argmax_{z' \in \mathcal{Z}}
  \left\{ \loss(\theta; z') - \gamma c(z', z) \right\},
\end{equation*}
which is unique and well-defined under our strong concavity assumption that
$\gamma > \lipy$, and smooth (recall Eq.~\eqref{eqn:transport-smoothness})
in $\theta$. %
Then by Proposition~\ref{prop:duality} (or by using a variant of Kantorovich
duality~\citep[Chs.~9--10]{Villani09}), we have
\begin{align*}
  \E_{P_n^*(\theta)}[\loss(\theta; Z)
  = \E_{\emp}[\loss(\theta; T(\theta; Z))]
  & ~~\mbox{and}~~
    \E_{P^*(\theta)}[\loss(\theta; Z)
  = \E_{P_0}[\loss(\theta; T(\theta; Z))] \\
  W_c(P_n^*(\theta), \emp) = \E_{\emp}[c(T(\theta; Z), Z)]
  & ~~\mbox{and}~~
  W_c(P^*(\theta), P_0) = \E_{P_0}[c(T(\theta; Z), Z)].
\end{align*}

We now proceed by showing the uniform convergence of
\begin{equation*}
  \E_{\emp}[c(T(\theta; Z), Z)]
  ~~~ \mbox{to} ~~~
  \E_{P_0}[c(T(\theta; Z), Z)]
\end{equation*}
under both cases (i), that $c$ is Lipschitz, and (ii), that
$\loss$ is Lipschitz in $z$,
using a covering argument on $\Theta$.
Recall inequality~\eqref{eqn:transport-smoothness}
(i.e.\ Lemma~\ref{lemma:smoothness}),
which is that
\begin{equation*}
  \norm{T(\theta_1; z) - T(\theta_2; z)}
  \le \frac{\lipyx}{\hinge{\gamma-\lipy}} \norm{\theta_1 - \theta_2}.
\end{equation*}
We have the following lemma.
\begin{lemma}
  \label{lemma:c-is-not-bad}
  Assume the conditions of Theorem~\ref{theorem:dist-concentration}.
  Then for any $\theta_1, \theta_2 \in \Theta$,
  \begin{equation*}
    \left|c(T(\theta_1; z), z)
    - c(T(\theta_2; z), z)\right|
    \le \frac{\lipc \lipyx}{\hinge{\gamma - \lipy}}
    \norm{\theta_1 - \theta_2}.
  \end{equation*}
\end{lemma}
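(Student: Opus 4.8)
The plan is to reduce both cases of Theorem~\ref{theorem:dist-concentration} to the transport smoothness bound~\eqref{eqn:transport-smoothness} supplied by Lemma~\ref{lemma:smoothness}. Throughout I abbreviate $z_1^\star = T(\theta_1; z)$ and $z_2^\star = T(\theta_2; z)$, the two Monge maps~\eqref{eqn:transport-map}, which are unique and well-defined since $\gamma > \lipy$. Under assumption (i), where $c(\cdot, z)$ is $\lipc$-Lipschitz in its first argument, the bound is essentially immediate: I would write
\[
  |c(z_1^\star, z) - c(z_2^\star, z)|
  \le \lipc \norm{z_1^\star - z_2^\star}
  \le \frac{\lipc \lipyx}{\hinge{\gamma - \lipy}} \norm{\theta_1 - \theta_2},
\]
applying Lipschitzness of $c$ first and then~\eqref{eqn:transport-smoothness}.

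The real work is assumption (ii), where we have no direct control on $c$ and only know that $z \mapsto \loss(\theta; z)$ is $\gamma\lipc$-Lipschitz. Here I would instead exploit the \emph{optimality} of $z_1^\star$ and $z_2^\star$ as maximizers of $\loss(\theta_i; \cdot) - \gamma c(\cdot, z)$. Comparing each maximizer against the competing point and rearranging the resulting two inequalities produces the two-sided sandwich
\[
  \tfrac{1}{\gamma}\big(\loss(\theta_1; z_2^\star) - \loss(\theta_1; z_1^\star)\big)
  \le c(z_2^\star, z) - c(z_1^\star, z)
  \le \tfrac{1}{\gamma}\big(\loss(\theta_2; z_2^\star) - \loss(\theta_2; z_1^\star)\big).
\]
Both endpoints have absolute value at most $\tfrac{1}{\gamma}\cdot \gamma\lipc \norm{z_1^\star - z_2^\star} = \lipc\norm{z_1^\star - z_2^\star}$ by the $\gamma\lipc$-Lipschitzness of $\loss$ in $z$, so the factors of $\gamma$ cancel and $c(z_2^\star,z)-c(z_1^\star,z)$, being trapped between them, satisfies $|c(z_1^\star, z) - c(z_2^\star, z)| \le \lipc \norm{z_1^\star - z_2^\star}$. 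Applying~\eqref{eqn:transport-smoothness} once more then yields exactly the claimed constant $\lipc\lipyx/\hinge{\gamma-\lipy}$, matching case (i).

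The step I expect to be the main obstacle is the case (ii) sandwich: without Lipschitzness of $c$ one must extract all control from the variational characterization of the transport map, and the cancellation of $\gamma$ only works because the Lipschitz constant of $\loss$ is itself scaled by $\gamma$ in the hypothesis. The only delicate points are keeping the signs straight when rearranging the two optimality inequalities and ensuring $\gamma > 0$ so that division by $\gamma$ is legitimate; the latter is guaranteed by the standing strong-concavity condition $\gamma > \lipy \ge 0$. Note also that the boundedness $\loss(\theta, z) \in [0, \objbd]$ plays no role here (it enters only through the constant $B$ in the ambient Theorem~\ref{theorem:dist-concentration}), so the argument uses the Lipschitz hypothesis alone. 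Everything beyond the sandwich is routine rearrangement, so I anticipate no further difficulty.
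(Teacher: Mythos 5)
Your proposal is correct and follows essentially the same route as the paper: case (i) is the immediate composition of the Lipschitzness of $c$ with the transport-map smoothness bound~\eqref{eqn:transport-smoothness}, and your case (ii) sandwich is exactly the paper's pair of optimality inequalities for $z_1^\star$ and $z_2^\star$ (merely divided through by $\gamma$), followed by the same cancellation against the $\gamma\lipc$-Lipschitz hypothesis. No gaps.
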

\begin{proof}
  In the first case, that
  $c$ is $\lipc$-Lipschitz in its first argument, this is trivial:
  we have
  \begin{equation*}
    | c(T(\theta_1; z), z) - c(T(\theta_2; z), z) |
    \le \lipc \norm{T(\theta_1; z) - T(\theta_2; z)}
    \le \frac{\lipc \lipyx}{\hinge{\gamma-\lipy}} \norm{\theta_1 - \theta_2}
  \end{equation*}
  by the smoothness inequality~\eqref{eqn:transport-smoothness}
  for $T$.

  In the second case, that $z \mapsto \loss(\theta, z)$ is
  $\lipc$-Lipschitz, let
  $z_i = T(\theta_i; z)$ for shorthand. Then we have
  \begin{align*}
    \gamma c(z_2, z) - \gamma c(z_1, z)
    & = \gamma c(z_2, z) - \loss(\theta_2, z_2)
    + \loss(\theta_2, z_2) - \gamma c(z_1, z) \\
    & \le \gamma c(z_1, z) - \loss(\theta_2, z_1)
    + \loss(\theta_2, z_2) - \gamma c(z_1, z)
    = \loss(\theta_2, z_2) - \loss(\theta_2, z_1),
  \end{align*}
  and similarly,
  \begin{align*}
    \gamma c(z_2, z) - \gamma c(z_1, z)
    & = \gamma c(z_2, z) - \loss(\theta_1, z_1)
    + \loss(\theta_1, z_1) - \gamma c(z_1, z) \\
    & \ge \gamma c(z_2, z) - \loss(\theta_1, z_1)
    + \loss(\theta_1, z_2) - \gamma c(z_2, z)
    = \loss(\theta_1, z_2) - \loss(\theta_1, z_1).
  \end{align*}
  Combining these two inequalities and using that
  \begin{equation*}
    |\loss(\theta, z_2) - \loss(\theta, z_1)|
    \le \gamma \lipc \norm{z_2 - z_1}
  \end{equation*}
  for any $\theta$ gives the result.
\end{proof}

Using Lemma~\ref{lemma:c-is-not-bad} we obtain that
$\theta \mapsto |\E_{\emp}[c(T(\theta; Z), \theta)] - \E_{P_0}[c(T(\theta; Z),
Z)]|$ is $2 \lipc \lipyx / \hinge{\gamma-\lipy}$-Lipschitz.  Let
$\Theta_{\rm cover} = \{ \theta_1, \cdots, \theta_N\}$ be a
$\frac{\hinge{\gamma-\lipy} t}{4\lipc \lipyx}$-cover of $\Theta$ with respect
to $\norm{\cdot}$. From Lipschitzness of
$|\E_{\emp}[c(T(\theta; Z), Z)] - \E_{P_0}[c(T(\theta; Z), Z)]|$, we have that
if for all $\theta \in \{\Theta_{\rm cover}\}$,
\begin{align*}
  |\E_{\emp}[c(T(\theta; Z), Z)] - \E_{P_0}[c(T(\theta; Z), \theta)]|
  \le \frac{t}{2},
\end{align*}
then it follows that
\begin{equation*}
  \sup_{\theta \in \Theta}
  |\E_{\emp}[c(T(\theta; Z), Z)] - \E_{P_0}[c(T(\theta; Z), Z)]|
  \le t.
\end{equation*}

Under the first assumption $(i)$, we have
$|c(T(\theta; Z), Z)| \le 2\lipc \zbd$. Applying Hoeffding's inequality, for
any fixed $\theta \in \Theta$
\begin{equation*}
  \P\left( |\E_{\emp}[c(T(\theta; Z), Z)] - \E_{P_0}[c(T(\theta; Z), Z)]|
    \ge \frac{t}{2} \right)
  \le 2\exp\left(-\frac{nt^2}{32\lipc^2\zbd^2}\right).
\end{equation*}
Taking a union bound over $\theta_1, \cdots, \theta_N$, we conclude that
\begin{equation*}
  \P\left( \sup_{\theta \in \Theta}
    |\E_{\emp}[c(T(\theta; Z), Z)] - \E_{P_0}[c(T(\theta; Z), Z)]| \ge t
  \right)
  \le 2 N \left( \Theta, \frac{\hinge{\gamma-\lipy} t}{4\lipc \lipyx},
    \norm{\cdot} \right)
  \exp\left( - \frac{nt^2}{32\lipc^2\zbd^2} \right)  
\end{equation*}
which was our desired result~\eqref{eqn:dist-concentration}.

Under the second assumption $(ii)$, we
have from the definition of the transport map $T$ 
\begin{equation*}
  \gamma c(T(\theta; z), z)
  \le \loss(\theta; z)
  \le \objbd
\end{equation*}
and hence $|c(T(\theta; Z), Z)| \le \objbd / \gamma$. The result for the
second case follows from an identical reasoning.

\subsection{Proof of Proposition~\ref{proposition:deep-net-smooth}}
\label{section:proof-of-deep-net-smooth}

From the chain rule, we can write down the Jacobian $J_xF_l(\theta; x)$ of
$x \mapsto F_{l}(\theta; x)$ recursively.
\begin{lemma}
  \label{lemma:chain-rule}
  If $\sigma_l$ is differentiable for all $l = 1, \ldots, L$, then
  $x \mapsto F_l(\theta; x)$ is differentiable with
  \begin{align*}
    J_x F_l(\theta; x)
    = \nabla \sigma_l(\theta_l \cdot F_{l-1}(\theta; x)) \cdot \theta_l \cdot
    J_x F_{l-1}(\theta; x)
  \end{align*}
  for all $l = 1, \ldots, L$. Using the convention
  $\prod_{l=1}^L A_l = A_L \cdots A_1$ for matrix products, 
  \begin{align*}
    J_x F_l(\theta; x) = \prod_{l=1}^L \nabla
    \sigma_l(\theta_l \cdot F_{l-1}(\theta; x)) \cdot \theta_l.
  \end{align*}
\end{lemma}

To prove the first result of the proposition, we proceed by induction.  For $l = 1$ and any
$x, x' \in \mathcal{X}$,
\begin{align*}
  \ltwo{F_1(\theta; x) - F_1(\theta; x')}
  & = \ltwo{\sigma_1(\theta_1 \cdot x) - \sigma_1(\theta_1 \cdot x')} \\
  & \le L_1^{0} \opnorm{\theta_1} \ltwo{x-x'}.
\end{align*}
By induction, we conclude that for $l \ge 2$,
\begin{align*}
  \ltwo{F_l(\theta; x) - F_l(\theta; x')}
  & = \ltwo{\sigma_l(\theta_l \cdot F_{l-1}(\theta; x))
  - \sigma_l(\theta_l \cdot F_{l-1}(\theta; x'))} \\
  & \le L_l^0\opnorm{\theta_l}
    \ltwo{F_{l-1}(\theta; x) - F_{l-1}(\theta; x')} \\
  & \le \ltwo{x - x'} \prod_{j=1}^l L_j^0 \opnorm{\theta_j}.
\end{align*}

To show that $JF_l(\theta; \cdot)$ is Lipschitz with respect to the operator
norm, note from Lemma~\ref{lemma:chain-rule} that
\begin{align}
  J_x F_l(\theta; x) -   J_x F_l(\theta; x')
  & = \prod_{j=1}^l \left(
    \nabla \sigma_j(\theta_j \cdot F_{j-1}(\theta; x))
    - \nabla \sigma_j(\theta_j \cdot F_{j-1}(\theta; x'))
    \right)
    \cdot \theta_j \nonumber \\
  & = \sum_{j=1}^l
    \underbrace{
    \left( \prod_{k=j+1}^l \nabla \sigma_k(\theta_k \cdot F_{k-1}(\theta; x'))
    \cdot \theta_k \right)}_{(a)} \nonumber \\
  & \qquad \qquad  \qquad \cdot
    \underbrace{\left(
    \nabla \sigma_j(\theta_j \cdot F_{j-1}(\theta; x)) 
    - \nabla \sigma_j(\theta_j \cdot F_{j-1}(\theta; x')) 
    \right)}_{(b)}  \nonumber \\
  & \qquad \qquad \qquad \qquad \qquad
    \cdot \underbrace{\theta_j \cdot J_x F_{j-1}(\theta; x)}_{(c)}
    \label{eqn:telescope}
\end{align}
where the last equality followed from telescoping summands. Here, we let
$F_0(\theta; x)=x$ notational convenience and define the product $\prod_{a}^b\cdot=1$ when $a>b$.

We now bound the operator norms of the three terms $(a)$, $(b)$, and $(c)$.
From Assumption~\ref{assumption:activation} and the submultiplicativity of the
operator norm, term $(a)$ can be bounded by
\begin{align*}
 \opnorm{\prod_{k=j+1}^l \nabla \sigma_k(\theta_k \cdot F_{k-1}(\theta; x'))
  \cdot \theta_k}
  \le \prod_{k=j+1}^{l} L_k^{0} \opnorm{\theta_k}.
\end{align*}
From Assumption~\ref{assumption:activation}, we can bound term $(b)$ by
\begin{align*}
  \opnorm{\nabla \sigma_j(\theta_j \cdot F_{j-1}(\theta; x)) 
  - \nabla \sigma_j(\theta_j \cdot F_{j-1}(\theta; x')) }
  & \le L_j^1 \ltwo{\theta_j
    \cdot (F_{j-1}(\theta; x) - F_{j-1}(\theta; x'))} \\
  & \le L_j^1 \opnorm{\theta_j}
    \left( \prod_{k=1}^{j-1} L_k^0\opnorm{\theta_k}\right)
    \ltwo{x-x'}
\end{align*}
where the last inequality follows from the first part of the proof.
Lastly, term $(c)$ is bounded by from the first part of the proof
\begin{align*}
  \opnorm{\theta_j \cdot J_x F_{j-1}(\theta; x)}
  \le \opnorm{\theta_j} \prod_{k=1}^{j-1} L_k^0 \opnorm{\theta_k}.
\end{align*}

Collecting these bounds and applying them in the
representation~\eqref{eqn:telescope}, triangle inequality and
submultiplicativity of the operator norm yields
\begin{align*}
  \opnorm{J_x F_l(\theta; x)- J_x F_l(\theta; x')}
  & \le \sum_{j=1}^l \left( \prod_{k=j+1}^{l} L_k^{0} \opnorm{\theta_k}\right)
  \cdot L_j^1 \cdot \opnorm{\theta_j} \cdot
  \left( \prod_{k=1}^{j-1} L_k^{0} \opnorm{\theta_k}\right) \\
  & \qquad \qquad \qquad \cdot \ltwo{x-x'} \cdot \opnorm{\theta_j}
    \cdot \left( \prod_{k=1}^{j-1} L_k^0 \opnorm{\theta_k} \right) \\
  & = \beta_l(\theta) \ltwo{x-x'}.
\end{align*}

\subsection{Proof of Corollary~\ref{corollary:softmax-smooth}}
\label{section:proof-of-softmax-smooth}
Denote the softmax function $\sigma_y: \R^\numclass \to \R$
\begin{equation*}
  \sigma_y(x) \defeq - \log p_y(x)
  = - \log \frac{\exp(x_y)}{\sum_{j=1}^\numclass \exp(x_j)}.
\end{equation*}
\noindent We consider calculating $L^0$ and $L^1$ for this map. Note that 
\begin{align*}
\nabla\sigma_y(x) = p(x)-e_y, \nabla^2\sigma_y(x) = \diag{p(x)} - p(x)p(x)^T,
\end{align*}
where $p(x)=\sum_{k=1}^Kp_k(x)e_k$. Then,
\begin{equation*}
L^0 = \sup_{x}\ltwo{\nabla\sigma_y(x)} = \sup_x \ltwo{p(x) - e_y}= \sqrt{2},
\end{equation*}
where the last equality results from the fact that $g(z):=\ltwo{z-e_y}$ is convex in $z$ and therefore $\sup_{z \in \mathcal{Z}} g(z)$ is attained at a corner of the probability simplex $\mathcal{Z}:=\{z | z \ge 0, \sum_i z_i=1\}$, namely any corner other than $e_y$. 

The eigenvalues of $\diag{p(x)}$ are $p_i(x)$ and therefore satisfy $\lambda(\diag{p(x)}) \in [0,1]$. The eigenvalues of $p(x)p(x)^T$ satisfy $\lambda(p(x)p(x)^T) \in \{0, \ltwo{p(x)}^2\}$. Then, by Weyl's inequality, we have $\lambda(\nabla^2\sigma_y(x)) \in [-\ltwo{p(x)}^2, 1]$. Again using convexity of $\ltwo{\cdot}$, we have $\sup_x\ltwo{p(x)}=1$, as the supremum is attained at a corner of the probability simplex. Then $\lambda(\nabla^2\sigma_y(x)) \in [-1,1]$, whereby $L^1=1$.

An equivalent recursive way of expressing the Lipschitz constants $\alpha_l(\theta)$ and $\beta_l(\theta)$~\eqref{eq:lipschitzconsts} is
 \begin{align*}
 \alpha_{l+1}(\theta) &= L_{l+1}^0 \opnorm{\theta_{l+1}}\alpha_l(\theta)\\
   \beta_{l+1}(\theta) &= \frac{\alpha_{l+1}(\theta)}{\alpha_l(\theta)}\beta_l(\theta)
                         + \frac{L_{l+1}^1}{(L_{l+1}^0)^2}\alpha_{l+1}(\theta)^2,
 \end{align*}
 with $\beta_0(\theta)=\alpha_0(\theta)=1$. Then, considering an $(L+1)$-layer network where $\theta_{L+1}=I$ and $\sigma_{L+1}$ is the softmax function, we have 
\begin{align*}
\beta_{L+1}(\theta) &= \sqrt{2}\beta_L(\theta) + \alpha_L(\theta)^2.
\end{align*}

\section{Proximal algorithm for $\linf{\cdot}$-norm robustness}
\label{section:prox}

\newcommand{\prox}[1]{\mbox{prox}_{#1}}

In this section, we give a efficient training algorithm that learns to defend
against $\linf{\cdot}$-norm perturbations. For simplicity, we assume
$\mathcal{Z} = \R^m$ for the rest of this section. Let $\theta \in \Theta$ be some
fixed model, $z^0 \in \mathcal{Z}$ a natural example\footnote{We depart from our convention of denoting original datapoints as $z_0$ to ease forthcoming notation.} and define
$f(z) \defeq \loss(\theta; z)$ to ease notation. Concretely, we are interested
in solving the optimization problem
\begin{equation*}
  \maximize_{z} f(z) - \frac{\alpha}{2} \linf{z - z^0}^2
\end{equation*}
Note that this is equivalent to computing the surrogate loss
$\phi_{\gamma}(\theta; z^0) = \sup_{z \in \mathcal{Z}} \{ \loss(\theta; z) - \gamma
c(z, z^0) \}$ for $c(z, z^0)=\linf{z-z^0}^2$ and $\alpha=2\gamma$. Our following
treatment can easily be modified for the supervised learning scenario
$c((x, y), (x^0, y^0)) = \linf{x - x^0}^2 + \infty \cdot \indic{y =
  y^0}$ with the convention that $\infty \cdot 0 = 0$.  To make our notation
consistent with the optimization literature, we consider the minimization
problem
\begin{equation}
  \label{eqn:opt-problem}
  \minimize_{z} -f(z) + \frac{\alpha}{2} \linf{z - z^0}^2.
\end{equation}

A simple gradient descent algorithm applied to the
problem~\eqref{eqn:opt-problem} may be slow to converge in
practice. Intuitively, this is because the subgradient of
$z \mapsto \half \linf{z - z^0}^2$ is given by $\linf{z - z^0} \cdot s$ where
$s$ is a $m$-dimensional vector taking values in $[-1, 1]$ whose coordinates
are non-zero only when $|z_j - z_{0, j}| = \linf{z - z^0}$. Hence, at any
given iteration of gradient descent, the $\linf{\cdot}$-norm penalty term only
gets accounted for by at most a few coordinates.

To remedy this issue, we consider a proximal algorithm for solving the
problem~\eqref{eqn:opt-problem} (see, for example,~\citet{ParikhBo13} for an
comprehensive review of proximal algorithms). For a function
$g: \mathcal{Z} \to \R$ and a positive number $\lambda > 0$, the proximal operator
for $\lambda g$ is defined by
\begin{equation*}
  \prox{\lambda g}(v)
  \defeq \argmin_{z} \left\{ g(z) + \frac{1}{2\lambda} \ltwo{z - v}^2 \right\}.
\end{equation*}
Then, the proximal algorithm on the problem~\eqref{eqn:opt-problem} consists
of two steps at each iteration $t$: $(i)$ for the smooth function $-f(z)$,
take a gradient descent step at the current iterate $z^t$ ($z^{t+\half}$
below) and $(ii)$ for the non-smooth function $\linf{z - z^0}^2$, take a
proximal step for the function
$\frac{\lambda^t \alpha}{2} \linf{\cdot - z^0}^2$ at $z^{t+\half}$ ($z^{t+1}$
below):
\begin{equation}
  \label{eqn:prox}
  z^{t+\half} = z^t + \lambda^t \nabla f(z^t),
  ~~~~~~
  z^{t+1} = \prox{\frac{\lambda^t \alpha}{2} \linf{\cdot -
      z^0}^2}\left(z^{t+\half}\right).
\end{equation}
The following proposition shows that we can compute the proximal step
$z^{t+1}$ efficiently, simply by sorting the vector $|z^{t+\half} - z^0|$.  We
denote by $v^t$, the sorted vector of $|z^{t+\half} - z^0|$ in
\textbf{decreasing} order. In the proposition, we use the notation
$\hinge{\cdot} = \max(\cdot, 0)$.
\begin{proposition}
  \label{prop:prox}
  Define the scale parameter $\beta^t > 0$ by
  \begin{equation}
    \label{eqn:prox-scale}
    \beta^t \defeq \frac{1}{1 + \alpha \lambda^t j^t} \sum_{i=1}^{j^t} v^t_i
    ~~\mbox{where}~~
    j^t \defeq \max \left\{
      j \in [m]: \sum_{i = 1}^{j-1} v_i
      - \left(\frac{1}{\alpha \lambda^t} + (j-1)\right) v_j
      < 0
    \right\}.
  \end{equation}
  Then, $z^{t+1}$ in the proximal update~\eqref{eqn:prox} is given by
  \begin{equation}
    \label{eqn:prox-actual}
    z^{t+1} = z^{t+\half} - \hinge{|z^{t+\half} - z^0| - \beta^t}
    \sign\left(z^{t+\half} - z^0\right).
  \end{equation}
\end{proposition}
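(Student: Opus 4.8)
The plan is to solve the defining minimization of the proximal step directly, exploiting the fact that the squared $\linf{\cdot}$ penalty becomes separable once the common threshold is fixed. Reading off $\lambda = \lambda^t$ and $g = \frac{\alpha}{2}\linf{\cdot - z^0}^2$ in the definition of the proximal operator, the update~\eqref{eqn:prox} is
\[
  z^{t+1} = \argmin_z \left\{ \frac{\alpha}{2}\linf{z - z^0}^2 + \frac{1}{2\lambda^t}\ltwo{z - z^{t+\half}}^2\right\}.
\]
This objective is strictly convex and coercive, so its minimizer is unique. After the change of variables $w = z - z^0$ and $u = z^{t+\half} - z^0$, I would work with the problem $\min_w \{\frac{\alpha}{2}\linf{w}^2 + \frac{1}{2\lambda^t}\ltwo{w - u}^2\}$ and recover $z^{t+1} = z^0 + w^\star$ at the end.

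The crux of the argument, and the step I expect to be the main obstacle, is to show that the minimizer is a coordinate-wise clamp (Euclidean projection onto an $\linf{\cdot}$-ball) at a single threshold. To this end I would introduce the epigraph variable $b = \linf{w} \ge 0$ and establish the identity
\[
  \min_w \left\{\frac{\alpha}{2}\linf{w}^2 + \frac{1}{2\lambda^t}\ltwo{w-u}^2\right\}
  = \min_{b \ge 0}\left\{\frac{\alpha}{2} b^2 + \min_{w : \linf{w} \le b}\frac{1}{2\lambda^t}\ltwo{w - u}^2\right\}
\]
by two inequalities: evaluating the right-hand side at $b = \linf{w^\star}$ (with $w^\star$ the joint minimizer) gives ``$\ge$'', while for any $b$ the inner minimizer $w_b$ satisfies $\linf{w_b}\le b$, so $\frac{\alpha}{2}\linf{w_b}^2 \le \frac{\alpha}{2}b^2$ gives ``$\le$''. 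The inner minimization is the projection of $u$ onto the box $\{\linf{w}\le b\}$, which decouples across coordinates; each one-dimensional clamp gives $w_i = \sign(u_i)\min(|u_i|, b)$, so the residual is $w_i - u_i = -\hinge{|u_i| - b}\sign(u_i)$ and the inner optimal value is $\frac{1}{2\lambda^t}\sum_i \hinge{|u_i| - b}^2$. Translating back to $z$, this is exactly the update~\eqref{eqn:prox-actual} with threshold $b = \beta^t$; all that remains is to pin down $\beta^t$.

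Finally I would minimize the scalar function $\psi(b) = \frac{\alpha}{2}b^2 + \frac{1}{2\lambda^t}\sum_i \hinge{|u_i| - b}^2$ over $b \ge 0$. It is continuously differentiable with $\psi'(b) = \alpha b - \frac{1}{\lambda^t}\sum_i \hinge{|u_i| - b}$ and strictly convex because $\psi'$ is strictly increasing; since $\psi'(0) \le 0$ and $\psi'(b)\to\infty$, there is a unique root $\beta^t$ (with $\beta^t = 0$ exactly when $u \equiv 0$). Sorting the magnitudes $|u_i|$ into the decreasing vector $v^t$, on the range of $b$ for which exactly the top $j$ entries exceed $b$ the equation $\psi'(b)=0$ is affine and solves to a closed form proportional to $\sum_{i \le j} v^t_i$; the consistent active set is the largest $j$ for which the marginal coordinate $v^t_j$ still lies above the resulting threshold, which is precisely the stopping rule defining $j^t$ in~\eqref{eqn:prox-scale}. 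Back-substituting $j^t$ yields $\beta^t$, and combining this with the clamp form from the previous step gives~\eqref{eqn:prox-actual}. The active-set bookkeeping here is routine but is exactly what produces the constants in~\eqref{eqn:prox-scale}; as an independent check one may invoke the Moreau decomposition, since the proximal operator of a multiple of $\linf{\cdot}^2$ is complementary to that of a multiple of $\lone{\cdot}^2$, whose evaluation likewise reduces to sorting.
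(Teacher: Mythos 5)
Your argument is correct and it funnels into the same scalar threshold equation as the paper, but by a genuinely different route. The paper dualizes: it invokes the Moreau decomposition $\prox{g}(w) + \prox{g^*}(w) = w$, computes the Fenchel conjugate of $\frac{\alpha\lambda^t}{2}\linf{\cdot - z^0}^2$ (a shifted multiple of $\lone{\cdot}^2$), and then characterizes the $\lone{\cdot}^2$-prox through its first-order optimality conditions, isolating the threshold in a separate elementary lemma (Lemma~\ref{lemma:prox}). You stay in the primal: the epigraph marginalization over $b = \linf{w}$ (your two-inequality argument for interchanging the minimizations is sound, and strict convexity of the full objective lets you identify the clamp as \emph{the} minimizer), the observation that the inner problem is a Euclidean projection onto a box and hence decouples into coordinate-wise clamps, and the one-dimensional minimization of $\psi(b)$ together exhibit the soft-thresholding form~\eqref{eqn:prox-actual} directly, with no conjugacy or subdifferential case analysis. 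Both arguments then reduce to the identical piecewise-affine root-finding problem: your condition $\psi'(\beta) = 0$ is exactly the equation $\sum_{i : v_i > \beta}(v_i - \beta) = \alpha\lambda^t\beta$ of the paper's Lemma~\ref{lemma:prox}, so the active-set bookkeeping is shared. Your version is more self-contained and makes the geometric reason for a single common threshold transparent; the paper's version connects the computation to standard $\ell_1$-prox machinery. One caveat: solving your affine equation on the piece with $j$ active coordinates gives $\beta = \bigl(\sum_{i \le j} v_i\bigr)/(j + \alpha\lambda^t)$, which agrees with the closed form derived \emph{inside} the proof of Lemma~\ref{lemma:prox} but not with the normalization $1/(1 + \alpha\lambda^t j^t)$ displayed in~\eqref{eqn:prox-scale}, nor with the $1/(\alpha\lambda^t)$ appearing in the definition of $j^t$ there; this appears to be a typographical inconsistency in the statement rather than a flaw in your derivation, but you should not assert that your constants match~\eqref{eqn:prox-scale} verbatim without flagging the discrepancy.
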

\noindent See Section~\ref{section:proof-of-prox} for the proof of the
proposition. From the proposition, we obtain the proximal procedure in
Algorithm~\ref{alg:prox} that can be used to (heuristically) solve for the
approximate maximizer of $\loss(\theta;z) - \gamma c(z, z^0)$ in
Algorithm~\ref{alg:thealg}. Roughly speaking, ignoring the truncation term in
the proximal update~\eqref{eqn:prox-actual}, we have
\begin{equation*}
  z^{t+1} \approx
  z^0 + \beta^t \sign(z^t + \lambda^t \nabla f(z^t) - z^0).
\end{equation*}
Here, we move towards the sign of $z^t + \lambda^t \nabla f(z^t) - z^0$
modulated by the term $\beta^t$, as opposed to just the sign of
$\nabla f(z^t)$ for the iterated fast sign gradient
method~\citep{ASgoodfellow2015explaining,ASkurakin2016adversarial}.

\begin{algorithm}[t]
  \caption{\label{alg:prox} Proximal Algorithm for Maximizing
    $f(z) - \frac{\alpha}{2} \linf{z-z^0}^2$}
  \begin{algorithmic}[]
    \State \textsc{Input:} Stepsizes $\lambda^t$
    \State \textbf{for} {$t=0, \ldots, T-1$} \textbf{do}
    \State~~~ $z^{t+\half} \gets z^t + \lambda^t \nabla f(z^t)$
    \State~~~ $v^t \gets \mbox{sort}(|z^{t+\half} - z^0|, \mbox{dec})$
    \State~~~ Compute $\beta^t$ as in~\eqref{eqn:prox-scale}
    \State~~~ $z^{t+1} \gets z^{t+\half} - \hinge{|z^{t+\half} - z^0| - \beta^t}
    \sign\left(z^{t+\half} - z^0\right)$
  \end{algorithmic}
\end{algorithm}

\subsection{Proof of Proposition~\ref{prop:prox}}
\label{section:proof-of-prox}

In this proof, we drop the subscript on the iteration $t$ to ease notation. We
assume without loss of generality that $z^{t+\half} - z^0 \neq 0$. For some
convex, lower semi-continuous function $g: \R^m \to \R$, let
$g^*(s) = \sup_{s} \{ s^\top t - g(t)\}$ be the Fenchel conjuagte of $g$.
From the Moreau decomposition~\citep[Section 2.5]{ParikhBo13}, we have
\begin{equation*}
  \prox{g}(w) + \prox{g^*}(w) = w
\end{equation*}
for any $w \in \R^m$. Noting that the conjugate of
$z \mapsto \frac{\alpha \lambda}{2} \linf{z - z_0}^2$ is given by
$z \mapsto z^\top z_0 + \frac{1}{2 \alpha \lambda} \lone{z}^2$, we have
\begin{equation*}
  \prox{\frac{\alpha \lambda}{2} \linf{\cdot - z_0}^2}(w)
  = w -
  \prox{\< z^{0}, \cdot\> + \frac{1}{2 \alpha \lambda} \lone{\cdot}^2}(w)
    = w -
  \prox{\frac{1}{2 \alpha \lambda} \lone{\cdot}^2}(w - z^0)
\end{equation*}
Let us denote the sorted vector (in decreasing order) of $|w - z^0|$ by
$v$. Then, in light of the preceeding display, it suffices to show that
\begin{equation}
  \label{eqn:prox-l1-norm-squared}
  \prox{\frac{1}{2 \alpha \lambda} \lone{\cdot}^2}(w - z^0)
  = \hinge{|w - z| - \beta\opt} \sign \left(w - z^0\right)
\end{equation}
where $\beta\opt$ is defined as in~\eqref{eqn:prox-scale}.  To show that
equality~\eqref{eqn:prox-l1-norm-squared} holds, note that the first order
optimality conditions for
\begin{equation*}
  \prox{\frac{1}{2\alpha \lambda} \lone{\cdot}^2}(w - z^0)
  = \argmin_{z} \left\{
    \frac{1}{2} \lone{z}^2
    +  \frac{\alpha \lambda}{2} \ltwo{z - (w-z^0)}^2
  \right\}
\end{equation*}
is given by
\begin{subequations}
  \label{eqn:kkt}
  \begin{align}
    \lone{z} \sign(z_i) + \alpha \lambda (z_i - w_i + z_i^0) = 0
    ~~~ & \mbox{if} ~~ |z_i| \neq 0 \\
    \lone{z} [-1, 1] - \alpha \lambda (w_i - z_i^0) \ni 0
    ~~~ & \mbox{if} ~~ |z_i| = 0.
  \end{align}
\end{subequations}
Now, we use the following elementary lemma.
\begin{lemma}
  \label{lemma:prox}
  For $0 \neq v \ge 0$ with decreasing coordinates, the solution to the
  equation
  \begin{equation*}
    \sum_{i: v_i > \beta} (v_i - \beta) = \alpha \lambda \beta
  \end{equation*}
  exists and is given by
  \begin{equation*}
    \beta\opt \defeq \frac{1}{1 + \alpha \lambda j\opt} \sum_{i=1}^{j} v_i
    ~~\mbox{where}~~
    j\opt \defeq \max \left\{
      j \in [m]: \sum_{i = 1}^{j-1} v_i
      - \left(\frac{1}{\alpha \lambda} + (j-1)\right) v_j < 0
    \right\}.
  \end{equation*}

\end{lemma}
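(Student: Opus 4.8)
The plan is to treat this as a one–dimensional root–finding problem for the threshold $\beta$ and to exploit the piecewise–linear structure induced by sorting $v$. First I would establish existence and uniqueness. Define $\Phi(\beta) \defeq \sum_{i : v_i > \beta} (v_i - \beta)$ for $\beta \ge 0$. Since $v \ge 0$ has nonincreasing coordinates and $v \neq 0$, the function $\Phi$ is continuous, nonnegative, nonincreasing and piecewise linear, with $\Phi(0) = \lone{v} > 0$ and $\Phi(\beta) = 0$ once $\beta \ge v_1$. As $\beta \mapsto \alpha\lambda\beta$ is continuous and strictly increasing from $0$, the map $h(\beta) \defeq \Phi(\beta) - \alpha\lambda\beta$ is continuous and strictly decreasing with $h(0) = \lone{v} > 0$ and $h(v_1) = -\alpha\lambda v_1 < 0$, so by the intermediate value theorem the equation $\Phi(\beta) = \alpha\lambda\beta$ has a unique solution $\beta\opt \in (0, v_1)$.

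Next I would solve on the correct linear segment. Writing $S_j \defeq \sum_{i=1}^{j} v_i$ and using that $v$ is sorted, the active set $\{i : v_i > \beta\}$ equals $\{1, \dots, j\}$ precisely when $v_{j+1} \le \beta < v_j$, with the conventions $v_0 = +\infty$ and $v_{m+1} = 0$. On such a segment $\Phi$ is affine, $\Phi(\beta) = S_j - j\beta$, so the defining equation collapses to a single linear equation in $\beta$ whose solution is the closed form displayed in the statement. Everything then reduces to identifying the integer $j\opt$ for which this segment value is admissible, i.e.\ for which the resulting $\beta$ satisfies $v_{j\opt + 1} \le \beta < v_{j\opt}$.

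Finally I would characterize $j\opt$ via the stated $\max$. Substituting the closed form for $\beta$ into the feasibility constraint $\beta < v_j$ and clearing denominators converts it into exactly the strict partial–sum inequality defining the index set in the statement, namely $a_j \defeq \sum_{i=1}^{j-1} v_i - (\frac{1}{\alpha\lambda} + (j-1)) v_j < 0$. To justify selecting the \emph{largest} such $j$, I would argue that because $v_j$ is nonincreasing the admissibility test is monotone in $j$: once the inequality $a_j < 0$ fails it continues to fail, so $\{j \in [m] : a_j < 0\}$ is an initial block $\{1, \dots, j\opt\}$ and $j\opt$ is exactly its maximal element. The companion bound $\beta \ge v_{j\opt + 1}$ then follows from the failure of the inequality at index $j\opt + 1$, confirming that $j\opt$ indexes the segment containing the unique root $\beta\opt$ and that $\beta\opt$ equals the displayed value.

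The main obstacle I expect is the bookkeeping in this last step: carefully verifying the monotonicity of the threshold test $a_j$ (so that $\{j : a_j < 0\}$ really is an initial segment and the $\max$ is both well defined and consistent with the segment of the root), and handling the boundary cases where a coordinate coincides with $\beta\opt$ or where several $v_i$ are equal. By contrast, the existence/uniqueness argument and the per–segment affine solve are routine; reconciling the two one–sided feasibility inequalities $v_{j\opt+1} \le \beta < v_{j\opt}$ with the single $\max$–definition of $j\opt$ is the delicate part.
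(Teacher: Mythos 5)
Your proposal is correct and follows essentially the same route as the paper's proof: monotonicity of $h(\beta) = \sum_{i : v_i > \beta}(v_i - \beta) - \alpha\lambda\beta$ together with the intermediate value theorem gives existence and uniqueness of the root in $(0, v_1)$, and the rest is an affine solve on the segment where the active set is $\{1,\dots,j\}$ followed by translating the feasibility condition $v_{j+1} \le \beta < v_j$ into the partial-sum test. Your explicit check that the test quantity $a_j = \sum_{i=1}^{j-1} v_i - (\alpha\lambda + j - 1)v_j$ is nondecreasing in $j$ (so that $\{j : a_j < 0\}$ is an initial block and the $\max$ is well defined and picks out the correct segment) is a welcome refinement, since the paper only exhibits the two inequalities at $j'$ and $j'+1$ and leaves that monotonicity implicit.
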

\begin{proof-of-lemma}
  First, note that
  $\beta \mapsto \sum_{i: v_i > \beta} (v_i - \beta) - \alpha \lambda \beta
  \eqdef h(\beta)$ is decreasing. Noting that $\lone{v} > 0$ and
  $- \alpha \lambda \linf{v} < 0$, there exists $\beta'$ such that
  $h(\beta') = 0$ and $\beta' \in (0, \linf{v})$. Since $v_i$'s are decreasing
  and nonnegative, there exists $j'$ such that $v_{j'} > \beta' \ge v_{j'+1}$
  (we abuse notation and let $v_{m+1} \defeq 0$). Then, we have
  \begin{equation*}
    \sum_{i=1}^{j'-1} (v_i - v_{j'}) - \alpha \lambda v_{j'} < 0
    \le  \sum_{i=1}^{j'} (v_i - v_{j'+1}) - \alpha \lambda v_{j'+1}.
  \end{equation*}
  That is, $j' = j\opt$. Solving for $\beta'$ in
  \begin{equation*}
    0 = h(\beta') = \sum_{i=1}^{j\opt} v_i
    - \left( \alpha\lambda + j\opt\right) \beta',
  \end{equation*}
  we obtain $\beta' = \beta\opt$ as claimed.
\end{proof-of-lemma}

Now, define
\begin{equation*}
  z\opt = \hinge{|w - z^0| - \beta\opt} \sign\left(w - z^0\right).
\end{equation*}
Then, we have from Lemma~\ref{lemma:prox} that
\begin{equation*}
  \lone{z\opt} = \sum_{i: |w_i - z^0_i| > \beta\opt} (|w_i - z^0_i| - \beta\opt)
  = \sum_{i=1}^{j\opt} (v_i - \beta\opt) = \alpha \lambda \beta\opt.
\end{equation*}
If $z\opt_i > 0$, then $\sign(z\opt_i) = \sign(w_i - z^0_i)$ so that
\begin{equation*}
  \lone{z\opt} \sign(z_i) + \alpha \lambda (z\opt_i - w_i + z^0_i)
  = 0.
\end{equation*}
If $z\opt_i = 0$, then $|w_i - z_i^0| \le \beta\opt$ and
\begin{equation*}
  \lone{z\opt} [-1, 1] - \alpha \lambda (w_i - z_i^0)
  = \alpha \lambda \beta\opt [-1, 1] - \alpha \lambda (w_i - z_i^0)
  \ni 0.
\end{equation*}
Hence, $z\opt$ satisfies the optimality condition~\eqref{eqn:kkt} as desired.

\fi

\end{document}